\definecolor{Gray}{gray}{0.85}
\definecolor{yxc}{RGB}{255,0,0}
\definecolor{yjc}{RGB}{125,0,0}
\definecolor{ytw}{RGB}{255,69,0}
\definecolor{gen}{RGB}{200,0,0}
\definecolor{laixi}{RGB}{138,43,226}
\newcommand{\yxc}[1]{\textcolor{yxc}{[YXC: #1]}}
\DeclareMathOperator{\ind}{\mathds{1}}  
\newcommand{\defn}{\coloneqq}
\newcommand{\one}{1}
\newcommand{\pib}{\pi^{\mathsf{b}}}
\newcommand{\rhob}{\rho^{\mathsf{b}}}
\newcommand{\taumax}{\tau_{\max}}
\newcommand{\myrho}{d^{\mathsf{b}}}
\newcommand{\mub}{\mu^{\mathsf{b}}}
\newcommand{\infs}{}
\newcommand{\Tpess}{\widehat{\mathcal{T}}_{\mathsf{pe}}}
\newcommand{\tildeTpess}{\widetilde{\mathcal{T}}_{\mathsf{pe}}}
\newcommand{\cS}{\mathcal{S}}
\newcommand{\cA}{\mathcal{A}}
\newcommand{\mymid}{\,|\,}
\newcommand{\cb}{c_{\mathsf{b}}}
\newcommand{\Ntrim}{N^{\mathsf{trim}}}
\newcommand{\Nmain}{N^{\mathsf{main}}}
\newcommand{\Naux}{N^{\mathsf{aux}}}
\newcommand{\Dtrim}{\mathcal{D}^{\mathsf{trim}}}
\newcommand{\Dtrimaug}{\mathcal{D}^{\mathsf{trim,aug}}}
\newcommand{\Dmain}{\mathcal{D}^{\mathsf{main}}}
\newcommand{\Daux}{\mathcal{D}^{\mathsf{aux}}}
\newcommand{\Diid}{\mathcal{D}^{\mathsf{i.i.d.}}}
\newcommand{\Cstar}{C^{\star}_{\mathsf{clipped}}}
\date{\today}
\begin{document}

\theoremstyle{plain} \newtheorem{lemma}{\textbf{Lemma}}\newtheorem{proposition}{\textbf{Proposition}}\newtheorem{theorem}{\textbf{Theorem}}\newtheorem{assumption}{\textbf{Assumption}}\newtheorem{definition}{\textbf{Definition}}

\theoremstyle{remark}\newtheorem{remark}{\textbf{Remark}}

\title{Settling the Sample Complexity of \\ Model-Based Offline Reinforcement Learning}

\author{Gen Li\thanks{Department of Statistics and Data Science, Wharton School, University of Pennsylvania, Philadelphia, PA 19104, USA.} \\
UPenn    \\
\and
Laixi Shi\thanks{Department of Electrical and Computer Engineering, Carnegie Mellon University, Pittsburgh, PA 15213, USA.} \\
CMU    \\
	\and
	Yuxin Chen\footnotemark[1] \thanks{Department of Electrical and Systems Engineering, University of Pennsylvania, Philadelphia, PA 19104, USA.} \\
	UPenn\\
	\and
	Yuejie Chi\footnotemark[2]\\
	CMU\\
	\and
	Yuting Wei\footnotemark[1] \\
 UPenn  \\
	}
\date{April 2022; ~Revised: February 2024}

\maketitle

\begin{abstract}

This paper is concerned with offline reinforcement learning (RL), 
which learns using pre-collected data without further exploration. 
Effective offline RL would be able to accommodate distribution shift and limited data coverage. 
However, prior algorithms or analyses either suffer from suboptimal sample complexities or incur high burn-in cost to reach sample optimality,  
thus posing an impediment to efficient offline RL in sample-starved applications. 

We demonstrate that the model-based (or ``plug-in'') approach achieves minimax-optimal sample complexity  without burn-in cost for tabular Markov decision processes (MDPs). 
Concretely, consider a $\gamma$-discounted infinite-horizon (resp.~finite-horizon) MDP with $S$ states and effective horizon $\frac{1}{1-\gamma}$ (resp.~horizon $H$), and suppose the distribution shift of data is reflected by some single-policy clipped concentrability coefficient $C^{\star}_{\mathsf{clipped}}$. We prove that model-based offline RL yields $\varepsilon$-accuracy with a sample complexity of
\[
\begin{cases}
\frac{SC_{\mathsf{clipped}}^{\star}}{(1-\gamma)^{3}\varepsilon^{2}} & (\text{infinite-horizon MDPs}) \\
\frac{H^{4}SC_{\mathsf{clipped}}^{\star}}{\varepsilon^{2}} & (\text{finite-horizon MDPs})
\end{cases}
\]
up to log factor, which is minimax optimal for the {\em entire $\varepsilon$-range}. 
The proposed algorithms are ``pessimistic'' variants of value iteration with Bernstein-style penalties, 
and do not require sophisticated variance reduction. 
Our analysis framework is established upon delicate leave-one-out decoupling arguments in conjunction with careful self-bounding techniques tailored to MDPs. 

\end{abstract}

\noindent \textbf{Keywords:} offline reinforcement learning, model-based approach, minimax lower bounds, distribution shift, pessimism in the face of uncertainty

\setcounter{tocdepth}{2}
\tableofcontents



\section{Introduction}
\label{sec:intro}


Reinforcement learning (RL) has recently achieved superhuman performance  in the gaming frontier, such as the game of Go \citep{silver2017mastering},  
under the premise that vast amounts of training data can be obtained. 
However, limited capability of online data collection in other real-world applications --- e.g., clinical trials and online advertising, where real-time data acquisition is expensive, high-stakes, and/or time-consuming, --- presents a fundamental bottleneck for carrying such RL success over to broader scenarios. 
To circumvent this bottleneck, one plausible strategy is to make more effective use of data collected previously,
given that such historical data might contain useful information that readily transfers to new tasks (for instance, the state transitions in a historical task might sometimes resemble what happen in new tasks).  
The potential of this data-driven approach has been explored and recognized in a diverse array of contexts including but not limited to robotic manipulation \citep{ebert2018visual}, autonomous driving \citep{diehl2021umbrella}, and healthcare \citep{tang2021model}; see \citet{levine2020offline,prudencio2022survey} for overviews of recent development.  
Nowadays, the subfield of reinforcement learning using historical data, without further exploration of the environment, is commonly referred to as {\em offline RL} or {\em batch RL} \citep{lange2012batch,levine2020offline}.  A desired offline RL algorithm would achieve the target statistical accuracy using as few samples as possible.

%

%

\subsection{Challenges: distribution shift and limited data coverage}

In contrast to online exploratory RL, 
offline RL has to deal with several critical issues resulting from the absence of active exploration. 
Below we single out two representative issues surrounding offline RL. 


\begin{itemize}

\item {\em Distribution shift.}
For the most part,  the historical data is generated by a certain behavior policy that departs from the optimal one. 
A key challenge in offline RL thus stems from the shift of data distributions:
how to leverage past data to the most effect, even though the distribution induced by the target policy differs from what we have available?

\item {\em Limited data coverage.} 
	Ideally, if the dataset contained sufficiently many data samples for every state-action pair, then there would be hope to simultaneously learn the performance of every policy.  Such a uniform coverage requirement, however, is oftentimes not only unrealistic (given that we can no longer change the past data) but also unnecessary (given that we might only be interested in identifying a single optimal policy).    

\end{itemize}

\noindent
Whether one can effectively cope with distribution shift and insufficient data coverage becomes a major factor that governs the feasibility and statistical efficiency of offline RL.  


In order to address the aforementioned issues, 
a recent strand of works put forward the {\em principle of pessimism or conservatism}  (e.g., \citet{buckman2020importance,kumar2020conservative,jin2021pessimism,rashidinejad2021bridging,xie2021policy,yin2021towards,shi2022pessimistic,yan2022efficacy,uehara2021pessimistic,liu2020provably,cui2022offline,chen2021pessimism,zhong2022pessimistic,zanette2021provable}).  
This is reminiscent of the optimism principle in the face of uncertainty for online exploration \citep{lai1985asymptotically,jaksch2010near,azar2017minimax,bourel2020tightening,jin2018q}, but works for drastically different reasons (as we shall elucidate momentarily).  
One plausible idea of the pessimism principle, which has been incorporated into both model-based and model-free approaches, is to penalize  value estimation of those state-action pairs that have been poorly covered. Informally speaking,  insufficient coverage of a state-action pair inevitably results in low confidence and high uncertainty in the associated value estimation, and it is hence advisable to act cautiously by tuning down the corresponding value estimate.  
Proper use of pessimism amid uncertainty brings about several provable benefits \citep{rashidinejad2021bridging,xie2021policy}:  
(i) it allows for a reduced sample size that adapts to the degree of distribution shift; 
(ii) as opposed to uniform data coverage, it only requires coverage of the part of the state-action space reachable by the target policy.  
Details to follow momentarily.


\subsection{Inadequacy of prior works}
\label{sec:inadequacy-prior}

In the present paper, we evaluate and compare the statistical performance of offline RL algorithms mainly through the lens of sample complexity --- namely, the number of samples needed for an algorithm to output, with probability approaching one, a policy whose resultant value function is at most $\varepsilon$ away from optimal (called ``$\varepsilon$-accuracy'' throughout).  
An ultimate goal is to design an algorithm to achieve the smallest possible sample complexity.

Despite extensive recent activities, however, existing statistical guarantees for the above paradigm remain inadequate, as we shall elaborate on below. For concreteness, our discussions focus on two widely studied Markov decision processes (MDPs) with $S$ states and $A$ actions \citep{bertsekas2017dynamic}: 
(a) $\gamma$-discounted infinite-horizon MDPs, with effective horizon $\frac{1}{1-\gamma}$; 
(b) finite-horizon MDPs with horizon length $H$ and nonstationary transition kernels.  
We shall bear in mind that all of these salient problem parameters (i.e., $S$, $A$, $\frac{1}{1-\gamma}$, $H$) could be enormous in modern RL applications. 
In addition, previous works have isolated an important parameter $C^{\star}\geq 1$ --- called the single-policy concentrability coefficient \citep{rashidinejad2021bridging,xie2021policy} --- that measures the mismatch of distributions induced by the target policy against the behavior policy; see Sections~\ref{sec:models-finite} and \ref{sec:models-infinite} for precise definitions.  Naturally, the statistical performance of desirable algorithms would degrade gracefully as the distribution mismatch worsens (i.e., as $C^{\star}$ increases). 
In the sequel, we shall discuss two dinstinctive RL paradigms --- model-based RL and model-free RL --- separately. 
Throughout this paper, the standard notation $\widetilde{O}(\cdot)$ indicates the order of a function with all log terms in $S,A,\frac{1}{1-\gamma},H, \frac{1}{\varepsilon},$ and $\frac{1}{\delta}$ (with $1-\delta$  the target success probability)  hidden.

\begin{figure}

\begin{tabular}{cc} 
\includegraphics[width=0.48\textwidth]{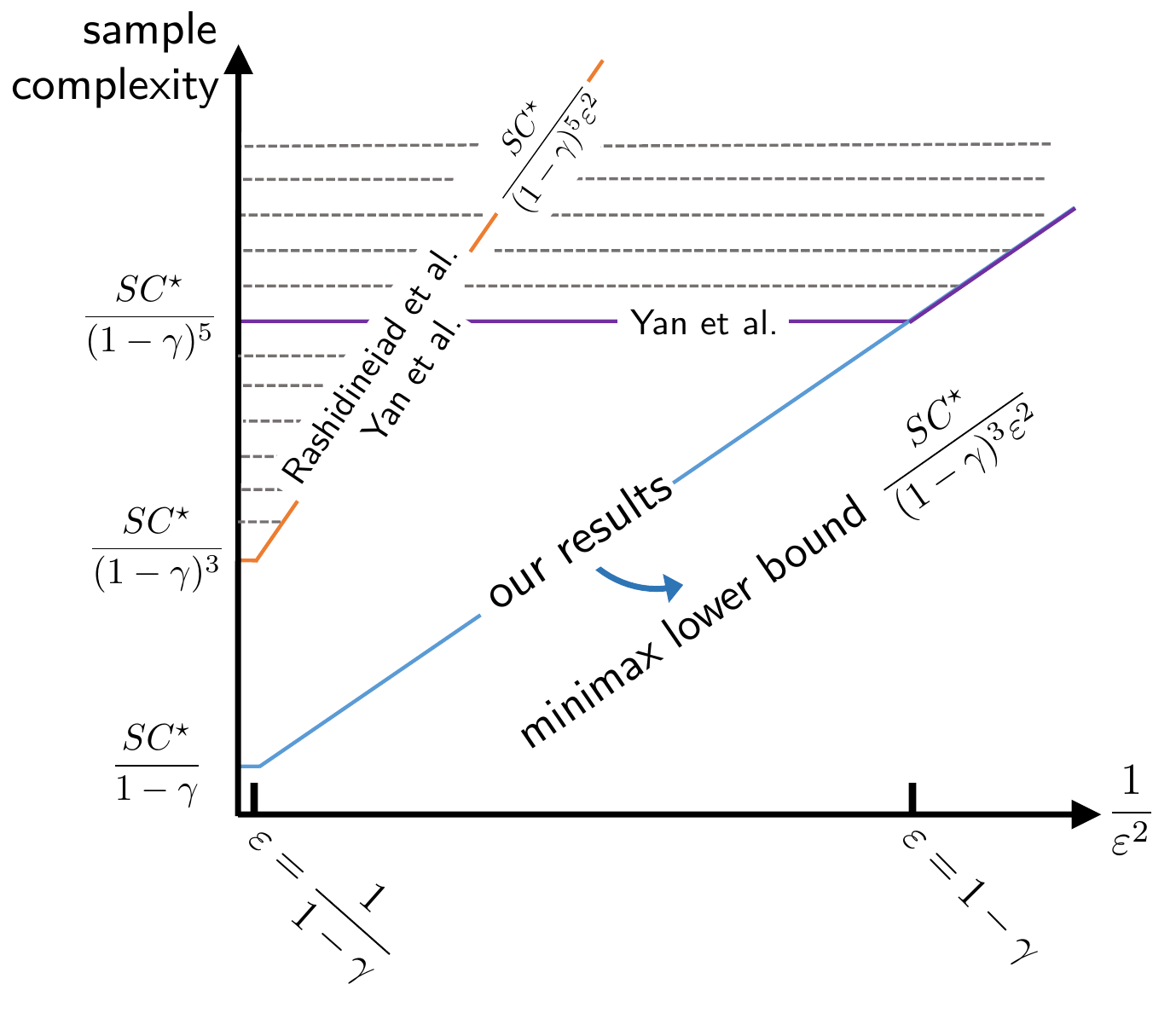} & \includegraphics[width=0.5\textwidth]{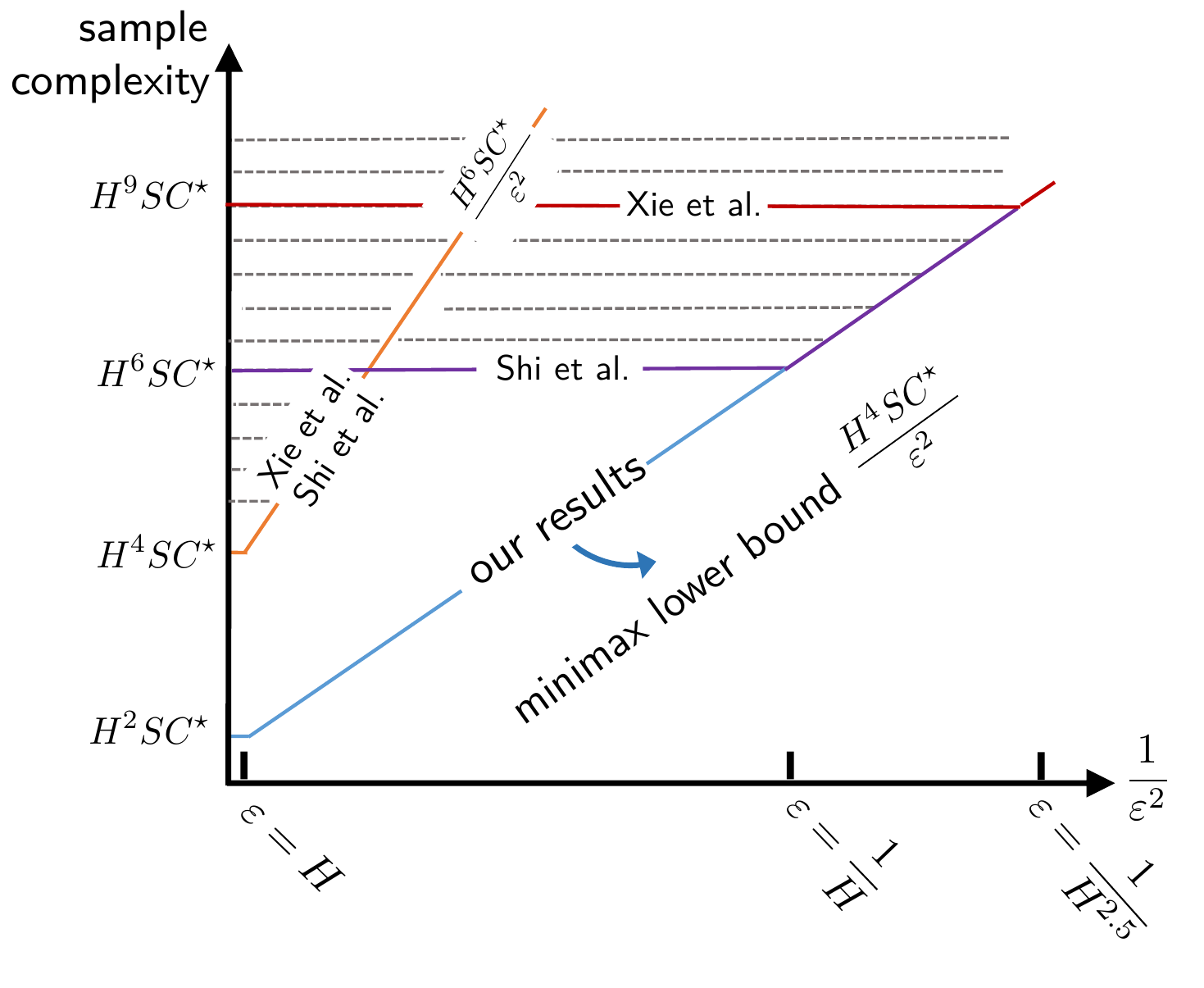}\tabularnewline
		(a) discounted infinite-horizon MDPs & (b) finite-horizon MDPs \tabularnewline
\end{tabular}

	\caption{ An illustration of prior works, where (a) is about discounted infinite-horizon MDPs and (b) is about finite-horizon MDPs. 	To facilitate comparisons, we replace $\Cstar$ with $C^{\star}$ in our results when drawing the plots given that $\Cstar\leq C^{\star}$. The shaded regions indicate the state-of-the-art achievability results. Our work manages to close the gaps between the prior achievable regions and the minimax lower bounds.   \label{fig:prior-works}}
\end{figure}

\paragraph{Model-based offline RL.} 
Model-based algorithms --- which can be interpreted as a ``plug-in'' statistical approach ---  
start by computing an empirical model for the unknown MDP, and output a policy that is (near)-optimal in accordance with the empirical MDP. 
When coupled with the pessimism principle, the model-based approach has been shown to enjoy the following sample complexity bounds.  
\begin{itemize}
	\item By incorporating Hoeffding-style lower confidence bounds into value iteration,  \citet{rashidinejad2021bridging,xie2021policy} demonstrated that 
		a sample complexity of
		\begin{align}
			\begin{cases}
				\widetilde{O}\big(\frac{SC^{\star}}{(1-\gamma)^5\varepsilon^2} \big) \quad &\text{for infinite-horizon MDPs} \\
				\widetilde{O}\big( \frac{H^6 SC^{\star}}{\varepsilon^2} \big) \quad  &\text{for finite-horizon MDPs} 
			\end{cases}
			\label{eq:Hoeffding-model-based-prior-work}
		\end{align}
		suffices to yield $\varepsilon$-accuracy. Such a sample complexity bound, however, is a large factor of $\frac{1}{(1-\gamma)^2}$ (resp.~$H^2$) above the minimax lower limit derived for infinite-horizon MDPs (resp.~finite-horizon MDPs) \citep{rashidinejad2021bridging,xie2021policy,yin2021towards}. 
	
	\item In an attempt to optimize the sample complexity, \citet{xie2021policy} leveraged the idea of variance reduction --- a powerful strategy originating from the stochastic optimization literature \citep{johnson2013accelerating} --- in model-based RL and obtained a strengthened sample complexity of
		\begin{align}
			\widetilde{O}\Big( \frac{H^4 SC^{\star}}{\varepsilon^2} +  \frac{H^{6.5} SC^{\star}}{\varepsilon} \Big)
		\end{align}
		for finite-horizon MDPs. This sample complexity bound approaches the minimax lower limit (i.e., the order of $\frac{H^4 SC^{\star}}{\varepsilon^2}$) once the sample size exceeds the order of
		\begin{align}
			\text{(burn-in cost)} \qquad H^9 SC^{\star} ;
			\label{eq:burn-in-VR-model-based}
		\end{align}
		in other words, an enormous burn-in sample size is needed in order to attain sample optimality. 
\end{itemize}

\paragraph{Model-free offline RL.} 
The model-free approach forms a contrastingly different class of RL algorithms, 
which bypasses the model estimation stage and directly learns the optimal values. Noteworthily, Q-learning and its variants \citep{watkins1992q}, which apply stochastic approximation \citep{robbins1951stochastic} based on the Bellman optimality condition,  are among the most widely used model-free paradigms. 
The principle of pessimism amid uncertainty has recently been integrated into model-free algorithms as well, with the state-of-the-art statistical guarantees listed below \citep{shi2022pessimistic,yan2022efficacy}. 
\begin{itemize}
	\item When Q-learning is implemented in conjunction with Hoeffding-style lower confidence bounds, 
		it has been shown to achieve the same sample complexity as \eqref{eq:Hoeffding-model-based-prior-work}, which is suboptimal by a factor of either $\frac{1}{(1-\gamma)^2}$ or $H^2$. 

	\item A variance-reduced variant of pessimistic Q-learning allows for further sample size benefits,  
		achieving a sample complexity of
		\begin{align}
			\begin{cases}
				\widetilde{O}\big(\frac{SC^{\star}}{(1-\gamma)^3\varepsilon^2} + \frac{SC^{\star}}{(1-\gamma)^4\varepsilon} \big) \quad &\text{for infinite-horizon MDPs} \\
				 \widetilde{O}\big( \frac{H^4 SC^{\star}}{\varepsilon^2} + \frac{H^5 SC^{\star}}{\varepsilon} \big) \quad  &\text{for finite-horizon MDPs}
			\end{cases}
			\label{eq:VR-model-free-prior-work}
		\end{align}
		for any target accuracy level $\varepsilon$. This means that the algorithm is guaranteed to be sample-optimal only after the total sample size exceeds the order of
		\begin{align}
			\text{(burn-in cost)} \qquad 
			\begin{cases}
				 \frac{SC^{\star}}{(1-\gamma)^5}  \quad &\text{for infinite-horizon MDPs}, \\
				H^6 SC^{\star}  \quad  &\text{for finite-horizon MDPs},
			\end{cases}
			\label{eq:VR-model-free-prior-work}
		\end{align}
		which again manifests itself as a significant burn-in cost for long-horizon problems.  
\end{itemize}

\paragraph{Summary.} As elucidated above, existing algorithms either suffer from suboptimal sample complexities, or require sophisticated techniques like variance reduction to approach minimax optimality. Even when variance reduction is employed, prior algorithms incur an enormous burn-in cost in order to work optimally, 
thus posing an impediment to achieving sample efficiency in data-starved applications. 
Table~\ref{tab:prior-work} summarizes quantitatively the previous results,
whereas Figure~\ref{fig:prior-works} illustrates the gaps between the state-of-the-art upper bounds and the minimax lower bounds (as derived by \citet{xie2021policy,rashidinejad2021bridging}).  All this motivates the studies of the following natural questions: 
{
\setlist{rightmargin=\leftmargin}
\begin{itemize}
\item[] {\em Can we develop an offline RL algorithm that achieves near-optimal sample complexity without burn-in cost? 
	If so, can we accomplish this goal by means of a simple algorithm without resorting to sophisticated schemes like variance reduction? }
\end{itemize}
}
\noindent 
The current paper answers these questions affirmatively by studying the model-based approach.


\subsection{Main contributions}


In this paper, we settle the sample complexity of model-based offline RL by studying a pessimistic variant of value iteration --- called VI-LCB --- applied to some empirical MDP. 
Encouragingly, for both discounted infinite-horizon and finite-horizon MDPs, the model-based algorithms provably achieve minimax-optimal sample complexities for any given target accuracy level $\varepsilon$ --- namely, any $\varepsilon \in \big(0, \frac{1}{1-\gamma}\big]$ for discounted infinite-horizon MDPs and $\varepsilon \in (0,H]$ for finite-horizon MDPs. 

To be more precise, we introduce a slightly modified version $\Cstar$ of the concentrability coefficient $C^{\star}$, which always satisfies $\Cstar\leq C^{\star}$  and shall be termed the single-policy clipped concentrability coefficient (see Sections \ref{sec:models-infinite} and~\ref{sec:models-finite} for more details as well as the advantages of this coefficient). The introduction of this new parameter leads to slightly improved sample complexity compared to the one based on $C^{\star}$.    
The main contributions are summarized as follows.



\begin{itemize}

	\item For $\gamma$-discounted infinite-horizon MDPs, we demonstrate that with high probability, the VI-LCB algorithm with Bernstein-style penalty finds an $\varepsilon$-optimal policy with a sample complexity of   
\begin{equation}
	\widetilde{O}\bigg(  \frac{S \Cstar}{(1-\gamma)^3 \varepsilon^{2}} \bigg)
	\label{eq:intro-sample-size-infinite}
\end{equation}
		for any given accuracy level $\varepsilon\in \big(0, \frac{1}{1-\gamma} \big]$ (see Theorem~\ref{thm:infinite}).  
		Our algorithm reuses all samples across all iterations in order to achieve data efficiency, 
		and our analysis builds upon a novel leave-one-out argument to decouple complicated statistical dependency across iterations. 
		Moreover, the above sample complexity \eqref{eq:intro-sample-size-infinite} remains valid if $\Cstar$ is replaced by $C^{\star}$. 
		
	\item For finite-horizon MDPs with nonstationary transition kernels, we propose a variant of VI-LCB that adopts the Bernstein-style penalty to enforce pessimism in the face of uncertainty. We prove that for any given  $\varepsilon\in (0,H]$, the proposed algorithm yields an $\varepsilon$-optimal policy  using  
\begin{equation}
	\widetilde{O}\bigg( \frac{H^4 S \Cstar}{ \varepsilon^{2}} \bigg)
		\label{eq:intro-sample-size-finite}
\end{equation}
		samples with high probability (see Theorem~\ref{thm:finite}).  
A key ingredient in the algorithm design is 
 a two-fold subsampling trick that helps decouple the statistical dependency along the sample rollouts. 
		Note that the above sample complexity result \eqref{eq:intro-sample-size-finite} continues to hold if one replaces $\Cstar$ with $C^{\star}$. 

	\item To assess the tightness and optimality of our results, 
we further develop minimax lower bounds in Theorems~\ref{thm:infinite-lwoer-bound} and \ref{thm:finite-lower-bound},
which match the above upper bounds modulo some logarithmic factors.


\end{itemize}
%
%
\noindent 
Remarkably, our algorithms do not require sophisticated variance reduction schemes, 
as long as suitable confidence bounds are adopted.  
Detailed theoretical comparisons with prior art can be found in Table~\ref{tab:prior-work} and are also illustrated in Figure~\ref{fig:prior-works}.  
Finally, we have conducted a series of numerical experiments to evaluate the performance of the proposed algorithms in Section~\ref{sec:numerics}.

\paragraph{Statistical contributions: solving the most sample-hungry regime.} 
The offline RL problem considered herein is statistical in nature, in that it seeks to learn from pre-collected data in the face of uncertainty. 
As far we know, our theory is the first to identify an offline algorithm that provably attains minimax-optimal statistical efficiency for the entire $\varepsilon$-range, 
which in turn makes clear that {\em no burn-in phase is needed} to achieve optimal statistical accuracy.  
Achieving this requires developing a new suite of statistical theory that works all the way to {\em the most data-hungry regime}. 
It is noteworthy that the existing statistical toolbox --- not merely for offline RL, but for online exploratory RL as well (as we shall detail in Section~\ref{sec:related-works}) --- 
is only guaranteed to work when the total sample size already exceeds a fairly large threshold, 
a (often unnecessary) requirement that substantially simplifies statistical analysis.  
In this sense, the regime we aim to solve 
is reminiscent of the subfield of high-dimensional statistics \citep{wainwright2019high,donoho2000high} 
that helps extend the frontier of classical statistics to the sample-starved regime, for which an enriched statistical toolbox is critically needed.


%

\newcommand{\topsepremove}{\aboverulesep = 0mm \belowrulesep = 0mm} \topsepremove

\begin{table}[t]

\begin{center}

	{\small
\begin{tabular}{c|c|c|c|cc}
\hline 
\toprule
\multirow{2}{*}{horizon} & \multirow{2}{*}{algorithm} &    sample & $\varepsilon$-range to attain & \multirow{2}{*}{type}\tabularnewline
	&    & complexity & sample optimality & \tabularnewline
\hline 
\toprule 
\multirow{8}{*}{infinite} & VI-LCB $\vphantom{\frac{1^{7}}{1^{7}}}$ &   \multirow{2}{*}{$\frac{SC^{\star}}{(1-\gamma)^{5}\varepsilon^{2}}$} & \multirow{2}{*}{------} & \multirow{2}{*}{model-based} \tabularnewline
	&\citep{rashidinejad2021bridging}   &   &  &  \tabularnewline
\cline{2-6} 
 &Q-LCB $\vphantom{\frac{1^{7}}{1^{7}}}$ &    \multirow{2}{*}{$\frac{SC^{\star}}{(1-\gamma)^{5}\varepsilon^{2}}$} & \multirow{2}{*}{------} & \multirow{2}{*}{model-free} \tabularnewline 
&\citep{yan2022efficacy}   &  &   &  \tabularnewline
\cline{2-2} \cline{3-6} \cline{4-6} \cline{5-6} \cline{6-6} 
&VR-Q-LCB  $\vphantom{\frac{1^{7}}{1^{7}}}$ &   \multirow{2}{*}{$\frac{SC^{\star}}{(1-\gamma)^{3}\varepsilon^{2}} +\frac{SC^\star}{(1-\gamma)^4\varepsilon}$} & \multirow{2}{*}{$(0,1-\gamma ]$} & \multirow{2}{*}{model-free} \tabularnewline
& \citep{yan2022efficacy}   &  &  &  \tabularnewline
\cline{2-6} 
	& {\cellcolor{Gray}\textbf{VI-LCB}} $\vphantom{\frac{1^{7}}{1^{7}}}$  &  {\cellcolor{Gray}} &  {\cellcolor{Gray}}   & {\cellcolor{Gray}} \tabularnewline
	&{\cellcolor{Gray}\textbf{(this paper: Theorem~\ref{thm:infinite})} }  & \multirow{-2}{*}{{\cellcolor{Gray}$\frac{S\Cstar}{(1-\gamma)^{3}\varepsilon^{2}}$ ~($\leq \frac{SC^{\star}}{(1-\gamma)^{3}\varepsilon^{2}}$)}} &  {\cellcolor{Gray}\multirow{-2}{*}{$\big(0,\frac{1}{1-\gamma}\big]$}} &  \multirow{-2}{*}{{\cellcolor{Gray}model-based}} \tabularnewline
	\cline{2-6} 
	& {\cellcolor{Gray}\textbf{lower bound}} $\vphantom{\frac{1^{7}}{1^{7}}}$  &  {\cellcolor{Gray}} &  {\cellcolor{Gray}}   & {\cellcolor{Gray}} \tabularnewline
	&{\cellcolor{Gray}\textbf{(this paper: Theorem~\ref{thm:infinite-lwoer-bound})} }  & \multirow{-2}{*}{{\cellcolor{Gray}$\frac{S\Cstar}{(1-\gamma)^{3}\varepsilon^{2}}$ }} &  \multirow{-2}{*}{{\cellcolor{Gray}------}} &  \multirow{-2}{*}{{\cellcolor{Gray}------}} \tabularnewline	
\hline 
\toprule
\multirow{14}{*}{finite} & VI-LCB $\vphantom{\frac{1^{7}}{1^{7}}}$ &    \multirow{2}{*}{$\frac{H^{6}SC^{\star}}{\varepsilon^{2}}$} & \multirow{2}{*}{------}  & \multirow{2}{*}{model-based} \tabularnewline
&\citep{xie2021policy} &    &     \tabularnewline
\cline{2-6} 
&\multirow{1}{*}{VPVI} $\vphantom{\frac{1^{7}}{1^{7}}}$  &   \multirow{2}{*}{$\frac{H^{5}SC^{\star}}{\varepsilon^{2}}$} & \multirow{2}{*}{------}    & \multirow{2}{*}{model-based} \tabularnewline
& \citep{yin2021towards}  &       & \tabularnewline
\cline{2-6}  
	&PEVI-Adv $\vphantom{\frac{1^{7}}{1^{7}}}$  &   \multirow{2}{*}{$\frac{H^{4}SC^{\star}}{\varepsilon^{2}}+\frac{H^{6.5}SC^{\star}}{\varepsilon}$} & \multirow{2}{*}{$\big(0, \frac{1}{H^{2.5}}\big]$}  & \multirow{2}{*}{model-based} \tabularnewline
& \citep{xie2021policy}  &  &     & \tabularnewline
\cline{2-6} 
	&LCB-Q-Advantage  $\vphantom{\frac{1^{7}}{1^{7}}}$ &    \multirow{2}{*}{$\frac{H^{4}SC^{\star}}{\varepsilon^{2}}+\frac{H^{5}SC^{\star}}{\varepsilon}$} & \multirow{2}{*}{$\big(0, \frac{1}{H}\big]$}  & \multirow{2}{*}{model-free} \tabularnewline
&\citep{shi2022pessimistic}   &  &    & \tabularnewline
\cline{2-6} 
& APVI/LCBVI $\vphantom{\frac{1^{7}}{1^{7}}}$   &  \multirow{2}{*}{$\frac{H^{4}SC^{\star}}{\varepsilon^{2}}+\frac{H^{4}}{\myrho_{\min}\varepsilon}$}   & \multirow{2}{*}{$(0,SC^\star \myrho_{\min}]$}  & \multirow{2}{*}{model-based} \tabularnewline
&\citep{yin2021towards}   &  &    &  \tabularnewline
\cline{2-2} \cline{4-6} 
	& {\cellcolor{Gray} \textbf{VI-LCB}} $\vphantom{\frac{1^{7}}{1^{7^3}}}$    & {\cellcolor{Gray}} &  {\cellcolor{Gray}}  &  {\cellcolor{Gray}}  \tabularnewline
	& {\cellcolor{Gray}\textbf{(this paper: Theorem~\ref{thm:finite})}}  &  {\cellcolor{Gray}\multirow{-2}{*}{$\frac{H^{4}S\Cstar}{\varepsilon^{2}}$ ~($\leq \frac{H^{4}SC^{\star}}{\varepsilon^{2}}$)}} & {\cellcolor{Gray}\multirow{-2}{*}{$(0,H]$}} & {\cellcolor{Gray}\multirow{-2}{*}{model-based}}    \tabularnewline
\cline{2-2} \cline{4-6} 
	& {\cellcolor{Gray} \textbf{lower bound}} $\vphantom{\frac{1^{7}}{1^{7^3}}}$    & {\cellcolor{Gray}} &  {\cellcolor{Gray}}  &  {\cellcolor{Gray}}  \tabularnewline
	& {\cellcolor{Gray}\textbf{(this paper: Theorem~\ref{thm:finite-lower-bound})}}  &  {\cellcolor{Gray}\multirow{-2}{*}{$\frac{H^{4}S\Cstar}{\varepsilon^{2}}$ }} & {\cellcolor{Gray}\multirow{-2}{*}{{\cellcolor{Gray}------}}} & {\cellcolor{Gray}\multirow{-2}{*}{\cellcolor{Gray}------}}    \tabularnewline	
\hline 
\toprule 

\end{tabular}}
\end{center}
	\caption{Comparisons with prior results (up to log terms) regarding finding an $\varepsilon$-optimal policy in offline RL. 
	The $\varepsilon$-range  stands for the range of accuracy level $\varepsilon$ for which the derived sample complexity is optimal.
	Here, one always has $\Cstar \leq C^{\star}$; and the parameter $\myrho_{\min} \coloneqq \frac{1}{\min_{s,a,h} \{\myrho_{h}(s,a):\, \myrho_{h}(s,a)>0\}  }$ employed in \citet{yin2021towards} could be exceedingly small, with $\myrho_h$ the occupancy distribution of the dataset. 
	While multiple algorithms are referred to as VI-LCB in the table, they correspond to different variants of VI-LCB. 
	Our results are the first to achieve sample optimality for the full $\varepsilon$-range. 
	\label{tab:prior-work}}  
\end{table}

\subsection{Notation}

Throughout this paper, we adopt the convention that $0/0=0$. We use $\Delta(\cS)$ to indicate the probability simplex over the set $\cS$, and denote by $[H]$ the set $\{1,\cdots, H\}$ for any positive integer $H$. We use $\ind(\cdot)$ to represent the indicator function. For any vector $x = [x(s,a)]_{(s,a)\in \cS\times \cA}\in \mathbb{R}^{SA}$, we overload the notation by letting $x^2 = [x(s,a)^2]_{(s,a)\in \cS\times \cA}$. For two vectors $a=[a_i]_{1\leq i\leq n}$ and $b=[b_i]_{1\leq i\leq n}$, $a\circ b = [a_i b_i]_{1\leq i\leq n}$ denotes their Hadamard product,
and $a\geq b$ (resp.~$a\leq b$) means $a_i \geq b_i$ (resp.~$a_i\leq b_i$) for all $i$. 
Following the convention in RL \citep{agarwal2019reinforcement}, the norm $\|\cdot \|_1$ of a matrix $P=[P_{ij}]$ is defined to be $\|P\|_1\coloneqq \max_{i} \sum_j |P_{ij}|$. 
For any probability vector $q\in \mathbb{R}^{1\times S}$ (which is a row vector) and any vector $V \in \mathbb{R}^S$,   define
\begin{align}
	\mathsf{Var}_{q} (V)\coloneqq q  \big(V \circ V \big)-\left(q V \right)^{2} \,\in \mathbb{R}
	\label{eq:defn-Var-P-V}
\end{align}
with $qV=\sum_i q_iV_i$, which corresponds to the variance of $V$ w.r.t.~the distribution $q$. The standard notation $O(\cdot)$ is adopted to represent the orderwise scaling of a function.

%


\section{Algorithm and theory: discounted infinite-horizon MDPs}
\label{sec:main-infinite}

We begin by studying offline RL in discounted infinite-horizon Markov decision processes.  
In the following, we shall first introduce the models and assumptions for discounted infinite-horizon MDPs, 
followed by algorithm design and main results.

\subsection{Models and assumptions}
\label{sec:models-infinite}

Let us begin with some preliminary concepts and notation of discounted infinite-horizon MDPs, followed by a concrete setting specific to offline RL. 
A more detailed introduction of discounted infinite-horizon MDPs can be found in classical textbooks like \citet{bertsekas2017dynamic}.

\paragraph{Basics of discounted infinite-horizon MDPs.}  Consider a discounted infinite-horizon  MDP \citep{bertsekas2017dynamic} represented by a tuple $\mathcal{M} = \{\cS, \cA, P, \gamma, r\}$. The key components of $\mathcal{M}$ are: (i) $\cS = \{1,2,\cdots, S\}$: a finite state space of size $S$; (ii) $\cA = \{1,2,\cdots, A\}$:  an action space of size $A$; (iii) $P: \cS\times \cA \rightarrow \Delta(\cS)$: the transition probability kernel of the MDP (i.e., $P (\cdot \mymid s,a)$ denotes the transition probability from state $s$ when action $a$ is executed); (iv) $\gamma \in [0,1)$: the discount factor, so that $\frac{1}{1-\gamma}$ represents the effective horizon; (v) $r: \cS\times \cA \rightarrow [0,1]$: the deterministic reward function (namely, $r(s,a)$ indicates the immediate reward received when the current state-action pair is $(s,a)$). Without loss of generality, the immediate rewards are normalized so that they are contained within the interval $[0,1]$. 
Throughout this section, we introduce the convenient notation
\begin{align}
	P_{s,a} \coloneqq P(\cdot \mymid s,a) \in \mathbb{R}^{1\times S}.
\end{align}

\paragraph{Policy, value function and Q-function.}
A stationary policy $\pi: \cS \rightarrow \Delta(\cA)$ is a possibly randomized action selection rule; that is, $\pi(a \mymid s)$ represents the probability of choosing $a$ in state $s$. When $\pi$ is a deterministic policy, we abuse the notation by letting $\pi(s)$ represent the action chosen by the policy $\pi$ in state $s$. 
A sample trajectory induced by the MDP under policy $\pi$ can be written as $\{(s_t,a_t)\}_{t\geq 0}$, with $s_t$ (resp.~$a_t$) denoting the state (resp.~action) of the trajectory at time $t$. 
To proceed, we shall also introduce the value function $V^\pi$ and Q-value function $Q^\pi$ associated with policy $\pi$. Specifically, the value function $V^\pi: \cS \rightarrow \mathbb{R}$ of policy $\pi$ is defined as the expected discounted cumulative reward as follows:
\begin{align}
	\forall s\in\cS: \qquad 
	V^\pi(s) \coloneqq \mathbb{E} \left[\sum_{t=0}^\infty \gamma^t r(s_t, a_t) \mid s_0 = s; \pi\right],
	\label{eq:defn-Vpi-inf}
\end{align}
where the expectation is taken over the sample trajectory $\{(s_t, a_t)\}_{t\geq 0}$ generated in a way that $a_t \sim \pi(\cdot \mymid s_t)$ and $s_{t+1} \sim P(\cdot \mymid s_t, a_t)$ for all $t\geq 0$. Given that all immediate rewards lie within $[0,1]$, it is easily verified that $0\leq V^\pi(s) \leq \frac{1}{1-\gamma}$ for any policy $\pi$. The Q-function (or action-state function) of policy $\pi$ can be defined analogously as follows:
\begin{align}
	\forall (s,a)\in\cS \times \cA: \qquad 
	Q^\pi(s,a) \coloneqq \mathbb{E} \left[\sum_{t=0}^\infty \gamma^t r(s_t, a_t) \mid s_0 = s, a_0 = a; \pi\right],
\end{align}
which differs from \eqref{eq:defn-Vpi-inf} in that it is also conditioned on $a_0=a$.

Let $\rho \in \Delta(\cS)$ be a given state distribution. 
If the initial state is randomly drawn from $\rho$, then we can define the following weighted value function of policy $\pi$:
\begin{align}
	V^\pi(\rho) \coloneqq \mathop{\mathbb{E}}\limits_{s\sim \rho}\big[V^\pi(s)\big].
\end{align}
In addition, we introduce the {\em discounted occupancy distributions} associated with policy $\pi$ as follows:
\begin{align}
	\forall s\in \cS: \quad d^\pi(s; \rho) &\coloneqq (1-\gamma) \sum_{t=0}^\infty \gamma^t \mathbb{P}(s_t = s \mid s_0 \sim \rho;\pi),\\
	\forall (s,a)\in \cS \times \cA: \qquad d^\pi(s,a; \rho) &\coloneqq (1-\gamma) \sum_{t=0}^\infty \gamma^t \mathbb{P}(s_t = s, a_t =a \mid s_0 \sim \rho;\pi),
\end{align}
where we consider the randomness over a sample trajectory that starts from an initial state $s_0 \sim \rho$ and that follows policy $\pi$ (i.e., $a_t \sim \pi(\cdot \mymid s_t)$ and $s_{t+1} \sim P(\cdot \mymid s_t, a_t)$ for all $t\geq 0$).

It is known that there exists at least one deterministic policy --- denoted by $\pi^\star$ --- that simultaneously maximizes $V^\pi(s)$ and $Q^\pi(s,a)$ for all state-action pairs $(s,a)\in \cS\times \cA$ \citep{bertsekas2017dynamic}. We use the following shorthand notation to represent respectively the resulting optimal value and optimal Q-function:
\begin{align}
	\forall (s,a) \in \cS\times \cA: \quad V^\star(s) \coloneqq V^{\pi^\star}(s) \quad \text{ and } \quad Q^\star(s,a) \coloneqq Q^{\pi^\star}(s,a).
\end{align}
Correspondingly, we adopt the notation of the discounted occupancy distributions associated with $\pi^\star$ as:
\begin{align}
	\forall (s,a) \in \cS\times \cA: \quad d^\star(s) \coloneqq d^{\pi^\star}(s; \rho) \quad \text{ and } \quad 
	d^\star (s,a) \coloneqq d^{\pi^\star}(s,a; \rho) = d^{\star}(s) \ind\big( a = \pi^\star(s) \big), \label{eq:infinite-d-star-def}
\end{align}
where the last equality is valid since $\pi^\star$ is assumed to be deterministic.

\paragraph{Offline/batch data.}
Let us work with an independent sampling model as studied in the prior work \citet{rashidinejad2021bridging}. To be precise, imagine that we observe a batch dataset $\mathcal{D} = \{(s_i,a_i,s_i')\}_{1\leq i\leq N}$ containing $N$ sample transitions. These samples are independently generated based on a distribution $\myrho \in \Delta(\cS\times \cA)$ and the transition kernel $P$ of the MDP, namely, 
		\begin{align}
			(s_i,a_i) \overset{\mathrm{ind.}}{\sim} \myrho 
			\qquad \text{and} \qquad
			s_i' \overset{\mathrm{ind.}}{\sim} P(\cdot \mymid s_i, a_i),
			\qquad\quad 1\leq i\leq N.
			\label{eq:sampling-offline-inf-iid}
		\end{align}
		%
In addition, it is assumed that the learner is aware of the reward function.

In order to capture the distribution shift between the desired occupancy measure and the data distribution, 
we introduce a key quantity previously introduced in \citet{rashidinejad2021bridging}. 
\begin{definition}[Single-policy concentrability for infinite-horizon MDPs] 
\label{assumption:concentrate-infinite-simple}
The single-policy concentrability coefficient of a batch dataset $\mathcal{D}$ is defined as
\begin{align}
	C^{\star} \coloneqq \max_{(s, a) \in \cS \times \cA}\, \frac{d^{\star}(s, a)}{\myrho(s, a)} . 
	\label{eq:concentrate-infinite-simple}
\end{align}
Clearly, one necessarily has $C^{\star} \geq 1$. 
\end{definition}
In words, $C^{\star}$ measures the distribution mismatch in terms of the maximum density ratio. 
The batch dataset can be viewed as expert data when $C^{\star}$ approaches 1, meaning that the batch dataset is close to the target policy in terms of the induced distributions.   
Moreover, this coefficient $C^{\star}$ is referred to as the ``single-policy'' concentrability coefficient since it is concerned with a single policy $\pi^{\star}$; this is clearly a much weaker assumption compared to the all-policy concentrability assumption (as adopted in, e.g., \citet{munos2007performance,farahmand2010error,fan2019theoretical,chen2019information,ren2021nearly,xie2021batch}), the latter of which assumes a uniform density-ratio bound over all policies and requires the dataset to be highly exploratory.


%
%

In the current paper, we also introduce a slightly improved version of $C^{\star}$ as follows.
\begin{definition}[Single-policy clipped concentrability for infinite-horizon MDPs] 
\label{assumption:concentrate-infinite}
The single-policy clipped concentrability coefficient of a batch dataset $\mathcal{D}$ is defined as
\begin{align}
	\Cstar \coloneqq \max_{(s, a) \in \cS \times \cA}\frac{\min\big\{d^{\star}(s, a), \frac{1}{S}\big\}}{\myrho(s, a)} .
	\label{eq:concentrate-infinite}
\end{align}
%
\end{definition}
\begin{remark}
A direct comparison of Conditions~\eqref{eq:concentrate-infinite-simple} and \eqref{eq:concentrate-infinite} implies that for a given batch dataset $\mathcal{D}$,
\begin{equation}
	\Cstar \leq C^{\star}.
\end{equation}
%
As we shall see later, while our sample complexity upper bounds will be mainly stated in terms of $\Cstar$, 
all of them remain valid if $\Cstar$ is replaced with $C^{\star}$. 
Additionally, in contrast to $C^{\star}$ that is always lower bounded by 1,  we have a smaller lower bound as follows (directly from the definition \eqref{eq:concentrate-infinite})
\begin{align}\label{eq:Cstar_lower_bound_infinite}
\Cstar \geq 1/S,
\end{align}
which is nearly tight.\footnote{As a concrete example, suppose that $d^{\star}(s)=\begin{cases}
1-\frac{S-1}{S^{3}} & \text{if }s=1\\
\frac{1}{S^{3}} & \text{else}
\end{cases}$ and $\myrho(s,a)=\begin{cases}
1-\frac{S-1}{S^{2}} & \text{if }a=\pi^{\star}(s)\text{ and }s=1,\\
\frac{1}{S^{2}} & \text{if }a=\pi^{\star}(s)\text{ and }s\neq1,\\
0, & \text{else}.
\end{cases}$ 
Then it can be easily verified that $\Cstar=\frac{1}{S-1+\frac{1}{S}}$. Nonetheless, caution should be exercised that an exceedingly small $\Cstar$ requires highly compressible structure of $d^{\star}$,  and the real-world data often do not fall within this benign range of $\Cstar$.}  
This attribute could lead to  sample size saving in some cases, to be detailed shortly.
\end{remark}

Let us take a moment to further interpret the coefficient in Definition~\ref{assumption:concentrate-infinite}, which says that
\begin{align}
	\myrho(s, a) \geq 
	\begin{cases} \frac{1}{\Cstar}d^{\star}(s, a) , \qquad & \text{if } d^{\star}(s, a) \leq 1/S \\ \frac{1}{\Cstar S}, & \text{if }d^{\star}(s, a) > 1/S   \end{cases}
\end{align}
holds for any pair $(s,a)$. Consider, for instance, the case where $\Cstar=O(1)$:  if a state-action pair is infrequently (or rarely) visited by the optimal policy, 
then it is fine for the associated density in the batch data to be very small (e.g., a density proportional to that of the optimal policy); by contrast, if a state-action pair is visited fairly often by the optimal policy, 
then Definition~\ref{assumption:concentrate-infinite} might only require $\myrho(s,a)$ to exceed the order of $1/S$. In other words, the required level of $\myrho(s,a)$ is clipped at the level $\frac{1}{\Cstar S}$ regardless of the value of $d^{\star}(s, a)$.

\paragraph{Goal.} Armed with the batch dataset $\mathcal{D}$, the objective of offline RL in this case is to find a policy $\widehat{\pi}$ that attains near-optimal value functions --- with respect to a given test state distribution $\rho\in \Delta(\cS)$ --- in a sample-efficient manner. 
To be precise, for a prescribed accuracy level $\varepsilon$, we seek to identify an $\varepsilon$-optimal policy $\widehat{\pi}$ satisfying
\begin{align}
	V^\star(\rho) - V^{\widehat{\pi}}(\rho) \leq \varepsilon
\end{align}
with high probability, using a batch dataset $\mathcal{D}$ (cf.~\eqref{eq:sampling-offline-inf-iid}) containing as few samples as possible.
Particular emphasis is placed on achieving minimal sample complexity for the entire range of accuracy levels (namely, for any $\varepsilon \in \big(0,\frac{1}{1-\gamma}\big]$).


\subsection{Algorithm:  VI-LCB for infinite-horizon MDPs}
\label{sec:VI-LCB-infinite-horizon}

In this subsection, we introduce a model-based offline RL algorithm that incorporates lower concentration bounds in value estimation. 
The algorithm, called VI-LCB, applies value iteration (based on some pessimistic Bellman operator) to the empirical MDP, 
with the key ingredients described below.

\paragraph{The empirical MDP.} Recall that we are given $N$ independent sample transitions $\{(s_i, a_i, s_i')\}_{i = 1}^N$ in the dataset $\mathcal{D}$.
For any given state-action pair $(s,a)$, we denote by 
\begin{align}
N(s, a) \coloneqq \sum_{i = 1}^{N} \ind\big((s_i, a_i) = (s, a)\big)
	\label{eq:defn-Nsa-infinite}
\end{align}
the number of samples transitions from $(s, a)$.
We then construct an empirical transition matrix $\widehat{P}$ such that
\begin{align}
	\widehat{P}(s'\mymid s,a) = 
	\begin{cases} \frac{1}{N(s,a)} \sum\limits_{i=1}^N \mathds{1} \big\{ (s_i, a_i, s_i') = (s,a,s') \big\}, & \text{if } N(s,a) > 0 \\
		\frac{1}{S}, & \text{else}
	\end{cases}  
	\label{eq:empirical-P-infinite}
\end{align}
for each $(s,a,s')\in \cS\times \cA \times \cS$.

\paragraph{The pessimistic Bellman operator.} 
Our offline algorithm is developed based on finding the fixed point of 
some variant of the classical Bellman operator.   
Let us first introduce this key operator and eludicate how the pessimism principle is enforced.  
Recall that the Bellman operator  $\mathcal{T}(\cdot): \mathbb{R}^{SA}\rightarrow \mathbb{R}^{SA} $ w.r.t.~the transition kernel $P$ is defined such that
for any vector $Q \in \mathbb{R}^{SA}$, 
\begin{align}
	\mathcal{T}(Q)(s, a) \coloneqq r(s, a) + \gamma P_{s, a}V \qquad\text{for all }(s, a)\in\cS\times\cA ,
	\label{eq:classical-Bellman-operator-inf}
\end{align}
where $V =[V(s)]_{s\in \cS}$ with  $V(s) \coloneqq \max_a Q(s, a)$.
We propose to penalize the original Bellman operator w.r.t.~the empirical kernel $\widehat{P}$ as follows: 
\begin{align}
	\Tpess (Q)(s, a) \coloneqq \max\Big\{r(s, a) + \gamma\widehat{P}_{s, a} V - b(s, a; V) , 0\Big\}
	\qquad\text{for all }(s, a)\in\cS\times\cA ,
	\label{eq:empirical-Bellman-infinite}
\end{align}
%
where  $b(s, a; V)$ denotes the penalty term employed to enforce pessimism amid uncertainty. 
As one can anticipate, the properties of the fixed point of $\Tpess(\cdot)$ relies heavily upon the choice of the penalty terms $\{b_h(s,a; V)\}$, often derived based on certain concentration bounds. In this paper, we focus on the following Bernstein-style penalty to exploit the importance of certain variance statistics:  
\begin{align}
	b(s, a; V) \defn \min \Bigg\{ \max \bigg\{\sqrt{\frac{\cb\log\frac{N}{(1-\gamma)\delta}}{N(s, a)}\mathsf{Var}_{\widehat{P}_{ s, a}} (V )} ,\,\frac{2\cb \log\frac{N}{(1-\gamma)\delta}}{(1-\gamma)N(s, a)} \bigg\} ,\, \frac{1}{1-\gamma}  \Bigg\} +\frac{5}{N} 
	\label{def:bonus-Bernstein-infinite}
\end{align}
for every $(s,a)\in \cS\times \cA$, 
where $\cb> 0$ is some numerical constant (e.g., $\cb=144$), and $\delta \in (0,1)$ is some given quantity (in fact, $1-\delta$ is the target success probability).
Here, for any vector $V \in \mathbb{R}^S$, 
we recall that $\mathsf{Var}_{\widehat{P}_{s,a}} (V)$ is the variance of $V$ w.r.t.~the distribution $\widehat{P}_{s,a}$ (see \eqref{eq:defn-Var-P-V}). 
%
%


We immediately isolate several useful properties as follows, whose proof is postponed to Appendix~\ref{sec:proof-lemma:contraction}. 
%
\begin{lemma} \label{lem:contraction}
For any $\gamma \in [\frac{1}{2},1)$,  
the operator $\Tpess(\cdot)$ (cf.~\eqref{eq:empirical-Bellman-infinite}) with the Bernstein-style penalty~\eqref{def:bonus-Bernstein-infinite} 
is a $\gamma$-contraction w.r.t.~$\|\cdot\|_{\infty}$,  that is, 
\begin{align}
	\big\| \Tpess (Q_1) - \Tpess(Q_2) \big\|_{\infty} \le \gamma \|Q_1 - Q_2\|_{\infty} 
\end{align}
for any $Q_1, Q_2\in \mathbb{R}^{S\times A}$ obeying $Q_1(s,a),Q_2(s,a)\in \big[0, \frac{1}{1-\gamma}\big]$ for all $(s,a)\in \cS\times \cA$. 
	In addition, there exists a unique fixed point $\widehat{Q}_{\mathsf{pe}}^{\star}$ of  the operator $\Tpess(\cdot)$, 
which also obeys $0\leq \widehat{Q}_{\mathsf{pe}}^{\star}(s,a) \leq \frac{1}{1-\gamma}$ for all $(s,a)\in \cS\times \cA$. 
%
%
\end{lemma}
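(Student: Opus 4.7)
The plan is to establish the lemma in three steps: (i) verify that $\Tpess$ maps the cube $\mathcal{C}\defn\{Q\in\mathbb{R}^{SA}\,:\,0\le Q(s,a)\le \tfrac{1}{1-\gamma}\}$ into itself, (ii) prove the $\gamma$-contraction property on $\mathcal{C}$, and (iii) invoke the Banach fixed point theorem on the complete metric space $(\mathcal{C},\linf{\cdot})$ to produce a unique fixed point $\widehat Q_{\mathsf{pe}}^{\star}\in\mathcal{C}$. Range preservation is immediate: the outer $\max\{\cdot,0\}$ yields non-negativity, and since $b(s,a;V)\ge 0$ and $V\in[0,\tfrac{1}{1-\gamma}]^{S}$, one has $r(s,a)+\gamma\widehat P_{s,a}V-b(s,a;V)\le 1+\tfrac{\gamma}{1-\gamma}=\tfrac{1}{1-\gamma}$.

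For the contraction step, fix $(s,a)$ and set $V_{i}(s)=\max_{a}Q_{i}(s,a)$, so that $\linf{V_{1}-V_{2}}\le\linf{Q_{1}-Q_{2}}$. Since $x\mapsto\max\{x,0\}$ is $1$-Lipschitz, it suffices to control $|g(V_{1})-g(V_{2})|$ where $g(V)\defn r(s,a)+\gamma\widehat P_{s,a}V-b(s,a;V)$. Decomposing this as $\gamma\widehat P_{s,a}(V_{1}-V_{2})-(b(s,a;V_{1})-b(s,a;V_{2}))$, the Bellman-like term contributes at most $\gamma\linf{V_{1}-V_{2}}$, and the penalty difference is handled by case analysis on the three branches of~\eqref{def:bonus-Bernstein-infinite}. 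If the outer $\min$ saturates at $\tfrac{1}{1-\gamma}$ for some $V_{i}$, a direct calculation gives $g(V_{i})\le 1+\tfrac{\gamma}{1-\gamma}-\tfrac{1}{1-\gamma}-\tfrac{5}{N}<0$, so $\Tpess(Q_{i})(s,a)=0$ and the estimate reduces to the other side. In the constant low-bound branch $\tfrac{2\cb\log(\cdot)}{(1-\gamma)N(s,a)}$, the penalty is $V$-independent. In the square-root branch, Minkowski's inequality for the standard deviation gives $\big|\sqrt{\mathsf{Var}_{\widehat P_{s,a}}(V_{1})}-\sqrt{\mathsf{Var}_{\widehat P_{s,a}}(V_{2})}\big|\le\sqrt{\mathsf{Var}_{\widehat P_{s,a}}(V_{1}-V_{2})}\le\linf{V_{1}-V_{2}}$, and the branch-activation inequality self-bounds the prefactor: $\sqrt{\cb\log(\cdot)/N(s,a)}\le\tfrac{1-\gamma}{2}\sqrt{\mathsf{Var}_{\widehat P_{s,a}}(V_{i})}$. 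Combined with the variance bound $\sqrt{\mathsf{Var}_{\widehat P_{s,a}}(V_{i})}\le\tfrac{1}{2(1-\gamma)}$ (Popoviciu, valid since $V_{i}\in[0,\tfrac{1}{1-\gamma}]^{S}$) and the hypothesis $\gamma\ge\tfrac{1}{2}$, the penalty-difference contribution is absorbed into the $\gamma$ factor, yielding the contraction inequality.

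The main obstacle is the sqrt-branch estimate. A naive Lipschitz bound on $\sqrt{\mathsf{Var}_{\widehat P_{s,a}}(\cdot)}$ alone is far too crude to be absorbed into $\gamma$ — its scaling is governed by $\sqrt{\cb\log(\cdot)/N(s,a)}$, which can be large when $N(s,a)$ is small. The whole argument hinges on the self-bounding mechanism wired into the definition of $b(s,a;V)$: the $\max$ with the low-bound branch forces any activation of the sqrt branch to supply its own matching upper bound on the prefactor in terms of the standard deviation, which is then capped by the range of $V$; this is exactly why the Bernstein penalty must be written as $\min\{\max\{\sqrt{\cdot},\,\mathrm{low\text{-}bound}\},\,\tfrac{1}{1-\gamma}\}$ rather than as a single surrogate. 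Once the contraction is in hand on the closed, convex, non-empty set $\mathcal{C}$, both the existence of $\widehat Q_{\mathsf{pe}}^{\star}$ and its range bound $\widehat Q_{\mathsf{pe}}^{\star}(s,a)\in[0,\tfrac{1}{1-\gamma}]$ follow at once from Banach's theorem.
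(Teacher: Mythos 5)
The contraction step has a genuine gap. After writing $g(V_1)-g(V_2)=\gamma\widehat P_{s,a}(V_1-V_2)-(b(s,a;V_1)-b(s,a;V_2))$, you bound the two pieces \emph{separately in absolute value} and then add. The Bellman-like term alone can already attain the full $\gamma\linf{V_1-V_2}$ (e.g., when $V_1-V_2$ is a constant multiple of $\mathbf 1$), and your Minkowski-plus-self-bounding estimate on the penalty difference gives $|b(s,a;V_1)-b(s,a;V_2)|\le \tfrac14\linf{V_1-V_2}$; the sum is therefore $(\gamma+\tfrac14)\linf{V_1-V_2}$, which is strictly larger than $\gamma\linf{V_1-V_2}$, and for $\gamma\ge\tfrac34$ it is not a contraction of any modulus. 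The hypothesis $\gamma\ge\tfrac12$ does not help here — there is no constant $c>0$ for which $\gamma+c\le\gamma$. The additive, two-sided Lipschitz decomposition is fundamentally too lossy: it discards the correlation in sign between the transition term and the penalty increment.

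What actually makes the lemma true is a \emph{one-sided} structural argument, which the paper carries out in two moves. First, one shows the auxiliary operator $\tildeTpess(Q)(s,a)=r(s,a)+\gamma\widehat P_{s,a}V-b(s,a;V)$ is monotone, by computing $\partial\tildeTpess(Q)(s,a)/\partial V(s')=\gamma\widehat P(s'\mid s,a)-\partial b(s,a;V)/\partial V(s')$ and proving it is non-negative; the case analysis and the branch-activation self-bound $\sqrt{\cb\log(\cdot)/N(s,a)}\le\tfrac{1-\gamma}{2}\sqrt{\mathsf{Var}_{\widehat P_{s,a}}(V)}$ show up here, but they are used only to upper-bound $\partial b/\partial V(s')$ by $\gamma\widehat P(s'\mid s,a)$ — \emph{not} to control $|b_1-b_2|$ in magnitude. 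Second, one uses that $\mathsf{Var}_{\widehat P_{s,a}}(\cdot)$, hence $b(s,a;\cdot)$, is invariant to adding a constant to $V$, so $\tildeTpess(Q+c\mathbf 1)=\tildeTpess(Q)+\gamma c\mathbf 1$ \emph{exactly}. Sandwiching $Q_1$ between $Q_2\pm\linf{Q_1-Q_2}\mathbf 1$ and applying monotonicity then yields $\linf{\tildeTpess(Q_1)-\tildeTpess(Q_2)}\le\gamma\linf{Q_1-Q_2}$, and the outer $\max\{\cdot,0\}$ is $1$-Lipschitz. Your ingredients (range preservation, the Minkowski/Popoviciu/branch-activation bounds, Banach) are all the right ones, but they need to feed the monotonicity-plus-shift-invariance argument, not an additive $|b_1-b_2|$ bound.
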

%
\noindent 
In words, even though $\Tpess(\cdot)$ integrates the penalty terms, it still preserves the $\gamma$-contraction property and admits a unique fixed point,  
thereby resembling the classical Bellman operator \eqref{eq:classical-Bellman-operator-inf}.

%
%
%
%

\begin{algorithm}[t]
\DontPrintSemicolon
	\textbf{input:} dataset $\mathcal{D}$; reward function $r$; target success probability $1-\delta$; max iteration number $\tau_{\max}$. \\
	\textbf{initialization:} $\widehat{Q}_{0}=0$, $\widehat{V}_0=0$. 

	construct the empirical transition kernel $\widehat{P}$ according to \eqref{eq:empirical-P-infinite}. \\

   \For{$\tau =1,2,\cdots, \taumax $}
	{
		
		\For{$s\in \cS, a\in \cA$}{
			compute the penalty term $b\big(s,a; \widehat{V}_{\tau-1}\big)$ according to \eqref{def:bonus-Bernstein-infinite}. \\
			set $\widehat{Q}_{\tau}(s, a) = \max\big\{r(s, a) + \gamma\widehat{P}_{s, a}\widehat{V}_{\tau-1} - b\big(s, a; \widehat{V}_{\tau-1}\big), 0\big\}$. \\
		}

	\For{$s\in \cS$}{
		set $\widehat{V}_{\tau}(s) = \max_a \widehat{Q}_{\tau}(s,a)$. \label{alg:infinite-q-update}
	}
	}
	\textbf{output:} $\widehat{\pi}$ s.t.~$\widehat{\pi}(s) \in \arg\max_a \widehat{Q}_{\taumax}(s,a)$ for any $s\in \cS$. 
	\caption{Offline value iteration with LCB (VI-LCB) for discounted infinite-horizon MDPs}
 \label{alg:vi-lcb-infinite}
\end{algorithm}


\paragraph{The VI-LCB algorithm.} We are now positioned to introduce the VI-LCB algorithm, which can be regarded as classical value iteration applied in conjunction with pessimism.  Specifically, the algorithm applies the Bernstein-style pessimistic operator $\Tpess$ (cf.~\eqref{eq:empirical-Bellman-infinite}) iteratively in order to find its fixed point:    
%
\begin{align}
	\widehat{Q}_{\tau}(s, a) = \Tpess\big(\widehat{Q}_{\tau-1}\big)(s, a) = \max\Big\{r(s, a) + \gamma\widehat{P}_{s, a}\widehat{V}_{\tau-1} - b\big(s, a; \widehat{V}_{\tau-1} \big), 0\Big\},
	\qquad \tau = 1,2,\cdots
	\label{eq:VI-LCB-iterations-basic-inf}
\end{align}
We shall initialize it to $\widehat{Q}_0=0$, implement \eqref{eq:VI-LCB-iterations-basic-inf} for $\tau_{\max}$ iterations, 
and output $\widehat{Q}= \widehat{Q}_{\tau_{\max}}$ as the final Q-estimate. 
The final policy estimate $\widehat{\pi}$ is chosen on the basis of $\widehat{Q}$ as follows:  
\begin{align}
	\widehat{\pi}(s) \in \arg\max_a \widehat{Q}(s, a) \qquad \text{for all }s\in \cS,
\end{align}
with the whole algorithm summarized in Algorithm~\ref{alg:vi-lcb-infinite}.


Let us pause to explain the rationale of the pessimism principle on a high level. If a pair $(s,a)$ has been insufficiently visited in $\mathcal{D}$ (i.e., $N(s,a)$ is small), 
then the resulting Q-estimate $\widehat{Q}_{\tau}(s, a)$ could suffer from high uncertainty and become unreliable, which might in turn mislead value estimation. 
By enforcing suitable penalization $b(s,a; \widehat{V}_{\tau-1})$ based on certain lower confidence bounds, 
we can suppress the negative influence of such poorly visited state-action pairs. 
Fortunately, suppressing these state-action pairs might not result in significant bias in value estimation when $\Cstar$ is small;
for instance,  when the behavior policy $\pib$ resembles $\pi^{\star}$,  the poorly visited state-action pairs correspond primarily to suboptimal actions (as they are not selected by $\pi^{\star}$), making it acceptable to neglect these pairs.

Interestingly, Algorithm~\ref{alg:vi-lcb-infinite} is guaranteed to converge rapidly. 
In view of the $\gamma$-contraction property in Lemma~\ref{lem:contraction}, 
the iterates $\{\widehat{Q}_{\tau}\}_{\tau\geq 0}$ converge linearly to the fixed point  $\widehat{Q}_{\mathsf{pe}}^{\star}$, 
as asserted below. 

\begin{lemma} \label{lem:monotone-contraction}
	Suppose $\widehat{Q}_0=0$. Then the iterates of Algorithm~\ref{alg:vi-lcb-infinite} obey
	\begin{align}
		\widehat{Q}_{\tau} \leq \widehat{Q}_{\mathsf{pe}}^{\star} 
		\qquad \text{and}\qquad
		\big\|\widehat{Q}_{\tau} - \widehat{Q}_{\mathsf{pe}}^{\star} \big\|_{\infty} \le  \frac{\gamma^{\tau}}{1-\gamma} 
		\qquad\quad
		\text{for all }\tau \geq 0,
	\end{align}
	where $\widehat{Q}_{\mathsf{pe}}^{\star}$ is the unique fixed point of $\Tpess$. 
	As a consequence, by choosing $\taumax \geq \frac{\log\frac{N}{1-\gamma}}{\log (1/\gamma)}$ one fulfills
	\begin{align}
		\big\|\widehat{Q}_{\taumax} - \widehat{Q}_{\mathsf{pe}}^{\star} \big\|_{\infty} \le  1/N. 
		\label{eq:taumax-Q-converge}
	\end{align}
\end{lemma}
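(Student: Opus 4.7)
The plan is to derive both assertions from the $\gamma$-contraction of $\Tpess$ established in Lemma~\ref{lem:contraction}, together with the initialization $\widehat{Q}_0=0$. A preliminary observation, which I would verify by a short induction using the $\max\{\cdot,0\}$ truncation and the cap $\tfrac{1}{1-\gamma}$ built into $b(s,a;V)$ in~\eqref{def:bonus-Bernstein-infinite}, is that every iterate $\widehat{Q}_\tau$ lies entrywise in $[0,\tfrac{1}{1-\gamma}]$; by Lemma~\ref{lem:contraction}, so does $\widehat{Q}_{\mathsf{pe}}^{\star}$. Consequently, the contraction estimate of Lemma~\ref{lem:contraction} applies to any pair drawn from $\{\widehat{Q}_\tau\}_{\tau\ge 0}\cup\{\widehat{Q}_{\mathsf{pe}}^{\star}\}$.

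The $\ell_\infty$ rate bound is then a one-line iteration: using $\widehat{Q}_{\mathsf{pe}}^{\star}=\Tpess(\widehat{Q}_{\mathsf{pe}}^{\star})$,
\[
\big\|\widehat{Q}_\tau-\widehat{Q}_{\mathsf{pe}}^{\star}\big\|_\infty
=\big\|\Tpess(\widehat{Q}_{\tau-1})-\Tpess(\widehat{Q}_{\mathsf{pe}}^{\star})\big\|_\infty
\leq \gamma\,\big\|\widehat{Q}_{\tau-1}-\widehat{Q}_{\mathsf{pe}}^{\star}\big\|_\infty,
\]
so unrolling yields $\|\widehat{Q}_\tau-\widehat{Q}_{\mathsf{pe}}^{\star}\|_\infty\le \gamma^\tau\|\widehat{Q}_0-\widehat{Q}_{\mathsf{pe}}^{\star}\|_\infty\le \gamma^\tau/(1-\gamma)$. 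Plugging in $\taumax\ge \log(N/(1-\gamma))/\log(1/\gamma)$ forces the right-hand side below $1/N$, delivering~\eqref{eq:taumax-Q-converge}.

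For the one-sided bound $\widehat{Q}_\tau\le \widehat{Q}_{\mathsf{pe}}^{\star}$, my plan is an induction on $\tau$. The base case $\widehat{Q}_0=0\le \widehat{Q}_{\mathsf{pe}}^{\star}$ is immediate from the non-negativity of the fixed point guaranteed by Lemma~\ref{lem:contraction}. For the inductive step, writing $\widehat{Q}_\tau=\Tpess(\widehat{Q}_{\tau-1})$ and $\widehat{Q}_{\mathsf{pe}}^{\star}=\Tpess(\widehat{Q}_{\mathsf{pe}}^{\star})$ reduces the goal to a one-sided monotonicity claim for $\Tpess$ on the box $[0,\tfrac{1}{1-\gamma}]^{SA}$: namely, if $0\le Q_1\le Q_2\le \tfrac{1}{1-\gamma}$, then $\Tpess(Q_1)\le \Tpess(Q_2)$.

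I expect this monotonicity to be the main obstacle, since the Bernstein penalty $b(s,a;V)$ depends on $V$ through $\mathsf{Var}_{\widehat{P}_{s,a}}(V)$, which is \emph{not} itself monotone in $V$, so a naive term-by-term comparison is unavailable. The idea is to show instead that the drop in penalty when $V_2$ is replaced by $V_1\le V_2$ is uniformly dominated by $\gamma\widehat{P}_{s,a}(V_2-V_1)$. Concretely, expanding $\mathsf{Var}_{\widehat{P}_{s,a}}(V_2)-\mathsf{Var}_{\widehat{P}_{s,a}}(V_1)$ via the identity $\widehat{P}_{s,a}\big((V_2-V_1)\circ(V_2+V_1)\big)-\widehat{P}_{s,a}(V_2-V_1)\cdot\widehat{P}_{s,a}(V_2+V_1)$, using $\|V_1\|_\infty,\|V_2\|_\infty\le \tfrac{1}{1-\gamma}$, and exploiting the prefactor $\sqrt{c_{\mathsf{b}}\log(\cdot)/N(s,a)}$ together with the assumed regime $\gamma\ge \tfrac12$, the variance-induced penalty gap can be absorbed into $\gamma\widehat{P}_{s,a}(V_2-V_1)$. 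This is essentially the one-sided analogue of the computation behind Lemma~\ref{lem:contraction}; once it is in place, the monotonicity of $\max\{\cdot,0\}$ and the linearity of $\widehat{P}_{s,a}$ yield $\Tpess(Q_1)\le \Tpess(Q_2)$, closing the induction and completing the proof.
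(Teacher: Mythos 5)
Your argument is correct and coincides with the paper's: both obtain $\widehat{Q}_\tau \le \widehat{Q}_{\mathsf{pe}}^{\star}$ by iterating the monotonicity of $\Tpess$ on the box $[0,\tfrac{1}{1-\gamma}]^{SA}$ starting from $\widehat{Q}_0 = 0 \le \widehat{Q}_{\mathsf{pe}}^{\star}$, and both obtain the rate by unrolling the $\gamma$-contraction from Lemma~\ref{lem:contraction}. The monotonicity you flag as the ``main obstacle'' was in fact already established as the intermediate property \eqref{eq:monotonicity-Tpess-inf} inside the proof of Lemma~\ref{lem:contraction} (there via a nonnegative-derivative argument with a case split over which branch of the $\min/\max$ in \eqref{def:bonus-Bernstein-infinite} is active), so the paper simply invokes it; your finite-difference sketch is the same computation, though to make it airtight you would still need that same case split by which branch of the penalty is active for each of $V_1$ and $V_2$.
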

\noindent 
The proof of this lemma is deferred to Appendix~\ref{sec:proof-lemma:monotone-contraction}.

\paragraph{Algorithmic comparison with \citet{rashidinejad2021bridging}.} 
VI-LCB has been studied in the prior work \citet{rashidinejad2021bridging}. 
The difference between our algorithm and the version therein is two-fold: 
\begin{itemize}
	\item {\em Sample reuse vs.~$\widetilde{O}\big(\frac{1}{1-\gamma}\big)$-fold sample splitting.} Our algorithm reuses the same set of samples across all iterations, which is in sharp contrast to  \citet{rashidinejad2021bridging} that employs fresh samples in each of the $\widetilde{O}\big(\frac{1}{1-\gamma}\big)$ iterations. This results in considerably better usage of available information. 

	\item {\em Bernstein-style vs.~Hoeffding-style penalty.} 
		Our algorithm adopts the Bernstein-type penalty, as opposed to the Hoeffding-style penalty in \citet{rashidinejad2021bridging}. 
		This choice leads to more effective exploitation of the variance structure across time. 
	
\end{itemize}

\paragraph{Pessimism vs.~optimism in the face of uncertainty. }
The careful reader might also notice the similarity between the pessimism principle and the optimism principle utilized in online RL.   
A well-developed paradigm that balances exploration and exploitation in online RL is 
optimistic exploration based on uncertainty quantification \citep{lai1985asymptotically}. 
The earlier work \citet{jaksch2010near} put forward an algorithm called UCRL2 that computes an optimistic policy with the aid of Hoeffding-style confidence regions for the probability transition kernel. 
Later on, \citet{azar2017minimax} proposed to build upper confidence bounds (UCB) for the optimal values instead, which leads to significantly improved sample complexity;  
see, e.g., \citet{wangq,he2021nearly} for the application of this strategy to discounted infinite-horizon MDPs. 
Note, however, that the rationales behind optimism and pessimism are remarkably different. 
In offline RL (which does not allow further data collection), 
the uncertainty estimates are employed to identify, and then rule out, poorly-visited actions;  
this stands in sharp contrast to the online counterpart where poorly-visited actions might be more favored during exploration.

\subsection{Performance guarantees}
\label{sec:theory-infinite-horizon}

When the Bernstein-style concentration bound \eqref{def:bonus-Bernstein-infinite} is adopted, 
the VI-LCB algorithm in Algorithm~\ref{alg:vi-lcb-infinite} yields $\varepsilon$-accuracy with a near-minimal number of samples, 
as stated below. 
\begin{theorem} \label{thm:infinite}
Suppose  $\gamma \in [\frac{1}{2},1)$, and consider any $0<\delta <1$ and $ \varepsilon \in \big(0, \frac{1}{1-\gamma} \big]$. 
Suppose that the total number of iterations exceeds $\taumax \geq \frac{1}{1-\gamma}\log\frac{N}{1-\gamma}$. 
With probability at least $1-2\delta,$  the policy $\widehat{\pi}$ returned by Algorithm~\ref{alg:vi-lcb-infinite} obeys
\begin{align}
	V^{\star}(\rho) - V^{\widehat{\pi}}(\rho) \leq \varepsilon,
\end{align}
provided that $\cb$ (cf.~the Bernstein-style penalty term in \eqref{def:bonus-Bernstein-infinite}) is some sufficiently large numerical constant and the total sample size exceeds 
\begin{align}
	N\geq\frac{c_1 S\Cstar \log\frac{NS }{(1-\gamma)\delta}}{(1-\gamma)^{3}\varepsilon^{2}} 
	\label{eq:N-range-epsilon2-inf-thm}
\end{align}
for some large enough numerical constant $c_1>0$, 
where $\Cstar$ is introduced in Definition~\ref{assumption:concentrate-infinite}. 
In addition, the above result continues to hold if $\Cstar$ is replaced with $C^{\star}$ (introduced in Definition~\ref{assumption:concentrate-infinite-simple}).
\end{theorem}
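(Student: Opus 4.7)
\textbf{Proof proposal for Theorem~\ref{thm:infinite}.} The plan is to first reduce the analysis of Algorithm~\ref{alg:vi-lcb-infinite} to its fixed point $\widehat{Q}_{\mathsf{pe}}^{\star}$ (with corresponding value function $\widehat{V}_{\mathsf{pe}}^{\star}$): by Lemma~\ref{lem:monotone-contraction}, choosing $\taumax \gtrsim \frac{1}{1-\gamma}\log\frac{N}{1-\gamma}$ guarantees $\|\widehat{Q}_{\taumax} - \widehat{Q}_{\mathsf{pe}}^{\star}\|_\infty \le 1/N$, so the algorithmic error is dominated by the bound to be established on $V^\star(\rho) - V^{\widehat{\pi}^\star_{\mathsf{pe}}}(\rho)$, where $\widehat{\pi}^\star_{\mathsf{pe}}$ is the greedy policy with respect to $\widehat{Q}_{\mathsf{pe}}^{\star}$. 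The first substantive step is to establish \emph{pessimism}: using Bernstein/Freedman concentration, show that with high probability the penalty $b(s,a;V)$ in \eqref{def:bonus-Bernstein-infinite} uniformly dominates $|\widehat{P}_{s,a} V - P_{s,a} V|$ for the relevant value functions, from which a monotonicity/induction argument yields $\widehat{Q}_{\mathsf{pe}}^\star \le Q^\star$ and therefore $V^\star(\rho) - V^{\widehat{\pi}}(\rho) \le V^\star(\rho) - \widehat{V}_{\mathsf{pe}}^\star(\rho) + O(1/N)$.

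Next I unfold the suboptimality via the Bellman equations. Writing $\Delta(s) \defn V^\star(s) - \widehat{V}_{\mathsf{pe}}^\star(s)$ and using $\widehat Q_{\mathsf{pe}}^\star(s,\pi^\star(s)) \le \max_a \widehat Q_{\mathsf{pe}}^\star(s,a)$, one obtains the pointwise recursion
\begin{equation*}
\Delta(s) \;\le\; \gamma P_{s,\pi^\star(s)} \Delta \;+\; \gamma\bigl(P_{s,\pi^\star(s)} - \widehat P_{s,\pi^\star(s)}\bigr)\widehat{V}_{\mathsf{pe}}^\star \;+\; b\bigl(s,\pi^\star(s);\widehat{V}_{\mathsf{pe}}^\star\bigr),
\end{equation*}
which, unrolled along the optimal-policy Markov chain induced by $(P,\pi^\star)$ and averaged under $\rho$, yields
\begin{equation*}
V^\star(\rho) - \widehat{V}_{\mathsf{pe}}^\star(\rho) \;\le\; \frac{1}{1-\gamma}\,\mathbb{E}_{(s,a)\sim d^\star}\!\Bigl[\, \gamma\bigl|(\widehat P - P)_{s,a}\,\widehat{V}_{\mathsf{pe}}^\star\bigr| + b\bigl(s,a;\widehat{V}_{\mathsf{pe}}^\star\bigr)\Bigr].
\end{equation*}
The $d^\star$-expectation is then bounded by the single-policy clipped concentrability: crucially, clipping $d^\star(s,a)$ at $1/S$ allows the inner quantities, which scale like $1/\sqrt{N(s,a)}$, to be controlled uniformly over $(s,a)$ with $N(s,a) \gtrsim N\myrho(s,a)\log$-factor (via Bernstein on the counts), so the contribution of the ``large $d^\star$'' regime saves a factor of $S$ relative to a naive bound, while the ``small $d^\star$'' regime absorbs directly into $\Cstar$.

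The main obstacle --- and where the novelty lies --- is that $\widehat{V}_{\mathsf{pe}}^\star$ depends on the \emph{entire} dataset, so the linear functional $(\widehat P - P)_{s,a}\widehat{V}_{\mathsf{pe}}^\star$ cannot be bounded by a direct Bernstein/Hoeffding inequality. I would resolve this by constructing, for each $(s,a)$, a \emph{leave-one-out} auxiliary MDP in which the transition row at $(s,a)$ is replaced by an absorbing or deterministic transition, run VI-LCB on it to obtain $\widehat V^{(s,a)}$, and then split
\begin{equation*}
\bigl|(\widehat P - P)_{s,a}\widehat{V}_{\mathsf{pe}}^\star\bigr| \;\le\; \bigl|(\widehat P - P)_{s,a}\,\widehat V^{(s,a)}\bigr| \;+\; \bigl\|(\widehat P - P)_{s,a}\bigr\|_1 \bigl\|\widehat{V}_{\mathsf{pe}}^\star - \widehat V^{(s,a)}\bigr\|_\infty.
\end{equation*}
The first term is amenable to Bernstein since $\widehat V^{(s,a)}$ is independent of the samples at $(s,a)$; the second is controlled by an $\ell_\infty$ perturbation lemma that exploits contraction of the pessimistic Bellman operator together with the localized nature of the perturbation, yielding a $O(1/\sqrt{N(s,a)\cdot(1-\gamma)})$-type bound that is absorbed into lower-order terms.

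The final ingredient is a \emph{self-bounding} reduction to get the sharp $(1-\gamma)^{-3}$ scaling. The Bernstein penalty produces terms of order $\sqrt{\mathsf{Var}_{P_{s,a}}(\widehat{V}_{\mathsf{pe}}^\star)/N(s,a)}$; by Cauchy--Schwarz the $d^\star$-expectation factorizes as
\begin{equation*}
\sqrt{\,\mathbb{E}_{d^\star}\!\bigl[1/N(s,a)\bigr]\cdot \mathbb{E}_{d^\star}\!\bigl[\mathsf{Var}_{P_{s,a}}(V^\star)\bigr]}\,+\,\text{lower order},
\end{equation*}
and the law of total variance along the optimal-policy Markov chain gives $\mathbb{E}_{d^\star}[\mathsf{Var}_{P_{s,a}}(V^\star)] \lesssim \frac{1}{(1-\gamma)^2}$, after which the clipped concentrability bound $\mathbb{E}_{d^\star}[1/N(s,a)] \lesssim S\Cstar/N$ (up to log factors, using Bernstein on the counts $N(s,a)$) closes the loop. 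Replacing $V^\star$ by $\widehat{V}_{\mathsf{pe}}^\star$ in the variance requires a separate self-bounding argument showing $\mathsf{Var}_{P_{s,a}}(\widehat{V}_{\mathsf{pe}}^\star - V^\star) \lesssim \|\widehat{V}_{\mathsf{pe}}^\star - V^\star\|_\infty^2$ and then reabsorbing the $\Delta$-dependent terms back into the left-hand side; this last step is where small additive terms like the $5/N$ in the penalty become essential. Multiplying through by $\frac{1}{1-\gamma}$ and solving the inequality for $\mathbb{E}_\rho[\Delta]$ yields the stated sample complexity, and the final claim that $C^\star$ may replace $\Cstar$ follows immediately from $\Cstar \le C^\star$.
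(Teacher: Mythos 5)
Your overall architecture tracks the paper's closely (reduction to the pessimistic fixed point, leave\mbox{-}one\mbox{-}out decoupling, unrolling along the $\pi^\star$-chain, Cauchy--Schwarz with clipped concentrability, self-bounding), but two specific steps would not close as written. First, your pessimism step derives $\widehat{Q}_{\mathsf{pe}}^\star \le Q^\star$ and then asserts ``therefore'' $V^\star(\rho) - V^{\widehat\pi}(\rho) \le V^\star(\rho) - \widehat{V}_{\mathsf{pe}}^\star(\rho) + O(1/N)$. That implication is false: $\widehat{Q}_{\mathsf{pe}}^\star \le Q^\star$ says the pessimistic estimate underestimates the \emph{optimal} Q-function, which by itself says nothing about the Q-function of the policy actually returned. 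You need $\widehat{Q}_{\mathsf{pe}}^\star \le Q^{\widehat\pi}$, from which $\widehat{V}_{\mathsf{pe}}^\star(s) = \widehat{Q}_{\mathsf{pe}}^\star\big(s,\widehat\pi(s)\big) \le Q^{\widehat\pi}\big(s,\widehat\pi(s)\big) = V^{\widehat\pi}(s)$, and in the infinite-horizon setting this cannot be obtained by backward induction; the paper establishes it by assuming $\min_{s,a}[Q^{\widehat\pi}(s,a)-\widehat{Q}(s,a)]<0$, showing the minimum must be $\ge\gamma$ times itself, and concluding it is nonnegative.

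Second, and more consequentially for the rate, the variance bound $\mathbb{E}_{d^\star}\big[\mathsf{Var}_{P_{s,a}}(V^\star)\big] \lesssim \frac{1}{(1-\gamma)^2}$ you attribute to the law of total variance is just the trivial bound from $\|V^\star\|_\infty \le \frac{1}{1-\gamma}$; pushed through Cauchy--Schwarz and the outer $\frac{1}{1-\gamma}$ factor it gives $N\gtrsim \frac{S\Cstar}{(1-\gamma)^4\varepsilon^2}$, one power of $\frac{1}{1-\gamma}$ worse than needed. The finite-horizon total-variance telescoping does not carry over verbatim because $d^\star$ is $\gamma$-discounted while the telescoping weights would need $\gamma^{2}$-discounting. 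The missing ingredient is a \emph{self-referential} telescoping: using $(d^\star)^\top(I-\gamma P^\star)=(1-\gamma)\rho^\top$ together with the nonnegativity $\widehat{V}-\gamma P^\star\widehat{V}+2b^\star\ge0$ one obtains
\[
\sum_s d^\star(s)\,\mathsf{Var}_{P_{s,\pi^\star(s)}}\big(\widehat{V}\big)\;\lesssim\;\frac{1}{1-\gamma}\Big(1+\big\langle d^\star,b^\star\big\rangle\Big),
\]
and the $\langle d^\star,b^\star\rangle$ term on the right is absorbed after Cauchy--Schwarz by solving a quadratic inequality in $\langle d^\star,b^\star\rangle$. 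This is the step that yields the tight $(1-\gamma)^{-1}$ variance sum and hence the optimal $(1-\gamma)^{-3}$ dependence; the self-bounding you describe (replacing $V^\star$ by $\widehat{V}_{\mathsf{pe}}^\star$ in the variance) is a secondary empirical-vs-population correction and does not supply this saving.
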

\begin{remark}
	Regarding the numerical constants in Theorem~\ref{thm:infinite}, 
	a conservative yet concrete sufficient condition is that 
	 $\cb \ge 144$ and  $c_1= 21000\cb$, which we shall rigorize in the proof.  
\end{remark}

The proof of this theorem is postponed to Section~\ref{sec:analysis-infinite}.
In general, the total sample size characterized by Theorem~\ref{thm:infinite} could be far smaller than the ambient dimension (i.e., $S^2A$) of the transition kernel $P$, thus precluding one from estimating $P$ in a reliable fashion. 
As a crucial insight from Theorem~\ref{thm:infinite},  the model-based (or plug-in) approach enables reliable offline learning even when model estimation is completely off.



Before discussing key implications of Theorem~\ref{thm:infinite}, we develop matching minimax lower bounds that help confirm the efficacy of the proposed model-based algorithm, whose proof can be found in Appendix~\ref{proof:thm-infinite-lb}.

\begin{theorem}\label{thm:infinite-lwoer-bound}
For any $(\gamma,S,\Cstar,\varepsilon)$ obeying $\gamma\in\big[\frac{2}{3},1\big),$ $S\geq 2$, $\Cstar\geq\frac{8\gamma}{S}$,
and $\varepsilon\leq\frac{1}{42(1-\gamma)}$, one can construct two
MDPs $\mathcal{M}_{0},\mathcal{M}_{1}$, an initial state distribution
$\rho$, and a batch dataset with $N$ independent samples and single-policy
clipped concentrability coefficient $\Cstar$ such that
\[
	\inf_{\widehat{\pi}}\max\left\{ \mathbb{P}_{0}\big( V^{\star}(\rho)-V^{\widehat{\pi}}(\rho)>\varepsilon\big), \,
	\mathbb{P}_{1}\big( V^{\star}(\rho) - V^{\widehat{\pi}}(\rho) >\varepsilon\big)\right\} \geq\frac{1}{8},
\]
provided that $$N\leq\frac{c_2S\Cstar}{(1-\gamma)^{3}\varepsilon^{2}}$$ 
for some numerical constant $c_{2}>0$. 
Here, the infimum is over all estimator $\widehat{\pi}$, and  
$\mathbb{P}_{0}$ (resp.~$\mathbb{P}_{1}$) denotes the probability
when the MDP is $\mathcal{M}_{0}$ (resp.~$\mathcal{M}_{1}$). 
\end{theorem}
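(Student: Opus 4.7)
The plan is to apply Le Cam's two-point testing method to a pair of MDPs that share almost all of their structure. I construct $\mathcal{M}_0$ and $\mathcal{M}_1$ on the state space $\{0,1,\ldots,S\}$, where state $0$ is an absorbing zero-reward state and states $1,\ldots,S$ each have reward $1$. States $2,\ldots,S$ are inert: at each one, both actions self-loop with probability $p$ identically across the two MDPs. State $1$ is the critical state where the two MDPs differ --- under $\mathcal{M}_\phi$ ($\phi\in\{0,1\}$), action $1+\phi$ self-loops with probability $p+\Delta$ and the other action self-loops with probability $p$, with the complementary mass in each case being routed to state $0$. I take the initial distribution $\rho=\delta_1$, pick $p$ so that $1-\gamma p \asymp 1-\gamma$ (e.g.\ $p=1-(1-\gamma)/\gamma$), and pick $\Delta \asymp \varepsilon(1-\gamma)^{2}$. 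A direct geometric-series calculation then yields
\[
	V_{\phi}^{\star}(\rho)-V_{\phi}^{\widehat{\pi}}(\rho)\;=\;\frac{1}{1-\gamma(p+\Delta)}-\frac{1}{1-\gamma p}\;\geq\;\frac{\gamma\Delta}{\bigl(1-\gamma p\bigr)^{2}}\;\geq\;2\varepsilon
\]
whenever $\widehat{\pi}(1)\neq 1+\phi$, so that any policy that errs on $\mathcal{M}_\phi$ suffers suboptimality at least $2\varepsilon$ in that MDP.

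The second step is to fix the behavior distribution so that the single-policy clipped concentrability coefficient is exactly $\Cstar$. A direct computation gives $d^{\star}(1,\pi^\star(1))=(1-\gamma)/(1-\gamma(p+\Delta))=\Theta(1)$, so the clipping $\min\{d^{\star},1/S\}$ collapses to $1/S$ at the critical pair. I therefore set $\rhob(1,1)=\rhob(1,2)=1/(S\Cstar)$, which is feasible precisely because the hypothesis $\Cstar\geq 8\gamma/S$ keeps $2/(S\Cstar)\leq 1$, and I spread the remaining mass uniformly over pairs $(s,a)$ with $s\in\{2,\ldots,S\}$. Since the optimal trajectory under either MDP stays in state $1$ forever, $d^{\star}$ vanishes off state $1$ and no other pair enters the maximum in~\eqref{eq:concentrate-infinite}, so the concentrability coefficient equals $\Cstar$ exactly for both MDPs.

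The third step is a KL tensorization followed by Le Cam's inequality. Samples drawn at $s\in\{2,\ldots,S\}$ carry no information about $\phi$ because the transition kernels there agree; only samples at state $1$ discriminate, and each contributes a Bernoulli KL divergence of order $\Delta^{2}/[p(1-p)]\asymp \Delta^{2}/(1-\gamma)$ for the chosen $p$. Tensorizing over the $N$ independent transitions in~\eqref{eq:sampling-offline-inf-iid},
\[
	\mathrm{KL}\bigl(\mathbb{P}_{0}^{\otimes N}\,\big\|\,\mathbb{P}_{1}^{\otimes N}\bigr) \;=\; N\sum_{s,a}\rhob(s,a)\,\mathrm{KL}\bigl(P_0(\cdot\mid s,a)\,\big\|\,P_1(\cdot\mid s,a)\bigr) \;\lesssim\; \frac{N\Delta^{2}}{S\Cstar(1-\gamma)} \;\asymp\; \frac{N(1-\gamma)^{3}\varepsilon^{2}}{S\Cstar}.
\]
Forcing the right-hand side to be at most a small absolute constant yields $N\lesssim S\Cstar/[(1-\gamma)^{3}\varepsilon^{2}]$, and Le Cam's two-point inequality (or Bretagnolle--Huber) then delivers $\max\bigl\{\mathbb{P}_{0}(V^{\star}-V^{\widehat\pi}>\varepsilon),\,\mathbb{P}_{1}(V^{\star}-V^{\widehat\pi}>\varepsilon)\bigr\}\geq \frac{1}{8}$ uniformly over $\widehat\pi$.

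The main obstacle I anticipate is bookkeeping the constants in a way that jointly handles all of the constraints: $p+\Delta<1$ (so the construction is well-defined), the value-gap inequality sharpened to a clean $2\varepsilon$ rather than merely $\asymp\varepsilon$, the Bernoulli KL controlled up to the correct constant rather than just its leading Taylor term, and the concentrability coefficient reproduced \emph{exactly}. These constraints interact through the single parameter $p$, which must be pushed close enough to $1$ to realize the full $1/(1-\gamma)^{2}$ amplification in the value gap while remaining far enough from $1$ that $p(1-p)$ is $\Theta(1-\gamma)$ in the denominator of the KL; the hypothesis $\Cstar\geq 8\gamma/S$ is precisely what is needed to keep $\rhob$ a valid probability distribution throughout.
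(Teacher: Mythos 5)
Your Le Cam two-point strategy matches the paper's in spirit (two MDPs that agree on $\rhob$, the only informative samples at a single critical state, a Bernoulli $\chi^2$-type KL bound, and value separation linear in the non-optimal action probability), and the arithmetic relating $\Delta\asymp \varepsilon(1-\gamma)^2$, $1-\gamma p\asymp 1-\gamma$, and the resulting KL $\asymp \Delta^2/(1-\gamma)$ to the target $N\lesssim S\Cstar/[(1-\gamma)^3\varepsilon^2]$ is sound. However, there is a genuine gap in the concentrability bookkeeping for your specific construction. You put no behavior mass on state $0$ (mass only on state $1$ and on $\{2,\dots,S\}$), and you justify $\Cstar<\infty$ by asserting that ``the optimal trajectory stays in state $1$ forever, so $d^{\star}$ vanishes off state $1$.'' That is not the case: with $p+\Delta<1$, the optimal trajectory leaves state $1$ for the absorbing state $0$ with per-step probability $1-(p+\Delta)>0$, giving
\[
d^{\star}(0)\;=\;1-\frac{1-\gamma}{1-\gamma(p+\Delta)}\;\approx\;\tfrac{1}{2}\;>\;0 .
\]
Since $d^{\star}\bigl(0,\pi^{\star}(0)\bigr)>0$ while $\rhob\bigl(0,\pi^{\star}(0)\bigr)=0$, the definition $\Cstar=\max_{(s,a)}\min\{d^{\star}(s,a),1/S\}/\rhob(s,a)$ evaluates to $+\infty$, not the target value, so the instance you describe does not actually have the advertised concentrability coefficient.

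This is fixable without changing your skeleton: allocate $\rhob\bigl(0,\pi^{\star}(0)\bigr)=1/(S\Cstar)$ (with $\pi^{\star}(0)$ fixed consistently across both MDPs, since both actions are tied at the absorbing state), which restores the concentrability ratio at $(0,\pi^{\star}(0))$ to exactly $\Cstar$ (using $\min\{d^{\star}(0),1/S\}=1/S$) and costs nothing in the KL because the two MDPs agree at state $0$. The hypothesis $\Cstar\geq 8\gamma/S$ gives $S\Cstar\geq 16/3$, which is enough to absorb this extra $1/(S\Cstar)$ of mass along with the $2/(S\Cstar)$ already placed at state $1$ while keeping $\rhob$ a valid distribution. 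By contrast, the paper sidesteps this issue structurally: their transition kernel from states $0$ and $1$ re-mixes into a distribution $\mu$ supported only on $\{0,1\}$, so the chain under $\pi^{\star}$ never leaves $\{0,1\}$, $d^{\star}$ is automatically supported there, and choosing $\rhob=\mu$ covers exactly the support of $d^{\star}$ by design. Both routes work; yours is arguably more elementary once the patch is applied, but as written the claimed $\Cstar$ is not realized. A minor secondary point: your value-gap argument is phrased for deterministic $\widehat{\pi}$; you should extend it by noting that the suboptimality is an affine decreasing function of $\widehat{\pi}(1+\phi\mid 1)$, so the same conclusion holds for randomized estimators.
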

\begin{remark}
	As a more concrete (yet conservative) condition for $c_2$, Theorem~\ref{thm:infinite-lwoer-bound} is valid when $c_2=1/ 25088$. 
\end{remark}
%

\paragraph{Implications.} 
In the following, we take a moment to interpret the above two theorems and single out several key implications about the proposed model-based algorithm.

\begin{itemize}
	\item 

{\em Optimal sample complexities.} 
In the presence of the Bernstein-style penalty, the total number of samples needed for our algorithm to yield $\varepsilon$-accuracy is 
\begin{equation}
	\widetilde{O}\bigg( \frac{ S\Cstar  }{(1-\gamma)^3\varepsilon^2}  \bigg). 
	\label{eq:sample-size-Bernstein-infinite}
\end{equation}
This taken together with the minimax lower bound asserted in Theorem~\ref{thm:infinite-lwoer-bound} 
confirms the optimality of the proposed model-based approach  (up to some logarithmic factor). 
In comparison, the sample complexity derived in \citet{rashidinejad2021bridging} exhibits a worse dependency on the effective horizon (i.e., $\frac{1}{(1-\gamma)^{5}}$). 
		Theorem~\ref{thm:infinite-lwoer-bound} also enhances the lower bound developed in  \citet{rashidinejad2021bridging} to accommodate the scenario where $\Cstar$ can be much smaller than $C^{\star}$, i.e., $\Cstar =O(1/S)$.


	\item
		{\em No burn-in cost.} The fact that the sample size bound \eqref{eq:N-range-epsilon2-inf-thm} holds for the full $\varepsilon$-range (i.e., any given $\varepsilon \in \big( 0, \frac{1}{1-\gamma} \big]$) means that there is no burn-in cost required to achieve sample optimality. 		This not only drastically improves upon, but in fact eliminates, the burn-in cost of the best-known sample-optimal result (cf.~\eqref{eq:VR-model-free-prior-work}), the latter of which required a burn-in cost at least on the order of $\frac{SC^{\star}}{(1-\gamma)^5}$. Accomplishing this requires one to tackle the sample-hungry regime, which is statistically challenging to cope with.

	\item 
{\em No need of sample splitting.}  It is noteworthy that prior works typically required sample splitting. For instance,  \citet{rashidinejad2021bridging} analyzed the VI-LCB algorithm with fresh samples employed in each iteration, which effectively split the data into $\widetilde{O}\big(\frac{1}{1-\gamma}\big)$ disjoint subsets. In contrast, the algorithm studied herein permits the reuse of all samples across all iterations. This is an important feature in sample-starved applications to effectively maximize information utilization, and is a crucial factor that assists in improving the sample complexity compared to \citet{rashidinejad2021bridging}.

\item 
{\em Sample size saving when $\Cstar<1$.} 
In view of Theorem~\ref{thm:infinite}, the sample complexity of the proposed algorithm can be as low as 
\[
	\widetilde{O}\bigg( \frac{ 1  }{(1-\gamma)^3\varepsilon^2}  \bigg)
\]
when $\Cstar$ is on the order of $1/S$. This might seem somewhat surprising at first glance, given that the minimax sample complexity for policy evaluation is at least $\widetilde{O}\big( \frac{ S  }{(1-\gamma)^3\varepsilon^2}  \big)$ even in the presence of a simulator \citep{azar2013minimax}. To elucidate this, we note that the condition $\Cstar=O(1/S)$ implicitly imposes special --- in fact, highly compressible --- structure on the MDP that enables sample size reduction. 
As we shall see from the lower bound construction in Theorem~\ref{thm:infinite-lwoer-bound}, 
the case with $\Cstar=O(1/S)$ might require $d^{\star}(s,a)$ to concentrate on one or a small number of important states, 
with exceedingly small probability assigned to the remaining ones. If this occurs, then it often suffices to focus on what happens on these important states,  thus requiring much fewer samples.

\end{itemize}

\paragraph{Comparisons with prior statistical analysis.} 
Before concluding this section, we highlight the innovations of our statistical analysis compared to past theory when it comes to discounted infinite-horizon MDPs.  
To begin with, our sample size improvement over \citet{rashidinejad2021bridging} stems from the two algorithmic differences mentioned in Section~\ref{sec:VI-LCB-infinite-horizon}: 
the sample-reuse feature allows one to improve a factor of $\frac{1}{1-\gamma}$, while the use of Bernstein-style penalty yields an additional gain of $\frac{1}{1-\gamma}$. 
In addition, while the design of data-driven Bernstein-style bounds has been extensively studied in online RL in discounted MDPs (e.g., \citet{zhang2021model,he2021nearly}), 
all of these past results were either sample-suboptimal, or required a huge burn-in sample size (e.g., $\frac{S^3A^2}{(1-\gamma)^4}$ in \citet{he2021nearly}).  In other words,  sample optimality was not previously achieved in the most data-hungry regime.  
In comparison, our theory ensures optimality of our algorithm even for the most sample-constrained scenario, 
which relies on much more delicate statistical tools. 
In a nutshell, our statistical analysis is built upon at least two ideas: 
(i) a leave-one-out analysis framework that allows to decouple complicated statistical dependency across iterations without losing statistical tightness; (ii) a delicate self-bounding trick that allows us to simultaneously control multiple crucial statistical quantities (e.g., empirical variance) in the most sample-starved regime.

\section{Algorithm and theory: episodic finite-horizon MDPs}

In this section, we turn attention to the studies of offline RL for episodic finite-horizon MDPs.

\subsection{Models and assumptions}
\label{sec:models-finite}

As before, we briefly state some preliminaries about finite-horizon MDPs, before moving on to the sampling model and the goal. 
The readers can consult \citet{bertsekas2017dynamic} for more details about finite-horizon MDPs.

\paragraph{Basics of finite-horizon MDPs.}  Consider the setting of a finite-horizon Markov decision process, as denoted by $\mathcal{M}=\{\cS, \cA, H, P, r\}$.  It consists of the following key components: 
(i) $\cS=\{1,\cdots,S\}$: a state space of size $S$;  
(ii) $\cA=\{1,\cdots,A\}$: an action space of size $A$; 
(iii) $H$: the horizon length; 
(iv) $P=\{P_h\}_{1\leq h\leq H}$, with $P_h: \cS\times \cA\rightarrow \Delta(\cS)$ denoting the probability transition kernel at step $h$ (namely,  $P_h(\cdot \mymid s,a)$ stands for the transition probability of the MDP at step $h$ when the current state-action pair is $(s,a)$); 
(v) $r = \{r_h\}_{1\leq h\leq H}$, with $r_h: \cS\times \cA \rightarrow [0,1]$ denoting the reward function at step $h$ (namely, $r_h(s,a)$ indicates the immediate reward gained at step $h$ when the current state-action pair is $(s,a)$). 
It is assumed without loss of generality that the immediate rewards fall within the interval $[0,1]$ and are deterministic. Conveniently, we introduce the following $S$-dimensional row vector 
\begin{equation} \label{eq:transition_vector}
	P_{h,s,a} \coloneqq P_h(\cdot \mymid s,a )  
\end{equation}
for any $(s,a,h)\in \cS\times \cA \times [H]$. 

A (possibly randomized) policy $\pi=\{\pi_h\}_{1\leq h\leq H}$ with $\pi_h: \cS \rightarrow \Delta (\cA)$ is an action selection rule,   
such that $\pi_h(a \mymid s)$ specifies the probability of choosing action $a$ when in state $s$ and step $h$.  When $\pi$ is a deterministic policy, we overload the notation and let $\pi_h(s)$ represent the action selected by $\pi$ in state $s$ at step $h$. 
We can generate a sample trajectory $\{(s_h,a_h)\}_{1\leq h\leq H}$ by implementing policy $\pi$ in the MDP $\mathcal{M}$,
where $s_h$ and $a_h$ denote the state and the action in step $h$, respectively.  
We then introduce the value function $V^{\pi}=\{V_h^{\pi}\}_{1\leq h\leq H}$
and the Q-function $Q^{\pi}=\{Q_h^{\pi}\}_{1\leq h\leq H}$ associated with policy $\pi$;  
specifically, the value function $V_h: \cS\rightarrow \mathbb{R}$ of policy $\pi$ at step $h$ is defined to the be the expected cumulative reward from step $h$ on as a result of policy $\pi$, namely, 
\begin{align}
	\forall s\in \cS: \qquad 
	V_h^{\pi}(s) \coloneqq \mathbb{E} \left[ \sum_{t=h}^H r_t(s_t, a_t) \mid s_h = s; \pi \right],
	\label{eq:defn-V-pi-finite}
\end{align}
where the expectation is taken over the randomness over the sample trajectory $\{(s_t, a_t)\}_{t=h}^H$ when policy $\pi$ is implemented
(i.e., $a_t\sim \pi_t(\cdot\mymid s_t)$ and $s_{t+1}\sim P_t(\cdot\mymid s_t, a_t)$ for all $t\geq h$).   
Correspondingly, the Q-function of policy $\pi$ at step $h$ is defined to be
\begin{align}
	\forall (s,a)\in \cS \times \cA: \qquad 
	Q_h^{\pi}(s,a) \coloneqq \mathbb{E} \left[ \sum_{t=h}^H r_t(s_t, a_t) \mid s_h = s, a_h=a; \pi \right]
	\label{defn:Q-pi-finite}
\end{align}
when conditioned on the state-action pair $(s,a)$ at step $h$. 
If the initial state is drawn from a distribution $\rho \in \Delta(\cS)$, we find it convenient to define the following weighted value function of policy $\pi$: 
\begin{equation}
	V_1^{\pi}(\rho) \coloneqq \mathop{\mathbb{E}}_{s\sim \rho} \big[ V^{\pi}_1(s) \big].
	\label{eq:defn-V-rho-finite}
\end{equation}
Additionally, we introduce the following {\em occupancy distributions} associated with policy $\pi$ at step $h$: 
\begin{subequations} 	\label{eq:dh-pi-defn}
\begin{align}
	d_{h}^{\pi}(s; \rho) & \coloneqq\mathbb{P}\big(s_{h}=s\mid s_{1}\sim\rho;\pi\big), \\
	d_{h}^{\pi}(s,a; \rho) & \coloneqq \mathbb{P}\big(s_{h}=s,a_{h}=a\mid s_{1}\sim\rho;\pi\big) 
	= d_{h}^{\pi}(s; \rho) \pi( a \mymid s),
\end{align}
\end{subequations}%
which are conditioned on the initial state distribution $s_1\sim \rho$ and the event that all actions are selected according to $\pi$. In particular, it is self-evident that 
\begin{equation}
	d_1^{\pi}(s; \rho) = \rho(s) \qquad \text{for any policy }\pi\text{ and any state }s\in \cS.
	\label{eqn:defn-d1rho-finite}
\end{equation}

It is well known that there exists at least one deterministic policy that simultaneously maximizes the value function and the Q-function for all $(s,a,h)\in \cS\times \cA\times [H]$ \citep{bertsekas2017dynamic}. 
In light of this, we shall denote by $\pi^{\star}=\{\pi^{\star}_h\}_{1\leq h\leq H}$ an {\em optimal deterministic} policy throughout this paper; this allows us to employ $\pi^{\star}_h(s)$ to indicate the corresponding optimal action chosen in state $s$ at step $h$. 
The resulting optimal value function and optimal Q-function are denoted respectively by $V^{\star}=\{V^{\star}_h\}_{1\leq h\leq H}$ and $Q^{\star}=\{Q^{\star}_h\}_{1\leq h\leq H}$: 
\[
	\forall (s,a,h) \in \cS\times \cA\times [H]:
	\qquad V_{h}^{\star} \coloneqq V_h^{\pi^{\star}} \qquad \text{and} \qquad Q_{h}^{\star} \coloneqq Q_h^{\pi^{\star}} .
\]
Furthermore, 
we adopt the following notation for convenience:
\begin{equation}
	\forall (s,a,h)\in \cS\times \cA \times [H]: \quad
	d_h^{\star}(s) \coloneqq  d_h^{\pi^{\star}}(s; \rho) \quad \text{and} \quad 
	d_h^{\star}(s, a) \coloneqq d_h^{\pi^{\star}}(s,a; \rho) = d_h^{\star}(s) \mathds{1} \{a= \pi^{\star}(s)\},
	\label{eq:dhstar-finite}
\end{equation}
where the last identity holds given that $\pi^{\star}$ is assumed to be deterministic.

\paragraph{Offline/batch data.}
Suppose that we have access to a batch dataset (or historical dataset) $\mathcal{D}$, which comprises a collection of $K$ i.i.d.~sample trajectories generated by a behavior policy $\pib=\{\pib_h\}_{1\leq h\leq H}$. 
More specifically, the $k$-th sample trajectory ($1 \le k \le K$) consists of a data sequence
\begin{align}
	\big( s_1^k, a_1^k, s_2^k, a_2^k, \ldots, s_H^k, a_H^k, s_{H+1}^k \big),
	\label{eq:finite-sequence-k}
\end{align}
which is generated by the MDP $\mathcal{M}$ under the behavior policy $\pib$ in the following manner:
\begin{align}\label{eq:finite-batch-size-def}
s_1^k \sim \rhob,
\qquad a_h^k \sim \pib_h(\cdot\mymid s_h^k) 
\qquad\text{and}\qquad 
	s_{h+1}^k \sim P_h(\cdot\mymid s_h^k, a_h^k) ,
	\qquad 1 \le h \le H.
\end{align}
Here and throughout, $\rhob$ stands for some predetermined initial state distribution associated with the batch dataset. 
In addition to the above dataset (cf.~\eqref{eq:finite-sequence-k} for all $1\leq k\leq K$), 
the learner also has access to the reward function. 
For notational simplicity, we introduce the following short-hand notation for the occupancy distribution w.r.t.~the behavior policy $\pib$:
\begin{equation}
	\forall (s,a,h)\in \cS\times \cA \times [H]: \qquad
	\myrho_h(s) \coloneqq  d_h^{\pib}(s; \rhob) \quad \text{and} \quad \myrho_h(s, a) \coloneqq d_h^{\pib}(s,a; \rhob). 
	\label{eq:dhb-finite}
\end{equation}
In particular, it is easily seen that $\myrho_1(s)=\rhob(s)$ for all $s\in \cS$. Note that the initial state distribution $\rhob$ of the batch dataset might not coincide with the test state distribution $\rho$.

Akin to Definition~\ref{assumption:concentrate-infinite-simple}, 
prior works (e.g., \citet{xie2021policy}) have introduced the following concentrability coefficient to capture the distribution shift between the desired distribution and the one induced by the behavior policy. 
%
\begin{definition}[Single-policy concentrability for finite-horizon MDPs] 
\label{assumption:concentrate-finite-simple}
The single-policy concentrability coefficient of a batch dataset $\mathcal{D}$ is defined as
\begin{align}
	C^{\star} \coloneqq \max_{(s, a, h) \in \cS \times \cA \times [H]} \, \frac{d_h^{\star}(s, a) }{\myrho_h(s, a)} ,
	\label{eq:concentrate-finite-simple}
\end{align}
which clearly satisfies $C^{\star} \geq 1$. 
\end{definition}
%

%

Similar to the discounted infinite-horizon counterpart, 
$C^{\star}$ employs the largest density ratio (using the occupancy distributions defined above) to measure the distribution mismatch;
it concerns the behavior policy vs.~a single policy $\pi^{\star}$, and does not require uniform coverage of the state-action space (namely, it suffices to cover the part reachable by $\pi^{\star}$). 
As before, we further introduce a slightly modified version of $C^{\star}$ as follows.

%
\begin{definition}[Single-policy clipped concentrability for finite-horizon MDPs] 
\label{assumption:concentrate-finite}
The single-policy clipped concentrability coefficient of a batch dataset $\mathcal{D}$ is defined as
\begin{align}
	\Cstar \coloneqq \max_{(s, a, h) \in \cS \times \cA \times [H]}\frac{\min\big\{d_h^{\star}(s, a), \frac{1}{S}\big\}}{\myrho_h(s, a)} .
	\label{eq:concentrate-finite}
\end{align}
%
\end{definition}

From the definition above, it holds trivially that
\begin{align}\label{eq:Cstar_lower_bound_finite}
	\Cstar \leq C^{\star} \qquad \text{and} \qquad \Cstar \geq \frac{1}{S}.
\end{align}
As we shall see shortly, 
while all sample complexity upper bounds developed herein remain valid if we replace $\Cstar$ with $C^{\star}$, 
the use of $\Cstar$ might yield some sample size reduction when $\Cstar$ drops below 1.

\paragraph{Goal.} 
With the above batch dataset $\mathcal{D}$ in hand, our aim is to compute, in a sample-efficient fashion, a policy $\widehat{\pi}$ that results in near-optimal values w.r.t.~a given test state distribution $\rho\in \Delta(\cS)$. Formally speaking, the current paper focuses on achieving  
\[
	V_1^{\star}(\rho) - V_1^{\widehat{\pi}}(\rho) \leq \varepsilon 
\]
with high probability using as few samples as possible, where $\varepsilon$ stands for the target accuracy level. 
We seek to achieve sample optimality for the full $\varepsilon$-range, i.e.,  for any $\varepsilon \in (0, H]$.

\subsection{A model-based offline RL algorithm: VI-LCB}
\label{sec:VI-LCB-simple-finite}

Suppose for the moment that we have access to a dataset $\mathcal{D}_0$ containing $N$ sample transitions $\{(s_i, a_i, h_i, s_i')\}_{i=1}^{N}$, 
where $(s_i, a_i, h_i, s_i')$ denotes the transition from state $s_i$ at step $h_i$ to state $s_i'$ in the next step when action $a_i$ is taken.  
We now describe a pessimistic variant of the model-based approach on the basis of $\mathcal{D}_0$.

\paragraph{Empirical MDP.}
For each $(s,a,h)\in \cS\times \cA\times [H]$, we denote by 
\begin{subequations}
\label{eq:defn-Nh-sa-finite}
\begin{align}
	N_h(s,a) &\coloneqq \sum_{i=1}^N \mathds{1} \big\{ (s_i, a_i, h_i) = (s,a,h) \big\} \\ 
	N_h(s)   &\coloneqq \sum_{i=1}^N \mathds{1} \big\{ (s_i, h_i) = (s,h) \big\}
\end{align}
\end{subequations}
the total number of sample transitions at step $h$ that transition from $(s,a)$ and from $s$, respectively.  We can then compute the empirical estimate $\widehat{P}=\{\widehat{P}_h\}_{1\leq h\leq H}$ of the transition kernel $P$ as follows:
\begin{align}
	\widehat{P}_{h}(s'\mymid s,a) = 
	\begin{cases} \frac{1}{N_h(s,a)} \sum\limits_{i=1}^N \mathds{1} \big\{ (s_i, a_i, h_i, s_i') = (s,a,h,s') \big\}, & \text{if } N_h(s,a) > 0 \\
		\frac{1}{S}, & \text{else}
	\end{cases}  
	\label{eq:empirical-P-finite}
\end{align}
for each $(s,a,h,s')\in \cS\times \cA\times [H] \times \cS$. 

\paragraph{The VI-LCB algorithm.} 
With this estimated model in place, the VI-LCB algorithm (i.e., value iteration with lower confidence bounds) maintains the value function estimate $\{\widehat{V}_h\}$ and Q-function estimate $\{\widehat{Q}_h\}$, and works   
backward from $h=H$ to $h=1$ as in classical dynamic programming  with the terminal value $\widehat{V}_{H+1}=0$ \citep{xie2021policy,jin2021pessimism}. Specifically,  the algorithm adopts the following update rule:  
\begin{align}
	\widehat{Q}_h(s, a) = \max\Big\{r_h(s, a) + \widehat{P}_{h, s, a} \widehat{V}_{h+1}   - b_h(s, a), 0\Big\} ,
	\label{eq:VI-LCB-finite}
\end{align}
where $\widehat{P}_{h, s, a}$ is the empirical estimate of $P_{h, s, a}$ (cf.~\eqref{eq:transition_vector}), 
\begin{equation}
	\widehat{V}_{h+1}(s) = \max_a \widehat{Q}_{h+1}(s, a),
	\label{eq:defn-widehat-V-h+1-finite}
\end{equation}
and $b_h(s, a) \geq 0$ denotes some penalty term that is a decreasing function in $N_h(s,a)$ (as we shall specify momentarily). In addition, the policy $\widehat{\pi}$ is selected greedily in accordance to the Q-estimate:
\begin{align}
	\forall(s,h)\in \cS\times [H]: \qquad
	\widehat{\pi}_h(s) \in \arg\max_a \widehat{Q}_h(s, a).
\end{align}
In a nutshell, the VI-LCB algorithm --- as summarized in Algorithm~\ref{alg:vi-lcb-finite} --- applies the classical value iteration approach to the empirical model $\widehat{P}$, 
and in addition, implements the principle of pessimism via certain lower confidence penalty terms $\{b_h(s,a)\}$.

%

\begin{algorithm}[t]
\DontPrintSemicolon
	\textbf{input:} dataset $\mathcal{D}_0$; reward function $r$; target success probability $1-\delta$. \\
	\textbf{initialization:} $\widehat{V}_{H+1}=0$. \\

   \For{$h=H,\cdots,1$}
	{
		compute the empirical transition kernel $\widehat{P}_h$ according to \eqref{eq:empirical-P-finite}. \\
		\For{$s\in \cS, a\in \cA$}{
			compute the penalty term $b_h(s,a)$ according to \eqref{def:bonus-Bernstein-finite}. \\
			set $\widehat{Q}_h(s, a) = \max\big\{r_h(s, a) + \widehat{P}_{h, s, a} \widehat{V}_{h+1} - b_h(s, a), 0\big\}$. \\ 
		}
		\For{$s\in \cS$}{
			set $\widehat{V}_h(s) = \max_a \widehat{Q}_h(s, a)$ and $\widehat{\pi}_h(s) \in \arg\max_{a} \widehat{Q}_h(s,a)$.
		}
	}

	\textbf{output:} $\widehat{\pi}=\{\widehat{\pi}_h\}_{1\leq h\leq H}$. 
	\caption{Offline value iteration with LCB (VI-LCB) for finite-horizon MDPs.}
 \label{alg:vi-lcb-finite}
\end{algorithm}

\paragraph{The Bernstein-style penalty terms.} 
As before, we adopt Bernstein-style penalty in order to better capture the variance structure over time; that is,  
\begin{align}
	\forall (s,a,h) \in \cS\times \cA\times [H]:
	\; 
	b_h(s, a) = \min \Bigg\{ \sqrt{\frac{\cb\log\frac{NH}{\delta}}{N_h(s, a)}\mathsf{Var}_{\widehat{P}_{h, s, a}}\big(\widehat{V}_{h+1}\big)} + \cb H\frac{\log\frac{NH}{\delta}}{N_h(s, a)} ,\, H \Bigg\} 
	\label{def:bonus-Bernstein-finite}
\end{align}
for some universal constant $\cb> 0$ (e.g., $\cb=16$). Here, $\mathsf{Var}_{\widehat{P}_{h, s, a}}\big(\widehat{V}_{h+1}\big)$ corresponds to the variance of $\widehat{V}_{h+1}$ w.r.t.~the distribution $\widehat{P}_{h,s,a}$ (see the definition \eqref{eq:defn-Var-P-V}).
%
%
Note that we choose $\widehat{P}$ as opposed to $P$ (i.e., $\mathsf{Var}_{P_{h,s,a}} \big(\widehat{V}_{h+1}\big)$) in the variance term, mainly because we have no access to the true  transition kernel $P$.

Finally, it is worth noting that the Bernstein-style uncertainty estimates have been widely studied 
when performing online exploration in episodic finite-horizon MDPs (e.g., \citet{fruit2020improved,azar2017minimax,talebi2018variance,jin2018q,li2021breaking,zhang2020almost}). Once again, the main purpose therein is to encourage exploration of the insufficiently visited states/actions, 
a mechanism that is not applicable to offline RL due to the absence of further data collection.

\subsection{VI-LCB with two-fold subsampling}
\label{sec:sample-split}

Given that the batch dataset $\mathcal{D}$ is composed of several sample trajectories each of length $H$, the sample transitions in $\mathcal{D}$ cannot be viewed as being independently generated (as the sample transitions at step $h$ might influence the sample transitions in the subsequent steps). As one can imagine, the presence of such temporal statistical dependency considerably complicates analysis.

In order to circumvent this technical difficulty, we propose a two-fold subsampling trick that allows one to exploit the desired statistical independence. Informally, we propose the following steps:  
\begin{itemize}
	\item First of all, we randomly split the dataset into two halves $\Dmain$ and $\Daux$, where $\Dmain$ consists of $\Nmain_h(s)$ sample transitions from state $s$ at step $h$.   
	\item For each $(s,h)\in \cS\times [H]$, we use the dataset $\Daux$ to construct a high-probability lower bound $\Ntrim_h(s)$ on $\Nmain_h(s)$, and then subsample $\Ntrim_h(s)$ sample transitions w.r.t.~$(s,h)$ from $\Dmain$; this results in a new subsampled dataset $\Dtrim$. 
	\item Run VI-LCB on the subsampled dataset $\Dtrim$ (i.e., Algorithm~\ref{alg:vi-lcb-finite}).
\end{itemize}
The whole procedure is detailed in Algorithm~\ref{alg:vi-lcb-finite-split}. 
A few important features are worth highlighting, under the assumption that the sample trajectories in $\mathcal{D}$ are independently generated from the same distribution. 
\begin{itemize}

\item
Given that $\{\Ntrim_h(s)\}$ are computed on the basis of the dataset $\Daux$ and that $\Dtrim$ is subsampled from another dataset $\Dmain$, 
one can clearly see that $\{\Ntrim_h(s)\}$ are statistically independent from the sample transitions in $\Dtrim$. 

\item
As we shall justify in the analysis (i.e., Section~\ref{sec:independence-finite}), the samples in $\Dtrim$ can almost be treated as being statistically independent, a key attribute resulting from the subsampling trick. 

\item The proposed algorithm only splits the data into two subsets, which is in stark contrast to prior variants of VI-LCB that perform $H$-fold sample splitting (e.g., \citet{xie2021policy}). 
	Eliminating the $H$-fold splitting requirement plays a crucial role in enabling optimal sample complexity. 
\end{itemize}

\begin{algorithm}[t]
\DontPrintSemicolon
	\textbf{input:} a dataset $\mathcal{D}$; reward function $r$. \\

	\textbf{subsampling:} run the following procedure to generate the subsampled dataset $\Dtrim$. 
	{ \begin{itemize}
	\item[1)] {\em Data splitting.} Split $\mathcal{D}$ into two halves:  $\Dmain$ (which contains the first $K/2$ trajectories), and $\Daux$ (which contains the remaining $K/2$ trajectories);    we let $\Nmain_h(s)$ (resp.~$\Naux_h(s)$) denote the number of sample transitions in $\Dmain$ (resp.~$\Daux$) that transition from state $s$ at step $h$.  

	\item[2)] {\em Lower bounding $\{\Nmain_h(s)\}$ using $\Daux$.} For each $s\in \cS$ and $1\leq h\leq H$, compute
%
\begin{align}
	\label{eq:defn-Ntrim}
	\Ntrim_h(s) &\coloneqq \max\left\{\Naux_h(s) - 10\sqrt{\Naux_h(s)\log\frac{HS}{\delta}}, \, 0\right\} ;
\end{align}
%


	\item[3)] {\em Random subsampling.} Let ${\Dmain}'$ be the set of all sample transitions (i.e., the quadruples taking the form $(s,a,h,s')$) from $\Dmain$. 
		Subsample ${\Dmain}'$ to obtain $\Dtrim$, such that for each $(s,h)\in \cS\times [H]$, $\Dtrim$ contains $\min \{ \Ntrim_h(s), \Nmain_h(s)\}$ sample transitions randomly drawn from ${\Dmain}'$. 
\end{itemize}
}
	\textbf{run VI-LCB:} set $\mathcal{D}_0 = \Dtrim$; run Algorithm~\ref{alg:vi-lcb-finite} to compute a policy $\widehat{\pi}$.  \\

	\caption{Subsampled VI-LCB for episodic finite-horizon MDPs}
 	\label{alg:vi-lcb-finite-split}

\end{algorithm}

%
%

Before proceeding, we formally justify that $\Ntrim_h(s)$ --- as computed in \eqref{eq:defn-Ntrim} --- is a valid lower bound on $\Nmain_h(s)$. 
Here and below, we denote by $\Ntrim_h(s, a)$ the number of sample transitions in $\Dtrim$ that are associated with the state-action pair $(s,a)$ at step $h$.  The proof of this lemma can be found in Appendix~\ref{sec:proof-lemma:Ntrim-LB}. 
\begin{lemma}
\label{lemma:Ntrim-LB} 
	Suppose that the $K$ trajectories in $\mathcal{D}$ are generated in an i.i.d.~fashion (see Section~\ref{sec:models-finite}). 
With probability at least $1-8\delta$, the quantities constructed in \eqref{eq:defn-Ntrim} obey
\begin{subequations}
\label{eq:samples-finite}
\begin{align}
	\Ntrim_h(s)  &\le \Nmain_{h}(s), \label{eq:samples-finite-UB}\\
	\Ntrim_h(s, a) &\ge \frac{ K\myrho_h(s, a) }{8} - 5 \sqrt{ K\myrho_h(s, a)\log\frac{KH}{\delta}}
	\label{eq:samples-finite-LB}
\end{align}
\end{subequations}
simultaneously for all $1 \le h \le H$ and all $(s, a) \in \cS \times \cA$. 
\end{lemma}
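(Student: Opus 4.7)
The plan is to treat each of the counts $\Nmain_h(s)$, $\Naux_h(s)$, and $\Ntrim_h(s,a)$ as sums of independent Bernoulli indicators (the last one only after conditioning), and then chain standard Bernstein / multiplicative Chernoff bounds with a union bound over $\cS\times\cA\times[H]$. Concretely, since the $K$ trajectories in $\mathcal{D}$ are i.i.d., for every fixed $(s,h)$ the two counts $\Nmain_h(s)$ and $\Naux_h(s)$ are independent $\mathrm{Binomial}(K/2,\myrho_h(s))$ random variables. A one-sided Bernstein bound, followed by a union bound over $(s,h)\in\cS\times[H]$, produces a good event $\mathcal{E}_1$ of probability at least $1-O(\delta)$ on which
\begin{equation*}
\big|\Nmain_h(s)-(K/2)\myrho_h(s)\big| + \big|\Naux_h(s)-(K/2)\myrho_h(s)\big| \;\lesssim\; \sqrt{(K/2)\myrho_h(s)\log(HS/\delta)} + \log(HS/\delta)
\end{equation*}
simultaneously in $(s,h)$.

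For \eqref{eq:samples-finite-UB}, on $\mathcal{E}_1$ the penalty $10\sqrt{\Naux_h(s)\log(HS/\delta)}$ is (by construction) large enough to make $\Naux_h(s) - 10\sqrt{\Naux_h(s)\log(HS/\delta)} \leq (K/2)\myrho_h(s) - c\sqrt{(K/2)\myrho_h(s)\log(HS/\delta)}$ for some positive constant $c$, which is in turn at most $\Nmain_h(s)$, once more by $\mathcal{E}_1$. Choosing the leading constant $10$ large enough to absorb both the upward fluctuation of $\Naux_h(s)$ and the downward fluctuation of $\Nmain_h(s)$ is exactly the role of that constant.

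For \eqref{eq:samples-finite-LB}, I would first use $\mathcal{E}_1$ to conclude $\Ntrim_h(s) \geq \tfrac{1}{4} K\myrho_h(s) - O(\log(HS/\delta))$. Next, condition on the $\sigma$-algebra generated by $\Daux$ and by the indices selected during random subsampling; on the event $\Ntrim_h(s)\leq \Nmain_h(s)$ (already guaranteed by \eqref{eq:samples-finite-UB}), the actions attached to the $\Ntrim_h(s)$ retained transitions at $(s,h)$ are still i.i.d.\ draws from $\pib_h(\cdot\mymid s)$, since uniform subsampling preserves the exchangeability of an i.i.d.\ sequence. Hence
\begin{equation*}
\Ntrim_h(s,a)\,\big|\,\Ntrim_h(s) \;\sim\; \mathrm{Binomial}\!\left(\Ntrim_h(s),\,\pib_h(a\mymid s)\right),
\end{equation*}
and one more Bernstein bound with a union bound over $(s,a,h)$ yields an event $\mathcal{E}_2$ on which $\Ntrim_h(s,a)\geq \tfrac12 \Ntrim_h(s)\pib_h(a\mymid s) - O(\log(KH/\delta))$. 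Chaining the two lower bounds and using $\myrho_h(s)\pib_h(a\mymid s)=\myrho_h(s,a)$ recovers the advertised bound $K\myrho_h(s,a)/8 - 5\sqrt{K\myrho_h(s,a)\log(KH/\delta)}$.

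The main obstacle is constant bookkeeping rather than anything conceptual: the leading $10$ in \eqref{eq:defn-Ntrim} must simultaneously dominate the upward deviation of $\Naux_h(s)$ and the downward deviation of $\Nmain_h(s)$, and the final constants $1/8$ and $5$ in \eqref{eq:samples-finite-LB} appear only after the two multiplicative Chernoff steps compound. A minor corner case is the regime $K\myrho_h(s,a)\lesssim \log(KH/\delta)$, where the right-hand side of \eqref{eq:samples-finite-LB} is already non-positive and nothing needs to be shown. The total failure probability $8\delta$ is the sum of the individual $\delta$-budgets spent on $\mathcal{E}_1$, on $\mathcal{E}_2$, and on the event $\Ntrim_h(s)\leq \Nmain_h(s)$.
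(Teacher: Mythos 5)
Your plan matches the paper's proof quite closely: Bernstein deviations on $\Naux_h(s)$ and $\Nmain_h(s)$, then the observation that, conditionally on the state-visit counts, the actions attached to the subsampled transitions from $(s,h)$ remain i.i.d.\ $\pib_h(\cdot\mymid s)$ draws, so $\Ntrim_h(s,a)$ is conditionally binomial and a further Bernstein-plus-union-bound step delivers \eqref{eq:samples-finite-LB}. Both the exchangeability argument and the treatment of the regime $K\myrho_h(s,a)\lesssim\log(KH/\delta)$ (where \eqref{eq:samples-finite-LB} is vacuous) are exactly as in the paper.

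The one place where your sketch would not survive as written is the claimed chain
\begin{equation*}
\Naux_h(s) - 10\sqrt{\Naux_h(s)\log\tfrac{HS}{\delta}} \;\le\; \tfrac{K}{2}\myrho_h(s) - c\sqrt{\tfrac{K}{2}\myrho_h(s)\log\tfrac{HS}{\delta}} \;\le\; \Nmain_h(s) .
\end{equation*}
On $\mathcal{E}_1$ this fails when $K\myrho_h(s)$ is of order $\log\tfrac{HS}{\delta}$ or smaller: the Bernstein deviation is then dominated by its additive term, $\sqrt{\Naux_h(s)}$ need not be comparable to $\sqrt{(K/2)\myrho_h(s)}$, and the middle quantity can drop below $\Nmain_h(s)$. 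The reason \eqref{eq:samples-finite-UB} nonetheless holds is the $\max\{\cdot,0\}$ in the definition \eqref{eq:defn-Ntrim}: whenever $\Naux_h(s)\le 100\log\tfrac{HS}{\delta}$ one has $10\sqrt{\Naux_h(s)\log\tfrac{HS}{\delta}}\ge \Naux_h(s)$ and hence $\Ntrim_h(s)=0\le\Nmain_h(s)$ trivially, while in the complementary regime $\Naux_h(s)>100\log\tfrac{HS}{\delta}$ one can first deduce $K\myrho_h(s)\gtrsim\log\tfrac{HS}{\delta}$, after which all fluctuations are genuinely multiplicative and your chain closes. You should make this case split explicit; it is exactly the step the paper's proof carries out, and without it the constant $10$ alone cannot rescue the argument.
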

%


\subsection{Performance guarantees}

In what follows, we characterize the sample complexity of Algorithm~\ref{alg:vi-lcb-finite-split}, as formalized below.  
\begin{theorem} \label{thm:finite}
Consider any $\varepsilon \in (0, H]$ and any $0<\delta <1$. 
With probability exceeding $1-12\delta,$  the policy $\widehat{\pi}$ returned by Algorithm~\ref{alg:vi-lcb-finite-split} obeys
\begin{align}
	V^{\star}_1(\rho) - V^{\widehat{\pi}}_1(\rho) \leq \varepsilon 
\end{align}
as long as the penalty terms are chosen according to the Bernstein-style quantity \eqref{def:bonus-Bernstein-finite}  for some large enough numerical constant 
$\cb >0$, and the total number of sample trajectories exceeds 
\begin{align}\label{eq:K-complexity-finite}
	K\geq \frac{c_{\mathsf{k}}H^3S\Cstar\log\frac{KH}{\delta}}{\varepsilon^2} 
\end{align}
for some sufficiently large numerical constant $c_{\mathsf{k}}>0$, where $\Cstar$ is introduced in Definition~\ref{assumption:concentrate-finite}.   Additionally, the above result continues to hold if $\Cstar$ is replaced with $C^{\star}$ (introduced in Definition~\ref{assumption:concentrate-finite-simple}).  
\end{theorem}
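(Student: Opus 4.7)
My plan is to follow a pessimism-based recipe augmented with two crucial ingredients that enable the optimal $H^4$-dependence with no burn-in: the two-fold subsampling trick to decouple temporal statistical dependencies across steps, and a Bernstein-style concentration combined with a self-bounding argument to extract variance information from the algorithm's data-driven penalty $b_h(s,a)$.

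First, I would invoke Lemma~\ref{lemma:Ntrim-LB} to guarantee that, on a high-probability event, $\Ntrim_h(s,a) \gtrsim K\myrho_h(s,a)$ whenever $K\myrho_h(s,a) \gtrsim \log(SAH/\delta)$. Because $\{\Ntrim_h(s)\}$ is computed from $\Daux$ while $\Dtrim$ is drawn from the disjoint $\Dmain$, conditioning on $\{\Ntrim_h(s,a)\}$ lets me treat the next-state samples in $\Dtrim$ at each $(s,a,h)$ as i.i.d.~from $P_h(\cdot\mymid s,a)$, with samples across different $(s,a,h)$-triples rendered essentially independent. This is the key structural payoff of the subsampling step and is what allows a clean Bernstein analysis without paying a factor of $H$ for across-step sample splitting.

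Next, I would establish the validity of the Bernstein penalty. A direct application of Bernstein's inequality plus a union bound yields, for all $(s,a,h)$,
\[
\big|(\widehat{P}_{h,s,a} - P_{h,s,a})V_{h+1}^\star\big| \lesssim \sqrt{\frac{\mathsf{Var}_{P_{h,s,a}}(V_{h+1}^\star)\log(SAHK/\delta)}{N_h(s,a)}} + \frac{H\log(SAHK/\delta)}{N_h(s,a)}.
\]
To pass from the unknown $(V^\star,P)$ to the empirical $(\widehat V,\widehat P)$ that appear in $b_h(s,a)$, I would run a self-bounding loop: start from a crude Hoeffding-type bound on $\|\widehat V_{h+1}-V_{h+1}^\star\|_\infty$, use the inequality $|\mathsf{Var}_{\widehat P}(\widehat V)-\mathsf{Var}_P(V^\star)|\lesssim \|\widehat V-V^\star\|_\infty H$ together with the concentration of $\widehat P$ around $P$ to tighten the bound, and iterate. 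This shows $\big|(\widehat{P}_{h,s,a} - P_{h,s,a})\widehat V_{h+1}\big|\le b_h(s,a)$ on the good event. A backward induction on $h$ then gives pessimism ($\widehat V_h(s)\le V_h^{\widehat\pi}(s)$ and $\widehat Q_h\le Q_h^\star$), so that $V_1^\star(\rho)-V_1^{\widehat\pi}(\rho) \le V_1^\star(\rho)-\widehat V_1(\rho)$. Unrolling the Bellman recursion along $\pi^\star$ and using the penalty's validity, one obtains
\[
V_h^\star(s)-\widehat V_h(s) \le 2b_h\big(s,\pi^\star_h(s)\big) + P_{h,s,\pi^\star_h(s)}\big(V_{h+1}^\star-\widehat V_{h+1}\big),
\]
and hence $V_1^\star(\rho)-\widehat V_1(\rho)\le 2\sum_h \mathbb{E}_{(s,a)\sim d_h^\star}[b_h(s,a)]$.

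Finally, I would bound the total penalty using Cauchy-Schwarz and the law of total variance. Substituting the Bernstein expression, Cauchy-Schwarz over $(s,a,h)$ produces
\[
\sum_h \mathbb{E}_{d_h^\star}[b_h] \lesssim \sqrt{H\log\cdot \sum_{h,s,a}\frac{d_h^\star(s,a)^2}{N_h(s,a)}\mathsf{Var}_{P_{h,s,a}}(V_{h+1}^\star)} + \text{lower order}.
\]
Plugging in $N_h(s,a)\gtrsim K\myrho_h(s,a)$ from Step~1 and case-splitting on whether $d_h^\star(s,a)\le 1/S$ (where $d_h^\star/\myrho_h\le\Cstar$) or $d_h^\star(s,a)>1/S$ (where $1/\myrho_h\le S\Cstar$) gives $\sum_{s,a}d_h^\star(s,a)^2/N_h(s,a)\lesssim S\Cstar/K \cdot d_h^\star(\cdot)$ in the appropriate sense; combined with the law of total variance $\sum_{h,s,a}d_h^\star(s,a)\mathsf{Var}_{P_{h,s,a}}(V_{h+1}^\star)\le H^2$, this collapses to $H\sqrt{HS\Cstar\log/K}$. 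Setting this at most $\varepsilon$ yields exactly $K\gtrsim H^3 S\Cstar\log/\varepsilon^2$, matching~\eqref{eq:K-complexity-finite}, and the lower-order $H/N_h(s,a)$ contribution is absorbed in the same way and is never dominant over the whole $\varepsilon$-range.

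The main obstacle I anticipate is the self-bounding step: $b_h(s,a)$ uses $\mathsf{Var}_{\widehat P_{h,s,a}}(\widehat V_{h+1})$, but $\widehat V_{h+1}$ is built from samples that correlate through the trajectory structure with those defining $\widehat P_{h,s,a}$, and a crude union bound would squander $\mathrm{poly}(H)$ factors. Closing this gap requires delicately exploiting the near-independence granted by the two-fold subsampling of Step~1 and iterating the variance/value closeness until it stabilizes, all while keeping the bound valid in the most data-hungry regime where a single-shot concentration argument would fail.
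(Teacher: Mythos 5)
Your high-level recipe is sound and matches the paper's strategy in its main contours: use the two-fold subsampling (via Lemma~\ref{lemma:Ntrim-LB} and Lemma~\ref{lem:distribution-equivalent-finite}) to render the transitions in $\Dtrim$ effectively i.i.d.\ across steps, establish pessimism by backward induction, unroll the Bellman recursion along $\pi^\star$ to reduce the suboptimality to $2\sum_h \langle d_h^\star, b_h^\star\rangle$, and finish with concentrability plus a case split on whether $K\myrho_h$ is small or large. However, you have mislocated where the self-bounding idea is actually needed, and the version you propose would fail precisely in the data-hungry regime that the theorem is designed to cover.

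First, no ``self-bounding loop'' is needed to establish penalty validity $|(\widehat P_{h,s,a}-P_{h,s,a})\widehat V_{h+1}|\le b_h(s,a)$. The entire point of the two-fold subsampling is that, after it, $\widehat V_{h+1}$ (which is built only from data at steps $h+1,\dots,H$ by the backward dynamic program) is statistically independent of $\widehat P_h$. One then applies a single-shot empirical Bernstein bound, Lemma~\ref{lem:Bernstein}, directly to the pair $(\widehat P_{h,s,a},\widehat V_{h+1})$; no iteration between value closeness and variance closeness is required. Bootstrapping from a crude Hoeffding bound on $\|\widehat V_{h+1}-V^\star_{h+1}\|_\infty$ is in fact hopeless when $N_h(s,a)$ is small, because that gap can be $\Theta(H)$ there and your variance-transfer inequality $|\mathsf{Var}_{\widehat P}(\widehat V)-\mathsf{Var}_P(V^\star)|\lesssim H\|\widehat V-V^\star\|_\infty$ then gives only the trivial $O(H^2)$ bound, which never improves.

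Second, and more seriously, your final bound on $\sum_h\mathbb{E}_{d_h^\star}[b_h]$ substitutes $\mathsf{Var}_{P_{h,s,a}}(V^\star_{h+1})$ for $\mathsf{Var}_{\widehat P_{h,s,a}}(\widehat V_{h+1})$ and then invokes the law of total variance $\sum_{h,s,a}d_h^\star(s,a)\mathsf{Var}_{P_{h,s,a}}(V^\star_{h+1})\le H^2$. That substitution is not justified: the penalty genuinely involves $\widehat V$, not $V^\star$, and $\widehat V$ is not the value function of any policy, so the law of total variance does not apply to it directly. The paper's actual self-bounding trick is structurally different and avoids any pointwise comparison to $V^\star$. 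Using the approximate Bellman-type relation $\widehat V_j + 2b_j^\star - P_j^\star\widehat V_{j+1}\ge 0$, which is a consequence of the update rule and penalty validity, it proves
\begin{align*}
\sum_{j=h}^H\sum_s d_j^\star(s)\,\mathsf{Var}_{P_{j,s,\pi_j^\star(s)}}\big(\widehat V_{j+1}\big)\le 4H^2 + 4H\sum_{j=h}^H\big\langle d_j^\star, b_j^\star\big\rangle .
\end{align*}
Plugging this into the Cauchy--Schwarz step yields a quadratic inequality in the unknown $\sum_j\langle d_j^\star,b_j^\star\rangle$, which is solved by rearranging. This is what delivers $K\gtrsim H^3 S\Cstar\log(\cdot)/\varepsilon^2$ over the \emph{entire} $\varepsilon$-range, i.e., with no burn-in; without this structural identity, your argument cannot close in the small-sample regime, which is the crux of the result.
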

\begin{remark}
	One concrete yet conservative requirement on $\cb$ and $c_{\mathsf{k}}$ for Theorem~\ref{thm:finite} to hold is: $\cb \geq 16$ and $c_{\mathsf{k}}=12800\cb$, as we shall solidify in the proof of Theorem~\ref{thm:finite}. 
\end{remark}

The proof of this result is postponed to Section~\ref{sec:analysis-finite}. 
In general, the total sample size characterized by Theorem~\ref{thm:finite} could be far smaller than the ambient dimension (i.e., $S^2AH$) of the probability transition kernel $P$, thus precluding one from estimating $P$ in a reliable fashion. 
As a crucial insight from Theorem~\ref{thm:finite},  the model-based (or plug-in) approach enables reliable policy learning even when model estimation is completely off.  
Our analysis of Theorem~\ref{thm:finite} relies heavily on (i) suitable decoupling of complicated statistical dependency via subsampling, 
and (ii) careful control of the variance terms in the presence of Bernstein-style penalty.


In order to help assess the tightness and optimality of~Theoerem~\ref{thm:finite}, we further develop a minimax lower bound as follows; the proof can be found in Appendix~\ref{proof:thm-finite-lb}.

\begin{theorem}
\label{thm:finite-lower-bound}
For any $(H, S, \Cstar, \varepsilon)$ obeying $H\geq 12$, $\Cstar \geq 8/S$ and $\varepsilon \leq c_3 H$,
one can construct a collection of MDPs $\{\mathcal{M}_{\theta} \mid \theta \in \Theta\}$,  
an initial state distribution $\rho$, and a batch dataset with $K$ independent sample trajectories each of length $H$, such that
\begin{align}
    \inf_{\widehat{\pi}} \max_{\theta\in\Theta} \mathbb{P}_\theta\left\{ V_1^{\star}(\rho) - V_1^{\widehat{\pi}}(\rho) \ge \varepsilon\right\} \geq \frac{1}{4},
\end{align}
provided that the total sample size
\begin{align}
	N =KH \leq \frac{c_4\Cstar S H^4}{\varepsilon^2}.
\end{align}
Here, $c_3,c_4>0$ are some small enough numerical constants, 
the infimum is over all estimator $\widehat{\pi}$, 
and $\mathbb{P}_{\theta}$ denotes the probability when the MDP is $\mathcal{M}_{\theta}$. 
\end{theorem}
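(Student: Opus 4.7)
}

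The plan is to reduce the minimax risk to a hypothesis testing problem via a standard two-point (Le Cam) or multi-way (Fano / Assouad) argument over a carefully designed family of hard MDPs, chosen so that (i) distinguishing any two hypotheses requires $\Omega(H^4 S \Cstar/\varepsilon^{2})$ sample transitions, yet (ii) a policy that cannot distinguish them is forced to be $\varepsilon$-suboptimal. The target scaling $H^4 S \Cstar / \varepsilon^{2}$ suggests allocating one ``hard bit'' per state (giving $S$), using a small transition perturbation $\delta \asymp \varepsilon/H^{2}$ to achieve a per-state KL of $\Theta(\delta^{2}/p)$ (giving the $H^{2}/\varepsilon^{2}$ factor together with a $p \asymp 1/H$ scaling to supply the remaining $H$), and tuning the behavior visitation to match the prescribed coefficient $\Cstar$.

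The hard instance I would construct uses $S-1$ ``informative'' states $\{s_1,\ldots,s_{S-1}\}$ together with two absorbing states $s_{\mathsf{good}}$ and $s_{\mathsf{bad}}$, where $r_h(s_{\mathsf{good}},\cdot)\equiv 1$ and all other rewards are $0$. Each informative state has two actions, and the family is indexed by $\theta \in \{0,1\}^{S-1}$; for state $s_i$, taking action $a = \theta_i$ sends the trajectory to $s_{\mathsf{good}}$ with probability $p+\delta$ and otherwise back to $s_i$, while $a\neq\theta_i$ uses probability $p$ instead. Choose $p \asymp 1/H$ and $\delta \asymp \varepsilon/H^{2}$. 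Pick the initial distribution $\rho$ together with the behavior policy $\pib$ so that $d_h^{\pi^{\star}}(s_i,\pi^{\star}_h(s_i))$ is either concentrated on a single informative state (forcing $\Cstar \asymp 1/S$ when $\myrho_h$ is uniform over the good action) or spread uniformly over all informative states (yielding $\Cstar \asymp 1$, and more generally any prescribed $\Cstar \in [8/S,\,O(1)]$ by interpolating between these two extremes via a mixture distribution over $\{s_i\}$).

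A direct computation of the Bellman recursion on this gadget gives $V_{1}^{\star}(\rho) - V_{1}^{\pi}(\rho) \gtrsim H^{2}\delta \cdot \Pr_{\rho}\{\pi_h(s_i)\neq \theta_i\text{ for some reachable }(s_i,h)\}$, because after first reaching $s_i$, the expected time spent before transitioning to $s_{\mathsf{good}}$ is $\Theta(H)$, each unit of which is rewarded for another $\Theta(H)$ steps; thus any estimator whose action mismatches $\theta_i$ on a constant fraction of states suffers suboptimality at least $\Omega(H^{2}\delta) = \Omega(\varepsilon)$. On the other hand, the KL divergence between the laws of a single sample transition from $(s_i,a)$ under $\mathcal{M}_\theta$ versus $\mathcal{M}_{\theta'}$ (differing only in coordinate $i$) is at most $O(\delta^{2}/p) = O(\varepsilon^{2}/H^{3})$ by the standard inequality $\mathsf{KL}(\mathrm{Bern}(p+\delta)\|\mathrm{Bern}(p)) \leq \delta^{2}/(p(1-p))$. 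Tensoring across the $K$ i.i.d.\ trajectories and across steps, the KL between the batch laws is bounded by $K H \cdot \max_{h}\myrho_h(s_i,a)\cdot O(\varepsilon^{2}/H^{3}) \lesssim K \varepsilon^{2}/(S\Cstar H^{2})$, so forcing this to be $O(1)$ yields $K = \Omega(S\Cstar H^{2}/\varepsilon^{2})$ and hence $N = KH = \Omega(S\Cstar H^{4}/\varepsilon^{2})$. The lower bound on the error probability then follows from Assouad's lemma (applied coordinatewise in $\theta$), or equivalently by a Fano/Le Cam two-point argument within each coordinate.

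The main obstacle I anticipate is the joint calibration of four constraints: the concentrability budget $\Cstar$, the per-state visitation count $K\myrho_h(s_i,a)$ that controls the KL, the per-state suboptimality amplification (which requires multiple returns to $s_i$ and therefore a delicate $p \asymp 1/H$ choice together with initial step placement so that $s_i$ is indeed reached with the right probability), and the fact that Definition~\ref{assumption:concentrate-finite} uses $\min\{d_h^{\star},1/S\}$ rather than $d_h^{\star}$ itself -- the clipping means the hardest regime $\Cstar \asymp 1/S$ forces $d_h^{\star}$ to concentrate on a single state, and one must verify that the gadget still supports an $\Omega(H^{2}\delta)$ value gap in that concentrated regime. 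Handling this will likely require splitting into two cases (one instance for $\Cstar \lesssim 1/S$ using a single informative state, another for $\Cstar \gtrsim 1/S$ using many), and verifying that the resulting behavior occupancy indeed satisfies $\max_h \min\{d_h^{\star}(s,a),1/S\}/\myrho_h(s,a) = \Cstar$ with equality (up to constants) at the witness state-action pair.
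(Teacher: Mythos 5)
Your overall architecture is reasonable (reduce to hypothesis testing, use $\delta \asymp \varepsilon/H^2$ with $p \asymp 1/H$ to get per-transition KL $\asymp \varepsilon^2/H^3$, amplify by $H^2\delta$ through an absorbing-state gadget), but there is an arithmetic slip in the final step and, more fundamentally, a structural flaw in the hypothesis design that makes the construction fall a full factor of $H$ short. You derive per-coordinate KL $\lesssim K\varepsilon^2/(S\Cstar H^2)$ and conclude $K = \Omega(S\Cstar H^2/\varepsilon^2)$; then $N = KH = \Omega(S\Cstar H^3/\varepsilon^2)$, \emph{not} $\Omega(S\Cstar H^4/\varepsilon^2)$ as you wrote. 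This is not a typo you can fix by hand: it is what your gadget actually yields. The cause is your choice to index $\Theta$ by state, one bit per informative state $s_i$. Because the gadget must keep the trajectory at $s_i$ for $\Theta(H)$ consecutive steps to obtain the $\Theta(H^2)$ value amplification, every trajectory produces $\Theta(H\cdot\myrho(s_i))$ fresh observations of the bit $\theta_i$, not $\Theta(\myrho(s_i))$. The two effects (bigger amplification per state, more observations per state) exactly cancel one factor of $H$, and Assouad lands on $H^3$.

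The paper sidesteps this by indexing the hard bits by \emph{time}: $\Theta \subseteq \{0,1\}^H$ (packed via Gilbert--Varshamov so $\log|\Theta| \geq H/8$), with a single informative state $0$ whose preferred action at step $h$ is $\theta_h$. A trajectory observes the transition out of state $0$ at step $h$ at most once, so each bit $\theta_h$ receives only $\Theta(\mu(0))$ observations per trajectory, regardless of how long the trajectory lingers at state $0$. Fano's inequality with $\log|\Theta| \asymp H$ then supplies the extra factor of $H$ that your Assouad-over-$S$-states argument misses: after summing the per-coordinate KL over the $H$ coordinates in which two packed hypotheses differ, the whole quantity is compared against $\log|\Theta| \asymp H$ rather than against $1$. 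The remaining $S-2$ states in the paper's construction are dynamically inert; their only role is to push the behavior occupancy at state $0$ down to $\myrho(0,a) = \tfrac{1}{2CS}$ so that, combined with the clipping $\min\{d_h^\star,1/S\}$ in the definition of $\Cstar$, the instance realizes $\Cstar = 2C$ for an arbitrary prescribed $C$. Your uniform-over-informative-states gadget, as stated, caps $\Cstar$ at $O(1)$ unless a similar padding device is introduced. To salvage a per-state hypothesis structure you would need either to make each bit observable only $O(1)$ times per trajectory (but then the amplification drops to $H\delta$ and you again land on $H^3$), or to index hypotheses jointly by $(i,h) \in [S-1]\times[H]$ and use Fano with $\log|\Theta| \asymp SH$; the paper's single-informative-state, time-indexed construction is essentially the cleanest special case of the latter.
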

\begin{remark}
	More concretely, one (conservative) condition regarding $c_3$ and $c_4$ that is sufficient for the validity of Theorem~\ref{thm:finite-lower-bound} is: $c_3= 1/2^{14}$ and $c_4 = 1 / 2^{36}$, as we shall see in the proof. 
\end{remark}


\paragraph{Implications.} 
In what follows, let us take a moment to discuss several other key implications of Theorem~\ref{thm:finite}.

\begin{itemize}

	\item {\em Near-optimal sample complexities.} In the presence of the Bernstein-style penalty, the total number of samples (i.e., $KH$) needed for our algorithm to yield $\varepsilon$-accuracy is 
\begin{equation}
  \widetilde{O}\bigg( \frac{ H^4S\Cstar  }{\varepsilon^2} \bigg). 
	\label{eq:sample-size-Bernstein}
\end{equation}
This confirms the optimality of the proposed model-based approach (up to some logarithmic term) when Bernstein-style penalty is employed, since Theorem~\ref{thm:finite-lower-bound} reveals that at least 
$  \frac{ H^4S\Cstar  }{\varepsilon^2}$
samples are needed regardless of the algorithm in use. 
%

	\item {\em Full $\varepsilon$-range and no burn-in cost.} The sample complexity bound \eqref{eq:K-complexity-finite} stated in Theorem~\ref{thm:finite} holds for an arbitrary $\varepsilon \in (0,H]$. 
In other words, no burn-in cost is needed for the algorithm to work sample-optimally. 
This improves substantially upon the state-of-the-art results for model-based and model-free offline algorithms, 
both of which require a significant level of burn-in sample size 
($H^9SC^{\star}$ and $H^6SC^{\star}$, respectively).  






	\item {\em Sample reduction and model compressibility when $\Cstar<1$.}		
Given that $\Cstar$ might drop below 1, the sample complexity of our algorithm might be as low as $\widetilde{O}\big( \frac{H^4S}{\varepsilon^2} \big)$. In fact, recognizing that $\Cstar$ can be as small as $\frac{1+o(1)}{S}$, we see that the sample complexity can sometimes be reduced to 
\begin{equation}
	\widetilde{O} \bigg(  \frac{H^4}{\varepsilon^2} \bigg) ,
\end{equation}
resulting in significant sample size saving compared to prior works.  
Caution needs to be exercised, however, that this sample size improvement is made possible as a result of certain {\em model compressibility} implied by a small $\Cstar$.  For instance, $\Cstar=O(1/S)$ might happen when a small number of states accounts for a dominant fraction of probability mass in $d^{\star}_h(s)$, with the remaining states exhibiting vanishingly small occupancy probability 
(see also the lower bound construction in the proof of Theorem~\ref{thm:finite-lower-bound}); if this happens, then it often suffices to focus  on learning those dominant states.


%


\end{itemize}

\paragraph{(In)-feasibility of estimating $C^{\star}_{\mathsf{clipped}}$.}
With the sample complexity \eqref{eq:sample-size-Bernstein} in mind, 
one natural question arises as to whether it is possible to estimate $C^{\star}_{\mathsf{clipped}}$ from the batch dataset.  
Unfortunately, this is in general infeasible, as demonstrated by the following example. 
\begin{itemize}
	\item (A hard example) 
	Consider an MDP with horizon $H=2$. 
	In step $h=1$, we have a singleton state space $\cS_1 = \{0\}$ and an action space $\cA_1 = \{0, 1\}$, 
		whereas in step $h=2$, we have a state space $\cS_2 = \{0, 1\}$ and a singleton  action space $\cA_2 = \{0\}$. 
		The reward function and the transition kernel are given by: 
	\begin{align*}
		&r_1(0, 0) = 0, \quad r_1(0, 1) = 0 , \quad
		r_2(0, 0) = 0, \quad r_2(1, 0) = 1
	\\
		P_1(0\mymid 0, 0) &= 0.5, \quad P_1(1\mymid 0, 0) = 0.5, \quad
	P_1(0\mymid 0, 1) = p,  \quad P_1(1\mymid 0, 1) = 1-p
	\end{align*}
		for some unknown parameter $p\in (0,1)$. 
	We have $K$ independent trajectories as usual,
		and let 
		\begin{equation}
			d^{\mathsf{b}}_1(0, 0) = 1 - \frac{1}{K} \qquad \text{and} \qquad d^{\mathsf{b}}_1(0, 1) = \frac{1}{K}. 
			\label{eq:hard-example-db}
		\end{equation}
		Elementary calculation then reveals that: 
		$C^{\star}_{\mathsf{clipped}} = K$ when $p < \frac{1}{2}$, and $C^{\star}_{\mathsf{clipped}} = 1 + \frac{1}{K-1}$ when $p > \frac{1}{2}$.  
		Such a remarkable difference in $C^{\star}_{\mathsf{clipped}}$ depends on the value of $p$, which is only reflected in $(s,a)=(0,1)$ at step 1. 
		However, by construction,  there is nonvanishing probability (i.e., $\big(1-d^{\mathsf{b}}_1(0, 1)\big)^K\approx 1/e$ for large $K$) 
		such that the dataset does not visit $(s,a)=(0,1)$ in step $h=1$ at all, 
		which in turn precludes one from  distinguishing $C^{\star}_{\mathsf{clipped}}= 1 + \frac{1}{K-1}$ from $C^{\star}_{\mathsf{clipped}} = K$ given only the available dataset. 
		
\end{itemize}
Fortunately, implementing our algorithm does not require prior knowledge of $C^{\star}_{\mathsf{clipped}}$ at all, 
and the algorithm succeeds once the task becomes  feasible. 
On the other hand, we won't be able to tell how large a sample size is enough {\em a priori}, 
but this is in general information-theoretically infeasible as illustrated by the above example.

\paragraph{Towards instance optimality.} 
While the primary focus of the current paper is minimax-optimal algorithm design, 
the theoretical framework developed herein enables instance-dependent analysis as well.  
Take episodic finite-horizon MDPs for example: our analysis framework directly leads to the following instance-dependent guarantee for Algorithm~\ref{alg:vi-lcb-finite-split}:
\begin{align}
	&V_{h}^{\star}(\rho)-V_{h}^{\widehat{\pi}}(\rho) = \big\langle d_{1}^{\star},V_{1}^{\star}-V_{1}^{\widehat{\pi}}\big\rangle \notag \\
	&\quad\le 12\sum_{j=h}^{H}\sum_s d_j^{\star}(s)\sqrt{\frac{\cb \log\frac{NH}{\delta}}{K \myrho_j \big(s, \pi_j^{\star}(s) \big)}\mathsf{Var}_{P_{j, s, \pi^{\star}_j(s)}}(V_{j+1}^{\star})} +  \Big(\frac{100\cb H^3S C^{\star}\log\frac{NH}{\delta}}{K}\Big)^{3/4}, \label{eq:instance-optimal}
\end{align}
with the proviso that $K \ge 100\cb HS C^{\star} \log\frac{NH}{\delta}$. 
Encouragingly, the dominate term (i.e., the first term in the bound \eqref{eq:instance-optimal}) matches the instance-dependent lower bound established in~\cite[Theorem 4.3]{yin2021towards}, 
thus confirming the instance optimality of the proposed algorithm for a large enough sample size. 
The proof of \eqref{eq:instance-optimal} can be found in Appendix~\ref{sec:proof-instance-optimal}.

\paragraph{Comparisons with prior statistical analysis.} 
We now briefly discuss the novelty of our statistical analysis compared with past theory.  
Perhaps the most related prior work is \citet{xie2021policy}, which proposed two algorithms.  
The first algorithm therein is VI-LCB with $H$-fold sample splitting and Hoeffding-style penalty, 
and each of these two features adds an $H$ factor to the total sample complexity.  
The second algorithm therein combines VI-LCB with variance reduction, which leads to optimal sample complexity for sufficiently small $\varepsilon$ (i.e., a large burn-in cost is required). Note, however, that none of the existing statistical tools for variance reduction 
is able to work without imposing a large burn-in cost, regardless of the sampling mechanism in use (e.g., generative model, offline RL, online RL) \citep{sidford2018near,zhang2020almost,li2021breaking,xie2021policy}. 
In contrast, our theory makes apparent that variance reduction is unnecessary, 
which leads to both simpler algorithm and tighter analysis.   
Additionally, while Bernstein-style confidence bounds have been deployed in online RL for finite-horizon MDPs \citep{fruit2020improved,azar2017minimax,jin2018q,zhang2020almost}, 
none of these works was able to yield optimal sample complexity without a large burn-in cost (e.g., \citet{azar2017minimax} incurred a burn-in cost as large as $S^3AH^6$). 
This in turn underscores the power of our statistical analysis when coping with the most data-hungry regime.


%


\section{Numerical experiments}
\label{sec:numerics}

\begin{figure}[t]
	\centering
	\begin{tabular}{ccc}
		\includegraphics[width=0.31\linewidth]{./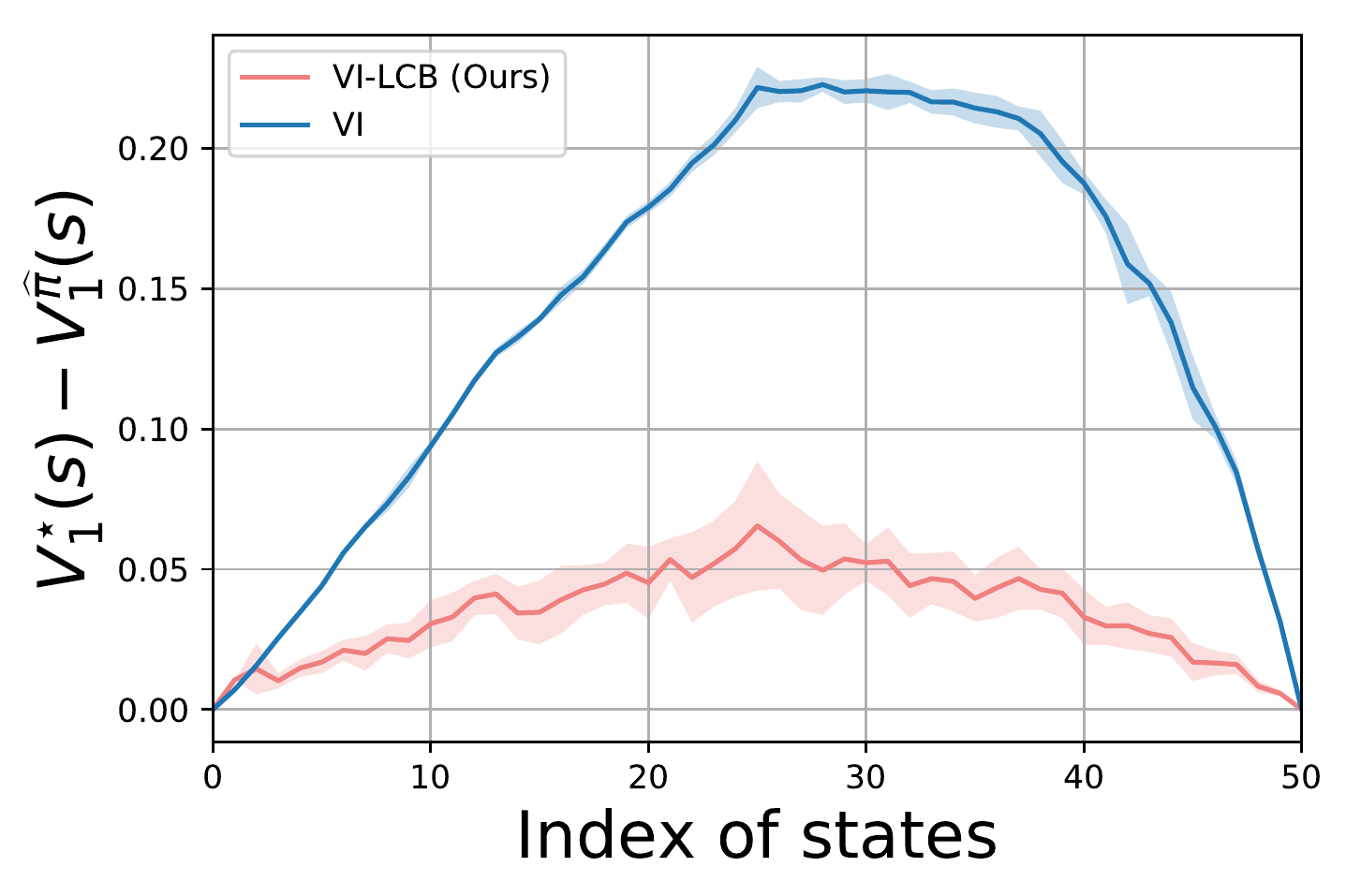} & \includegraphics[width=0.31\linewidth]{./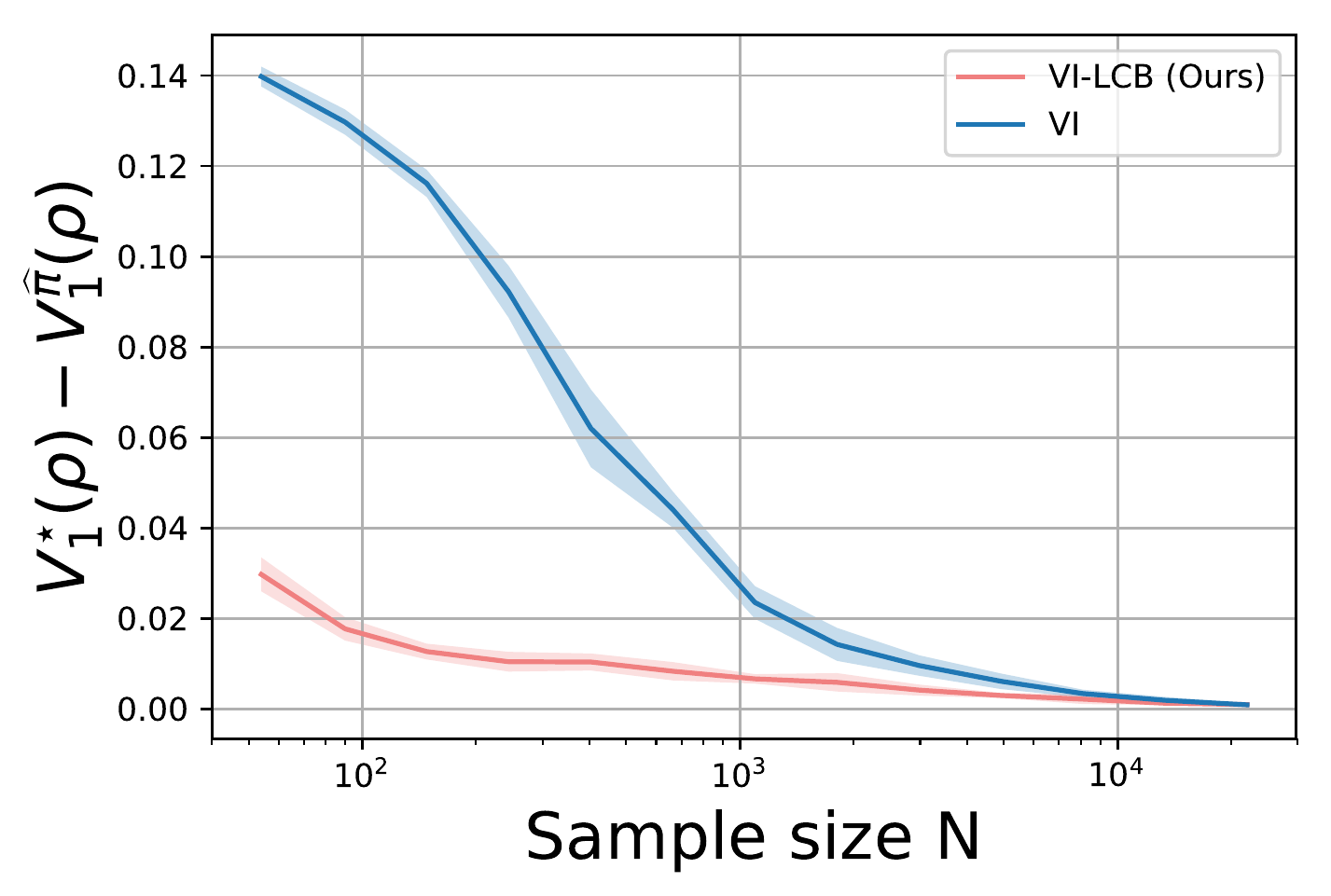} & \includegraphics[width=0.31\linewidth]{./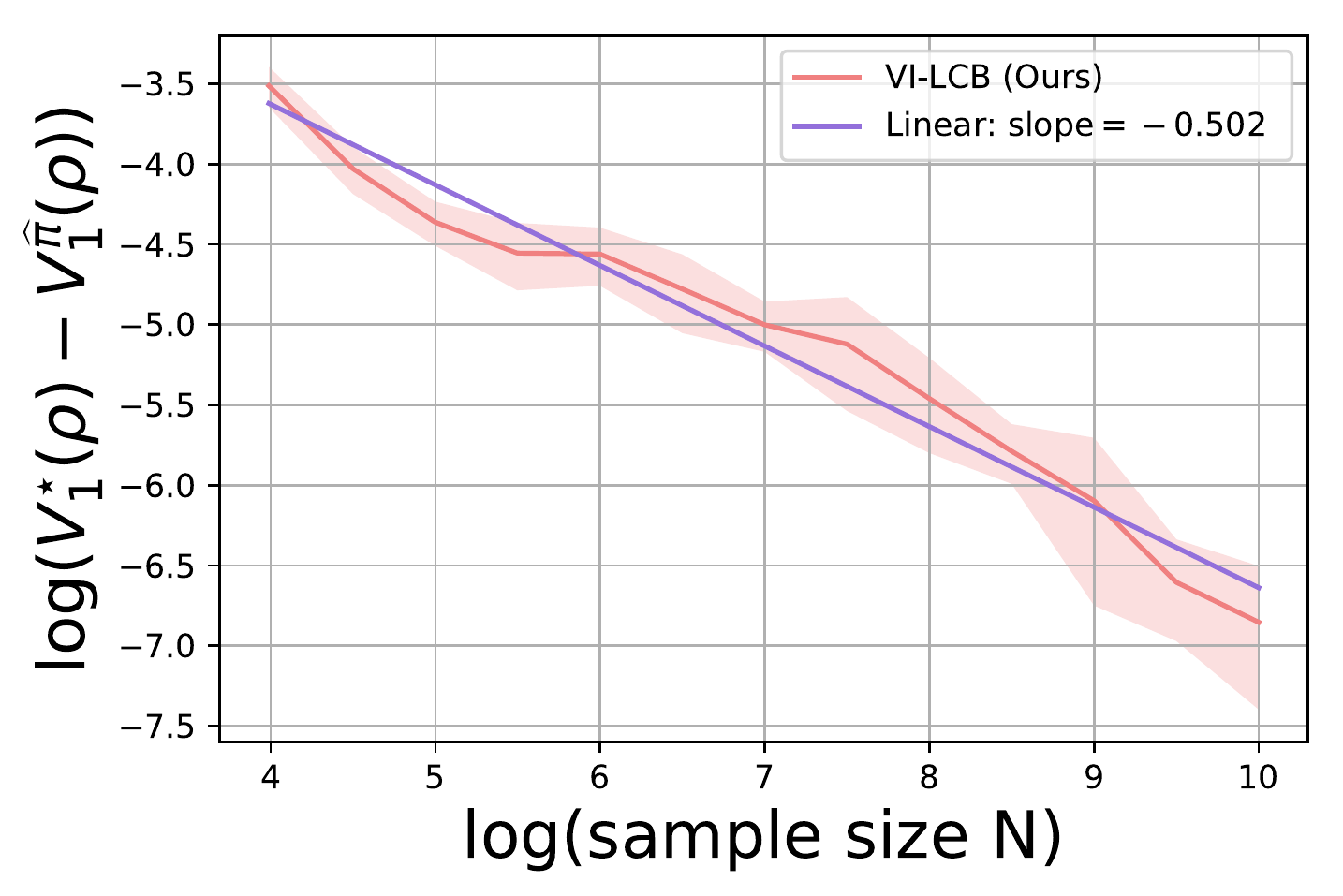} \\
		(a) Value gap versus states & (b) Value gap versus sample size & (c) Value gap dependency w.r.t. $N$
	\end{tabular}
	\caption{The performances of the proposed method VI-LCB and the baseline value iteration (VI) in the gambler's problem. It shows that VI-LCB outperforms VI by taking advantage of the pessimism principle and achieves approximately $1/\sqrt{N}$ sample complexity dependency w.r.t. the sample size $N$.}
		\label{fig:test_on_s}
\end{figure}

To confirm the practical applicability of the proposed VI-LCB algorithm, 
we evaluate its performance in the gambler's problem \citep{sutton2018reinforcement,zhou2021finite,panaganti2022sample,shi2022distributionally}. The code can be accessed at: 
\begin{center}
\url{https://github.com/Laixishi/Model-based-VI-LCB}.
\end{center}

\paragraph{Gambler's problem.} We start by introducing the formulation of the gambler's problem and its underlying MDP. An agent plays a gambling game in which she bets on a sequence of random coin flips, winning when the coins are heads and losing when they are tails. To bet on each random clip, the agent's policy chooses an integer number of dollars based on an initial balance. If the number of bets hits the maximum length $H$, or if the agent reaches $50$ dollars (win) or $0$ dollars (lose), the game ends. Without loss of generality, the problem can be formulated as an episodic finite-horizon MDP. Here, $\cS$ is the state space $\{0, 1, \cdots, 50\}$ and the associated accessible actions obey $a\in \big\{0,1,\cdots, \min\{s, 50-s\} \big\}$, $H =100$ is the horizon length, the reward is set to $0$ for all other states unless $s=50$. For the transition kernel, we fix the probability of heads as $p_{\mathrm{head}} = 0.45$ at all steps $h\in[H]$ in the episode. Moreover, the initial state/balance distribution of the agent $\rho$ is taken as a uniform distribution over $\cS$. The offline historical dataset is constructed by collecting $N$ independent samples drawn randomly over each state-action pair and time step.



\paragraph{Evaluation results.}
First, we evaluate the performance of our proposed method VI-LCB (cf.~Algorithm~\ref{alg:vi-lcb-finite}) with comparisons to the well-known value iteration (VI) method without the pessimism principle. To begin with, Fig.~\ref{fig:test_on_s}(a) shows the average and standard derivations of the performance gap $V_1^{\star}(s) - V_1^{\widehat{\pi}}(s)$ over all states $s\in\cS$, over $10$ independent experiments with a fixed sample size $N = 50$. The results indicate that the proposed VI-LCB method outperforms the baseline VI method uniformly over the entire state space, showing that pessimism brings significant advantages in this sample-scarce regime.
Secondly, we evaluate the performance gap $V_1^{\star}(\rho) - V_1^{\widehat{\pi}}(\rho)$ with varying sample size $N \in \big\{54, 90, 148,\cdots, 22026\} \approx \{e^{4}, e^{4.5}, e^{5}, \cdots, e^{10}\}$, over $10$ independent trials. Note that throughout the experiments, we fix the parameter $\cb = 0.05$, which determines the level of the pessimism penalty of VI-LCB (cf.~\eqref{def:bonus-Bernstein-finite}). Fig.~\ref{fig:test_on_s}(b) shows the average and standard derivations of the performance gap 
$V_1^{\star}(\rho) - V_1^{\widehat{\pi}}(\rho)$ with respect to the sample size $N$. Clearly,
  as the sample size increases, both our method VI-LCB and the baseline VI method perform better. Moreover, our VI-LCB method consistently outperforms the baseline VI method over the entire range of the sample size $N$, especially in the sample-starved regime. In addition, to corroborate the scaling of the sample size on the performance gap, we plot the sub-optimality performance gap of VI-LCB w.r.t. the sample size on a log-log scale in Fig.~\ref{fig:test_on_s}(c). Fitting using linear regression leads to a slope estimate of $-0.502$, with the corresponding fitted line plotted in Fig.~\ref{fig:test_on_s}(c) as well. This nicely matches the finding of Theorem~\ref{thm:finite}, which says the performance gap of VI-LCB scales as $N^{-1/2}$.
  

\section{Related works}
\label{sec:related-works}

In this section, we provide further discussions about prior art, with an emphasis on settings that are most relevant to the current paper.

\paragraph{Off-policy evaluation and offline RL.} 
Broadly speaking, at least two families of problems have been investigated in the literature that tackle offline batch data: off-policy evaluation, where the goal is to estimate the value function of a target policy that deviates from the behavior policy used in data collection; and offline policy learning, where the goal is to identify a near-optimal policy (or at least an improved one compared to the behavior policy). Our work falls under the second category. A topic of its own interest, off-policy evaluation has been extensively studied in the recent literature; we excuse ourselves from enumerating the works in that space but only provide pointers to a few examples including \citet{uehara2020minimax,li2014minimax,yang2020off,duan2020minimax,jiang2016doubly,jiang2020minimax,kallus2020double,duan2021optimal,xu2021unified,ren2021nearly,thomas2016data}.

\paragraph{Offline RL with the pessimism principle.}
The prior works that are the most relevant to this paper are \citet{rashidinejad2021bridging,xie2021policy,jin2021pessimism,shi2022pessimistic,yan2022efficacy,yin2021towards}, 
which incorporated lower confidence bounds into value estimation in order to avoid overly uncertain regions not covered by the target policy.  
In addition to the ones discussed in Section~\ref{sec:inadequacy-prior} that focus on minimax performance, 
the recent works \citet{yin2021towards,yin2022near} further developed instance-dependent statistical guarantees for the pessimistic model-based approach. 
The results in \citet{yin2021towards}, however, required a large burn-in sample size $\frac{H^4}{SC^{\star}(\myrho_{\min})^2}$ (since $\myrho_{\min}$ could be exceedingly small),
thus preventing it from attaining minimax optimality for the entire $\varepsilon$-range.   
It is noteworthy that the principle of pessimism has been incorporated into policy optimization and actor-critic methods as well by searching for some least-favorable models (e.g., \citet{uehara2021pessimistic,zanette2021provable}), which is quite different from the approach studied herein. On the empirical side, model-based algorithms \citep{yu2020mopo,kidambi2020morel} have been shown to achieve superior performance than their model-free counterpart for offline RL.
In addition, a number of recent works studied offline RL under various function approximation assumptions, e.g., \cite{jin2021pessimism,yin2022near,nguyen2021sample,uehara2021pessimistic,uehara2022representation,zhan2022offline,zanette2021provable}, which are beyond the scope of the current paper. 
Recently, the insights gleaned from the studies of offline RL have inspired improved algorithm designs for online and hybrid RL as well \citep{li2023minimax,li2024reward}.

\paragraph{Online RL and the optimism principle.}  
The optimism principle in the face of uncertainty has received widespread adoption  from bandits to online RL  \citep{lai1985asymptotically,lattimore2020bandit,agarwal2019reinforcement}. 
In the context of online RL, 
\citet{jaksch2010near} 
constructed confidence regions for the probability transition kernel to help select optimistic policies in the setting of weakly communicating MDPs, 
based on a variant (called UCRL2) of the UCRL algorithm originally proposed in 
\citet{auer2006logarithmic};  
 see also \citet{filippi2010optimism,bourel2020tightening,talebi2018variance} 
 for other variants of UCRL.  
When applied to episodic finite-horizon MDPs, the regret bound in \citet{jaksch2010near} was suboptimal by a factor of at least $\sqrt{H^2S}$;  see discussion in \citet{azar2017minimax,jin2018q}.  
\citet{fruit2020improved} developed an improved regret bound for UCRL2 by using empirical Bernstein-style bounds, which however was still suboptimal by a factor of at least $\sqrt{HS}$ when specialized to  episodic finite-horizon MDPs. 
In comparison, a more sample-efficient paradigm is to build Bernstein-style UCBs for the optimal values to help select exploration policies, 
which has been recently adopted in both model-based \citep{azar2017minimax,zhang2023settling} and model-free algorithms \citep{jin2018q}. 
Note that Bernstein-style uncertainty estimation alone is not enough to ensure regret optimality in model-free algorithms, 
thereby motivating the design of more sophisticated variance reduction strategies \citep{li2021breaking,zhang2020almost}. 
Finally, the optimism principle has been studied in undiscounted infinite-horizon MDPs too (e.g.,  \citet{qian2019exploration}), 
which is beyond the scope of this paper.



\paragraph{Model-based RL.} 
The algorithms studied herein fall under the category of model-based RL, which decouples the model estimation and the planning. 
This popular paradigm has been deployed and studied under various data collection mechanisms beyond offline RL, 
including but not limited to the generative model (or simulator) setting \citep{azar2013minimax,agarwal2019optimality,li2020breaking,li2023breaking} and the online exploratory setting \citep{azar2017minimax,jin2020provably,zhang2021reinforcement,zhang2023settling}.   
The leave-one-out analysis (and the construction of absorbing MDPs) adopted in the proof of Theorem~\ref{thm:infinite} 
has been inspired by several recent works \citet{agarwal2019optimality,li2020breaking,pananjady2020instance,cui2021minimax}, 
and has recently been shown to be effective for multi-agent offline RL as well \citep{yan2022model}.

\paragraph{Model-free RL.}  
Another widely used paradigm is model-free RL,  
which attempts to learn the optimal value function without explicit construction of the model. 
Arguably the most famous example of model-free RL is Q-learning, 
which applies the stochastic approximation paradigm to find the fixed point of the Bellman operator \citep{watkins1992q,even2003learning,beck2012error,szepesvari1998asymptotic,murphy2005generalization,qu2020finite,li2021tightening,li2021sample,xiong2020finite,shi2022statistically,chen2020finiteNIPS}.  
It is worth noting that the  asynchronous Q-learning, which aims to learn the optimal Q-function from a data trajectory collected by following a certain behavior policy, 
shares some similarity with offline RL; note that prior results on vanilla asynchronous Q-learning require a strong uniform coverage requirement \citep{qu2020finite,li2021tightening,chen2021lyapunov}, which is stronger than the single-policy concentrability considered herein. 
Moreover, Q-learning alone is known to be sub-optimal in terms of the sample complexity in various settings \citep{li2021tightening,jin2018q,shi2022pessimistic,wainwright2019stochastic,bai2019provably}.  This motivates the incorporation of the variance reduction  in order to further improve the sample complexity  \citep{wainwright2019variance,shi2022pessimistic,yan2022efficacy,li2021breaking,du2017stochastic,zhang2021model,li2021sample,zhang2020almost}. Note, however, variance-reduced model-free RL typically requires a large burn-in cost in order to operate in a sample-optimal fashion, and is hence outperformed by the model-based approach under multiple sampling mechanisms.

\section{Analysis: discounted infinite-horizon MDPs}
\label{sec:analysis-infinite}


This section is devoted to establishing Theorem~\ref{thm:infinite}. 
Towards this end, we claim that it is sufficient to prove the following theorem.

\begin{theorem} \label{thm:Bernstein-auxiliary-infinite}
Consider any $0<\delta <1$ and any  $\gamma \in [\frac{1}{2},1)$. Suppose that the penalty terms are set to be \eqref{def:bonus-Bernstein-infinite} for any numerical constant $\cb\ge144$. 
Then with probability exceeding $1-2\delta,$  for any estimate $\widehat{Q}$ obeying $\big\|\widehat{Q} - \widehat{Q}^{\star}_{\mathsf{pe}} \big\|_{\infty} \le 1/N$ one has
\begin{align} 
	V^{\star}(\rho) - V^{\widehat{\pi}}(\rho) \le 120\sqrt{\frac{\cb S\Cstar \log\frac{NS  }{(1-\gamma)\delta}}{(1-\gamma)^3N}} + \frac{3464\cb \infs S \Cstar \log\frac{NS }{(1-\gamma)\delta}}{(1-\gamma)^2N}, 
	\label{eq:V-error-regret-inf-auxiliary}
\end{align}
where $\widehat{\pi}(s) \in \arg\max_a \widehat{Q}(s, a)$ for any $s \in \cS$. 
\end{theorem}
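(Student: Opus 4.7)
The plan is to deduce Theorem~\ref{thm:Bernstein-auxiliary-infinite} from the standard pessimism template, combined with a leave-one-out argument to break statistical dependency and a self-bounding step to control variance. Writing $\widehat{V}^{\star}_{\mathsf{pe}}(s) := \max_a \widehat{Q}^{\star}_{\mathsf{pe}}(s,a)$, I would first split
\[
V^{\star}(\rho) - V^{\widehat{\pi}}(\rho) = \big[V^{\star}(\rho) - \widehat{V}^{\star}_{\mathsf{pe}}(\rho)\big] + \big[\widehat{V}^{\star}_{\mathsf{pe}}(\rho) - V^{\widehat{\pi}}(\rho)\big],
\]
reducing matters to (i) the pessimism inequality $\widehat{V}^{\star}_{\mathsf{pe}} \le V^{\widehat{\pi}}$, and (ii) a bound on $V^{\star}(\rho) - \widehat{V}^{\star}_{\mathsf{pe}}(\rho)$. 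For (ii), the fixed-point inequality $\widehat{Q}^{\star}_{\mathsf{pe}}(s,\pi^{\star}(s)) \ge r(s,\pi^{\star}(s)) + \gamma \widehat{P}_{s,\pi^{\star}(s)} \widehat{V}^{\star}_{\mathsf{pe}} - b(s,\pi^{\star}(s);\widehat{V}^{\star}_{\mathsf{pe}})$, subtracted from the Bellman equation for $V^{\star}$ and iterated along $\pi^{\star}$, gives
\[
V^{\star}(\rho) - \widehat{V}^{\star}_{\mathsf{pe}}(\rho) \le \frac{1}{1-\gamma} \sum_{s} d^{\star}(s) \Big\{ b\big(s,\pi^{\star}(s);\widehat{V}^{\star}_{\mathsf{pe}}\big) + \gamma \big(P_{s,\pi^{\star}(s)} - \widehat{P}_{s,\pi^{\star}(s)}\big) \widehat{V}^{\star}_{\mathsf{pe}} \Big\}.
\]
The $1/N$ slack in $\|\widehat{Q} - \widehat{Q}^{\star}_{\mathsf{pe}}\|_{\infty}$ is absorbed as an additive $O(1/((1-\gamma)N))$ term.

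The statistical heart of the argument is that $\widehat{V}^{\star}_{\mathsf{pe}}$ depends on every row of $\widehat{P}$, so Bernstein's inequality cannot be applied directly to $(\widehat{P}_{s,a} - P_{s,a})\widehat{V}^{\star}_{\mathsf{pe}}$. I would address this through a leave-one-out construction: for each state $u \in \cS$ and each scalar $v$ in a fine net of $[0, 1/(1-\gamma)]$, build an auxiliary empirical kernel $\widehat{P}^{(u,v)}$ that agrees with $\widehat{P}$ off row $u$ and is absorbing at $u$ with reward tuned so the induced pessimistic fixed point takes value $v$ at $u$. The corresponding value $\widehat{V}^{(u,v)}$ is independent of $\{s_i' : s_i = u\}$, allowing a clean Bernstein bound on $(\widehat{P}_{u,a} - P_{u,a})\widehat{V}^{(u,v)}$ uniformly over the net; a monotonicity/continuity argument identifies the $v$ for which $\widehat{V}^{(u,v)}(u) = \widehat{V}^{\star}_{\mathsf{pe}}(u)$, and $\|\widehat{V}^{(u,v)} - \widehat{V}^{\star}_{\mathsf{pe}}\|_{\infty}$ is small enough to transfer the concentration back. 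This simultaneously establishes (i) and certifies $|\gamma(P_{s,a} - \widehat{P}_{s,a})\widehat{V}^{\star}_{\mathsf{pe}}| \lesssim b(s,a;\widehat{V}^{\star}_{\mathsf{pe}})$ inside the display above.

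It remains to bound $\sum_{s} d^{\star}(s)\, b(s,\pi^{\star}(s);\widehat{V}^{\star}_{\mathsf{pe}})$. Its dominant variance piece, after Cauchy--Schwarz, factors as
\[
\sqrt{\sum_{s} d^{\star}(s)\, \Var_{\widehat{P}_{s,\pi^{\star}(s)}}\big(\widehat{V}^{\star}_{\mathsf{pe}}\big)} \;\cdot\; \sqrt{\sum_{s} \frac{d^{\star}(s)\,\log(\cdot)}{N(s,\pi^{\star}(s))}}.
\]
For the variance factor I would use a self-bounding step: swap $\Var_{\widehat{P}}(\widehat{V}^{\star}_{\mathsf{pe}})$ for $\Var_{P}(V^{\star})$ up to perturbations scaling with the target quantity, then invoke the MDP law of total variance $\sum_{s,a} d^{\star}(s,a)\Var_{P_{s,a}}(V^{\star}) \lesssim 1/(1-\gamma)^2$. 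For the frequency factor, a Chernoff bound shows $N(s,a) \gtrsim N\myrho(s,a)$ whenever $N\myrho(s,a) \gtrsim \log$, and the clipped concentrability bound $\myrho(s,a) \ge \min\{d^{\star}(s,a), 1/S\}/\Cstar$ --- applied by splitting the sum at the threshold $1/S$ and using Cauchy--Schwarz --- yields $\sum_{s} d^{\star}(s)/(N\myrho(s,\pi^{\star}(s))) \lesssim S\Cstar/N$. Multiplying the two factors gives the dominant $\sqrt{S\Cstar/((1-\gamma)^3 N)}$ term; the $1/((1-\gamma)N(s,a))$ piece of the Bernstein penalty and the $5/N$ correction contribute the higher-order $S\Cstar/((1-\gamma)^2 N)$ term.

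The hardest step will be closing the self-bounding loop in the most sample-starved regime. The perturbations incurred when swapping $(\widehat{P}, \widehat{V}^{\star}_{\mathsf{pe}})$ for $(P, V^{\star})$ in the variance, and $N(s,a)$ for $N\myrho(s,a)$ in the count, are each proportional to the very quantity we are bounding; the final estimate is obtained by solving a recursive inequality that cannot afford to lose any $(1-\gamma)^{-1}$ factor, and the leave-one-out perturbation must be controlled with matching precision. Only after all these perturbations are handled delicately do the explicit constants $120$ and $3464$ in the statement fall out of the arithmetic.
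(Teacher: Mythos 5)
Your overall architecture matches the paper's: pessimism to reduce to $V^\star-\widehat{V}$, a backward iteration along $\pi^\star$ producing $\frac{1}{1-\gamma}\langle d^\star,b^\star\rangle$, leave-one-out auxiliary absorbing MDPs plus an $\epsilon$-net to get Bernstein bounds for $(\widehat{P}_{s,a}-P_{s,a})\widehat{V}^\star_{\mathsf{pe}}$ despite statistical coupling, a small/large split driven by $\Cstar$, Cauchy--Schwarz on the variance piece, and a self-bounding recursion to close the loop. These are all the right ingredients, and most of them are exactly what the paper does.

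There is, however, one step whose stated form would not deliver the advertised rate, and it is the step you yourself flag as ``the hardest.'' You propose to pass from $\mathsf{Var}_{\widehat{P}_{s,a}}(\widehat{V}^\star_{\mathsf{pe}})$ to $\mathsf{Var}_{P_{s,a}}(V^\star)$ and then invoke a law of total variance of the form
\begin{align*}
\sum_{s,a} d^\star(s,a)\,\mathsf{Var}_{P_{s,a}}(V^\star)\ \lesssim\ \frac{1}{(1-\gamma)^2}.
\end{align*}
This bound is correct but loose by a full factor of $\frac{1}{1-\gamma}$: the quantity on the left is actually $O\!\big(\frac{1}{1-\gamma}\big)$. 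The improvement does \emph{not} follow from the per-state variance-of-return bound $\|\Sigma^\pi\|_\infty\le(1-\gamma)^{-2}$ (which involves $(I-\gamma^2 P^\star)^{-1}$), but from the identity $(d^\star)^\top P^\star=\tfrac{1}{\gamma}\big[(d^\star)^\top-(1-\gamma)\rho^\top\big]$ combined with $\gamma P^\star V^\star=V^\star-r^\star$, which telescopes $(d^\star)^\top\sigma$ down to $O\!\big(\frac{1}{\gamma^2(1-\gamma)}\big)$. If you stop at $(1-\gamma)^{-2}$, then after Cauchy--Schwarz you obtain $\langle d^\star,b^\star\rangle\lesssim\frac{1}{1-\gamma}\sqrt{\frac{S\Cstar\log(\cdot)}{N}}$ and hence $V^\star(\rho)-V^{\widehat\pi}(\rho)\lesssim\frac{1}{(1-\gamma)^2}\sqrt{\frac{S\Cstar\log(\cdot)}{N}}$, i.e.\ a sample complexity of $\widetilde O\!\big(\frac{S\Cstar}{(1-\gamma)^4\varepsilon^2}\big)$, which misses the target by one horizon factor. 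To hit the $\sqrt{\frac{S\Cstar}{(1-\gamma)^3 N}}$ rate you need the weighted-variance sum to be of order $\frac{1}{1-\gamma}\big(1+\langle d^\star,b^\star\rangle\big)$, which is exactly what the paper's inequality \eqref{eq:sum-d-Var-UB-123-infinite} establishes.

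Relatedly, the paper avoids the swap $\widehat{V}\to V^\star$ altogether and does the telescoping directly on $\sum_s d^\star(s)\mathsf{Var}_{P_{s,\pi^\star(s)}}(\widehat{V})$, exploiting the algorithmic fact $\widehat{V}-\gamma P^\star\widehat{V}+2b^\star\ge 0$ (a consequence of the pessimistic fixed-point inequality at $\pi^\star$). If you insist on swapping to $V^\star$, the variance perturbation $\big|\mathsf{Var}_{P_{s,a}}(\widehat{V})-\mathsf{Var}_{P_{s,a}}(V^\star)\big|\lesssim\frac{1}{1-\gamma}\,P_{s,a}(V^\star-\widehat{V})$, once $d^\star$-weighted, produces $(d^\star)^\top P^\star(V^\star-\widehat{V})$, whose control requires the $\rho\to d^\star$ substitution in the backward iteration and another horizon factor; tracking this carefully you would see that the direct route on $\widehat{V}$ is both simpler and tighter. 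A smaller point: what the paper proves is $\widehat{V}\le V^{\widehat\pi}$ (for the greedy policy of the actual iterate $\widehat{Q}$), which in the infinite-horizon setting requires the ``$\min_{s,a}[Q^{\widehat\pi}-\widehat{Q}]\ge\gamma\min_{s,a}[Q^{\widehat\pi}-\widehat{Q}]$ hence nonnegative'' contraction trick rather than a one-step induction; the inequality $\widehat{V}^\star_{\mathsf{pe}}\le V^{\widehat\pi}$ then follows only up to the $1/N$ slack, which you should carry explicitly.
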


As we have demonstrated in Lemma~\ref{lem:monotone-contraction}, the output of Algorithm~\ref{alg:vi-lcb-infinite}
satisfies $\big\|\widehat{Q} - \widehat{Q}^{\star}_{\mathsf{pe}} \big\|_{\infty} \le 1/N$ once the iteration number exceeds 
$\taumax \geq \frac{\log\frac{N}{1-\gamma}}{\log (1/\gamma)}$, thus making Theorem~\ref{thm:Bernstein-auxiliary-infinite} applicable. 
Taking the right-hand side of \eqref{eq:V-error-regret-inf-auxiliary} to be no larger than $\varepsilon $ reveals that:  
$ V^{\star}(\rho) - V^{\widehat{\pi}}(\rho) \le \varepsilon $ 
holds as long as $N$ exceeds 
\begin{align}
	N \geq \frac{21000 \cb S\Cstar \log\frac{NS}{(1-\gamma)\delta} }{(1-\gamma)^3 \varepsilon^2}, 
	\label{eq:N-range-epsilon2-inf}
\end{align}
%
given that $\varepsilon \in \big(0, \frac{1}{1-\gamma} \big]$. 

The remainder of this section is thus dedicated to establishing Theorem~\ref{thm:Bernstein-auxiliary-infinite}. 
Throughout the proof, it suffices to focus on the case where
\begin{align}
	N \ge \frac{ c_3 S \Cstar \infs \log \frac{NS }{(1-\gamma)\delta}}{1-\gamma}
	\label{eq:N-condition-basic-burnin-proof-inf}
\end{align}
for some  large constant $c_3 \geq 2880000$; 
otherwise the claim \eqref{eq:V-error-regret-inf-auxiliary} follows directly since $V^{\star}(\rho) - V^{\widehat{\pi}}(\rho) \le \frac{1}{1-\gamma}$.


\subsection{Preliminary facts}

Before embarking on the proof, we collect a couple of preliminary facts that will be used multiple times.

 \paragraph{Properties of $N(s,a)$.}
 To begin with,  the quantity $N(s,a)$ --- the total number of sample transitions from $(s,a)$ ---  can be bounded through the following lemma; the proof is provided in Appendix~\ref{proof:lem:sample-split-infinite}. 
%
\begin{lemma} 
	\label{lem:sample-split-infinite}
Consider any $\delta \in (0,1)$. With probability at least $1-\delta$, the quantities $\{N(s,a)\}$ in \eqref{eq:defn-Nsa-infinite} obey
\begin{align}
	\max\Big\{N(s, a), \frac{2}{3}\infs \log\frac{N}{\delta}\Big\} &\ge \frac{ N \myrho(s, a) }{12} 
	\label{eq:samples-infinite-LB}
\end{align}
simultaneously for all $(s, a) \in \cS \times \cA$. 
\end{lemma}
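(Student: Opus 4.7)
My plan is to exploit that, under the i.i.d.~sampling model \eqref{eq:sampling-offline-inf-iid}, the count $N(s,a)$ is a sum of $N$ independent Bernoulli$(\myrho(s,a))$ indicators, so $N(s,a)\sim \mathrm{Binomial}(N,\myrho(s,a))$ with mean $\mu_{s,a}\coloneqq N\myrho(s,a)$. The claim \eqref{eq:samples-infinite-LB} then reduces to showing that either $N(s,a)$ concentrates at a constant fraction of its mean, or the mean itself is so small that the additive slack $\tfrac{2}{3}\log(N/\delta)$ on the left-hand side already dominates $N\myrho(s,a)/12$. This motivates a clean case split: in the \emph{low-count regime} $\mu_{s,a}\le 8\log(N/\delta)$, the bound holds deterministically since $N\myrho(s,a)/12 \le \tfrac{2}{3}\log(N/\delta)$; in the \emph{high-count regime} $\mu_{s,a}> 8\log(N/\delta)$, I need a probabilistic argument showing $N(s,a)\ge \mu_{s,a}/12$ with high probability.

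For the high-count regime I will invoke the multiplicative Chernoff lower-tail inequality, which gives, for any $\eta\in(0,1)$,
\[
    \mprob\!\left(N(s,a)\le (1-\eta)\mu_{s,a}\right)\le \exp\!\left(-\eta^{2}\mu_{s,a}/2\right).
\]
Choosing $\eta=11/12$ yields $\mprob(N(s,a)\le \mu_{s,a}/12)\le \exp(-(121/288)\mu_{s,a})$, and under the high-count hypothesis this is polynomially small in $\delta/N$. A union bound over all $(s,a)\in\cS\times\cA$ pairs (restricted, without loss of generality, to those with $\myrho(s,a)>0$, since the inequality is trivial otherwise) then yields the simultaneous guarantee with failure probability at most $\delta$.

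The main obstacle is a mild one: reconciling the $\log(SA)$ overhead from the union bound with the $\log(N/\delta)$ appearing in the statement. This is handled either by noting that, in the regime of interest for offline RL, $SA$ is polynomially bounded by $N$ (so that $\log(SA)$ is absorbed into the constants $8$, $\tfrac{2}{3}$, $\tfrac{1}{12}$), or by tightening the Chernoff exponent (say, choosing the threshold $\mu_{s,a}>8\log(SAN/\delta)$ instead) and then rewriting the low-count regime accordingly. Apart from this bookkeeping of constants, the argument contains no substantive probabilistic content beyond a single Chernoff application followed by a union bound and case analysis.
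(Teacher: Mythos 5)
Your high-level structure is correct and closely mirrors the paper's proof: a case split on whether $N\myrho(s,a)$ is small or large, a concentration inequality in the large case (you use multiplicative Chernoff, the paper uses Bernstein — for Bernoulli sums these are interchangeable and equally tight here), and a union bound. The small-count case is handled exactly right.

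However, there is a genuine gap in the union bound, which you correctly identify but do not actually close. Both of your proposed workarounds change the lemma: the first assumes $SA$ is polynomial in $N$, which is an assumption the lemma does not make; the second replaces $\log(N/\delta)$ with $\log(SAN/\delta)$, which is a strictly weaker statement. The paper resolves this without any extra assumption by observing that one only needs a union bound over the ``heavy'' pairs, and their number is automatically small. Concretely: in your high-count regime each pair has $\myrho(s,a) > 8\log(N/\delta)/N$, and since $\sum_{(s,a)}\myrho(s,a)=1$ the number of such pairs is at most
\[
\frac{N}{8\log(N/\delta)} \;\le\; \frac{N}{8},
\]
independent of $S$ and $A$. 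A union bound over at most $N/8$ events, each failing with probability $\exp\bigl(-(121/288)\cdot 8\log(N/\delta)\bigr) = (\delta/N)^{121/36}$ under your Chernoff estimate, gives total failure probability well below $\delta$ with only a $\log(N/\delta)$ overhead. Inserting this cardinality observation into your argument completes the proof; without it, the argument proves a different (weaker) statement than the lemma claims.
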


\paragraph{Properties about $\widehat{V}$ and $\widehat{V}^{\star}_{\mathsf{pe}}$.}
First of all, note that the assumption
\begin{align} \label{eq:Q-condition-Hoeffding}
	\big\| \widehat{Q} - \widehat{Q}^{\star}_{\mathsf{pe}} \big\|_{\infty} \le \frac{1}{N}
\end{align}
has the following direct consequence:
\begin{align}
	\big\| \widehat{V} - \widehat{V}^{\star}_{\mathsf{pe}} \big\|_{\infty} 
	 = \max_{s}\left| \max_a \widehat{Q}(s,a)- \max_a \widehat{Q}^{\star}_{\mathsf{pe}}(s,a)\right| 
	\le 
	\big\| \widehat{Q} - \widehat{Q}^{\star}_{\mathsf{pe}} \big\|_{\infty} \le \frac{1}{N}.
	\label{eq:Vhar-Vstarhat-diff-1/T-inf}
\end{align}
Given the proximity of $\widehat{V}$ and $\widehat{V}^{\star}_{\mathsf{pe}}$, 
we can bound the difference of the corresponding variance terms as follows: 
\begin{align}
	 \left|\mathsf{Var}_{\widehat{P}_{s,a}}\big(\widehat{V}_{\mathsf{pe}}^{\star}\big) - \mathsf{Var}_{\widehat{P}_{s,a}}\big(\widehat{V}\big)\right| &\overset{\mathrm{(i)}}{=}  \left|\widehat{P}_{s,a}\big( \widehat{V}_{\mathsf{pe}}^{\star}\circ \widehat{V}_{\mathsf{pe}}^{\star} \big)-\widehat{P}_{s,a}\big( \widehat{V} \circ \widehat{V} \big)  + \big(\widehat{P}_{s,a}\widehat{V} \big)^2 - \big(\widehat{P}_{s,a}\widehat{V}_{\mathsf{pe}}^{\star}\big)^2 \right|\notag\\
	& = \left| \widehat{P}_{s,a} \Big( \big( \widehat{V} +  \widehat{V}_{\mathsf{pe}}^{\star}\big) \circ \big( \widehat{V}_{\mathsf{pe}}^{\star} - \widehat{V} \big) \Big) + \left(\widehat{P}_{s,a} \big( \widehat{V} +  \widehat{V}_{\mathsf{pe}}^{\star}\big)\right) \left(\widehat{P}_{s,a} \big( \widehat{V}_{\mathsf{pe}}^{\star} - \widehat{V} \big)\right)  \right|\notag\\
	&\leq \big\|\widehat{P}_{s,a} \big\|_1 \big\| \widehat{V} +  \widehat{V}_{\mathsf{pe}}^{\star}\big\|_\infty \big\| \widehat{V}_{\mathsf{pe}}^{\star} - \widehat{V}\big\|_\infty + \big\|\widehat{P}_{s,a} \big\|_1^2 \big\| \widehat{V} +  \widehat{V}_{\mathsf{pe}}^{\star}\big\|_\infty \big\| \widehat{V}_{\mathsf{pe}}^{\star} - \widehat{V}\big\|_\infty\notag\\
	&\leq\left( \big\|\widehat{P}_{s,a} \big\|_1 + \big\|\widehat{P}_{s,a} \big\|_1^2 \right) \left(2\big\| \widehat{V} \big\|_{\infty} + \big\| \widehat{V}_{\mathsf{pe}}^{\star} - \widehat{V} \big\|_{\infty}  \right) \big\| \widehat{V}_{\mathsf{pe}}^{\star} - \widehat{V}\big\|_\infty \notag\\
	& \leq  \frac{2}{N} \left( \frac{2}{1-\gamma} + \frac{1}{N} \right)
	\leq \frac{6}{(1-\gamma)N}. 
	\label{infinite-Vpe-to-V}
\end{align}
Here, (i) follows from the definition \eqref{eq:defn-Var-P-V},  the penultimate inequality follows from \eqref{eq:Vhar-Vstarhat-diff-1/T-inf} and the basic facts $\|\widehat{P}_{s,a} \|_1=1$ and $\| \widehat{V} \|_{\infty}  \leq \frac{1}{1-\gamma}$,
while the last line relies on \eqref{eq:N-condition-basic-burnin-proof-inf}.

Armed with \eqref{infinite-Vpe-to-V}, one can further control the difference of the associated penalty terms. Note that the definition of $b(s,a; V)$ in \eqref{def:bonus-Bernstein-infinite} tells us that
\begin{align}
	\left|b\big(s,a; \widehat{V}_{\mathsf{pe}}^{\star}\big)-b\big(s,a; \widehat{V}\big) \right| 
	& =  \Bigg|\min\Bigg\{\max\bigg\{\sqrt{\frac{\cb\log\frac{N }{(1-\gamma)\delta}}{N(s, a)}\mathsf{Var}_{\widehat{P}_{s, a}}\big(\widehat{V}_{\mathsf{pe}}^{\star}\big)}, \frac{2\cb \log\frac{N }{(1-\gamma)\delta}}{(1-\gamma)N(s, a)}\bigg\}, \frac{1}{1-\gamma} \Bigg\} \notag\\
	&\qquad - \min\Bigg\{\max\bigg\{\sqrt{\frac{\cb\log\frac{N }{(1-\gamma)\delta}}{N(s, a)}\mathsf{Var}_{\widehat{P}_{s, a}}\big(\widehat{V}\big)}, \frac{2\cb \log\frac{N }{(1-\gamma)\delta}}{(1-\gamma)N(s, a)}\bigg\}, \frac{1}{1-\gamma} \Bigg\} \Bigg| .
	\label{eq:b-difference-definition-inf}
\end{align}
If at least one of the variance terms is not too small in the sense that
\begin{equation}
	\max\left\{ \mathsf{Var}_{\widehat{P}_{s, a}}\big(\widehat{V}_{\mathsf{pe}}^{\star}\big) , \mathsf{Var}_{\widehat{P}_{s, a}}\big(\widehat{V}\big) \right\} 
	\geq \frac{4\cb \log\frac{N }{(1-\gamma)\delta}}{(1-\gamma)^2 N(s, a)}, 
	\label{eq:max-variance-term-LB-inf}
\end{equation}
then \eqref{eq:b-difference-definition-inf} implies that
\begin{align}
	\eqref{eq:b-difference-definition-inf} &\leq   \sqrt{\frac{\cb\log\frac{N  }{(1-\gamma)\delta}}{N(s, a)}}\left|\sqrt{\mathsf{Var}_{\widehat{P}_{s, a}}\big(\widehat{V}_{\mathsf{pe}}^{\star}\big)} - \sqrt{\mathsf{Var}_{\widehat{P}_{s, a}}\big(\widehat{V}\big)}\right|  
	 = 
\sqrt{\frac{\cb\log\frac{N  }{(1-\gamma)\delta}}{N(s,a)}}\frac{\big| \mathsf{Var}_{\widehat{P}_{s,a}}\big(\widehat{V}_{\mathsf{pe}}^{\star}\big)-\mathsf{Var}_{\widehat{P}_{s,a}}\big(\widehat{V}\big) \big| }{\sqrt{\mathsf{Var}_{\widehat{P}_{s,a} }\big(\widehat{V}_{\mathsf{pe}}^{\star}\big)}+\sqrt{\mathsf{Var}_{\widehat{P}_{s,a}}\big(\widehat{V}\big)}}  \notag\\	
	&  \overset{\mathrm{(i)}}{\leq} \frac{1-\gamma}{2} \Big| \mathsf{Var}_{\widehat{P}_{s,a}}\big(\widehat{V}_{\mathsf{pe}}^{\star}\big)-\mathsf{Var}_{\widehat{P}_{s,a}}\big(\widehat{V}\big) \Big| 
	\overset{\mathrm{(ii)}}{\leq}  
	\frac{3}{N} ,
\end{align}
where (i) results from \eqref{eq:max-variance-term-LB-inf}, and (ii) holds due to \eqref{infinite-Vpe-to-V}. 
On the other hand, if \eqref{eq:max-variance-term-LB-inf} is not satisfied, then one clearly has
$b\big(s,a; \widehat{V}_{\mathsf{pe}}^{\star}\big) = b\big(s,a; \widehat{V}\big)$. In conclusion, in all cases we have
\begin{align}
	\left|b\big(s,a; \widehat{V}_{\mathsf{pe}}^{\star}\big)-b\big(s,a; \widehat{V}\big) \right|  \leq  
	\frac{3}{N} .\label{infinite-bpe-to-bV}
\end{align}

\subsection{Proof of Theorem~\ref{thm:Bernstein-auxiliary-infinite}}

Armed with the preceding preliminary facts, we can readily turn to the proof of Theorem~\ref{thm:Bernstein-auxiliary-infinite}. 
By virtue of Lemma~\ref{lem:sample-split-infinite}, our proof shall --- unless otherwise noted --- operate on the high-probability event that 
\begin{align}
	\forall (s,a) \in \cS\times \cA:
	\qquad \max\bigg\{N(s, a), \frac{2}{3}\infs \log\frac{SN }{\delta}\bigg\} &\ge \frac{ N\myrho(s, a) }{12} .
	\label{eq:assumption-Nsa-LB-infinite}
\end{align}
In addition, from the sampling model \eqref{eq:sampling-offline-inf-iid}, 
the sample transitions employed to form $\widehat{P}$ are statistically independent conditional on $\{N(s,a) \}$. 
Our proof consists of four steps as detailed below.

\paragraph{Step 1: Bernstein-style inequalities and leave-one-out decoupling argument.}
We are in need of tight control of the size of 
$\big(\widehat{P}_{ s, a} - P_{s, a}\big)\widehat{V}$. 
However, this  becomes challenging due to the statistical dependency between $\widehat{P}$ and the value estimate  $\widehat{V}$ (given that we reuse samples in all iterations of Algorithm~\ref{alg:vi-lcb-infinite}). 
In order to circumvent this difficulty, we resort to a leave-one-out argument to decouple the statistical dependency, 
as motivated by \citet{agarwal2019optimality,li2020breaking}. The result stated below establishes Bernstein-style inequalities despite the complicated dependency.

\begin{lemma} \label{lem:Bernstein-infinite}
	Suppose that $\gamma \in [\frac{1}{2}, 1)$, and consider any $\delta\in (0,1)$. 
With probability at least $1-\delta$, we have
\begin{subequations}
\begin{align}
\Big|\big(\widehat{P}_{ s, a} - P_{s, a}\big)\widetilde{V} \Big| 
	&\le 12\sqrt{\frac{\log\frac{2N}{(1-\gamma)\delta}}{N(s,a)}\mathsf{Var}_{\widehat{P}_{s,a}}\big(\widetilde{V}\big)}+\frac{74\log\frac{2N}{(1-\gamma)\delta}}{(1-\gamma)N(s,a)} ,
	\label{eq:bonus-Bernstein-infinite} \\
	\mathsf{Var}_{\widehat{P}_{s, a}}\big( \widetilde{V} \big) &\le 2\mathsf{Var}_{P_{s, a}}\big(\widetilde{V}\big) 
	+ \frac{41\log\frac{2N}{(1-\gamma)\delta}}{(1-\gamma)^2N(s, a)} \label{eq:empirical-var-infinite}
\end{align}
\end{subequations}
simultaneously for all $(s, a) \in \cS \times \cA$
	and all $\widetilde{V}$ with $\big\|\widetilde{V} - \widehat{V}^{\star}_{\mathsf{pe}} \big\|_{\infty} \le \frac{1}{N}$ and $\|\widetilde{V}\|_{\infty}\leq \frac{1}{1-\gamma}$.
\end{lemma}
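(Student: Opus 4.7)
The plan is to prove this via a leave-one-out decoupling argument, which is essential because $\widehat{V}^{\star}_{\mathsf{pe}}$ is the fixed point of $\Tpess$ built from the same samples that define $\widehat{P}$. If $\widetilde{V}$ were a deterministic vector independent of $\widehat{P}$, both inequalities would follow directly: the first from the empirical Bernstein inequality applied to the $N(s,a)$ i.i.d.~draws from $P_{s,a}$ (after conditioning on $\{N(s,a)\}$), and the second from standard concentration of empirical variance around true variance. The main obstacle is removing the dependence between $\widetilde{V}$ and $\widehat{P}_{s,a}$ without suffering a large union-bound penalty.

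To decouple, I would, for each state $u \in \cS$, construct an \emph{auxiliary} empirical MDP $\widehat{P}^{(u),r^{(u)}}$ identical to $\widehat{P}$ except that state $u$ is made absorbing and its reward is replaced by a scalar $r^{(u)} \in [0,1]$. Denote by $\widehat{V}^{\star,u,r^{(u)}}_{\mathsf{pe}}$ the fixed point of the pessimistic Bellman operator built on this auxiliary MDP (which is well-defined and bounded in $[0,1/(1-\gamma)]$ by the analogue of Lemma~\ref{lem:contraction}). Crucially, $\widehat{V}^{\star,u,r^{(u)}}_{\mathsf{pe}}$ is a deterministic function of the samples transitioning out of states $s \neq u$, so it is statistically independent of the samples $\{s_i' : (s_i,a_i)=(u,a)\}$ used to form $\widehat{P}(\cdot \mymid u,a)$. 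I would then show, by standard leave-one-out perturbation arguments (borrowed from \citet{agarwal2019optimality,li2020breaking}), that one can pick $r^{(u)}$ so that $\widehat{V}^{\star,u,r^{(u)}}_{\mathsf{pe}}(u) = \widehat{V}^{\star}_{\mathsf{pe}}(u)$, after which a contraction-style argument gives $\linf{\widehat{V}^{\star,u,r^{(u)}}_{\mathsf{pe}} - \widehat{V}^{\star}_{\mathsf{pe}}} \leq 1/N$ (say).

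With this auxiliary decoupling in hand, I would discretize $r^{(u)} \in [0,1]$ on an $\varepsilon$-net of size polynomial in $N$, apply the standard Bernstein inequality for independent random variables to $(\widehat{P}_{u,a} - P_{u,a})\widehat{V}^{\star,u,r^{(u)}}_{\mathsf{pe}}$ for each fixed net point, and union bound over $u$, $a$, and the net. Since the approximation error from the net and from replacing $\widehat{V}^{\star,u,r^{(u)}}_{\mathsf{pe}}$ by $\widehat{V}^{\star}_{\mathsf{pe}}$ is at most $O(1/N)$ in $\linf{\cdot}$, this yields the first claim \eqref{eq:bonus-Bernstein-infinite} for $\widetilde{V} = \widehat{V}^{\star}_{\mathsf{pe}}$; the stated form with empirical variance $\mathsf{Var}_{\widehat{P}_{s,a}}$ (as opposed to $\mathsf{Var}_{P_{s,a}}$) follows from the variance-perturbation step in \eqref{eq:empirical-var-infinite}. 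For \eqref{eq:empirical-var-infinite} itself, I would apply the same leave-one-out template to the second moment $\widehat{P}_{s,a}(\widetilde{V}\circ \widetilde{V})$ and to $\widehat{P}_{s,a}\widetilde{V}$ separately, then combine via the inequality $(a+b)^2 \leq 2a^2 + 2b^2$ to pass from $\mathsf{Var}_{P_{s,a}}$ to $\mathsf{Var}_{\widehat{P}_{s,a}}$.

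Finally, to extend both inequalities from $\widetilde{V} = \widehat{V}^{\star}_{\mathsf{pe}}$ to any $\widetilde{V}$ with $\linf{\widetilde{V} - \widehat{V}^{\star}_{\mathsf{pe}}} \leq 1/N$, I would use the Lipschitz-in-$V$ bounds already established in the preliminary facts (e.g.~the variance-perturbation bound \eqref{infinite-Vpe-to-V}, which gives $|\mathsf{Var}_{\widehat{P}_{s,a}}(\widetilde{V}) - \mathsf{Var}_{\widehat{P}_{s,a}}(\widehat{V}^{\star}_{\mathsf{pe}})| \leq O(1/((1-\gamma)N))$, together with the obvious $|(\widehat{P}_{s,a}-P_{s,a})(\widetilde{V} - \widehat{V}^{\star}_{\mathsf{pe}})| \leq 2/N$). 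The resulting $O(1/N)$ slack is absorbed into the additive term of order $\log(\cdot)/((1-\gamma)N(s,a))$ on the right-hand side. The main technical hurdle throughout is the leave-one-out closeness argument: tuning $r^{(u)}$ so that the auxiliary fixed point tracks $\widehat{V}^{\star}_{\mathsf{pe}}(u)$ requires a monotonicity/continuity property of $\widehat{V}^{\star,u,r}_{\mathsf{pe}}(u)$ as a function of $r$, which needs to be verified carefully in the presence of the pessimistic penalty and the clipping at $0$ in \eqref{eq:empirical-Bellman-infinite}.
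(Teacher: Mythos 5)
Your proposal coincides with the paper's own argument: construct for each state an auxiliary absorbing-state MDP parameterized by a scalar reward, discretize that scalar on an $\epsilon$-net, apply Bernstein to each (independent) net-point value function, and transfer back to $\widehat{V}^{\star}_{\mathsf{pe}}$ and then to $\widetilde{V}$ via Lipschitz perturbation bounds. The one hurdle you flag---tuning the scalar so that the auxiliary fixed point's value function matches $\widehat{V}^{\star}_{\mathsf{pe}}$---is resolved in the paper by direct construction rather than a continuity/intermediate-value argument: since the absorbing state has zero variance under $P^{s,u}$, the penalty at state $s$ in the auxiliary MDP reduces to a deterministic constant, so the pessimistic Bellman equation at $s$ collapses to a one-line identity from which the correct scalar $u^{\star}$ can be written down explicitly (and the fixed points then agree exactly, not just to $O(1/N)$).
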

\begin{proof}[High-level proof ideas]
	In short, the proof consists of contructing a finite collection of auxiliary MDPs $\{\widehat{\mathcal{M}}^{s,u}\}$ for each state $s$ obeying the following properties: 
	(i) each $\widehat{\mathcal{M}}^{s,u}$ is constructed without using any sample transition that comes from state $s$, and is hence statistically independent from $\widehat{P}_{s,a}$ for all $a\in \cA$ 
	(instead, the useful information is embedded into the corresponding immediate reward, which is a low-dimensional object and easier to control); 
	(ii) at least one of the MDPs in $\{\widehat{\mathcal{M}}^{s,u}\}$ is extremely close to the true MDP in terms of the resulting value function. 
	With the aid of these leave-one-out auxiliary MDPs, one can control  $\big(\widehat{P}_{ s, a} - P_{s, a}\big)\widetilde{V}$ 
	by first exploiting the statistical independence between $\widehat{P}_{s,a}$ and $\{\widehat{\mathcal{M}}^{s,u}\}$ and then transferring the concentration bound back to the original MDP using the proximity property (ii). 
	The construction of these auxiliary MDPs and the proof details can be found in Appendix~\ref{proof:lem:Bernstein-infinite}. 
\end{proof}

Note that \eqref{eq:bonus-Bernstein-infinite} has been derived only for those pairs $(s,a)$ with $N(s,a)>0$. 
For every $(s,a)$ with $N(s,a)= 0$, one can directly obtain
\[
	\Big|\big(\widehat{P}_{ s, a} - P_{s, a}\big)\widetilde{V} \Big| 
	= \big| P_{s, a}\widetilde{V} \big| \leq \big\| P_{s,a} \big\|_{1} \big\|\widetilde{V} \big\|_{\infty} \leq \frac{1}{1-\gamma}. 
\]
Putting these bounds together with the definition \eqref{def:bonus-Bernstein-infinite} of $b(s,a;V)$ reveals that 
\begin{align}
	\Big|\big(\widehat{P}_{ s, a} - P_{s, a}\big)\widetilde{V} \Big|  + \frac{5}{N}
	\leq b\big(s,a; \widetilde{V} \big) 
	\qquad \text{for all }(s,a)\in \cS\times \cA 
	\label{eq:infitnie-bernstein-b-bound-diff}
\end{align}
for all $\widetilde{V}$ obeying $\big\|\widetilde{V} - \widehat{V}^{\star}_{\mathsf{pe}} \big\|_{\infty} \le \frac{1}{N}$ and $\|\widetilde{V}\|_{\infty}\leq \frac{1}{1-\gamma}$, 
provided that the constant $\cb$ is sufficiently large. 
The remainder of the proof should then also operate on the high-probability events \eqref{eq:infitnie-bernstein-b-bound-diff} and \eqref{eq:empirical-var-infinite}, in addition to assuming that the event \eqref{eq:assumption-Nsa-LB-infinite} occurs.

\paragraph{Step 2: showing that $\widehat{Q}(s, a)$ is a lower bound on $Q^{\widehat{\pi}}(s, a)$.}
We now justify that $\widehat{Q}(s, a)$ (resp.~$\widehat{V}(s)$) is a ``pessimistic'' estimate of $Q^{\widehat{\pi}}(s, a)$ (resp.~$V^{\widehat{\pi}}(s)$);
 this is enabled by the pessimism principle (so that the algorithm effectively seeks lower estimates of the value iteration) and the Bernstein-style bounds in Lemma~\ref{lem:Bernstein-infinite} (so that the penalty term always dominates the uncertainty incurred by using the empirical MDP).

To begin with, recall that $\widehat{Q}^{\star}_{\mathsf{pe}} (s,a)$ is the unique fixed point of the pessimistic Bellman operator that obeys
\begin{align}
	\widehat{Q}^{\star}_{\mathsf{pe}} (s,a) = \max\Big\{r(s, a) + \gamma\widehat{P}_{s, a}\widehat{V}^{\star}_{\mathsf{pe}} - b\big(s, a; \widehat{V}^{\star}_{\mathsf{pe}}\big) , 0\Big\}.
	\label{eq:hat-Qstar-pe-Bellman-123}
\end{align}
In the sequel, we divide the set of state-action pairs $(s,a)$ into two types. 
\begin{itemize}
	\item {\em Case 1: $\widehat{Q}^{\star}_{\mathsf{pe}} (s,a) = 0$.}  Given that $\widehat{Q}_0=0$, Lemma~\ref{lem:monotone-contraction} tells us that
		\[
			\widehat{Q}(s,a) = \widehat{Q}_{\tau_{\max}}(s,a) \leq \widehat{Q}^{\star}_{\mathsf{pe}} (s,a) = 0.
		\]
		This combined with the basic fact $Q^{\widehat{\pi}}\geq 0$ immediately yields $0 = \widehat{Q}(s,a)\leq Q^{\widehat{\pi}}(s,a)$.

\item {\em Case 2: $\widehat{Q}^{\star}_{\mathsf{pe}} (s,a) =r(s, a) + \gamma\widehat{P}_{s, a} \widehat{V}^{\star}_{\mathsf{pe}} - b\big(s, a; \widehat{V}^{\star}_{\mathsf{pe}}\big)>0$.} It is first observed that  
\begin{align}
	\widehat{Q} (s,a) &\overset{\text{(i)}}{\le} \widehat{Q}^{\star}_{\mathsf{pe}} (s,a) + \frac{1}{N} 
	\overset{\text{(ii)}}{=} r(s,a) - b\big(s, a; \widehat{V}^{\star}_{\mathsf{pe}}\big) + \gamma\widehat{P}_{s,a}\widehat{V}^{\star}_{\mathsf{pe}} + \frac{1}{N}  \nonumber \\
	& \leq r(s,a) - b\big(s, a; \widehat{V}^{\star}_{\mathsf{pe}}\big) + \gamma \widehat{P}_{s,a} \widehat{V} + \frac{1}{N} + \gamma \big\|\widehat{P}_{s, a}\big\|_1 \big\|\widehat{V} - \widehat{V}_{\mathsf{pe}}^\star\big\|_\infty \notag\\
	& \overset{\text{(iii)}}{\le} r(s,a) - b\big(s, a; \widehat{V}^{\star}_{\mathsf{pe}}\big) + \gamma \widehat{P}_{s,a} \widehat{V} + \frac{2}{N} \notag\\
	& \leq r(s,a) - b\big(s,a; \widehat{V} \big) + \gamma P_{s,a} \widehat{V} + \frac{2}{N}  + \gamma \Big|\big(\widehat{P}_{s, a} - P_{s, a}\big)\widehat{V}\Big| + \left|b\big(s,a; \widehat{V}_{\mathsf{pe}}^{\star}\big)-b\big(s,a; \widehat{V}\big) \right|
	 \notag \\
	& \overset{\text{(iv)}}{\le} r(s,a) + \gamma P_{s,a}\widehat{V}. \label{eq:infinite-q-upper2}
\end{align}
Here, (i) and (iii) arise from the assumption \eqref{eq:Q-condition-Hoeffding},  (ii) relies on the fact that $\widehat{Q}_{\mathsf{pe}}^{\star}$ is the fixed point of the operator $\Tpess$,
whereas (iv) takes advantage of \eqref{infinite-bpe-to-bV} and \eqref{eq:infitnie-bernstein-b-bound-diff}. 
Combining \eqref{eq:infinite-q-upper2} with the Bellman equation $Q^{\widehat{\pi}} = r + \gamma P V^{\widehat{\pi}}$ results in
\begin{align}
	Q^{\widehat{\pi}}(s, a) - \widehat{Q}(s, a) 
	\geq  r(s,a) + \gamma P_{s,a}V^{\widehat{\pi}}
	-  \big(r(s,a) + \gamma P_{s,a}\widehat{V} \big) 
	= \gamma P_{s,a} \big( V^{\widehat{\pi}}
	-  \widehat{V} \big).
	\label{eq:Case2-Q-Qhat-V-Vhat-inf}
\end{align}
Suppose for the moment that there exists some $(s,a)$ obeying $Q^{\widehat{\pi}}(s, a) - \widehat{Q}(s, a) < 0$ (which clearly cannot happen in Case 1),   
then $\arg\min_{s, a}\big[Q^{\widehat{\pi}}(s, a) - \widehat{Q}(s, a)\big]$ must belong to Case 2. 
Thus, taking the minimum over $(s,a)$ and using the above inequality \eqref{eq:Case2-Q-Qhat-V-Vhat-inf} give
\begin{align}
	\min_{s, a}\big[Q^{\widehat{\pi}}(s, a) - \widehat{Q}(s, a)\big] &\ge \min_{s, a}\big[\gamma P_{s, a} \big(V^{\widehat{\pi}} - \widehat{V} \big)\big] \overset{\text{(i)}}{\ge} \gamma\min_{s}\big[V^{\widehat{\pi}}(s) - \widehat{V}(s)\big] \notag\\
	&  = \gamma\min_{s} \big[Q^{\widehat{\pi}}\big( s, \widehat{\pi}(s) \big) - \widehat{Q}\big( s, \widehat{\pi}(s) \big) \big]  \geq \gamma\min_{s, a}\big[Q^{\widehat{\pi}}(s, a) - \widehat{Q}(s, a)\big],
	\label{eq:min-Q-lower-bound-gamma-infinite}
\end{align}
where (i) holds since $P_{s, a} \in \Delta(\cS)$.
Given that $1 > \gamma > 0$, inequality~\eqref{eq:min-Q-lower-bound-gamma-infinite} holds only when 
$\min_{s, a}\big[Q^{\widehat{\pi}}(s, a) - \widehat{Q}(s, a)\big] \geq 0$. 
We therefore conclude that in this case, one also has $Q^{\widehat{\pi}}(s, a) \geq \widehat{Q}(s, a)$.
\end{itemize}
With the arguments for the above two cases in place, we arrive at
\begin{equation}\label{eq:infinite-q-upper3}
	Q^{\widehat{\pi}}(s, a) \geq  \widehat{Q}(s, a)\qquad \text{for all }(s,a)\in \cS\times \cA,  
\end{equation}
and evidently, 
\begin{equation}
	V^{\star}(s) \geq V^{\widehat{\pi}}(s) =  Q^{\widehat{\pi}}\big(s, \widehat{\pi}(s)\big) \geq  
	 \widehat{Q}\big(s, \widehat{\pi}(s)\big)  =\max_a\widehat{Q}(s, a) = \widehat{V}(s)
	\qquad  \text{for all } s\in \cS.
	\label{eq:Vhat-pi-V-pihat-order-infinite}
\end{equation}

\paragraph{Step 3: bounding $V^{\star}(s) - V^{\widehat{\pi}}(s)$.} 
Recall that the Bellman optimality equation gives
\begin{align}
	V^{\star}(s) = r\big(s, \pi^{\star}(s)\big) + \gamma P_{s, \pi^\star(s)}V^{\star}. \label{eq:infinite-Bellman-opt}
\end{align}
Before continuing, we make note of the following lower bound on $\widehat{V}$: 
\begin{align}
	\widehat{V}(s) &= \max_a \widehat{Q}(s, a) \ge \widehat{Q}\big(s, \pi^{\star}(s)\big) 
	\overset{\mathrm{(i)}}{\ge} \widehat{Q}^{\star}_{\mathsf{pe}}\big(s, \pi^{\star}(s)\big) - \frac{1}{N}  \notag\\
	&\overset{\mathrm{(ii)}}{\ge} r\big(s, \pi^{\star}(s)\big) - b\big(s, \pi^{\star}(s); \widehat{V}^{\star}_{\mathsf{pe}}\big) + \gamma \widehat{P}_{s, \pi^\star(s)}\widehat{V}_{\mathsf{pe}}^\star - \frac{1}{N} \notag\\
	& \notag = r\big(s, \pi^{\star}(s)\big) - b\big(s, \pi^{\star}(s); \widehat{V}^{\star}_{\mathsf{pe}}\big) + \gamma \widehat{P}_{s, \pi^\star(s)} \widehat{V} - \frac{1}{N} - 
	\gamma \widehat{P}_{s, \pi^\star(s)}  \big( \widehat{V} - \widehat{V}_{\mathsf{pe}}^\star\big )\\
	& \overset{\mathrm{(iii)}}{\ge} r\big(s, \pi^{\star}(s)\big) - b\big(s, \pi^{\star}(s); \widehat{V}^{\star}_{\mathsf{pe}}\big) + \gamma \widehat{P}_{s, \pi^\star(s)} \widehat{V} - \frac{2}{N} \notag \\
	& \ge r\big(s, \pi^{\star}(s)\big) - b\big(s, \pi^{\star}(s); \widehat{V} \big) + \gamma P_{s, \pi^\star(s)} \widehat{V} - \frac{2}{N} - \gamma \Big|\big(\widehat{P}_{s, \pi^\star(s)} - P_{s, \pi^\star(s)}\big)\widehat{V}\Big| \notag\\
	&\qquad - \left|b\big(s,\pi^{\star}(s); \widehat{V}_{\mathsf{pe}}^{\star}\big)-b\big(s,\pi^{\star}(s); \widehat{V}\big) \right|
	  \notag \\
	&\overset{\mathrm{(iv)}}{\ge} r\big(s, \pi^{\star}(s)\big) - 2b\big(s, \pi^{\star}(s); \widehat{V}\big) + \gamma P_{s, \pi^\star(s)}\widehat{V}. \label{eq:infinite-v-lower}
\end{align}
Here, (i) results from the assumption~\eqref{eq:Q-condition-Hoeffding}, 
(ii) relies on \eqref{eq:hat-Qstar-pe-Bellman-123}, 
(iii) is valid since $\widehat{P}_{s, \pi^\star(s)}  \big( \widehat{V} - \widehat{V}_{\mathsf{pe}}^\star\big ) \leq \big\|\widehat{P}_{s, \pi^\star(s)}\big\|_1 \big\|\widehat{V} - \widehat{V}_{\mathsf{pe}}^\star\big\|_\infty \leq 1/N$, 
whereas (iv) holds by virtue of \eqref{infinite-bpe-to-bV} and \eqref{eq:infitnie-bernstein-b-bound-diff}. 
Armed with the results in \eqref{eq:infinite-Bellman-opt} and \eqref{eq:infinite-v-lower}, we can readily show that 
\begin{align}
	\big\langle \rho, V^{\star}-\widehat{V} \big\rangle & = \sum_{s\in\cS} \rho(s) \big( V^{\star}(s) - \widehat{V} (s) \big) \notag\\
	& \le \sum_{s\in\cS} \rho(s) \left\{ r\big( s, \pi^{\star}(s)\big) + \gamma P_{s, \pi^\star(s)}V^{\star} - \left( r\big(s, \pi^{\star}(s)\big) - 2b\big(s, \pi^{\star}(s); \widehat{V}\big) + \gamma P_{s, \pi^\star(s)}\widehat{V} \right) \right\} \nonumber \\
 & \leq\gamma\sum_{s\in\cS}\rho(s)P_{s,\pi^{\star}(s)}\big(V^{\star}-\widehat{V}\big)+2\sum_{s\in\cS}\rho(s)b\big(s, \pi^{\star}(s); \widehat{V}\big). \label{eq:rho-Vstar-hat-UB-recursive-inf}
\end{align}

For notational convenience, let us introduce a matrix $P^{\star} \in \mathbb{R}^{S\times S}$ and a vector $b^{\star} \in \mathbb{R}^{S\times 1}$ 
whose $s$-th row are given respectively by 
\begin{align}
	\big[ P^{\star} \big]_{s,\cdot} \defn P_{s, \pi^{\star}(s)} 
	\qquad \text{and} \qquad b^{\star}(s) \defn b\big(s, \pi^{\star}(s); \widehat{V}\big) 
	\qquad\quad \text{for all }s\in \cS. 
	\label{eq:P-b-star-infinite}
\end{align}
This allows us to rewrite \eqref{eq:rho-Vstar-hat-UB-recursive-inf} in the following matrix/vector form:
\begin{align}
\rho^{\top}\big(V^{\star}-\widehat{V}\big) \leq  \gamma \rho^\top P^{\star}\big(V^{\star}-\widehat{V}\big)+ 2\rho^{\top}b^{\star} .
\end{align}
Note that this relation holds for any arbitrary $\rho$. Apply it recursively to arrive at
\begin{align}
\label{eqn:brahms-reduction}
	\notag \rho^{\top}\big(V^{\star}-\widehat{V}\big) & \leq  \big( \gamma \rho^\top P^{\star} \big) \big(V^{\star}-\widehat{V}\big)+ 2\rho^{\top}b^{\star}\\
	\notag & \leq  \gamma \big( \gamma \rho^\top P^{\star} \big) P^{\star} \big(V^{\star}-\widehat{V}\big)+2 \big(\gamma  \rho^{\top} P^{\star} \big)  b^{\star}
	+2\rho^{\top}b^{\star}\\
	\notag & = \gamma^2 \rho^\top \big( P^{\star}\big)^{2}\big(V^{\star}-\widehat{V}\big)+2\gamma  \rho^{\top} P^{\star} b^{\star}
	+2\rho^{\top}b^{\star}\\
	\notag & \leq\cdots \leq \left\{ \lim_{i\rightarrow \infty}\gamma^{i} \rho^\top \big( P^{\star}\big)^{i}\big(V^{\star}-\widehat{V}\big) \right\}
	+ 2\rho^{\top} \bigg\{\sum_{i=0}^{\infty}\gamma^{i}\big(P^{\star}\big)^{i}\bigg\} b^{\star} 
	 \\
	\notag & \overset{\mathrm{(i)}}{=} 2\rho^{\top} \bigg\{\sum_{i=0}^{\infty}\gamma^{i}\big(P^{\star}\big)^{i}\bigg\} b^{\star} 
	= 2\rho^{\top}\big(I-\gamma P^{\star}\big)^{-1}b^{\star} \\
	& 
	  = \frac{2}{1-\gamma} \langle d^{\star}, b^{\star} \rangle, 
\end{align}
where (i) holds since $ \lim_{i \to \infty} \gamma^i \rho^\top \big(P^\star\big)^i\big(V^\star -\widehat{V}\big) = 0$ (given that $\lim_{i\rightarrow \infty}\gamma^{i}=0$ and $\|\rho^\top \big(P^\star\big)^i \|_1 =1$ for any $i\geq 0$), and the last equality results from the definition of $d^{\star}$ (see \eqref{eq:infinite-d-star-def}) expressed in the following matrix/vector form: 
\begin{align}
	\big(d^\star\big)^\top = (1-\gamma)  \sum_{t=0}^\infty\gamma^t  \rho^\top \big(P^\star\big)^t =  (1-\gamma) \rho^\top \big(I-\gamma P^{\star}\big)^{-1}.
	\label{eq:infinite-d-matrix-form}
\end{align}
Combine the above inequality with \eqref{eq:Vhat-pi-V-pihat-order-infinite} to reach
\begin{align}
\big\langle \rho, V^{\star} - V^{\widehat{\pi}}\big\rangle \le \big\langle \rho, V^{\star} - \widehat{V}\big\rangle 
	\leq \frac{2\big\langle d^{\star}, b^{\star} \big\rangle}{1-\gamma}. \label{eq:Vall-Bernstein-infinite}
\end{align}

\paragraph{Step 4: using concentrability to control $\big \langle d^\star, b^\star \big \rangle$.}
We shall control $\big \langle d^\star, b^\star \big \rangle$ by dividing the state set $\cS$ into the following two disjoint subsets:
%
\begin{subequations}
\label{eq:defn-Ssmall-Slarge-inf}
\begin{align}
	\cS^{\mathsf{small}} \coloneqq \bigg\{ s\in \cS \mid  N \myrho \big(s, \pi^{\star}(s) \big) \leq 8\infs \log \frac{NS}{(1-\gamma)\delta} \bigg\}; \label{eq:defn-Ssmall-inf}\\
	\cS^{\mathsf{large}} \coloneqq \bigg\{ s\in \cS \mid  N \myrho \big(s, \pi^{\star}(s) \big) > 8\infs\log \frac{NS}{(1-\gamma)\delta} \bigg\}. \label{eq:defn-Slarge-inf}
\end{align}
\end{subequations}
\begin{itemize}
\item
To begin with,  consider any state $s \in \cS^{\mathsf{small}}$. Applying Definition~\ref{assumption:concentrate-infinite} and the definition of $\cS^{\mathsf{small}}$ yields 
\begin{align}
	\min \Big\{ d^{\star}(s) , \frac{1}{S} \Big\} 
	\le \Cstar  \myrho \big(s, \pi^{\star}(s) \big) \le \frac{8 \Cstar \infs \log \frac{NS}{(1-\gamma)\delta}}{N} < \frac{1}{S}, 
\end{align}
provided that $N > 8S \Cstar \infs \log \frac{NS}{(1-\gamma)\delta}$ (see \eqref{eq:N-condition-basic-burnin-proof-inf}). This inequality necessarily implies that
\begin{align}
	 d^{\star}(s)  
	\le  \frac{8 \Cstar \infs \log \frac{NS}{(1-\gamma)\delta}}{N} < \frac{1}{S}.  
\end{align}
Combining the preceding inequality with the following fact (see the definition \eqref{def:bonus-Bernstein-infinite}) 
\begin{align}
	b^{\star}(s) \coloneqq b\big(s, \pi^{\star}(s); \widehat{V} \big) \leq \frac{1}{1-\gamma} + \frac{5}{N},
\end{align}
we arrive at 
\begin{align}
\sum_{s\in\cS^{\mathsf{small}}}	d^{\star}(s) b^{\star}(s) &\leq \sum_{s\in\cS^{\mathsf{small}}} \left( \frac{8 \Cstar \infs \log \frac{NS}{(1-\gamma)\delta}}{(1-\gamma)N} +   d^{\star}(s) \frac{5}{N} \right) \leq  \frac{8 S \Cstar \infs \log \frac{NS}{(1-\gamma)\delta}}{(1-\gamma)N} +  \frac{5}{N}. \label{eq:infinite-bernstein-small-result}
\end{align}
%

%

\item Next, we turn to any state $s \in \cS^{\mathsf{large}}$. Using the definition \eqref{def:bonus-Bernstein-infinite} of $b(s,a;V) $, we obtain 
\begin{align}
	b^{\star}(s) 
	& = b\big( s, \pi^{\star}(s); \widehat{V} \big)
	\le \sqrt{\frac{\cb\log\frac{N }{(1-\gamma)\delta}}{N\big(s, \pi^{\star}(s)\big)}\mathsf{Var}_{\widehat{P}_{s, \pi^{\star}(s)}}\big(\widehat{V}\big)} +  \frac{2\cb\log\frac{N }{(1-\gamma)\delta}}{(1-\gamma)N\big(s, \pi^{\star}(s)\big)} + \frac{5}{N}\notag\\
	& \overset{\mathrm{(i)}}{\le} \sqrt{\frac{\cb\log\frac{N }{(1-\gamma)\delta}}{N\big(s, \pi^{\star}(s)\big)}\left(2\mathsf{Var}_{P_{s, \pi^{\star}(s)}}\big(\widehat{V}\big) 
	+ \frac{41\log\frac{2N}{(1-\gamma)\delta}}{(1-\gamma)^2N\big(s, \pi^{\star}(s)\big)}\right)} + \frac{2\cb \log\frac{N}{(1-\gamma)\delta}}{(1-\gamma)N\big(s, \pi^{\star}(s)\big)}   +\frac{5}{N} \notag\\
	& \overset{\mathrm{(ii)}}{\le} \sqrt{\frac{2\cb\log\frac{N }{(1-\gamma)\delta}}{N\big(s, \pi^{\star}(s)\big)}\mathsf{Var}_{P_{s, \pi^{\star}(s)}}\big(\widehat{V}\big)} +  \frac{4\cb\log\frac{N }{(1-\gamma)\delta}}{(1-\gamma)N\big(s, \pi^{\star}(s)\big)}, \label{eq:infinite-bernstein-large-result-middle}
\end{align}
where (i) arises from Lemma~\ref{lem:Bernstein-infinite} and \eqref{eq:Vhar-Vstarhat-diff-1/T-inf}, (ii) applies the elementary inequality $\sqrt{x+y}\leq \sqrt{x}+\sqrt{y}$ for any $x,y\geq 0$ and the fact $N \geq N(s,a)$, in addition to assuming that $\cb$ is large enough. To continue, we observe that
\begin{align}
	\frac{1}{N\big(s, \pi^{\star}(s)\big)} \overset{\mathrm{(i)}}{\le} \frac{12}{N\myrho \big(s, \pi^{\star}(s)\big)} \overset{\mathrm{(ii)}}{\le} \frac{12\Cstar}{N \min\big\{d^{\star}(s), \frac{1}{S}\big\}} \leq \frac{12\Cstar}{N}\left(\frac{1}{d^{\star}(s)} + S\right), \label{eq:infinite-N-to-d-bound}
\end{align}
where (i) follows from the assumption \eqref{eq:assumption-Nsa-LB-infinite} and the definition of $\cS^{\mathsf{large}}$,
and (ii) results from Assumption~\ref{assumption:concentrate-infinite}.
Substitution into \eqref{eq:infinite-bernstein-large-result-middle} yields 
\begin{align}
	b^{\star}(s) 
	&\leq \underset{\eqqcolon\, \alpha_1(s) }{\underbrace{ \sqrt{\frac{24\cb  \Cstar \log\frac{N }{(1-\gamma)\delta}}{N} \mathsf{Var}_{P_{s, \pi^{\star}(s)}}\big(\widehat{V}\big)}\left(\frac{1}{\sqrt{d^{\star}(s)}} + \sqrt{S}\right) }} + \underset{\eqqcolon\, \alpha_2(s) }{\underbrace{ \frac{48\cb \Cstar \log\frac{N  }{(1-\gamma)\delta}}{(1-\gamma)N}\left(\frac{1}{d^{\star}(s)} + S\right) }}, \label{eq:infinite-bernstein-large-result}
\end{align}
where the last line comes from the elementary inequality $\sqrt{x+y}\leq \sqrt{x}+\sqrt{y}$ for any $x,y\geq 0$.

To proceed,  observe that the sum of the first terms in \eqref{eq:infinite-bernstein-large-result} satisfies 
\begin{align}
	&\sum_{s\in\cS^{\mathsf{large}}}d^{\star}(s) \alpha_1(s) \notag\\
	&\quad = \sqrt{\frac{24\cb \Cstar \log\frac{N  }{(1-\gamma)\delta}}{N} } \left( \sum_{s\in\cS^{\mathsf{large}}} \sqrt{d^{\star}(s)  \mathsf{Var}_{P_{s, \pi^{\star}(s)}}\big(\widehat{V}\big)} + \sum_{s\in\cS^{\mathsf{large}}}\sqrt{d^{\star}(s)} \sqrt{Sd^{\star}(s)\mathsf{Var}_{P_{s, \pi^{\star}(s)}}\big(\widehat{V}\big)} \right)\notag\\
	&\quad \overset{\mathrm{(i)}}{\leq} \sqrt{\frac{24\cb \Cstar \log\frac{N  }{(1-\gamma)\delta}}{N} } 
	\Bigg( \sqrt{S} \cdot \sqrt{\sum_{s\in\cS^{\mathsf{large}}}  d^{\star}(s)  \mathsf{Var}_{P_{s, \pi^{\star}(s)}}\big(\widehat{V}\big)}  
	 + \sqrt{\sum_{s\in\cS^{\mathsf{large}}} S d^{\star}(s) \mathsf{Var}_{P_{s, \pi^{\star}(s)}}\big(\widehat{V}\big)} \Bigg) \notag\\
	&\quad = \sqrt{\frac{96\cb S\Cstar \log\frac{N }{(1-\gamma)\delta}}{N} } \sqrt{\sum_{s\in\cS^{\mathsf{large}}} d^{\star}(s) \mathsf{Var}_{P_{s, \pi^{\star}(s)}}\big(\widehat{V}\big)}, \label{eq:infinite-bernstein-large-result1}
\end{align}
where (i) arises from the Cauchy-Schwarz inequality and the fact $\sum_s d^{\star}(s)=1$. 
In addition, it is easily verified that the sum of the second terms in \eqref{eq:infinite-bernstein-large-result} obeys
\begin{align}
	\sum_{s\in\cS^{\mathsf{large}}}d^{\star}(s) \alpha_2(s)
	\leq \frac{96\cb S \Cstar \log\frac{N }{(1-\gamma)\delta}}{(1-\gamma)N}, \label{eq:infinite-bernstein-large-result2}
\end{align}
which also makes use of the identity $\sum_s d^{\star}(s)=1$. 
Combining \eqref{eq:infinite-bernstein-large-result1} and \eqref{eq:infinite-bernstein-large-result2} with \eqref{eq:infinite-bernstein-large-result} gives
\begin{align}
	& \sum_{s\in\cS^{\mathsf{large}}}d^{\star}(s) b^{\star}\big(s, \pi^{\star}(s) \big)
		\leq \sum_{s\in\cS^{\mathsf{large}}}d^{\star}(s) \alpha_1(s) + \sum_{s\in\cS^{\mathsf{large}}}d^{\star}(s) \alpha_2(s)
\notag\\
	& \qquad \le \sqrt{\frac{96\cb S\Cstar \log\frac{N  }{(1-\gamma)\delta}}{N} } \sqrt{\sum_{s\in\cS^{\mathsf{large}}} d^{\star}(s) \mathsf{Var}_{P_{s, \pi^{\star}(s)}}\big(\widehat{V}\big)} + \frac{96\cb S \Cstar \log\frac{N }{(1-\gamma)\delta}}{(1-\gamma)N}. \label{eq:infinite-bernstein-large-result-all}
\end{align}
\end{itemize}

\noindent The above results \eqref{eq:infinite-bernstein-small-result} and \eqref{eq:infinite-bernstein-large-result-all} taken collectively give
\begin{align*}
	& \big\langle d^{\star}, b^\star \big\rangle  
	=\sum_{s\in\cS^{\mathsf{large}}}d^{\star}(s)b^{\star}(s) + \sum_{s\in\cS^{\mathsf{small}}}d^{\star}(s)b^{\star}(s) \\
&\quad \leq \sqrt{\frac{96\cb S\Cstar \log\frac{N }{(1-\gamma)\delta}}{N} } \sqrt{\sum_{s\in\cS^{\mathsf{large}}} d^{\star}(s) \mathsf{Var}_{P_{s, \pi^{\star}(s)}}\big(\widehat{V}\big)} + \frac{96\cb  S \Cstar \log\frac{N  }{(1-\gamma)\delta}}{(1-\gamma)N}\\
&\quad\qquad + \frac{8 S \Cstar \infs \log \frac{NS}{(1-\gamma)\delta}}{(1-\gamma)N} +  \frac{5}{N} \\
&\quad \overset{\mathrm{(i)}}{\leq} \sqrt{\frac{96\cb S\Cstar \log\frac{NS  }{(1-\gamma)\delta}}{N} } \sqrt{\sum_{s\in\cS} d^{\star}(s) \mathsf{Var}_{P_{s, \pi^{\star}(s)}}\big(\widehat{V}\big)} + \frac{98\cb \infs S \Cstar \log\frac{NS  }{(1-\gamma)\delta}}{(1-\gamma)N}\\
&\quad \overset{\mathrm{(ii)}}{\leq} \frac{2}{\gamma}\sqrt{\frac{96\cb S\Cstar \log\frac{NS }{(1-\gamma)\delta}}{(1-\gamma)N}  \big\langle d^{\star}, b^{\star}\big\rangle}  + \frac{1}{\gamma}\sqrt{\frac{192\cb S\Cstar \log\frac{NS  }{(1-\gamma)\delta}}{(1-\gamma)N}  }+ \frac{98\cb\infs S \Cstar \log\frac{N S }{(1-\gamma)\delta}}{(1-\gamma)N}, \notag \\
&\quad \overset{\mathrm{(iii)}}{\leq} 4\sqrt{\frac{96\cb S\Cstar \log\frac{NS  }{(1-\gamma)\delta}}{(1-\gamma)N}  \big\langle d^{\star}, b^{\star}\big\rangle}  + 2\sqrt{\frac{192\cb S\Cstar \log\frac{N S }{(1-\gamma)\delta}}{(1-\gamma)N}  }+ \frac{98\cb\infs S \Cstar \log\frac{N S }{(1-\gamma)\delta}}{(1-\gamma)N}, \notag \\
&\quad \overset{\mathrm{(iv)}}{\leq} \frac{1}{2} \big\langle d^{\star}, b^{\star}\big\rangle + \frac{768\cb S\Cstar \log\frac{NS  }{(1-\gamma)\delta}}{(1-\gamma)N}  + \sqrt{\frac{768\cb S\Cstar \log\frac{NS  }{(1-\gamma)\delta}}{(1-\gamma)N}  }+ \frac{98\cb \infs S \Cstar \log\frac{N  S}{(1-\gamma)\delta}}{(1-\gamma)N}. 
\end{align*}
Here, (i) follows when $\cb$ is sufficiently large and $\Cstar \geq 1/S$ (see \eqref{eq:Cstar_lower_bound_infinite}), 
(ii) would hold as long as the following inequality could be established:  
\begin{align}
	\sum_{s\in\cS} d^{\star}(s) \mathsf{Var}_{P_{s, \pi^{\star}(s)}}\big(\widehat{V}\big) 
	\le \frac{2}{\gamma^2(1-\gamma)} + \frac{4}{\gamma^2(1-\gamma)}\big\langle d^{\star}, b^{\star}\big\rangle; 
	\label{eq:sum-d-Var-UB-123-infinite}
\end{align}
(iii) is valid since $\gamma \in [\frac{1}{2}, 1)$, and (iv) follows from the elementary inequality $2xy\leq x^2 + y^2$. 
Rearranging terms, we are left with
\begin{align}
	\big\langle d^{\star}, b^{\star} \big\rangle & \leq \sqrt{\frac{3072\cb S\Cstar \log\frac{NS  }{(1-\gamma)\delta}}{(1-\gamma)N}} + \frac{1732\cb \infs S \Cstar \log\frac{NS  }{(1-\gamma)\delta}}{(1-\gamma)N},
\end{align}
which combined with \eqref{eq:Vall-Bernstein-infinite} yields
\begin{align}
	\big\langle \rho, V^{\star} - V^{\widehat{\pi}}\big\rangle 
	\leq \frac{2\big\langle d^{\star}, b^{\star} \big\rangle}{1-\gamma} \leq 120\sqrt{\frac{\cb S\Cstar \log\frac{NS  }{(1-\gamma)\delta}}{(1-\gamma)^3N}} + \frac{3464\cb \infs S \Cstar \log\frac{N S }{(1-\gamma)\delta}}{(1-\gamma)^2N}. 
\end{align}
This concludes the proof, as long as the inequality~\eqref{eq:sum-d-Var-UB-123-infinite} can be established.

\paragraph{Proof of inequality~\eqref{eq:sum-d-Var-UB-123-infinite}.}
To begin with, we make the observation that 
\begin{align}
	\big(\widehat{V} \circ \widehat{V}\big) - \big(\gamma P^{\star}\widehat{V}) \circ \big(\gamma P^{\star}\widehat{V}) &= \big(\widehat{V} - \gamma P^{\star}\widehat{V}\big) \circ \big(\widehat{V} + \gamma P^{\star}\widehat{V}\big) \notag\\
	& \overset{\mathrm{(i)}}{\le} \big(\widehat{V} - \gamma P^{\star}\widehat{V} + 2 b^\star \big) \circ \big(\widehat{V} + \gamma P^{\star}\widehat{V}\big) \notag \\
	&\overset{\mathrm{(ii)}}{\le} \frac{2}{1-\gamma}\big(\widehat{V} - \gamma P^{\star}\widehat{V} + 2 b^\star \big), \label{eq:infinite-extra-lemma-au1}
\end{align}
where (i) holds since $b^\star \geq 0 $ and $\widehat{V} + \gamma P^{\star}\widehat{V} \geq 0$, (ii) follows from the basic property $\|\widehat{V} + \gamma P^{\star}\widehat{V}\|_\infty \leq 2\|\widehat{V} \|_\infty \leq \frac{2}{1-\gamma}$ and the fact $\widehat{V} - \gamma P^{\star}\widehat{V} + 2 b^\star \geq 0$, the latter of which has been verified in \eqref{eq:infinite-v-lower}. 
Armed with this fact, one can deduce that
\begin{align*}
\sum_{s} d^{\star}(s) \mathsf{Var}_{P_{s, \pi^{\star}(s)}}\big(\widehat{V}\big) 
&\overset{\mathrm{(i)}}{=} \left\langle d^{\star}, P^{\star}\big(\widehat{V} \circ \widehat{V}\big) - \big(P^{\star}\widehat{V}\big) \circ \big(P^{\star}\widehat{V}\big)\right\rangle \\
& \overset{\mathrm{(ii)}}{\le} \left\langle d^{\star}, P^{\star}\big(\widehat{V} \circ \widehat{V}\big) -\frac{1}{\gamma^2} \widehat{V} \circ \widehat{V} + \frac{2}{\gamma^2(1-\gamma)}\big(\widehat{V} - \gamma P^{\star}\widehat{V} + 2 b^\star \big)\right\rangle \\
& \overset{\mathrm{(iii)}}{\le} \left\langle d^{\star}, P^{\star}\big(\widehat{V} \circ \widehat{V}\big) -\frac{1}{\gamma} \widehat{V} \circ \widehat{V} + \frac{2}{\gamma^2(1-\gamma)} (I - \gamma P^{\star})\widehat{V} + \frac{4}{\gamma^2(1-\gamma)} b^\star \right\rangle \notag\\
&=\left\langle d^{\star}, \frac{1}{\gamma}\big(\gamma P^{\star} - I\big)\big(\widehat{V} \circ \widehat{V}\big) + \frac{2}{\gamma^2(1-\gamma)} (I - \gamma P^{\star})\widehat{V} + \frac{4}{\gamma^2(1-\gamma)} b^\star \right\rangle \\
&=d^{\star\top}\big(I-\gamma P^{\star}\big)\left\{ -\frac{1}{\gamma}\widehat{V}\circ\widehat{V}+\frac{2}{\gamma^{2}(1-\gamma)}\widehat{V}\right\} +\frac{4}{\gamma^{2}(1-\gamma)}\langle d^{\star},b^{\star}\rangle \\
	&\overset{\mathrm{(iv)}}{\le} (1-\gamma)\rho^{\top} \left\{ -\frac{1}{\gamma} \widehat{V} \circ \widehat{V} + \frac{2}{\gamma^2(1-\gamma)}\widehat{V} \right\} + \frac{4}{\gamma^2(1-\gamma)}\left\langle d^{\star}, b^{\star}\right\rangle \\
	&\leq \frac{2}{\gamma^2} \rho^{\top}  \widehat{V} + \frac{4}{\gamma^2(1-\gamma)}\left\langle d^{\star}, b^{\star}\right\rangle \\	
	& \overset{\mathrm{(v)}}{\le} \frac{2}{\gamma^2(1-\gamma)} + \frac{4}{\gamma^2(1-\gamma)}\left\langle d^{\star}, b^{\star}\right\rangle. 
\end{align*}
Here, (i) follows by invoking the definition \eqref{eq:defn-Var-P-V}, (ii) holds due to \eqref{eq:infinite-extra-lemma-au1}, (iii) is valid since $\gamma < 1$,  (iv) is a direct consequence of \eqref{eq:infinite-d-matrix-form}, while (v) comes from the basic facts  $\|\rho^{\top}\|_1=1$ and $\|\widehat{V} \|_\infty \leq \frac{1}{1-\gamma}$.




\section{Analysis: episodic finite-horizon MDPs}
\label{sec:analysis-finite}

\subsection{Preliminary facts and notation}

We first collect a few preliminary facts that are useful for the analysis. The first fact determines the range of our estimates $\widehat{Q}_{h}$ and $\widehat{V}_{h}$. 
\begin{lemma}
	\label{lem:range-V-finite}
	The iterates of Algorithm~\ref{alg:vi-lcb-finite} obey
	\begin{align}
		0\leq \widehat{Q}_{h}(s,a) \leq H-h+1 \quad \text{and} \quad 0\leq \widehat{V}_{h}(s) \leq H-h+1
		\qquad \text{for all }(s,a,h)\in \cS\times \cA\times [H]. 
		\label{eq:range-Q-V-finite-horizon}
	\end{align}
\end{lemma}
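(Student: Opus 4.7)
The plan is to establish both bounds by backward induction on $h$, proceeding from $h=H+1$ down to $h=1$, exactly mirroring the direction in which Algorithm~\ref{alg:vi-lcb-finite} updates its iterates. The nonnegativity claim $\widehat{Q}_h(s,a)\ge 0$ and $\widehat{V}_h(s)\ge 0$ is essentially cost-free: the update rule \eqref{eq:VI-LCB-finite} clips $\widehat{Q}_h(s,a)$ at zero via the outer $\max\{\cdot,0\}$, and then $\widehat{V}_h(s)=\max_a\widehat{Q}_h(s,a)\ge 0$ inherits this trivially for every $h$. No induction is really needed for the lower bound — it holds pointwise from the definitions.

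For the upper bound, the base case is $\widehat{V}_{H+1}\equiv 0$ from initialization, which matches $H-(H+1)+1=0$. For the inductive step, assume $\widehat{V}_{h+1}(s)\le H-h$ for all $s$. Since the penalty term $b_h(s,a)$ in \eqref{def:bonus-Bernstein-finite} is nonnegative and $\widehat{P}_{h,s,a}$ is a probability vector with $\|\widehat{P}_{h,s,a}\|_1=1$, the update rule yields
\begin{align*}
\widehat{Q}_h(s,a)
&=\max\bigl\{r_h(s,a)+\widehat{P}_{h,s,a}\widehat{V}_{h+1}-b_h(s,a),\,0\bigr\} \\
&\le r_h(s,a)+\widehat{P}_{h,s,a}\widehat{V}_{h+1}
\le 1+(H-h)=H-h+1,
\end{align*}
where we used $r_h(s,a)\in[0,1]$ and the inductive hypothesis together with nonnegativity of $\widehat{V}_{h+1}$ to control $\widehat{P}_{h,s,a}\widehat{V}_{h+1}\le\|\widehat{V}_{h+1}\|_\infty\le H-h$. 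Taking the max over $a$ gives $\widehat{V}_h(s)\le H-h+1$, closing the induction.

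There is no real obstacle: the lemma is a sanity-check bookkeeping fact, and the only ingredients used are (i) $r_h\in[0,1]$, (ii) $b_h\ge 0$, (iii) $\widehat{P}_{h,s,a}\in\Delta(\cS)$, and (iv) the outer clipping at zero. I would present the proof in a few lines as a straightforward backward induction on $h$, noting that the clipping at zero is what enforces the lower bound while nonnegativity of the penalty together with the normalization of $\widehat{P}_{h,s,a}$ enforces the upper bound.
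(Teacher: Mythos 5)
Your proof is correct and matches the paper's argument essentially line for line: backward induction on $h$ with base case $\widehat{V}_{H+1}=0$, the lower bound from the outer clipping at zero, and the upper bound from $r_h\le 1$, $b_h\ge 0$, and $\|\widehat{P}_{h,s,a}\|_1=1$. No meaningful difference in approach.
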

\begin{proof}
	The non-negativity of  $\widehat{Q}_{h}$ (and hence $\widehat{V}_{h}$) follows directly from the update rule \eqref{eq:VI-LCB-finite}. Regarding the upper bound, we suppose for the moment that  $\widehat{V}_{h+1}(s)\leq H-h$ for step $h+1$. Then \eqref{eq:VI-LCB-finite} tells us that
	\[
		\widehat{Q}_{h}(s,a) \leq 1 + \big\| \widehat{V}_{h+1} \big\|_{\infty} \leq 1+ H-h,
	\]
	which together with $\widehat{V}_{h}(s)=\max_a\widehat{Q}_{h}(s,a)$ justifies the claim \eqref{eq:range-Q-V-finite-horizon} for step $h$ as well. Taking this together with the base case $\widehat{V}_{H+1}=0$ and the standard induction argument concludes the proof. 
\end{proof}

The second fact is concerned with the vector $d_{h}^{\star} \coloneqq [d_{h}^{\star} (s)]_{s\in \cS} \in \mathbb{R}^S$. 
For any $h\in [H]$, denote by $P_h^{\star} \in \mathbb{R}^{S\times S}$ a matrix whose $s$-th row is given by $P_h\big( \cdot\mymid s,\pi_h^{\star}(s) \big)$. 
Then the Markovian property of the MDP indicates that: for any $j > h$, one has 
\begin{align}
	\big( d_{j}^{\star} \big)^{\top} = \big( d_{h}^{\star} \big)^{\top} P_{h}^{\star} \cdots P_{j-1}^{\star}.  
	\label{eq:djstar-connection-dh}
\end{align}

\paragraph{Notation.} 
We remind the reader that $P_{h,s,a} \in \mathbb{R}^{1\times S}$ represents the probability transition vector $P_h(\cdot\mymid s,a)$, and the associated variance parameter  $\mathsf{Var}_{P_{h,s,a}}(V)$ is defined to be the $(h,s,a)$-th row of  $\mathsf{Var}_{P}(V)$ (cf.~\eqref{eq:defn-Var-P-V}), namely, 
\begin{equation} 
	\mathsf{Var}_{P_{h,s,a}} (V) \coloneqq
	\sum_{s' \in S} P_h(s' \mymid s,a) \big( V(s') \big)^2 - \bigg( \sum_{s' \in S} P_h(s' \mymid s,a) V(s') \bigg)^2 
	\label{eq:defn-Var-P-hsa-notation}
\end{equation}
for any given vector $V\in \mathbb{R}^S$. 
The vector $\widehat{P}_{h,s,a} \in \mathbb{R}^{1\times S}$ and the variance parameter  $\mathsf{Var}_{\widehat{P}_{h,s,a}}(V)$
are defined analogously. 

\subsection{A crucial statistical independence property} 
\label{sec:independence-finite}

This subsection demonstrates that the subsampling trick introduced in Section~\ref{sec:sample-split} leads to some crucial statistical independence property. 
To be precise, let us consider the following two data-generating mechanisms; here and below, a sample transition refers to a quadruple $(s,a,h,s')$ that indicates a transition from state $s$ to state $s'$ when action $a$ is taken at step $h$.   
\begin{itemize}

	\item {\bf Model 1 (augmented dataset).} Augment $\Dtrim$ to yield a dataset $\Dtrimaug$ via the following steps. For every $(s,h)\in \cS \times [H]$:
		\begin{itemize}
			\item[1)] Add to $\Dtrimaug$ all $\Ntrim_h(s)$ sample transitions in $\Dtrim$ that transition from state $s$ at step $h$;
			\item[2)] If $\Ntrim_h(s) > \Nmain_h(s)$, then we further add to $\Dtrimaug$
				another set of $\Ntrim_h(s) - \Nmain_h(s)$ independent sample transitions $\big\{ \big(s,a^{(i)}_{h,s}, h, s^{\prime\,(i)}_{h,s}\big) \big\}$ obeying  
		\begin{align}
			a^{(i)}_{h,s} \overset{\mathrm{i.i.d.}}{\sim} \pib_h(\cdot \mymid s), \qquad
			s^{\prime\,(i)}_{h,s} \overset{\mathrm{i.i.d.}}{\sim} P_h\big(\cdot\mymid s,a^{(i)}_{h,s} \big), \qquad \Nmain_h(s) < i\leq \Ntrim_h(s).
		\end{align}

		\end{itemize}

	\item {\bf Model 2 (independent dataset).} For every $(s,h)\in \cS \times [H]$, generate $\Ntrim_h(s)$ independent sample transitions $\big\{ \big(s, a^{(i)}_{h,s}, h, s^{\prime\,(i)}_{h,s}\big) \big\}$ as follows: 
		\begin{align}
			a^{(i)}_{h,s} \overset{\mathrm{i.i.d.}}{\sim} \pib_h(\cdot \mymid s), \qquad
			s^{\prime\,(i)}_{h,s} \overset{\mathrm{i.i.d.}}{\sim} P_h(\cdot\mymid s,a), \qquad 1\leq i\leq \Ntrim_h(s).
		\end{align}
		This forms the following dataset: 
		\begin{align}
			\Diid \coloneqq 
			\Big\{ \big( s, a^{(i)}_{h,s}, h,s^{\prime\,(i)}_{h,s} \big) \mid s\in \cS, 1\leq h\leq H, 1\leq i\leq \Ntrim_h(s) \Big\} .
		\end{align}
\end{itemize}

In words, the dataset $\Dtrimaug$ generated in Model 1 differs from $\Dtrim$ only if $\Ntrim_h(s) > \Nmain_h(s)$ occurs; this data generating mechanism ensures that $\Dtrimaug$ comprises exactly $\Ntrim_h(s)$ sample transitions from state $s$ at step $h$.  
Two key features are: (a) the samples in $\Dtrimaug$ are statistically independent, 
and (b) $\Dtrimaug$ is essentially equivalent to $\Dtrim$ with high probability, as asserted below.  
\begin{lemma}
	\label{lem:distribution-equivalent-finite}
	The above two datasets $\Dtrimaug$ and $\Diid$ have the same distributions. 
	In addition, with probability exceeding $1-8\delta$, $\Dtrimaug=\Dtrim$.  
\end{lemma}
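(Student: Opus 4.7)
The second assertion of the lemma is an immediate consequence of Lemma~\ref{lemma:Ntrim-LB}: the event \eqref{eq:samples-finite-UB} states that $\Ntrim_h(s)\le\Nmain_h(s)$ for every $(s,h)$, on which no augmentation is performed and hence $\Dtrimaug=\Dtrim$, and this event carries probability at least $1-8\delta$. The substance of the lemma is the distributional identity $\Dtrimaug\stackrel{d}{=}\Diid$ asserted in the first part, which I will tackle by an explicit coupling.

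My plan is to introduce, for each $(s,h)\in\cS\times[H]$, two mutually independent i.i.d.\ sequences $\{X^{\mathsf{main}}_{s,h,i}\}_{i\ge1}$ and $\{X^{\mathsf{fresh}}_{s,h,i}\}_{i\ge1}$ drawn from the one-step law $\mathsf{P}_{s,h}$ on $\cA\times\cS$ defined by $a\sim\pib_h(\cdot\mymid s)$, $s'\sim P_h(\cdot\mymid s,a)$, taken independent across all $(s,h)$ and of the initial states $\{s_1^k\}\sim\rhob$ of $\Dmain$. I then generate the $\Dmain$ trajectories by having each trajectory, upon visiting $s$ at step $h$, consume the next unused entry of $X^{\mathsf{main}}_{s,h,\cdot}$; the $(s,h)$-transitions of $\Dmain$ then coincide with the first $\Nmain_h(s)$ entries of that pool. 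Exchangeability of i.i.d.\ variables lets me equate in distribution the uniform random $\min\{\Nmain_h(s),\Ntrim_h(s)\}$-subsample prescribed for $\Dtrim$ with the first $\min\{\Nmain_h(s),\Ntrim_h(s)\}$ entries of the pool, and by coupling the augmentation draws to $X^{\mathsf{fresh}}$, I realize $\Dtrimaug$ at $(s,h)$ as $\{Z_{s,h,i}\}_{i=1}^{\Ntrim_h(s)}$, where $Z_{s,h,i}=X^{\mathsf{main}}_{s,h,i}$ if $i\le\Nmain_h(s)$ and $Z_{s,h,i}=X^{\mathsf{fresh}}_{s,h,i-\Nmain_h(s)}$ otherwise. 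Matching $\Dtrimaug$ with $\Diid$ then reduces to showing that, conditional on the $\Daux$-measurable counts $\{\Ntrim_h(s)\}$, the $Z_{s,h,i}$ are mutually independent with $Z_{s,h,i}\sim\mathsf{P}_{s,h}$.

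The marginal claim $Z_{s,h,i}\sim\mathsf{P}_{s,h}$ is easy: $\Nmain_h(s)$ is a function of $\{s_1^k\}$ and $\{X^{\mathsf{main}}_{s',h',j}:h'<h\}$, hence independent of the step-$h$ pool $\{X^{\mathsf{main}}_{s,h,i}\}_{i}$, and on each of the events $\{i\le\Nmain_h(s)\}$ and its complement the underlying sample is $\mathsf{P}_{s,h}$-distributed. The main obstacle will be the joint independence across $(s,h,i)$, because conditioning simultaneously on $\{\Nmain_h(s)\}$ and $\{\Nmain_{h'}(s')\}$ with $h'>h$ induces correlations among step-$h$ pool values: the latter count is a function of step-$h$ transitions. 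I plan to bypass this with a filtration argument. Let $\mathcal{F}_h$ denote the $\sigma$-algebra generated by $\{s_1^k\}$ together with $\{X^{\mathsf{main}}_{\cdot,h'',\cdot}:h''\le h\}$. For any $h\le h'$, I will argue that $Z_{s',h',i'}$ is independent of $\mathcal{F}_h$. Decomposing on the $\mathcal{F}_{h'-1}$-measurable event $\{i'\le\Nmain_{h'}(s')\}$, on that event $Z_{s',h',i'}=X^{\mathsf{main}}_{s',h',i'}$ belongs to the step-$h'$ pool, which is independent of $\mathcal{F}_{h'-1}$ and a fortiori of $\mathcal{F}_h$; on its complement $Z_{s',h',i'}$ is a fresh variable independent of everything. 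In either case the conditional law of $Z_{s',h',i'}$ is $\mathsf{P}_{s',h'}$ and does not depend on $\mathcal{F}_h$, so $Z_{s',h',i'}$ is independent of $\mathcal{F}_h$ and in particular of the $\mathcal{F}_h$-measurable $Z_{s,h,i}$. Applying the same argument to the largest-step index in any finite subcollection and iterating will then deliver full mutual independence, completing the proof of the first assertion.
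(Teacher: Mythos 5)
Your treatment of the second assertion via Lemma~\ref{lemma:Ntrim-LB} is exactly the paper's. For the distributional identity the paper gives essentially no argument --- it asserts in one sentence that, given $\{\Ntrim_h(s)\}$, the transitions in $\Dtrimaug$ are independent --- so your explicit coupling via per-$(s,h)$ i.i.d.\ pools is a rigorous unpacking of that assertion rather than a genuinely different route. The reduction from ``uniformly random subsample'' to ``first $\min\{\Ntrim_h(s),\Nmain_h(s)\}$ pool entries'' is correct, but it rests on a nontrivial fact you state only as ``exchangeability of i.i.d.\ variables'': conditionally on all counts $\{\Nmain_{h'}(s')\}$ (including $h'>h$, which are deterministic functions of the step-$h$ pool), the consumed entries of each $(s,h)$-pool are still exchangeable. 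This holds because every $\Nmain_{h'}(s')$ is a symmetric function of those entries --- permuting them merely relabels which trajectory receives which transition and leaves the multiset of states visited at every subsequent step unchanged --- and this observation deserves to be made explicitly rather than waved at.

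The one genuine gap is in the closing ``iterate on the largest-step index'' step. The filtration argument shows a single $Z_{s',h',i'}$ with $h'>h$ is independent of $\mathcal{F}_h$, but when the finite subcollection contains two or more $Z$'s at the same step $h'$, that argument does not separate them: $Z_{s'',h',i''}$ is not $\mathcal{F}_{h'-1}$-measurable, so you cannot peel off $Z_{s',h',i'}$ as independent of ``the rest.'' A related omission is that your $\mathcal{F}_h=\sigma\big(\{s_1^k\},\{X^{\mathsf{main}}_{\cdot,h'',\cdot}:h''\le h\}\big)$ excludes the fresh pools, so $Z_{s,h,i}$ with $i>\Nmain_h(s)$ is not even $\mathcal{F}_h$-measurable and the ``measurable w.r.t.\ $\mathcal{F}_h$, independent of $\mathcal{F}_{h-1}$'' bookkeeping fails as stated. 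Both issues are repaired by enlarging $\mathcal{F}_h$ to include $\{X^{\mathsf{fresh}}_{\cdot,h'',\cdot}:h''\le h\}$ and then arguing step-by-step: conditional on $\mathcal{F}_{h-1}$ (which determines $\Nmain_h(\cdot)$ and is independent of the step-$h$ pools), the values $\{Z_{s,h,i}\}_{s,i}$ are read off from distinct entries of the mutually independent pools $X^{\mathsf{main}}_{\cdot,h,\cdot}$ and $X^{\mathsf{fresh}}_{\cdot,h,\cdot}$, hence are conditionally mutually independent with law $\mathsf{P}_{s,h}$ and conditionally independent of $\mathcal{F}_{h-1}$; telescoping over $h=1,\dots,H$ then yields the full factorization and the desired identity with $\Diid$.
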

\begin{proof}
	Both $\Dtrimaug$ and $\Diid$ contain exactly $\Ntrim_h(s)$ sample transitions from state $s$ at step $h$. 
	where $\{\Ntrim_h(s)\}$ are statistically independent from the randomness of the samples. 
	It is easily seen that: given $\{\Ntrim_h(s)\}$, the sample transitions in $\Dtrimaug$ across different steps are statistically independent. As a result, 
	$\Dtrim$ and $\Diid$ both consist of independent samples and are of the same distribution. 

	Furthermore, Lemma~\ref{lemma:Ntrim-LB} tells us that with probability at least $1-8\delta$, $\Ntrim_h(s)\leq \Nmain_h(s)$ holds for all $(s,h)\in \cS \times [H]$, implying that that all data in $\Dtrimaug$ come from $\Dmain$ and hence $\Dtrimaug=\Dtrim$. 
\end{proof}

\subsection{Proof of Theorem~\ref{thm:finite}}

We first demonstrate that Theorem~\ref{thm:finite} is valid as long as the following theorem can be established. 
\begin{theorem} \label{thm:Bernstein-finite-aux}
Consider the dataset $\mathcal{D}_0$ described in Section~\ref{sec:VI-LCB-simple-finite}, and any $0<\delta <1$. Suppose that $\mathcal{D}_0$ contains $N$ sample transitions, and that the non-negative integers $\{N_h(s,a) \}$ defined in \eqref{eq:defn-Nh-sa-finite} obey
\begin{align}
	\forall (s,a,h) \in \cS\times \cA\times [H]:
	\qquad N_h(s,a) \geq \frac{K\myrho_h(s,a)}{8} - 5 \sqrt{K \myrho_h(s,a) \log \frac{NH}{\delta}} ,
	\label{eq:assumption-Nhsa-LB-finite-Bern}
\end{align}
with $K$ some quantity obeying $K\geq 3872 HS\Cstar \log \frac{NH}{\delta}$. 
Assume that conditional on $\{N_h(s,a) \}$, the sample transitions $\{ (s, a, h, s'_{(i)}) \mymid 1\leq i\leq N_h(s,a), (s,a,h) \in \cS \times \cA \times [H] \}$	
are statistically independent. 
The penalty terms are taken to be \eqref{def:bonus-Bernstein-finite}, where 
$\cb \ge 16$ is chosen to be some constant. 
Then with probability at least $1- 4\delta$, one has
\begin{align}
	\sum_s d_h^{\star}(s) \big(V^{\star}_h(s) - V^{\widehat{\pi}}_h(s)\big) 
	\le 80 \sqrt{\frac{2\cb H^3S\Cstar\log\frac{NH}{\delta}}{K}} ,
	\qquad 1\leq h\leq H. 
	\label{eq:weighted-sum-V-error-aux-Bern}
\end{align}
\end{theorem}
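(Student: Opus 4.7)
\textbf{Proof proposal for Theorem~\ref{thm:Bernstein-finite-aux}.} The plan is to mirror the four-step template used for Theorem~\ref{thm:Bernstein-auxiliary-infinite}, but adapted to the finite-horizon setting where the backward-in-time structure of VI-LCB interacts nicely with the subsampling trick of Section~\ref{sec:sample-split}. Crucially, under the independence hypothesis on $\{N_h(s,a)\}$ and the conditional independence of the sample transitions, the estimate $\widehat V_{h+1}$ built by Algorithm~\ref{alg:vi-lcb-finite} uses only sample transitions from steps $h+1,\dots,H$, so it is statistically independent of $\widehat P_h$. This makes a leave-one-out argument unnecessary here---a key simplification over the infinite-horizon proof.

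Leveraging this independence, I would first establish a Bernstein-style concentration bound
\[
\Big|\big(\widehat P_{h,s,a}-P_{h,s,a}\big)\widehat V_{h+1}\Big|
\;\lesssim\; \sqrt{\tfrac{\log(NH/\delta)}{N_h(s,a)}\,\mathsf{Var}_{\widehat P_{h,s,a}}\!\big(\widehat V_{h+1}\big)} + \tfrac{H\log(NH/\delta)}{N_h(s,a)}
\]
together with an empirical-to-population variance comparison $\mathsf{Var}_{\widehat P_{h,s,a}}(\widehat V_{h+1})\le 2\mathsf{Var}_{P_{h,s,a}}(\widehat V_{h+1})+O(H^2\log(NH/\delta)/N_h(s,a))$, uniformly over $(s,a,h)$ via a union bound. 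For sufficiently large $\cb$ the right-hand side is dominated by $b_h(s,a)$. Next, I would prove pessimism by backward induction on $h$: assuming $\widehat V_{h+1}\le V^{\widehat\pi}_{h+1}$, the update rule and the above concentration give $\widehat Q_h(s,a)\le r_h(s,a)+P_{h,s,a}V^{\widehat\pi}_{h+1}=Q_h^{\widehat\pi}(s,a)$, hence $\widehat V_h\le V^{\widehat\pi}_h$.

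Then, writing $\pi^\star$ for the optimal deterministic policy and defining $P_h^\star,b_h^\star$ as in \eqref{eq:P-b-star}, the standard one-step comparison yields
\[
0 \le V_h^\star-\widehat V_h \;\le\; P_h^\star\big(V_{h+1}^\star-\widehat V_{h+1}\big)+2b_h^\star,
\]
which unrolls to $V_h^\star-\widehat V_h\le 2\sum_{j=h}^{H}\big(\prod_{k=h}^{j-1}P_k^\star\big)b_j^\star$. Taking inner product with $d_h^\star$ and using the identity \eqref{eq:djstar-connection-dh} reduces the target bound to controlling $\sum_{j=h}^{H}\langle d_j^\star,b_j^\star\rangle$. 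For each step $j$, I partition $\cS$ into $\cS_j^{\mathsf{small}}=\{s:K\myrho_j(s,\pi_j^\star(s))\le c\log(NH/\delta)\}$ and $\cS_j^{\mathsf{large}}$. On $\cS_j^{\mathsf{small}}$, Definition~\ref{assumption:concentrate-finite} together with the sample-size assumption forces $d_j^\star(s)\le 1/S$, so a crude bound $b_j^\star\le H$ combined with the clipped concentrability is enough. On $\cS_j^{\mathsf{large}}$, I use the Bernstein form of $b_j^\star$, replace $\widehat V_{j+1}$ by $V_{j+1}^\star$ inside the variance (up to pessimism-controlled error), and use $N_j(s,\pi_j^\star(s))\gtrsim K\myrho_j/\Cstar\ge K\min\{d_j^\star(s),1/S\}/\Cstar$ to bound the sum by Cauchy-Schwarz.

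The main obstacle, as in the infinite-horizon case, will be closing the self-bounding loop: after Cauchy-Schwarz the relevant quantity becomes $\sum_{j=h}^{H}\sum_s d_j^\star(s)\mathsf{Var}_{P_{j,s,\pi_j^\star(s)}}(V_{j+1}^\star)$, which I plan to bound by $O(H^2)$ via the Law of Total Variance along the trajectory induced by $\pi^\star$ (this is where the $H^4$ rather than $H^5$ scaling is earned). The residual contributions involving $\widehat V_{j+1}-V_{j+1}^\star$ must be absorbed using the pessimism inequality $\widehat V_{j+1}\le V_{j+1}^\star$ plus $V_{j+1}^\star-\widehat V_{j+1}\le 2\sum_{k\ge j+1}(\prod P^\star)b_k^\star$, producing a self-referential inequality of the form $X\le A+B\sqrt{X}$ from which the desired $\widetilde O(\sqrt{H^3S\Cstar/K})$ bound follows by solving the quadratic, exactly as in the manipulations leading to \eqref{eq:sum-d-Var-UB-123-infinite}. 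Care is needed because $\{d_j^\star\}$ at different steps must be tied together through $P_k^\star$, so the recursive inequality should be set up for the cumulative quantity $\sum_{j\ge h}\langle d_j^\star,b_j^\star\rangle$ rather than one step at a time.
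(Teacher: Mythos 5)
Your proposal follows the same skeleton as the paper's proof---Bernstein concentration via backward-in-time independence (so no leave-one-out argument is needed), pessimism by backward induction, the one-step comparison unrolled to $\langle d_h^\star, V_h^\star-\widehat V_h\rangle\le 2\sum_{j\ge h}\langle d_j^\star,b_j^\star\rangle$, and a concentrability-plus-Cauchy-Schwarz reduction to a self-bounding inequality for $X\coloneqq\sum_{j\ge h}\langle d_j^\star,b_j^\star\rangle$. Steps 1--3 and the small/large partition match the paper closely.

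The gap is in your plan for the self-bounding step. You propose to bound $\sum_{j\ge h}\sum_s d_j^\star(s)\,\mathsf{Var}_{P_{j,s,\pi_j^\star(s)}}(\widehat V_{j+1})$ by first swapping $\widehat V_{j+1}$ for $V_{j+1}^\star$, using the Law of Total Variance to get $O(H^2)$ for the $V^\star$ term, and absorbing the residual $\mathsf{Var}_P(V_{j+1}^\star-\widehat V_{j+1})$ via the pessimism recursion $V_{j+1}^\star-\widehat V_{j+1}\le 2\sum_{k\ge j+1}(\prod P^\star)\,b_k^\star$. Track the constants: the natural bound $\mathsf{Var}_{P}(V^\star-\widehat V)\le \|V^\star-\widehat V\|_\infty\cdot P(V^\star-\widehat V)\le H\cdot P(V^\star-\widehat V)$ gives, after pushing through $d_j^\star$,
\[
\sum_{j\ge h}\sum_s d_j^\star(s)\,\mathsf{Var}_{P_{j,s,\pi_j^\star(s)}}(V_{j+1}^\star-\widehat V_{j+1})
\;\le\; H\sum_{j\ge h}\big\langle d_{j+1}^\star,\,V_{j+1}^\star-\widehat V_{j+1}\big\rangle
\;\le\; 2H\sum_{j\ge h}\sum_{k\ge j+1}\big\langle d_k^\star,b_k^\star\big\rangle
\;\le\; 2H^2 X.
\]
So your variance bound is $O(H^2)+O(H^2X)$, not $O(H^2)+O(HX)$. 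After Cauchy-Schwarz this feeds a term $\sqrt{\tfrac{H^3S\Cstar\log}{K}}\sqrt{X}$ (rather than $\sqrt{\tfrac{H^2S\Cstar\log}{K}}\sqrt{X}$) into the self-referential inequality, and solving $X\le A+B\sqrt{X}$ produces a lower-order piece of size $\tfrac{H^3S\Cstar\log}{K}$ that is only dominated by the claimed $\sqrt{\tfrac{H^3S\Cstar\log}{K}}$ when $K\gtrsim H^3S\Cstar\log(\cdot)$. That reintroduces a polynomial-in-$H$ burn-in: the theorem must hold whenever $K\gtrsim HS\Cstar\log(\cdot)$, so your route leaves the range $HS\Cstar\log(\cdot)\lesssim K\lesssim H^3S\Cstar\log(\cdot)$ uncovered.

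The paper avoids this by never swapping $\widehat V$ for $V^\star$. It proves the self-bounding inequality directly for $\widehat V$: the telescoping identity (using $\big(d_j^\star\big)^\top P_j^\star=\big(d_{j+1}^\star\big)^\top$) turns $\sum_j\big(d_j^\star\big)^\top P_j^\star\big(\widehat V_{j+1}\circ\widehat V_{j+1}\big)$ into a telescoping sum, and the key algebraic lemma $\widehat V_j\circ\widehat V_j-\big(P_j^\star\widehat V_{j+1}\big)\circ\big(P_j^\star\widehat V_{j+1}\big)\le 2H\big(\widehat V_j+2b_j^\star-P_j^\star\widehat V_{j+1}\big)$ (valid because $\widehat V_j+2b_j^\star-P_j^\star\widehat V_{j+1}\ge 0$ from the Bernstein event) makes the penalty terms appear only once, not cumulatively, yielding $4H^2+4HX$. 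This is a Law-of-Total-Variance argument applied to the empirical value $\widehat V$ itself, with the penalties playing the role of the Bellman mismatch---so the insight you attribute to ``LTV along the $\pi^\star$ trajectory'' is correct, but must be carried out on $\widehat V$ directly to keep the $H$-dependence tight. You should replace the $\widehat V\to V^\star$ swap with this telescoping argument.
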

%

By construction, $\{\Ntrim_h(s,a)\}$ are computed using $\Daux$, and hence are independent from the empirical model $\widehat{P}_h$ generated based on $\Dtrim$. 
Additionally, Lemma~\ref{lem:distribution-equivalent-finite} permits us to treat the samples in $\Dtrim$ as being statistically independent. 
Recalling that the lower bound \eqref{eq:samples-finite-LB} holds with probability at least $1-8\delta$, we can readily invoke Theorem~\ref{thm:Bernstein-finite-aux} by taking $N_h(s,a)=\Ntrim_h(s,a)$ and the property \eqref{eqn:defn-d1rho-finite} to show that
\begin{align}
	\sum_{s\in \cS} 
	\rho(s) \big(V^{\star}_1(s) - V^{\widehat{\pi}}_1(s)\big) =
	\sum_{s\in \cS} 
	d_1^{\star}(s) \big(V^{\star}_1(s) - V^{\widehat{\pi}}_1(s)\big) \le 80 \sqrt{\frac{2\cb H^3S\Cstar\log\frac{NH}{\delta}}{K}}
	\label{eq:sum-rho-V-finite-123}
\end{align}
with probability at least $1-12\delta$, provided that $K \geq 3872 HS\Cstar\log\frac{KH}{\delta}$. 
Setting the right-hand side of \eqref{eq:sum-rho-V-finite-123} to be smaller than $\varepsilon$ immediately concludes the proof of Theorem~\ref{thm:finite}, where we have used the fact that $N\leq KH$ in $\mathcal{D}_0$. 
As a consequence, it suffices to establish Theorem~\ref{thm:Bernstein-finite-aux}. 
In the sequel, we shall assume without loss of generality that we are working on the high-probability event \eqref{eq:samples-finite}.

\subsubsection{Proof of Theorem~\ref{thm:Bernstein-finite-aux}}

\paragraph{Step 1: showing that $\widehat{Q}_h(s, a) \le Q_h^{\widehat{\pi}}(s, a)$.}
This part relies crucially on the following lemma.
\begin{lemma} \label{lem:Bernstein}
	Consider any $1\leq h\leq H$, and any vector $V\in \mathbb{R}^S$ independent of $\widehat{P}_h$ obeying $\|V\|_{\infty} \le H$. 
	With probability at least $1-4\delta/H $, one has
	\begin{align}
		\big|\big(\widehat{P}_{h, s, a} - P_{h, s, a}\big)V\big| 
		&\leq \sqrt{\frac{48\mathsf{Var}_{\widehat{P}_{h,s,a}}(V)\log\frac{NH}{\delta}}{N_{h}(s,a)}}+\frac{48 H\log\frac{NH}{\delta}}{N_{h}(s,a)}
		\label{eq:bonus-Bernstein} \\
		\mathsf{Var}_{\widehat{P}_{h, s, a}} ( V ) &\le 2\mathsf{Var}_{P_{h, s, a}}\big(V\big) + \frac{5H^2\log\frac{NH}{\delta}}{3N_h(s, a)}
		\label{eq:empirical-var}
	\end{align}
	simultaneously for all $(s, a) \in \cS \times \cA $.
\end{lemma}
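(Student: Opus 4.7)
The plan is to prove both bounds by combining Bernstein's inequality with a self-bounding (variance-comparison) argument, leveraging the fact that $V$ is independent of $\widehat{P}_h$ by hypothesis. This independence --- which is preserved because VI-LCB proceeds backward from step $H$ and $\widehat{P}_h$ depends only on the $h$-th layer of sample transitions --- means that, conditionally on $V$ and on $\{N_h(s,a)\}$, each entry of $\widehat{P}_{h,s,a}$ is built from $N_h(s,a)$ i.i.d.\ draws from $P_{h,s,a}$. We may therefore apply Bernstein directly without any leave-one-out construction of the kind used in Lemma~\ref{lem:Bernstein-infinite} for the infinite-horizon case.

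First, I would apply Bernstein's inequality to the scalar random variable $V(s')$ with $s' \sim P_{h,s,a}$ and $|V(s')| \leq H$. For any fixed $(s,a)$ with $N_h(s,a) \geq 1$, this gives
\[
\big|(\widehat{P}_{h,s,a} - P_{h,s,a}) V \big| \lesssim \sqrt{\frac{\mathsf{Var}_{P_{h,s,a}}(V) \log(1/\delta')}{N_h(s,a)}} + \frac{H \log(1/\delta')}{N_h(s,a)}.
\]
Then I would apply the same inequality to $V \circ V$, which satisfies $\|V\circ V\|_\infty \leq H^2$ and $\mathsf{Var}_{P_{h,s,a}}(V\circ V) \leq H^2 \, P_{h,s,a}(V\circ V) \leq 2H^2\,\mathsf{Var}_{P_{h,s,a}}(V) + 2H^2 (P_{h,s,a}V)^2$, to obtain
\[
\big|(\widehat{P}_{h,s,a} - P_{h,s,a})(V\circ V)\big| \lesssim H\sqrt{\frac{\mathsf{Var}_{P_{h,s,a}}(V) \log(1/\delta')}{N_h(s,a)}} + \frac{H^2 \log(1/\delta')}{N_h(s,a)}.
\]
Taking $\delta' = \delta/(4SAH)$ and a union bound over $(s,a) \in \cS\times \cA$ (using $SA \leq N$ to absorb the union factor into $\log(NH/\delta)$) yields the two concentration bounds simultaneously for all $(s,a)$ at step $h$, with total failure probability at most $4\delta/H$.

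Next, to establish \eqref{eq:empirical-var}, I would use the identity
\[
\mathsf{Var}_{\widehat{P}_{h,s,a}}(V) - \mathsf{Var}_{P_{h,s,a}}(V) = \big(\widehat{P}_{h,s,a} - P_{h,s,a}\big)(V\circ V) - \big((\widehat{P}_{h,s,a}V)^2 - (P_{h,s,a}V)^2\big),
\]
bound the second bracket using $(\widehat{P} V)^2 - (P V)^2 = (\widehat{P}V - PV)(\widehat{P}V + PV)$ together with $|\widehat{P}V + PV| \leq 2H$, and plug in the two Bernstein bounds from the previous step. The upshot is an inequality of the form
\[
\mathsf{Var}_{\widehat{P}_{h,s,a}}(V) \leq \mathsf{Var}_{P_{h,s,a}}(V) + C\,H\sqrt{\frac{\mathsf{Var}_{P_{h,s,a}}(V)\log(NH/\delta)}{N_h(s,a)}} + \frac{C' H^2 \log(NH/\delta)}{N_h(s,a)},
\]
from which the AM-GM inequality $C\sqrt{xy} \leq x + C^2 y/4$ with $x = \mathsf{Var}_{P_{h,s,a}}(V)$ yields \eqref{eq:empirical-var}, after tuning constants to achieve the factors $2$ and $5/3$. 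For \eqref{eq:bonus-Bernstein}, I would substitute the symmetric bound $\mathsf{Var}_{P_{h,s,a}}(V) \leq 2\mathsf{Var}_{\widehat{P}_{h,s,a}}(V) + O(H^2\log/N_h(s,a))$ (derived by the same argument with the roles of $P$ and $\widehat{P}$ interchanged, or obtained directly from \eqref{eq:empirical-var}) into the initial Bernstein estimate, absorbing the additive term via $\sqrt{a+b} \leq \sqrt{a} + \sqrt{b}$.

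The main obstacle I anticipate is matching the precise numerical constants ($48$ and $5/3$) stated in the lemma --- this requires careful tracking of the Bernstein constants, of the AM-GM weights used in the self-bounding step, and of the exponent in the union bound. The conceptual core --- Bernstein applied to $V$ and to $V\circ V$, glued together by a variance-comparison identity --- is entirely standard; the delicate part, and the reason the lemma is stated with $\mathsf{Var}_{\widehat{P}}$ rather than $\mathsf{Var}_{P}$ on the right-hand side of \eqref{eq:bonus-Bernstein}, is that the algorithm only has access to the empirical variance, so the self-bounding step is indispensable for the downstream proof to be executable without knowledge of $P$.
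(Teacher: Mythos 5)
Your overall strategy --- Bernstein on $V$ and on a quadratic in $V$, glued by a variance-comparison identity and a self-bounding step --- matches the paper's (which reduces this lemma to the proof of Lemma~\ref{lem:Bernstein-infinite-proof}). There is, however, a concrete gap in your second Bernstein application.

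You control $\mathsf{Var}_{P_{h,s,a}}(V \circ V)$ by $\mathsf{Var}_{P_{h,s,a}}(V \circ V) \leq H^2 P_{h,s,a}(V \circ V) = H^2 \mathsf{Var}_{P_{h,s,a}}(V) + H^2 (P_{h,s,a}V)^2$. Fed into Bernstein for $V\circ V$, this produces --- in addition to the two terms you write --- a term of order $H\, |P_{h,s,a}V| \sqrt{\log(\cdot)/N_h(s,a)}$, which can be as large as $H^2\sqrt{\log(\cdot)/N_h(s,a)}$. That is polynomially larger than the remainder $H^2\log(\cdot)/N_h(s,a)$ the lemma (and your own displayed target) require, and AM-GM will not absorb it. Concretely, take $V\equiv H$: then $(\widehat{P}_{h,s,a}-P_{h,s,a})(V\circ V)=0$ and $\mathsf{Var}_{P_{h,s,a}}(V)=0$, yet your chain only delivers $H^2\sqrt{\log/N_h(s,a)}$, not $H^2\log/N_h(s,a)$.

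The fix is exactly what the paper does: set $\overline{V} \coloneqq V - (P_{h,s,a}V)\mathbf{1}$ and apply Bernstein to $\overline{V}\circ\overline{V}$ rather than $V\circ V$. Since $P_{h,s,a}(\overline{V}\circ\overline{V}) = \mathsf{Var}_{P_{h,s,a}}(V)$ and $\|\overline{V}\|_\infty \leq H$ when $V\in[0,H]$ entrywise, one has
\[
\mathsf{Var}_{P_{h,s,a}}\big(\overline{V}\circ\overline{V}\big) \leq P_{h,s,a}\big(\overline{V}\circ\overline{V}\circ\overline{V}\circ\overline{V}\big) \leq \|\overline{V}\|_\infty^2 \, \mathsf{Var}_{P_{h,s,a}}(V) \leq H^2 \mathsf{Var}_{P_{h,s,a}}(V),
\]
and the problematic $(P_{h,s,a}V)^2$ term never appears. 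The companion algebraic identity becomes $\widehat{P}_{h,s,a}(\overline{V}\circ\overline{V}) - \mathsf{Var}_{\widehat{P}_{h,s,a}}(V) = \big|(\widehat{P}_{h,s,a}-P_{h,s,a})V\big|^2$, which is equivalent to the one you wrote but keeps all pieces in centered, variance-controlled form. (Equivalently you could keep $V\circ V$ but replace your loose estimate with the tight one $\mathsf{Var}_{P_{h,s,a}}(V\circ V) = \mathsf{Var}_{P_{h,s,a}}\bigl((V-(P_{h,s,a}V)\mathbf{1})\circ(V+(P_{h,s,a}V)\mathbf{1})\bigr) \leq 4H^2\mathsf{Var}_{P_{h,s,a}}(V)$ --- the same centering in disguise.)

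Two secondary points. The reverse inequality $\mathsf{Var}_{P_{h,s,a}}(V) \lesssim \mathsf{Var}_{\widehat{P}_{h,s,a}}(V) + H^2\log(\cdot)/N_h(s,a)$, which you need to turn the initial Bernstein estimate into \eqref{eq:bonus-Bernstein}, cannot be obtained ``directly from \eqref{eq:empirical-var}'' --- that inequality goes the wrong way. It does follow from the same centered identity, but only after absorbing the $\big|(\widehat{P}_{h,s,a}-P_{h,s,a})V\big|^2$ term, which the paper handles by first reducing to $N_h(s,a)\geq 48\log(\cdot)$ (below which \eqref{eq:bonus-Bernstein} is vacuous) so that the resulting self-bounding inequality can be solved; you will need that reduction to close at the stated constants. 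Finally, your independence observation is correct and is indeed what licenses a direct Bernstein application here, in contrast to the leave-one-out argument required in the infinite-horizon counterpart.
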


 \begin{proof} 
The proof follows from exactly the same argument as that of Lemma~\ref{lem:Bernstein-infinite-proof}, except that the assumed upper bound on $\|V\|_{\infty}$ is now $H$ (as opposed to $\frac{1}{1-\gamma}$) and $\delta$ is replaced with $\delta/H$. We thus omit the proof details for brevity. 
 \end{proof}

Additionally, we make note of the crude bound $\big| (\widehat{P}_{h, s, a} - P_{h, s, a} ) \widehat{V}_{h+1}\big| \leq \|\widehat{V}_{h+1}\|_{\infty}\leq H$.  
Also, given that Algorithm~\ref{alg:vi-lcb-finite} works backwards, 
the iterate $\widehat{V}_{h+1}$ does not use $\widehat{P}_h$, and is hence statistically independent from $\widehat{P}_h$.  
Thus, we can readily apply Lemma~\ref{lem:Bernstein} to obtain
\begin{equation}
	\forall(s,a,h)\in \cS\times \cA\times[H]:
	\quad \big|\big(\widehat{P}_{h, s, a} - P_{h, s, a}\big) \widehat{V}_{h+1}\big| \leq b_h(s,a) 
	\label{eq:Phat-P-diff-Vhat-Bern-finite}
\end{equation}
in the presence of the Bernstein-style penalty \eqref{def:bonus-Bernstein-finite}, 
provided that the constant $\cb>0$ is sufficiently large.  

In the sequel, 
we shall work with the high-probability events \eqref{eq:Phat-P-diff-Vhat-Bern-finite} and 
\eqref{eq:empirical-var}, in addition to  \eqref{eq:samples-finite}. We intend to prove the following relation
\begin{align}
	\forall(s,a,h)\in \cS\times \cA\times[H]:
	\quad
	\widehat{Q}_h(s,a)\leq Q_h^{\widehat{\pi}}(s,a) \quad \text{and} \quad \widehat{V}_h(s) \leq V_h^{\widehat{\pi}}(s)
	\label{eq:Q-LCB-Bernstein-finite}
\end{align}
hold with probability exceeding $1-4\delta$. Note that the latter assertion concerning $\widehat{V}_h$ is implied by the former, according to the following relation: 
\begin{equation}
	\widehat{V}_{h}(s) = \max_a \widehat{Q}_{h}(s,a) 
	= \widehat{Q}_{h}\big( s,\widehat{\pi}_h(s) \big)
	\le  Q_{h}^{\widehat{\pi}}\big( s,\widehat{\pi}_h(s) \big) = V_{h}^{\widehat{\pi}}(s) . 
	\label{eq:V-LCB-claim-finite}
\end{equation}  
Therefore, we focus on the first assertion and will show it by induction.
 First of all, the claim~\eqref{eq:Q-LCB-Bernstein-finite} holds trivially for the base case with $h=H+1$, given that $\widehat{Q}_{H+1}(s, a) = Q_{H+1}^{\widehat{\pi}}(s, a) = 0$.
Next, suppose that $\widehat{Q}_{h+1}(s, a) \le Q_{h+1}^{\widehat{\pi}}(s, a)$ holds for all $(s, a) \in \cS \times \cA$ and some step $h+1$. 
%
We would like to show that the claimed inequality holds for step $h$ as well. 
If $\widehat{Q}_h(s, a) = 0$, then the claim holds trivially; otherwise, our update rule \eqref{eq:VI-LCB-finite} reveals that
\begin{align*}
	\widehat{Q}_{h}(s,a) & =r_{h}(s,a)+\widehat{P}_{h,s,a}\widehat{V}_{h+1}-b_{h}(s,a)\\
 	& =r_{h}(s,a)+P_{h,s,a}\widehat{V}_{h+1}+\big(\widehat{P}_{h,s,a}-P_{h,s,a}\big)\widehat{V}_{h+1}-b_{h}(s,a)\\
 	& \overset{(\mathrm{i})}{\leq}r_{h}(s,a)+P_{h,s,a}V_{h+1}^{\widehat{\pi}}\overset{(\mathrm{ii})}{=}Q_{h}^{\widehat{\pi}}(s,a),
\end{align*}
with probability at least $1- \delta/2$, 
where (i) results from~\eqref{eq:Phat-P-diff-Vhat-Bern-finite} and \eqref{eq:V-LCB-claim-finite} (i.e., $\widehat{V}_{h+1}(s) \le   V_{h+1}^{\widehat{\pi}}(s)$), and (ii) arises from the Bellman equation.  
We have thus established \eqref{eq:Q-LCB-Bernstein-finite} via a standard induction argument. 
%


\paragraph{Step 2: bounding $V_h^{\star}(s) - V^{\widehat{\pi}}_h(s)$.}
%
%
In view of \eqref{eq:V-LCB-claim-finite}, we make the observation that
\begin{align}
	0 \leq V_h^{\star}(s) - V_h^{\widehat{\pi}}(s) 
	&\le V_h^{\star}(s) - \widehat{V}_h(s) \le Q_h^{\star} \big(s, \pi^{\star}_h(s) \big) - \widehat{Q}_h \big(s, \pi^{\star}_h(s) \big),
	\label{eq:Vhstar-Vhpihat-UB-246}
\end{align}
where the last inequality holds true since $V_h^{\star}(s) = Q_h^{\star}(s, \pi^{\star}_h(s))$ and $\widehat{V}_h(s) = \max_a\widehat{Q}_h(s,a) \geq  \widehat{Q}_h(s, \pi^{\star}_h(s))$. 
Recognizing that 
\begin{align*}
Q_{h}^{\star}\big(s,\pi_{h}^{\star}(s)\big) & =r\big(s,\pi_{h}^{\star}(s)\big)+P_{h,s,\pi_{h}^{\star}(s)}V_{h+1}^{\star},\\
	\widehat{Q}_{h}\big(s,\pi_{h}^{\star}(s)\big) & = \max\Big\{ r\big(s,\pi_{h}^{\star}(s)\big)+\widehat{P}_{h,s,\pi_{h}^{\star}(s)}\widehat{V}_{h+1}-b_{h}\big(s,\pi_{h}^{\star}(s)\big),\, 0 \Big\},
\end{align*}
we can continue the derivation of \eqref{eq:Vhstar-Vhpihat-UB-246} to obtain
\begin{align}
V_{h}^{\star}(s)-\widehat{V}_{h}(s) & \leq r\big(s,\pi_{h}^{\star}(s)\big)+P_{h,s,\pi_{h}^{\star}(s)}V_{h+1}^{\star}-\left\{ r\big(s,\pi_{h}^{\star}(s)\big)+\widehat{P}_{h,s,\pi_{h}^{\star}(s)}\widehat{V}_{h+1}-b_{h}\big(s,\pi_{h}^{\star}(s)\big)\right\} \nonumber\\
 & =P_{h,s,\pi_{h}^{\star}(s)}V_{h+1}^{\star}-\widehat{P}_{h,s,\pi_{h}^{\star}(s)}\widehat{V}_{h+1}+b_{h}\big(s,\pi_{h}^{\star}(s)\big)\nonumber\\
 & =P_{h,s,\pi_{h}^{\star}(s)}\big(V_{h+1}^{\star}-\widehat{V}_{h+1}\big)-\Big(\widehat{P}_{h,s,\pi_{h}^{\star}(s)}-P_{h,s,\pi_{h}^{\star}(s)}\Big)\widehat{V}_{h+1}+b_{h}\big(s,\pi_{h}^{\star}(s)\big)\nonumber\\
 & \leq P_{h,s,\pi_{h}^{\star}(s)}\big(V_{h+1}^{\star}-\widehat{V}_{h+1}\big)+2b_{h}\big(s,\pi_{h}^{\star}(s)\big)
\label{eq:Vhstar-Vhpihat-UB-357}
\end{align}
with probability at least $1-\delta$, where the last inequality is valid due to \eqref{eq:Phat-P-diff-Vhat-Bern-finite}. 
%
For notational convenience, let us introduce a sequence of matrices $P_{h}^{\star} \in \mathbb{R}^{S\times S}$ ($1\leq h\leq H$) and vectors $b_h^{\star} \in \mathbb{R}^S$ ($1\leq h\leq H$), with their $s$-th rows given by
\begin{align}
	\big[ P_{h}^{\star} \big]_{s,\cdot} \defn P_{h, s, \pi^{\star}_h(s)}
	\qquad\text{and}\qquad 
	b_h^{\star}(s) \defn b_h \big( s, \pi^{\star}_h(s) \big). 
	\label{eq:P-b-star}
\end{align}
This allows us to rewrite \eqref{eq:Vhstar-Vhpihat-UB-357} in matrix/vector form as follows:
\begin{align}
	0 \leq V_{h}^{\star}-\widehat{V}_{h} & \leq P_{h}^{\star}\big(V_{h+1}^{\star}-\widehat{V}_{h+1}\big)+2b_{h}^{\star}.
	\label{eq:Vhstar-Vhpihat-UB-579}
\end{align}
The inequality \eqref{eq:Vhstar-Vhpihat-UB-579} plays a key role in the analysis since it establishes a connection between the value estimation errors in step $h$ and step $h+1$.


Given that $b_h^{\star}$, $P_{h}^{\star}$ and $V_{h}^{\star}-\widehat{V}_{h}$ are all non-negative,
applying~\eqref{eq:Vhstar-Vhpihat-UB-579} recursively with the boundary condition $V_{H+1}^{\star}=\widehat{V}_{H+1}=0$  leads to 
\begin{align*}
0\leq V_{h}^{\star}-\widehat{V}_{h} & \leq P_{h}^{\star}\big(V_{h+1}^{\star}-\widehat{V}_{h+1}\big)+2b_{h}^{\star}\\
 & \leq P_{h}^{\star}P_{h+1}^{\star}\big(V_{h+2}^{\star}-\widehat{V}_{h+2}\big)+2P_{h}^{\star}b_{h+1}^{\star}+2b_{h}^{\star}\leq\cdots\\
	& \leq  2  \sum_{j=h}^{H} \bigg( \prod_{k=h}^{j-1} P_{k}^{\star} \bigg) b_{j}^{\star},
\end{align*}
where we adopt the following notation for convenience (note the order of the product)
%
\[
	\prod_{k=h}^{h-1}P_{k}^{\star}=I 
	\qquad \text{and} \qquad
	\prod_{k=h}^{j-1}P_{k}^{\star}= P_h^{\star}\cdots P_{j-1}^{\star} ~~~~\text{if }j > h.
\]
With this inequality in mind, we can let $d_h^{\star} \coloneqq [d_h^{\star}(s)]_{s\in \cS}$ be a $S$-dimensional vector and derive
\begin{align}
	\big\langle d_{h}^{\star},V_{h}^{\star}-V_{h}^{\widehat{\pi}}\big\rangle \le \big\langle d_{h}^{\star},V_{h}^{\star}-\widehat{V}_{h}\big\rangle & \le\Bigg <d_{h}^{\star},2\sum_{j=h}^{H}\bigg(\prod_{k=h}^{j-1}P_{k}^{\star} \bigg) b_{j}^{\star}\Bigg> \nonumber \\
	& =2\sum_{j=h}^{H} \big(d_{h}^{\star}\big)^{\top} \bigg(\prod_{k=h}^{j-1}P_{k}^{\star}\bigg) b_{j}^{\star}
	=2\sum_{j=h}^{H}\big\langle d_{j}^{\star},b_{j}^{\star}\big\rangle, 
	\label{eq:Vall-Bernstein-finite}
\end{align}
where we have made use of~\eqref{eq:Vhstar-Vhpihat-UB-246} and the elementary identity \eqref{eq:djstar-connection-dh}. 
%


\paragraph{Step 3: using concentrability to bound $\langle d_{j}^{\star},b_{j}^{\star}\rangle$.}
To finish up, we need to make use of the concentrability coefficient. In what follows, we look at two cases separately.

\begin{itemize}
	\item {\em Case 1: $K\myrho_j \big(s, \pi_j^{\star}(s)\big) \le 4\cb \log\frac{NH}{\delta}$.} Given that $b_h(s,a)\leq H$ (cf.~\eqref{def:bonus-Bernstein-finite}), we necessarily have
		\begin{align}
			b_j^{\star}(s) \le H \le 
			H \cdot \frac{4\cb \log\frac{NH}{\delta}}{K \myrho_j\big(s, \pi_j^{\star}(s)\big)}
			\le \frac{4\cb \Cstar H \log\frac{NH}{\delta}}{K\min\big\{d_j^{\star}(s), \frac{1}{S}\big\}} 
		\end{align}
		in this case, where the last inequality arises from Definition~\ref{assumption:concentrate-finite}. 

	\item {\em Case 2: $K\myrho_j\big(s, \pi_j^{\star}(s)\big) > 4\cb \log\frac{NH}{\delta}$.} It follows from the assumption \eqref{eq:assumption-Nhsa-LB-finite-Bern} that 
\begin{align}
	N_j\big(s, \pi_j^{\star}(s) \big) 
	&\geq \frac{K \myrho_j \big(s, \pi_j^{\star}(s) \big)}{ 8} - 5 \sqrt{K \myrho_j \big(s, \pi_j^{\star}(s) \big) \log \frac{N}{\delta}}
	\geq \frac{K \myrho_j \big(s, \pi_j^{\star}(s) \big)}{ 16} \notag\\
	&\geq \frac{K  \min \big\{ d_j^{\star} \big(s,\pi_{j}^{\star}(s)\big), \frac{1}{S} \big\}}{ 16 \Cstar} 
	= \frac{K  \min \big\{ d_j^{\star} (s), \frac{1}{S} \big\}}{ 16 \Cstar},
	\label{eq:Nj-lower-bound-finite-Bern}
\end{align}
		as long as $\cb > 0$ is sufficiently large. Here, the last line results from Definition~\ref{assumption:concentrate-finite}
and the assumption that $\pi^{\star}$ is a deterministic policy (so that $d_j^{\star} (s)=d_j^{\star} \big(s,\pi_{j}^{\star}(s)\big)$).

This further leads to
		\begin{align*}
b_{j}^{\star}(s) & \leq\sqrt{\frac{\cb\log\frac{NH}{\delta}}{N_j\big( s, \pi_j^{\star}(s) \big)}\mathsf{Var}_{\widehat{P}_{j,s,\pi_{j}^{\star}(s)}}(\widehat{V}_{j+1})}+\cb H\frac{\log\frac{NH}{\delta}}{N_j\big( s, \pi_j^{\star}(s) \big)}\\
 & \overset{(\mathrm{i})}{\leq}\sqrt{\frac{2\cb\log\frac{NH}{\delta}}{N_j\big( s, \pi_j^{\star}(s) \big)}\mathsf{Var}_{P_{j,s,\pi_{j}^{\star}(s)}}(\widehat{V}_{j+1})}+3\cb H\frac{\log\frac{NH}{\delta}}{N_j\big( s, \pi_j^{\star}(s) \big)}\\
 & \overset{(\mathrm{ii})}{\leq}\sqrt{\frac{32\cb \Cstar\log\frac{NH}{\delta}}{K\min\big\{ d_{j}^{\star}(s),\frac{1}{S}\big\}}\mathsf{Var}_{P_{j,s,\pi_{j}^{\star}(s)}}(\widehat{V}_{j+1})}+48\cb \Cstar H\frac{\log\frac{NH}{\delta}}{K\min\big\{ d_{j}^{\star}(s),\frac{1}{S}\big\}} .
		\end{align*}
		Here, (i) comes from \eqref{eq:empirical-var} and the elementary inequality $\sqrt{x+y}\leq \sqrt{x}+\sqrt{y}$ for any $x,y\geq 0$, provided that $\cb$ is large enough; and (ii) relies on \eqref{eq:Nj-lower-bound-finite-Bern}. 

\end{itemize}
Putting the above two cases together, we arrive at
\begin{align}
	\sum_s d_j^{\star}(s)b_j^{\star}(s) &\le \sum_s d_j^{\star}(s)\sqrt{\frac{32\cb \Cstar\log\frac{NH}{\delta}}{K \min\big\{d_j^{\star}(s), \frac{1}{S}\big\}}\mathsf{Var}_{P_{j, s, \pi^{\star}_j(s)}}(\widehat{V}_{j+1})} + 48\cb H\sum_s d_j^{\star}(s)\frac{\Cstar\log\frac{NH}{\delta}}{K \min\big\{d_j^{\star}(s), \frac{1}{S}\big\}} \notag\\
	& \le \sum_s d_j^{\star}(s)\sqrt{\frac{32\cb \Cstar\log\frac{NH}{\delta}}{K \min\big\{d_j^{\star}(s), \frac{1}{S}\big\}}\mathsf{Var}_{P_{j, s, \pi^{\star}_j(s)}}(\widehat{V}_{j+1})} +  \frac{96\cb H S \Cstar\log\frac{NH}{\delta}}{K },
	\label{eq:sum-djstar-bjstar-UB-167}
\end{align}
where the last inequality holds since
\begin{align*}
\sum_{s}\frac{d_{j}^{\star}(s)}{\min\big\{ d_{j}^{\star}(s),\frac{1}{S}\big\}} & \le\sum_{s}d_{j}^{\star}(s)\bigg\{\frac{1}{d_{j}^{\star}(s)}+\frac{1}{1/S}\bigg\}\leq\sum_{s}1+S\sum_{s}d_{j}^{\star}(s)\le 2S .
\end{align*}

In addition, we make the observation that
\begin{align*}
 & \sum_{j=h}^{H}\sum_{s}d_{j}^{\star}(s)\sqrt{\frac{\mathsf{Var}_{P_{j,s,\pi_{j}^{\star}(s)}}\big(\widehat{V}_{j+1}\big)}{\min\big\{ d_{j}^{\star}(s),\frac{1}{S}\big\}}}\leq\sum_{j=h}^{H}\sum_{s}d_{j}^{\star}(s)\sqrt{\frac{\mathsf{Var}_{P_{j,s,\pi_{j}^{\star}(s)}}\big(\widehat{V}_{j+1}\big)}{d_{j}^{\star}(s)}}+\sum_{j=h}^{H}\sum_{s}d_{j}^{\star}(s)\sqrt{\frac{\mathsf{Var}_{P_{j,s,\pi_{j}^{\star}(s)}}\big(\widehat{V}_{j+1}\big)}{1/S}}\\
 & \quad=\sum_{j=h}^{H}\sum_{s}\sqrt{d_{j}^{\star}(s)\mathsf{Var}_{P_{j,s,\pi_{j}^{\star}(s)}}\big(\widehat{V}_{j+1}\big)}+\sqrt{S}\sum_{j=h}^{H}\sum_{s}d_{j}^{\star}(s)\sqrt{\mathsf{Var}_{P_{j,s,\pi_{j}^{\star}(s)}}\big(\widehat{V}_{j+1}\big)}\\
 & \quad\leq\sqrt{HS}\cdot\sqrt{\sum_{j=h}^{H}\sum_{s}d_{j}^{\star}(s)\mathsf{Var}_{P_{j,s,\pi_{j}^{\star}(s)}}\big(\widehat{V}_{j+1}\big)}+\sqrt{S}\sqrt{\sum_{j=h}^{H}\sum_{s}d_{j}^{\star}(s)}\sqrt{\sum_{j=h}^{H}\sum_{s}d_{j}^{\star}(s)\mathsf{Var}_{P_{j,s,\pi_{j}^{\star}(s)}}\big(\widehat{V}_{j+1}\big)}\\
 & \quad=2\sqrt{HS}\cdot\sqrt{\sum_{j=h}^{H}\sum_{s}d_{j}^{\star}(s)\mathsf{Var}_{P_{j,s,\pi_{j}^{\star}(s)}}\big(\widehat{V}_{j+1}\big)}  \notag\\
	& \quad \leq 4\sqrt{HS \bigg( H^{2}+ H\sum_{j=h}^{H}\big\langle d_{j}^{\star},b_{j}^{\star}\big\rangle \bigg) }
	\leq 4\sqrt{H^{3}S}+4\sqrt{H^{2}S\sum_{j=h}^{H}\big\langle d_{j}^{\star},b_{j}^{\star}\big\rangle} ,
\end{align*}
where the third line makes use of the Cauchy-Schwarz inequality, 
and the last line would hold as long as we could establish the following inequality
\begin{align}
	\sum_{j=h}^{H}\sum_{s}d_{j}^{\star}(s)\mathsf{Var}_{P_{j,s,\pi_{j}^{\star}(s)}}\big(\widehat{V}_{j+1}\big)
	\le 4H^{2}+4H\sum_{j=h}^{H}\big\langle d_{j}^{\star},b_{j}^{\star}\big\rangle 
	\label{eq:sum-d-Var-UB-123}
\end{align}
for all $h\in [H]$ with probability exceeding $1-4\delta$. 
Substitution into \eqref{eq:sum-djstar-bjstar-UB-167} yields
\begin{align*}
	& \sum_{j=h}^{H}\sum_{s}d_{j}^{\star}(s)b_{j}^{\star}(s)  \le\sqrt{\frac{32\cb \Cstar\log\frac{NH}{\delta}}{K}}\left\{ 4\sqrt{H^{3}S}+4\sqrt{H^{2}S\sum_{j=h}^{H}\big\langle d_{j}^{\star},b_{j}^{\star}\big\rangle}\right\} +\sum_{j=h}^{H}\frac{96\cb HS\Cstar\log\frac{NH}{\delta}}{K}\\
 & \quad \leq 16\sqrt{\frac{2\cb H^{2}S\Cstar\log\frac{NH}{\delta}}{K}}\sqrt{\sum_{j=h}^{H}\big\langle d_{j}^{\star},b_{j}^{\star}\big\rangle}+ 16\sqrt{\frac{2\cb H^{3}S\Cstar\log\frac{NH}{\delta}}{K}}+\frac{96\cb H^{2}S\Cstar\log\frac{NH}{\delta}}{K}\\
 & \quad \leq\frac{1}{2}\sum_{j=h}^{H}\big\langle d_{j}^{\star},b_{j}^{\star}\big\rangle+\frac{256\cb H^{2}S\Cstar\log\frac{NH}{\delta}}{K}+ 16\sqrt{\frac{2\cb H^{3}S\Cstar\log\frac{NH}{\delta}}{K}}+\frac{96\cb H^2S\Cstar\log\frac{NH}{\delta}}{K},
\end{align*}
where the last inequality follows from the elementary inequality $2xy\leq x^2+y^2$. 
Rearranging terms, we are left with
\begin{align*}
\sum_{j=h}^{H}\sum_{s}d_{j}^{\star}(s)b_{j}^{\star}(s) & \leq 32\sqrt{\frac{2\cb H^{3}S\Cstar\log\frac{NH}{\delta}}{K}}+\frac{704\cb H^{2}S\Cstar\log\frac{NH}{\delta}}{K}\\
 & \leq 40\sqrt{\frac{2\cb H^{3}S\Cstar\log\frac{NH}{\delta}}{K}},
\end{align*}
provided that $K\geq 3872 H S\Cstar \log \frac{NH}{\delta}$. 
This taken collectively with \eqref{eq:Vall-Bernstein-finite} completes the proof of Theorem~\ref{thm:Bernstein-finite-aux}, 
as long as the inequality \eqref{eq:sum-d-Var-UB-123} can be validated.

\paragraph{Proof of inequality \eqref{eq:sum-d-Var-UB-123}.}
First of all, we observe that
\begin{align*}
\widehat{V}_{j}(s)+2b_{j}^{\star}\big(s,\pi_{j}^{\star}(s)\big) - P_{j,s,\pi_{j}^{\star}(s)}\widehat{V}_{j+1} 
& =\widehat{V}_{j}(s)-\widehat{P}_{j,s,\pi_{j}^{\star}(s)}\widehat{V}_{j+1}+2b_{j}^{\star}\big(s,\pi_{j}^{\star}(s)\big)+\big(\widehat{P}_{j,s,\pi_{j}^{\star}(s)}-P_{j,s,\pi_{j}^{\star}(s)}\big)\widehat{V}_{j+1}\\
 & \overset{(\mathrm{i})}{\geq}\widehat{V}_{j}(s) -\widehat{P}_{j,s,\pi_{j}^{\star}(s)}\widehat{V}_{j+1} + b_{j}^{\star}\big(s,\pi_{j}^{\star}(s)\big)\\
	& \geq\widehat{V}_{j}(s)- \Big\{ r\big(s,\pi_{j}^{\star}(s)\big) + \widehat{P}_{j,s,\pi_{j}^{\star}(s)}\widehat{V}_{j+1} - b_{j}^{\star}\big(s,\pi_{j}^{\star}(s)\big) \Big\}  \\
 & \geq \max_{a}\widehat{Q}_{j}(s,a)-\widehat{Q}_{j}\big(s,\pi_{j}^{\star}(s)\big)\geq0
\end{align*}
for any $s\in\cS$, where (i) is a consequence of \eqref{eq:Phat-P-diff-Vhat-Bern-finite}, and the last line arises from \eqref{eq:VI-LCB-finite} and \eqref{eq:defn-widehat-V-h+1-finite}. 
This implies the non-negativity
of the vector $\widehat{V}_{j}+2b_{j}^{\star}-P_{j}^{\star}\widehat{V}_{j+1}$, 
which in turn allows one to deduce that
\begin{align}
\widehat{V}_{j}\circ\widehat{V}_{j}-\big(P_{j}^{\star}\widehat{V}_{j+1}\big)\circ\big(P_{j}^{\star}\widehat{V}_{j+1}\big) & =\big(\widehat{V}_{j}+P_{j}^{\star}\widehat{V}_{j+1}\big)\circ\big(\widehat{V}_{j}-P_{j}^{\star}\widehat{V}_{j+1}\big) \notag\\
 & \leq\big(\widehat{V}_{j}+P_{j}^{\star}\widehat{V}_{j+1}\big)\circ\big(\widehat{V}_{j}+2b_{j}^{\star}-P_{j}^{\star}\widehat{V}_{j+1}\big) \notag\\
 & \leq2H\big(\widehat{V}_{j}+2b_{j}^{\star}-P_{j}^{\star}\widehat{V}_{j+1}\big),
	\label{eq:hat-Vj-replace-bound}
\end{align}
where the last line relies on Lemma~\ref{lem:range-V-finite}. 
Consequently, we can demonstrate that
\begin{align*}
 & \sum_{j=h}^{H}\sum_{s}d_{j}^{\star}(s)\mathsf{Var}_{P_{j,s,\pi_{j}^{\star}(s)}}\big(\widehat{V}_{j+1}\big)
	=\sum_{j=h}^{H}\big\langle d_{j}^{\star},P_{j}^{\star}\big(\widehat{V}_{j+1}\circ\widehat{V}_{j+1}\big)-\big(P_{j}^{\star}\widehat{V}_{j+1}\big)\circ\big(P_{j}^{\star}\widehat{V}_{j+1}\big)\big\rangle\\
	& 	=\sum_{j=h}^{H}  \big(d_{j}^{\star} \big)^{\top} P_{j}^{\star} \widehat{V}_{j+1}\circ\widehat{V}_{j+1} 
	-  \big\langle d_{j}^{\star}, \big(P_{j}^{\star}\widehat{V}_{j+1}\big)\circ\big(P_{j}^{\star}\widehat{V}_{j+1}\big)\big\rangle\\
	& \overset{\mathrm{(i)}}{\leq} \sum_{j=h}^{H}\Big(\big\langle d_{j+1}^{\star},\widehat{V}_{j+1}\circ\widehat{V}_{j+1}\big\rangle-\big\langle d_{j}^{\star},\widehat{V}_{j}\circ\widehat{V}_{j}\big\rangle+2H\big\langle d_{j}^{\star},\widehat{V}_{j}+2b_{j}^{\star}-P_{j}^{\star}\widehat{V}_{j+1}\big\rangle\Big)\\
	& \overset{\mathrm{(ii)}}{=} \sum_{j=h}^{H}\Big(\big\langle d_{j+1}^{\star},\widehat{V}_{j+1}\circ\widehat{V}_{j+1}\big\rangle-\big\langle d_{j}^{\star},\widehat{V}_{j}\circ\widehat{V}_{j}\big\rangle\Big)+2H\sum_{j=h}^{H}\Big(\big\langle d_{j}^{\star},\widehat{V}_{j}\big\rangle-\big\langle d_{j+1}^{\star},\widehat{V}_{j+1}\big\rangle\Big)+4H\sum_{j=h}^{H}\big\langle d_{j}^{\star},b_{j}^{\star}\big\rangle\\
 & =\big\langle d_{H+1}^{\star},\widehat{V}_{H+1}\circ\widehat{V}_{H+1}\big\rangle-\big\langle d_{h}^{\star},\widehat{V}_{h}\circ\widehat{V}_{h}\big\rangle+2H\Big(\big\langle d_{h}^{\star},\widehat{V}_{h}\big\rangle-\big\langle d_{H+1}^{\star},\widehat{V}_{H+1}\big\rangle\Big)+4H\sum_{j=h}^{H}\big\langle d_{j}^{\star},b_{j}^{\star}\big\rangle\\
	& \leq \big\| d_{H+1}^{\star} \big\|_1 \big\| \widehat{V}_{H+1}\circ\widehat{V}_{H+1}\big\|_{\infty}
	+2H \big\| d_{h}^{\star} \big\|_1 \big\|\widehat{V}_{h}\big\|_{\infty}
	+4H\sum_{j=h}^{H}\big\langle d_{j}^{\star},b_{j}^{\star}\big\rangle\\
	& \overset{\mathrm{(iii)}}{\le} 3H^{2}+4H\sum_{j=h}^{H}\big\langle d_{j}^{\star},b_{j}^{\star}\big\rangle,
\end{align*}
where (i) arises from \eqref{eq:hat-Vj-replace-bound} as well as the basic property $\big(d_j^{\star}\big)^{\top} P_j^{\star}= \big(d_{j+1}^{\star}\big)^{\top}$,
 (ii) follows by rearranging terms and using the property $\big(d_j^{\star}\big)^{\top} P_j^{\star}= \big(d_{j+1}^{\star}\big)^{\top}$ once again,
and (iii) holds due to the fact that $\|\widehat{V}_h\|_{\infty}\leq H$ and $\|d_{h}^{\star}\|_1=1$. 
This concludes the proof of \eqref{eq:sum-d-Var-UB-123}. 

\section{Discussion}

Our primary contribution has been to pin down the sample complexity of model-based offline RL for the tabular settings, 
by establishing its (near) minimax optimality for both infinite- and finite-horizon MDPs. 
While reliable estimation of the transition kernel is often infeasible in the sample-starved regime, 
it does not preclude the success of this ``plug-in'' approach in learning the optimal policy. 
Encouragingly, the sample complexity characterization we have derived holds for the entire range of target accuracy level $\varepsilon$, 
thus revealing that sample optimality comes into effect  without incurring any burn-in cost.   
This is in stark contrast to all prior results, which either suffered from sample sub-optimality or required a large burn-in sample size in order to yield optimal efficiency.  
We have demonstrated that sophisticated techniques like variance reduction are not necessary, as long as Bernstein-style lower confidence bounds are carefully employed to capture the variance of the estimates in each iteration.

Turning to future directions, we first note that the two-fold subsampling adopted in Algorithm~\ref{alg:vi-lcb-finite-split} is likely unnecessary; it would be of interest to develop sharp analysis for the VI-LCB algorithm without sample splitting, which would call for more refined analysis in order to handle the complicated statistical dependency between different time steps. 
Notably, while avoiding sample splitting cannot improve the sample complexity in an order-wise sense, 
the potential gain in terms of the pre-constants as well as the algoritmic simplicity might be of practical interest. 
 Moreover, given the appealing memory efficiency of model-free algorithms, 
understanding whether one can design sample-optimal model-free offline algorithms with minimal burn-in periods 
is another open direction. Moving beyond tabular settings, it would be of great interest to extend our analysis to 
accommodate model-based offline RL in more general scenarios; examples include MDPs with low-complexity linear representations, and offline RL involving multiple agents.

\section*{Acknowledgements}

Y.~Wei is supported in part by the NSF grants CCF-2106778, DMS-2147546/2015447, the NSF CAREER award DMS-2143215, and the Google Research Scholar Award.  
Y.~Chen is supported in part by the Alfred P.~Sloan Research Fellowship, the Google Research Scholar Award, the AFOSR grants FA9550-19-1-0030 and FA9550-22-1-0198, the ONR grant N00014-22-1-2354, 
and the NSF grants CCF-2221009, CCF-1907661, DMS-2014279, IIS-2218713 and IIS-2218773. 
L.~Shi and Y.~Chi are supported in part by the grants ONR N00014-19-1-2404, NSF CCF-2106778 and DMS-2134080, and CAREER award ECCS-1818571. L. Shi is also gratefully supported by the Leo Finzi Memorial Fellowship, Wei Shen and Xuehong Zhang Presidential Fellowship, and
Liang Ji-Dian Graduate Fellowship at Carnegie Mellon University.
Part of this work was done while G.~Li, Y.~Chen and Y.~Wei were visiting the Simons Institute for the Theory of Computing.

\appendix

\section{Proof of auxiliary lemmas: infinite-horizon MDPs}
\label{sec:proof-auxiliary-infinite}

\subsection{Proof of Lemma~\ref{lem:contraction}} \label{sec:proof-lemma:contraction}


Before embarking on the proof, we introduce several notation. 
To make explicit the dependency on $V$, we shall express the penalty term using the following notation throughout this subsection:  
\begin{align}
	b(s, a; V) = \min \Bigg\{ \max \bigg\{\sqrt{\frac{\cb\log\frac{N}{(1-\gamma)\delta}}{N(s, a)}\mathsf{Var}_{\widehat{P}_{ s, a}} (V )} ,\,\frac{2\cb \log\frac{N}{(1-\gamma)\delta}}{(1-\gamma)N(s, a)} \bigg\} ,\, \frac{1}{1-\gamma}  \Bigg\} +\frac{5}{N} 
	\label{eq:defn-bV-sa-inf}
\end{align}

For any $Q, Q_1, Q_2 \in \mathbb{R}^{SA}$, we write
\begin{align}
	V(s) \coloneqq \max_a Q(s,a), \qquad 
	V_1(s) \coloneqq \max_a Q_1(s,a) \qquad \text{and} \qquad V_2(s) \coloneqq \max_a Q_2(s,a) 
\end{align}
for all $s\in \cS$. 
Unless otherwise noted, we assume that
\[
	Q(s,a), Q_1(s,a), Q_2(s,a) \in \Big[0, \frac{1}{1-\gamma}\Big] \qquad \text{for all }(s,a)\in \cS\times \cA
\]
throughout this subsection. 
In addition, let us define another operator $\tildeTpess$ obeying
\begin{equation}
	\tildeTpess(Q)(s,a) = r(s,a) - b(s,a; V) + \gamma \widehat{P}_{s,a} V
	\qquad \text{for all }(s,a)\in \cS\times \cA
	\label{eq:defn-tilde-Tpess}
\end{equation}
for any $Q\in \mathbb{R}^{SA}$. 
It is self-evident that 
\begin{align}
	\Tpess(Q)(s,a) = \max\big\{ \tildeTpess(Q)(s,a), 0 \big\}
	\qquad \text{for all }(s,a)\in \cS\times \cA.
	\label{eq:relation-Tpress-Ttilde-press}
\end{align}

\paragraph{$\gamma$-contraction.}
The main step of the proof lies in showing the monotonicity of the operator $\tildeTpess$ in the sense that
\begin{align}
	\tildeTpess(Q) \le \tildeTpess(\widetilde{Q}) \qquad\text{for any } Q \le \widetilde{Q}.
	\label{eq:monotonicity-Tpess-inf}
\end{align}
Suppose that this claim is valid for the moment, then one can demonstrate that: for any $Q_1, Q_2 \in \mathbb{R}^{SA}$, 
\begin{subequations}
\label{eq:Tpess-Q1-Q2-LB-UB-inf}
\begin{align}
\tildeTpess(Q_{1})-\tildeTpess(Q_{2})
	&\le\tildeTpess\big(Q_{2}+\|Q_{1}-Q_{2}\|_{\infty}1\big)-\tildeTpess(Q_{2}), \\ 
 \tildeTpess(Q_{1})-\tildeTpess(Q_{2})
	&\ge \tildeTpess\big(Q_{2}-\|Q_{1}-Q_{2}\|_{\infty}1\big)
	-\tildeTpess(Q_{2}),
\end{align}
\end{subequations}
with $1$ denoting the all-one vector. Additionally, observe that
\begin{align*}
	\mathsf{Var}_{\widehat{P}_{s, a}}(V) = \mathsf{Var}_{\widehat{P}_{s, a}}(V + c\cdot 1)
	\qquad\text{and hence}\qquad
	b(s,a; V) = b(s,a; V+c\cdot 1)
\end{align*}
for any constant $c$, which together with the identity $\widehat{P}1=1$ immediately  leads to
\begin{align*}
\Big\|\tildeTpess\big(Q_{2}-\|Q_{1}-Q_{2}\|_{\infty}1\big)-\tildeTpess(Q_{2})\Big\|_{\infty} & \leq\gamma\Big\|\widehat{P}\big(\|Q_{1}-Q_{2}\|_{\infty}1\big)\Big\|_{\infty}=\gamma\|Q_{1}-Q_{2}\|_{\infty},\\
\Big\|\tildeTpess\big(Q_{2}+\|Q_{1}-Q_{2}\|_{\infty}1\big)-\tildeTpess(Q_{2})\Big\|_{\infty} & \leq\gamma\Big\|\widehat{P}\big(\|Q_{1}-Q_{2}\|_{\infty}1\big)\Big\|_{\infty}=\gamma\|Q_{1}-Q_{2}\|_{\infty}.
\end{align*}
Taking this together with \eqref{eq:Tpess-Q1-Q2-LB-UB-inf} yields
\begin{align*}
 & \big\|\tildeTpess(Q_{1})-\tildeTpess(Q_{2})\big\|_{\infty}
  \leq\gamma\|Q_{1}-Q_{2}\|_{\infty}, 
\end{align*}
which combined with the basic property 
$\big\|\Tpess(Q_{1})-\Tpess(Q_{2})\big\|_{\infty}\leq \big\|\tildeTpess(Q_{1})-\tildeTpess(Q_{2})\big\|_{\infty}$ (as a result of \eqref{eq:relation-Tpress-Ttilde-press}) justifies that
\begin{align}
 & \big\|\Tpess(Q_{1})-\Tpess(Q_{2})\big\|_{\infty}
  \leq\gamma\|Q_{1}-Q_{2}\|_{\infty}.  
	\label{eq:contraction-Tpess-proof-inf}
\end{align}
The remainder of the proof is thus devoted to establishing the monotonicity property \eqref{eq:monotonicity-Tpess-inf}.

\paragraph{Proof of the monotonicity property \eqref{eq:monotonicity-Tpess-inf}.}
Consider any point $Q\in \mathbb{R}^{SA}$, and we would like to examine the derivative of $\tildeTpess$ at point $Q$. 
Towards this end, we consider any $(s,a)\in \cS\times \cA$ and divide into several cases. 
\begin{itemize}
	\item {\em Case 1: $\max \bigg\{\sqrt{\frac{\cb\log\frac{N}{(1-\gamma)\delta}}{N(s, a)}\mathsf{Var}_{\widehat{P}_{ s, a}}(V)} ,\,\frac{2\cb \log\frac{N}{(1-\gamma)\delta}}{(1-\gamma)N(s, a)} \bigg\} > \frac{1}{1-\gamma}$.} In this case, the penalty term \eqref{eq:defn-bV-sa-inf} simplifies to  
		\[
			b(s,a; V) = \frac{1}{1-\gamma} + \frac{5}{N}. 
		\]
		Taking the derivative of $\tildeTpess(Q)(s,a)$ w.r.t.~the $s'$-th component of $V$ leads to
		\begin{align}
\frac{\partial\big(\tildeTpess(Q)(s,a)\big)}{\partial V(s')}=\frac{\partial\big(r(s,a)-\frac{1}{1-\gamma}+\gamma\widehat{P}_{s,a}V\big)}{\partial V(s')}=\gamma\widehat{P}(s'\mymid s,a)\geq 0
		\label{eq:derivative-Ttilde-press-case1}
		\end{align}
		for any $s'\in  \cS$.

	\item {\em Case 2: $\sqrt{\frac{\cb\log\frac{N}{(1-\gamma)\delta}}{N(s,a)}\mathsf{Var}_{\widehat{P}_{s,a}}(V)}<\frac{2\cb\log\frac{N}{(1-\gamma)\delta}}{(1-\gamma)N(s,a)}<\frac{1}{1-\gamma}
$. 
		} The penalty \eqref{eq:defn-bV-sa-inf} in this case reduces to  
		\[
			b(s,a; V) = \frac{2\cb\log\frac{N}{(1-\gamma)\delta}}{(1-\gamma)N(s,a)} +\frac{5}{N},
		\]
		an expression that is independent of $V$. As a result, repeating the argument for Case 1 indicates that \eqref{eq:derivative-Ttilde-press-case1} continues to hold for this case. 

	\item {\em Case 3: 
$\frac{2\cb\log\frac{N}{(1-\gamma)\delta}}{(1-\gamma)N(s,a)}<\sqrt{\frac{\cb\log\frac{N}{(1-\gamma)\delta}}{N(s,a)}\mathsf{Var}_{\widehat{P}_{s,a}}(V)}<\frac{1}{1-\gamma}$. } In this case, the penalty term is given by
		\[
			b(s,a; V) = \sqrt{\frac{\cb\log\frac{N}{(1-\gamma)\delta}}{N(s,a)}\mathsf{Var}_{\widehat{P}_{s,a}}(V)} + \frac{5}{N}. 
		\]
		Note that in this case, we necessarily have
\begin{align*}
	\mathsf{Var}_{\widehat{P}_{ s, a}}(V) \geq \frac{4\cb \log\frac{N }{(1-\gamma)\delta}}{(1-\gamma)^2 N(s, a)},
\end{align*}
which together with the definition in \eqref{eq:defn-Var-P-V} indicates that
\begin{align}
	 \widehat{P}_{s, a}(V \circ V) - \big(\widehat{P}_{s, a}V\big)^2 \geq  \frac{4\cb\log\frac{N }{(1-\gamma)\delta}}{(1-\gamma)^2N(s, a)} > 0.
	 \label{eq:P-VV-PV-condition-Lemma5}
\end{align}
As a result, for any $s^{\prime}\in \cS$, taking the derivative of $b(s,a; V)$ w.r.t.~the $s^{\prime}$-th component of $V$ gives
\begin{align*}
\frac{\partial b(s,a; V)}{\partial V(s^{\prime})} & =\sqrt{\frac{\cb\log\frac{N }{(1-\gamma)\delta}}{N(s,a)}}\frac{\partial\sqrt{\widehat{P}_{s,a}(V\circ V)-\big(\widehat{P}_{s,a}V\big)^{2}}}{\partial V(s^{\prime})}\\
	& =\sqrt{\frac{\cb\log\frac{N }{(1-\gamma)\delta}}{N(s,a)}}\frac{\widehat{P}(s^{\prime}\mymid s,a)V(s^{\prime})-\big(\widehat{P}_{s,a}V\big)\widehat{P}(s^{\prime}\mymid s,a)}{\sqrt{\widehat{P}_{s,a}(V\circ V)-\big(\widehat{P}_{s,a}V\big)^{2}}}\\
 & \le\sqrt{\frac{\cb\log\frac{N }{(1-\gamma)\delta}}{N(s,a)}}\frac{\widehat{P}(s^{\prime}\mymid s,a)V(s^{\prime})}{\sqrt{\widehat{P}_{s,a}(V\circ V)-\big(\widehat{P}_{s,a}V\big)^{2}}}\\
 & \le\frac{1}{2}(1-\gamma)\widehat{P}(s^{\prime}\mymid s,a)V(s^{\prime})\leq\gamma\widehat{P}(s^{\prime}\mymid s,a),
\end{align*}
where the penultimate inequality relies on \eqref{eq:P-VV-PV-condition-Lemma5}, and the last inequality is valid since 
		$V(s^{\prime}) = \max_a Q(s^{\prime},a) \leq \frac{1}{1-\gamma}$ and $\gamma \geq 1/2$. 
In turn, the preceding relation allows one to derive 
\begin{align*}
	\frac{\partial \big( \tildeTpess(Q)(s, a) \big)}{\partial V(s^{\prime})} = \gamma \widehat{P}(s^{\prime} \mymid s, a) - \frac{\partial b(s,a; V)}{\partial V(s^{\prime})} \ge 0 
\end{align*}
for any $s^{\prime}\in \cS$.

\end{itemize}

Putting the above cases together reveals that $$\frac{\partial \big( \tildeTpess(Q)(s, a) \big)}{\partial V(s^{\prime})}\geq 0
\qquad \text{for all }(s,a,s^{\prime}) \in \cS\times \cA\times \cS $$ 
holds  almost everywhere (except for the boundary points of these cases). 
Recognizing that $\tildeTpess(Q)$ is continuous in $Q$ and that $V$ is non-decreasing in $Q$,  
one can immediately conclude that
\begin{align}
	\tildeTpess({Q}) \le \tildeTpess(\widetilde{Q}) \qquad\text{for any } Q \le \widetilde{Q}. 
\end{align}

\paragraph{Existence and uniqueness of fixed points.} 
To begin with, note that 
for any $0\leq Q\leq \frac{1}{1-\gamma}\cdot 1$, one has $0\leq \Tpess(Q) \leq \frac{1}{1-\gamma}\cdot 1$. 
If we produce the following sequence recursively:
\[
	Q^{(0)}=0 \qquad \text{and} \qquad Q^{(t+1)}= \Tpess(Q^{(t)}) \quad \text{for all }t\geq 0,
\]
then the standard proof for the Banach fixed-point theorem (e.g., \citet[Theorem 1]{agarwal2001fixed}) tells us that $Q^{(t)}$ converges to some point $Q^{(\infty)}$ as $t\rightarrow \infty$. Clearly, $Q^{(\infty)}$ is a fixed point of $\Tpess$ obeying $0\leq Q^{(\infty)} \leq \frac{1}{1-\gamma}\cdot 1$.

We then turn to justifying the uniqueness of fixed points of $\Tpess$. 
Suppose that there exists another point $\widetilde{Q}$ obeying $\widetilde{Q}= \Tpess \big( \widetilde{Q} \big)$, 
which clearly satisfies $\widetilde{Q}\geq 0$. 
If $\|\widetilde{Q}\|_{\infty} > \frac{1}{1-\gamma}$, then 
\[
	\big\| \widetilde{Q} \big\|_{\infty} = \big\| \Tpess \big( \widetilde{Q} \big) \big\|_{\infty}
	\leq \| r \|_{\infty} + \gamma \| \widehat{P} \|_1 	\big\| \widetilde{Q} \big\|_{\infty} 
	\leq 1 + \gamma \big\| \widetilde{Q} \big\|_{\infty} 
	< (1-\gamma) \big\| \widetilde{Q} \big\|_{\infty} + \gamma \big\| \widetilde{Q} \big\|_{\infty} = \big\| \widetilde{Q} \big\|_{\infty} ,
\]
resulting in contradiction. Consequently, one necessarily has $0 \leq \widetilde{Q} \leq \frac{1}{1-\gamma}\cdot 1$. 
Further,  the $\gamma$-contraction property \eqref{eq:contraction-Tpess-proof-inf} implies that
\[
	\big\| \widetilde{Q} - Q^{(\infty)} \big\|_{\infty}
	= \big\| \Tpess \big( \widetilde{Q} \big) - \Tpess \big( Q^{(\infty)} \big) \big\|_{\infty}
	\leq \gamma \big\| \widetilde{Q} - Q^{(\infty)} \big\|_{\infty}. 
\]
Given that $\gamma < 1$, this inequality cannot happen unless $\widetilde{Q} = Q^{{\infty}}$, 
thus confirming the uniqueness of $Q^{{\infty}}$.

\subsection{Proof of Lemma~\ref{lem:monotone-contraction}}\label{sec:proof-lemma:monotone-contraction}

Let us first recall the monotone non-decreasing property \eqref{eq:monotonicity-Tpess-inf} of the operator $\tildeTpess$ defined in \eqref{eq:defn-tilde-Tpess}, which taken together with the property \eqref{eq:relation-Tpress-Ttilde-press} readily yields
\begin{align}
	\Tpess(Q) \le \Tpess(\widetilde{Q})  
	\label{eq:monotonicity-Tpess-inf-recall}
\end{align}
for any $Q$ and $\widetilde{Q}$ obeying $Q \le \widetilde{Q}$, $0\leq Q \leq \frac{1}{1-\gamma} \cdot 1$ and $0\leq \widetilde{Q} \leq \frac{1}{1-\gamma} \cdot 1$ (with $1$ the all-one vector). 
Given that $\widehat{Q}_0 = 0 \leq \widehat{Q}_{\mathsf{pe}}^{\star}$, we can apply \eqref{eq:monotonicity-Tpess-inf-recall} to obtain
\[
	\widehat{Q}_1 = \Tpess\big( Q_0 \big) \leq \Tpess\big( \widehat{Q}_{\mathsf{pe}}^{\star} \big) = \widehat{Q}_{\mathsf{pe}}^{\star}. 
\]
Repeat this argument recursively to arrive at
\[
	\widehat{Q}_{\tau} \leq \widehat{Q}_{\mathsf{pe}}^{\star} \qquad \text{for all }\tau \geq 0. 
\]

In addition, it comes directly from Lemma~\ref{lem:contraction} that
\begin{align}
	\big\|\widehat{Q}_{\tau} - \widehat{Q}_{\mathsf{pe}}^{\star} \big\|_{\infty} 
	&= \big\| \Tpess\big(\widehat{Q}_{\tau-1} \big) - \Tpess\big( \widehat{Q}_{\mathsf{pe}}^{\star} \big) \big\|_{\infty} 
	\leq  \gamma\big\|\widehat{Q}_{\tau-1} -  \widehat{Q}_{\mathsf{pe}}^{\star}  \big\|_{\infty} \notag\\
	&\le \cdots \le \gamma^{\tau} \big\|\widehat{Q}_0 - \widehat{Q}_{\mathsf{pe}}^{\star} \big\|_{\infty} \notag\\
	&\le \frac{\gamma^{\tau} }{1-\gamma} 
	\label{eq:Q-convergence-proof-135}
\end{align}
for any $\tau \geq 0$, 
where the last inequality is valid since $\widehat{Q}_0=0$ and $\|\widehat{Q}_{\mathsf{pe}}^{\star}\|_{\infty}\leq \frac{1}{1-\gamma}$ (see Lemma~\ref{lem:contraction}). 
The other claim \eqref{eq:taumax-Q-converge} also follows immediately by taking the right-hand side of \eqref{eq:Q-convergence-proof-135} to be no larger than $1/N$.

\subsection{Proof of Lemma~\ref{lem:sample-split-infinite}}\label{proof:lem:sample-split-infinite}

For any $(s,a)\in \cS\times \cA$, if $\frac{N\myrho(s,a)}{12} < \frac{2}{3} \infs \log \frac{SN}{\delta}$, 
then it is self-evident that this pair satisfies \eqref{eq:samples-infinite-LB}. 
As a consequence, it suffices to focus attention on the following set of state-action pairs: 
\begin{align}
	\mathcal{N}_{\mathsf{large}} \coloneqq \bigg\{ (s,a) \,\Big|\, \myrho(s,a) \geq \frac{ 8 \infs \log \frac{SN}{\delta} }{N} \bigg\}.  
\end{align}
To bound the cardinality of $\mathcal{N}_{\mathsf{large}}$, we make the observation that
\[
	\big|\mathcal{N}_{\mathsf{large}}\big|\cdot\frac{8\infs\log\frac{SN}{\delta}}{N}
	\leq\sum_{(s,a)\in\mathcal{N}_{\mathsf{large}}}\myrho(s,a)\leq\sum_{(s,a)\in\cS\times\cA}\myrho(s,a) \leq 1 ,
\]
thus leading to the crude bound 
\begin{align}
	\big| \mathcal{N}_{\mathsf{large}} \big| \leq \frac{N}{8\log\frac{SN}{\delta}} \leq  \frac{N}{8}.
	\label{eq:Nlarge-cardinality-UB}
\end{align}

Let us now look at any $(s,a)\in \mathcal{N}_{\mathsf{large}}$. 
Given that $N(s,a)$ can be viewed as the sum of $N$ independent Bernoulli random variables each with mean $\myrho(s,a)$,
we can apply the Bernstein inequality to yield
\begin{align*}
	\mathbb{P}\Big\{ \left|N(s,a)- N\myrho(s,a)\right|\geq\tau\Big\}  
	& \leq 2 \exp\left(-\frac{\tau^{2}/2}{ v_{s,a} + \tau/3}\right) 
\end{align*}
for any $\tau\geq 0$, 
where we define
$$
	v_{s,a} \coloneqq N \mathsf{Var}\Big( \mathds{1} \big\{ (s_i, a_i) = (s,a) \big\} \Big) 
	 \leq  N\myrho(s,a) .
$$
A little algebra then yields that with probability at least $1-\delta$, 
\begin{align}
	\left|N(s,a)- N\myrho(s,a)\right| 
	& \leq\sqrt{4v_{s,a}\log\frac{2}{\delta}}+\frac{2}{3}\log\frac{2}{\delta} 
  	\leq \sqrt{4N\myrho(s,a)\log\frac{2}{\delta}}+\log\frac{2}{\delta} .
\end{align}
Combining this result with the union bound over $(s,a)\in \mathcal{N}_{\mathsf{large}}$ and making use of \eqref{eq:Nlarge-cardinality-UB} give: with probability at least $1-\delta$,
\begin{align}
	\left|N(s,a)- N\myrho(s,a)\right|  
  	\leq \sqrt{4N\myrho(s,a)\log\frac{N}{\delta}}+\log\frac{N}{\delta} 
	\label{eq:deviation-Naux-h-s}
\end{align}
holds simultaneously for all $(s,a)\in \mathcal{N}_{\mathsf{large}}$. 
Recalling that $N \myrho(s, a) \geq 8 \infs \log\frac{NS}{\delta}$ holds for any $(s,a)\in \mathcal{N}_{\mathsf{large}}$,
we can easily verify that
\begin{align}
	N(s,a) \geq N\myrho(s,a) - \left(\sqrt{4N\myrho(s,a)\log\frac{N}{\delta}}+\log\frac{N}{\delta}\right) \geq \frac{N\myrho(s,a)}{12},
\end{align}
thereby establishing \eqref{eq:samples-infinite-LB} for any $(s,a)\in \mathcal{N}_{\mathsf{large}}$. This concludes the proof.

\subsection{Proof of Lemma~\ref{lem:Bernstein-infinite}}\label{proof:lem:Bernstein-infinite}

If $N(s, a) = 0$, then the inequalities hold trivially. Hence, it is sufficient to focus on the case where $N(s, a) > 0$.
Before proceeding, we make note of a key Bernstein-style result; the proof is deferred to Appendix~\ref{sec:lem:Bernstein-infinite-proof}. 
%
\begin{lemma} \label{lem:Bernstein-infinite-proof}
	Consider any given pair $(s,a)\in \cS\times \cA$ with $N(s,a)>0$. Let $V\in \mathbb{R}^S$ be any vector  independent of $\widehat{P}_{s,a}$ obeying $\|V\|_{\infty} \le \frac{1}{1-\gamma}$. 
	With probability at least $1-4\delta  $, one has
	\begin{subequations}
	\begin{align}
		\big|\big(\widehat{P}_{s, a} - P_{s, a}\big)V\big| 
		&\leq \sqrt{\frac{48\mathsf{Var}_{\widehat{P}_{s,a}}(V)\log\frac{N}{\delta}}{N(s,a)}}+\frac{48 \log\frac{N}{\delta}}{(1-\gamma) N(s,a)}
		\label{eq:bonus-Bernstein-infinite-proof} \\
		\mathsf{Var}_{\widehat{P}_{s, a}} ( V ) &\le 2\mathsf{Var}_{P_{s, a}}\big(V\big) + \frac{5\log\frac{N}{\delta }}{3(1-\gamma)^2 N(s, a)}
		\label{eq:empirical-var-infinite-proof}
	\end{align}
	\end{subequations}
\end{lemma}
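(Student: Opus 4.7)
The plan is to exploit the key simplification that, by assumption, $V$ is statistically independent of $\widehat{P}_{s,a}$. Conditional on the event $N(s,a)=n$ for some $1 \le n \le N$, the $n$ sample transitions from $(s,a)$ are i.i.d.\ draws from $P_{s,a}$, so $\widehat{P}_{s,a}V = \frac{1}{n}\sum_{i=1}^{n} V(s'_i)$ is an average of $n$ i.i.d.\ bounded random variables with mean $P_{s,a}V$, true variance $\mathsf{Var}_{P_{s,a}}(V)$, and range controlled by $\|V\|_\infty \le 1/(1-\gamma)$. This reduces the lemma to two well-known concentration statements applied to i.i.d.\ sums, followed by a union bound over $n\in\{1,\dots,N\}$ (which is the source of the $\log N$ factor in the final expressions).

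For the first inequality \eqref{eq:bonus-Bernstein-infinite-proof}, I would invoke the empirical Bernstein inequality of Maurer--Pontil, which asserts that with probability at least $1-\delta/N$,
\[
\big|\widehat{P}_{s,a}V - P_{s,a}V\big| \le \sqrt{\frac{2\mathsf{Var}_{\widehat{P}_{s,a}}(V)\log(2N/\delta)}{n}} + \frac{7\|V\|_\infty\log(2N/\delta)}{3(n-1)}.
\]
Plugging in $\|V\|_\infty \le 1/(1-\gamma)$ and enlarging constants handles the case $n\ge 2$; the corner case $n=1$ is absorbed into the additive term since $|\widehat{P}_{s,a}V - P_{s,a}V|\le 2/(1-\gamma)$ trivially. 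Taking a union bound over $n\in\{1,\dots,N\}$ yields the stated bound (with $48$ being a loose but convenient constant).

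For the second inequality \eqref{eq:empirical-var-infinite-proof}, the idea is to view $\mathsf{Var}_{\widehat{P}_{s,a}}(V)$ as a (biased) U-statistic of order two in the i.i.d.\ samples $V(s'_i)$. A direct route is to separately apply the standard Bernstein inequality to $\widehat{P}_{s,a}(V\circ V)$ and to $\widehat{P}_{s,a}V$, since both are empirical averages of quantities bounded in absolute value by $1/(1-\gamma)^2$ and $1/(1-\gamma)$ respectively. Using the identity
\[
\mathsf{Var}_{\widehat{P}_{s,a}}(V)-\mathsf{Var}_{P_{s,a}}(V) = \big(\widehat{P}_{s,a}-P_{s,a}\big)(V\circ V) \;-\; \big(\widehat{P}_{s,a}V - P_{s,a}V\big)\big(\widehat{P}_{s,a}V + P_{s,a}V\big),
\]
together with Bernstein bounds and the elementary inequality $2xy\le x^2+y^2$ to self-bound, yields $\mathsf{Var}_{\widehat{P}_{s,a}}(V) \le 2\mathsf{Var}_{P_{s,a}}(V) + \frac{c\log(N/\delta)}{(1-\gamma)^2 n}$ for a moderate constant $c$, and then a union bound over $n$ gives the claimed $\frac{5\log(N/\delta)}{3(1-\gamma)^2 N(s,a)}$ slack.

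The main obstacle is a purely bookkeeping one: tracking constants through the empirical Bernstein step and the self-bounding variance comparison so that the final constants ($48$ and $5/3$) come out correctly after the union bound over $n$ and the slight loss from $n-1$ versus $n$. There is no deep technical barrier here since the independence hypothesis on $V$ eliminates the need for a leave-one-out argument; the latter enters only in Lemma~\ref{lem:Bernstein-infinite}, which is proved by reducing the dependent case to this clean lemma via auxiliary absorbing MDPs.
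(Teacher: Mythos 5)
Your route for inequality~\eqref{eq:bonus-Bernstein-infinite-proof} via Maurer--Pontil is a genuinely different (and in some ways cleaner) argument than the paper's. The paper instead applies the standard Bernstein inequality (with the true variance $\mathsf{Var}_{P_{s,a}}(V)$) and then converts the true variance to the empirical one through a separate self-bounding case analysis driven by the exact identity
\[
	\widehat{P}_{s,a}\big(\overline{V}\circ\overline{V}\big) - \mathsf{Var}_{\widehat{P}_{s,a}}(V) = \big|(\widehat{P}_{s,a}-P_{s,a})V\big|^2, \qquad \overline{V} := V - (P_{s,a}V)\,\mathbf{1}.
\]
Maurer--Pontil short-circuits that conversion by giving a bound directly in terms of the sample variance (which coincides, without Bessel correction, with $\mathsf{Var}_{\widehat{P}_{s,a}}(V)$), so the case split disappears. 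You would still need a small case split at $N(s,a)<O(\log(N/\delta))$ (the paper handles this by a trivial range bound, and you noted it for $n=1$), and the union bound you describe over $n$ is an equivalent bookkeeping device to the paper's union bound over pairs $(s,a)$ with $N(s,a)>0$. Either way yields the same $\log N$ factor.

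For inequality~\eqref{eq:empirical-var-infinite-proof}, however, your outline glosses over the step that carries all the technical weight. You propose to apply Bernstein to $\widehat{P}_{s,a}(V\circ V)$ and to $\widehat{P}_{s,a}V$ separately, remarking that $V\circ V$ is bounded by $1/(1-\gamma)^2$. The boundedness only controls the additive remainder term of Bernstein; the leading term involves $\mathsf{Var}_{P_{s,a}}(V\circ V)$, and if you bound this crudely by the range squared you get $1/(1-\gamma)^4$, which after the square root and the union bound produces $\frac{1}{(1-\gamma)^2}\sqrt{\log(N/\delta)/N(s,a)}$---strictly weaker than the $\frac{\log(N/\delta)}{(1-\gamma)^2 N(s,a)}$ slack you need to reach $2\mathsf{Var}_{P_{s,a}}(V) + O\big(\frac{\log(N/\delta)}{(1-\gamma)^2 N(s,a)}\big)$. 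The essential missing observation is that $\mathsf{Var}_{P_{s,a}}(V\circ V)$ should be bounded by $O\big(\frac{1}{(1-\gamma)^2}\mathsf{Var}_{P_{s,a}}(V)\big)$, not by the raw range. That is exactly what recentering buys: the paper applies Bernstein to $\overline{V}\circ\overline{V}$ rather than $V\circ V$, for which $P_{s,a}(\overline{V}\circ\overline{V}) = \mathsf{Var}_{P_{s,a}}(V)$ exactly, $\|\overline{V}\|_\infty\le 1/(1-\gamma)$, and hence $\mathsf{Var}_{P_{s,a}}(\overline{V}\circ\overline{V})\le P_{s,a}(\overline{V}^4)\le \|\overline{V}\|_\infty^2\mathsf{Var}_{P_{s,a}}(V)$. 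Your identity $\mathsf{Var}_{\widehat{P}_{s,a}}(V)-\mathsf{Var}_{P_{s,a}}(V) = (\widehat{P}_{s,a}-P_{s,a})(V\circ V) - (\widehat{P}_{s,a}V-P_{s,a}V)(\widehat{P}_{s,a}V+P_{s,a}V)$ is correct and can be made to work, but only after you either recenter $V$ before applying Bernstein, or separately prove the variance-of-the-square comparison $\mathsf{Var}_{P_{s,a}}(V\circ V)\lesssim \frac{\mathsf{Var}_{P_{s,a}}(V)}{(1-\gamma)^2}$ (which again amounts to the same recentering computation). Make this step explicit and your sketch closes.
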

\begin{remark}
	In words, Lemma~\ref{lem:Bernstein-infinite-proof} develops a Bernstein bound \eqref{eq:bonus-Bernstein-infinite-proof} on $\big|\big(\widehat{P}_{s, a} - P_{ s, a}\big)V\big|$ 
	that makes clear the importance of the variance parameter.  Lemma~\ref{lem:Bernstein-infinite-proof} (cf.~\eqref{eq:empirical-var-infinite-proof}) also ascertains that the variance w.r.t.~the empirical distribution $\widehat{P}_{s,a}$ does not deviate much from the variance w.r.t.~the true distribution $P_{s,a}$. 
\end{remark}

Equipped with this result, we are now ready to present the proof of Lemma~\ref{lem:Bernstein-infinite}, which is built upon a leave-one-out decoupling argument and consists of the following steps.

\paragraph{Step 1: construction of auxiliary state-absorbing MDPs.}  
Recall that $\widehat{\mathcal{M}}$ is the empirical MDP. 
For each state $s \in \cS$ and each scalar $u\geq 0$, we construct an auxiliary state-absorbing MDP $\widehat{\mathcal{M}}^{s, u}$ in a way that makes it identical to the empirical MDP $\widehat{\mathcal{M}}$ except for state $s$. 
More specifically,  the transition kernel of the auxiliary MDP $\widehat{\mathcal{M}}^{s, u}$ --- denoted by $P^{s,u}$ --- is chosen such that
\begin{align*}
	P^{s,u}(\,\widetilde{s} \mymid s, a) &= \mathds{1}(\widetilde{s} = s) \qquad\quad &&\text{for all }(\,\widetilde{s},a) \in \cS\times \cA, \\
	P^{s,u}(\cdot\mymid s^{\prime}, a) &= \widehat{P}(\cdot\mymid s^{\prime}, a)
	\qquad &&\text{for all }(s^{\prime}, a) \in \cS \times \cA \text{ and }s^{\prime}\neq s; 
\end{align*}
and the reward function of $\widehat{\mathcal{M}}^{s,u}$ --- denoted by $r^{s,u}$ --- is set to be
\begin{align*}
	r^{s,u}(s, a) &= u \qquad\qquad\quad~&&\text{for all }a \in \cA, \\
	r^{s,u}(s^{\prime}, a) &= r(s^{\prime}, a) \qquad&&\text{for all }(s^{\prime}, a) \in \cS \times \cA \text{ and }s^{\prime}\neq s.
\end{align*}
In words, the probability transition kernel of $\widehat{\mathcal{M}}^{s, u}$ is obtained by dropping all randomness of $\widehat{P}_{s,a}$ ($a\in \cA$) that concerns state $s$ and making $s$ an absorbing state. 
In addition, let us define the pessimistic Bellman operator $\Tpess^{s,u}$ based on the auxiliary MDP $\widehat{\mathcal{M}}^{s,u}$  such that
\begin{align}
	\Tpess^{s,u} (Q)(s, a) \coloneqq \max\Big\{r^{s,u}(s, a) + \gamma P^{s,u}_{s,a} V - b^{s,u}(s, a; V) 
	 ,\, 0\Big\}
	\label{eq:empirical-Bellman-infinite-Lemma8}
\end{align}
for any $(s,a)\in \cS\times \cA$, 
where the penalty term is taken to be  
\begin{align}
	b^{s,u}(s, a; V)
	= \min \Bigg\{ \max \bigg\{\sqrt{\frac{\cb\log\frac{N}{(1-\gamma)\delta}}{N(s, a)}\mathsf{Var}_{P^{s,u} (\cdot \mymid s, a)} (V )} ,\,\frac{2\cb \log\frac{N}{(1-\gamma)\delta}}{(1-\gamma)N(s, a)} \bigg\} ,\, \frac{1}{1-\gamma}  \Bigg\} +\frac{5}{N}.
\end{align}

\paragraph{Step 2: the correspondence between the empirical MDP and auxiliary MDP.} Taking 
\begin{align}
	u^{\star} = (1-\gamma) \widehat{V}_{\mathsf{pe}}^{\star}(s) + \min \left\{ \frac{2\cb \log\frac{N}{(1-\gamma)\delta}}{(1-\gamma)\max_a N(s,a)}, \frac{1}{1-\gamma}\right\} 
	+\frac{5}{N},
	\label{eq:u-star-choice-inf-123}
\end{align}
we claim that there exists a fixed point $\widehat{Q}^{\star}_{s,u^{\star}}$  of $\Tpess^{s,u^{\star}}$ whose corresponding value function $\widehat{V}_{s,u^{\star}}^{\star}$ coincides with $\widehat{V}_{\mathsf{pe}}^{\star}$. 
To justify this, it suffices to verify the following properties: 
\begin{itemize}
	\item Consider any $a\in \cA$. Given that $P^{s,u}(\cdot \mymid s,a)$ only has a single non-zero entry (equal to 1), it is easily seen that 
		$ \mathsf{Var}_{P^{s,u} (\cdot \mymid s, a)} (V ) = 0$ holds for any $V$ and any $u$, thus indicating that 
		\begin{align}
			b^{s,u}(s, a; V) = \min \Bigg\{ \frac{2\cb \log\frac{N }{(1-\gamma)\delta}}{(1-\gamma)N(s, a)} ,\, \frac{1}{1-\gamma}  \Bigg\} +\frac{5}{N} .
			\label{eq:defn-bsu-sa-inf}
		\end{align}
		Consequently, for state $s$, one has
		\begin{align}
			\max_a\Big\{ r^{s,u^{\star}}(s,a)-b^{s,u^{\star}}\big(s,a; \widehat{V}_{\mathsf{pe}}^{\star}\big)+\gamma\big\langle P^{s,u^{\star}}(\cdot\mymid s,a),\,\,\widehat{V}_{\mathsf{pe}}^{\star}\big\rangle\Big\}
			&=\max_a\Big\{u^{\star}-b^{s,u^{\star}}\big(s,a;\widehat{V}_{\mathsf{pe}}^{\star}\big)+\gamma\widehat{V}_{\mathsf{pe}}^{\star}(s)\Big\} \notag\\
			&=u^{\star}-\min_a b^{s,u^{\star}}\big(s,a;\widehat{V}_{\mathsf{pe}}^{\star}\big)+\gamma\widehat{V}_{\mathsf{pe}}^{\star}(s)\notag\\
			&=(1-\gamma)\widehat{V}_{\mathsf{pe}}^{\star}(s)+\gamma\widehat{V}_{\mathsf{pe}}^{\star}(s) \notag\\
			&=\widehat{V}_{\mathsf{pe}}^{\star}(s), 
			\label{eq:Bellman-su-first-s-inf}
		\end{align}
		where the third identity makes use of our choice \eqref{eq:u-star-choice-inf-123} of $u^{\star}$ and \eqref{eq:defn-bsu-sa-inf}.

	\item Next, consider any $s'\neq s$ and any $a\in \cA$. We make the observation that
		\begin{align}
			&\max\Big\{ r^{s,u^{\star}}(s',a)-b^{s,u^{\star}}\big(s',a; \widehat{V}_{\mathsf{pe}}^{\star}\big)+\gamma\big\langle P^{s,u^{\star}}(\cdot\mymid s',a),\,\,\widehat{V}_{\mathsf{pe}}^{\star}\big\rangle, \,0\Big\} \notag\\
			& \qquad =\max\Big\{ r(s',a)-b\big(s',a; \widehat{V}_{\mathsf{pe}}^{\star}\big)+\gamma\big\langle\widehat{P}(\cdot\mymid s',a),\,\widehat{V}_{\mathsf{pe}}^{\star}\big\rangle, \,0\Big\}
			= \widehat{Q}_{\mathsf{pe}}^{\star}(s',a),
			\label{eq:Bellman-su-non-s-a-inf}
		\end{align}
		where the last relation holds since $\widehat{Q}_{\mathsf{pe}}^{\star}$ is a fixed point of $\Tpess$. 
\end{itemize}
%
Armed with \eqref{eq:Bellman-su-first-s-inf} and \eqref{eq:Bellman-su-non-s-a-inf}, 
we see that $\widehat{Q}^{\star}_{s,u^{\star}} = \Tpess^{s,u^{\star}}\big( \widehat{Q}^{\star}_{s,u^{\star}} \big)$  by taking 
\begin{align*}
	\max_{a\in \cA} \widehat{Q}_{s,u^{\star}}^{\star}(s,a) & =\widehat{V}_{\mathsf{pe}}^{\star}(s),\qquad &&\\
	\widehat{Q}_{s,u^{\star}}^{\star}(s',a) & =\widehat{Q}_{\mathsf{pe}}^{\star}(s',a)\qquad &&\text{for all }s'\neq s\text{ and }a\in\mathcal{A}.
\end{align*}
This readily confirms the existence of a fixed point of $\Tpess^{s,u^{\star}}$ whose corresponding value coincides with  $\widehat{V}_{\mathsf{pe}}^{\star}$.

\paragraph{Step 3: building an $\epsilon$-net.}
Consider any $(s,a)\in \cS\times \cA$ with $N(s,a)>0$. Construct a set $\mathcal{U}_{\text{cover}}$ as follows  
\begin{align}
	\mathcal{U}_{\text{cover}} \coloneqq \left\{ \frac{i}{N} \mid 1\leq i\leq N u_{\max} \right\},  
	\label{eq:defn-Ucover-infinite-Bernstein}
\end{align}
with $u_{\max} =  \min \Big\{ \frac{2\cb \log\frac{N }{(1-\gamma)\delta}}{(1-\gamma)N(s, a)} ,\, \frac{1}{1-\gamma}  \Big\} +\frac{5}{N} + 1 $. 
This can be viewed as the $\epsilon$-net \citep{vershynin2018high} of the range $[0, u_{\max}]\subseteq \big[0, \frac{2}{1-\gamma}\big]$ with $\epsilon = 1 / N$. 
Let us construct an auxiliary MDP $\widehat{\mathcal{M}}^{s, u}$ as in Step 1 for each $u\in \mathcal{U}_{\text{cover}}$. 
Repeating the argument in the proof of Lemma~\ref{lem:contraction} (see Section~\ref{sec:proof-lemma:contraction}), 
we can easily show that there exists a unique fixed point $\widehat{Q}^{\star}_{s,u}$ of $\widehat{\mathcal{M}}^{s, u}$, 
which also obeys
$0\leq \widehat{Q}^{\star}_{s,u}  \leq \frac{1}{1-\gamma} \cdot 1$. 
In what follows, we denote by $\widehat{V}^{\star}_{s,u}$ the corresponding value function of $\widehat{Q}^{\star}_{s,u}$.

Recognizing that $\widehat{\mathcal{M}}^{s, u}$ is statistically independent from $\widehat{P}_{s,a}$ for any $u\in \mathcal{U}_{\text{cover}}$ (by construction),  
we can apply Lemma~\ref{lem:Bernstein-infinite-proof} in conjunction with the union bound (over all $u\in \mathcal{U}_{\mathsf{cover}}$) to show that, 
with probability exceeding $1-\delta$, 
\begin{subequations}
\label{eq:Hoeffding-infinite-net-Bernstein-appendix}
\begin{align} 
\Big|\big(\widehat{P}_{s,a}-P_{s,a}\big)\widehat{V}_{s,u}^{\star}\Big| & \le\sqrt{\frac{48\log\frac{8N^{2}}{(1-\gamma)\delta}}{N(s,a)}\mathsf{Var}_{\widehat{P}_{s,a}}\big(\widehat{V}_{s,u}^{\star}\big)}+\frac{48\log\frac{8N^{2}}{(1-\gamma)\delta}}{(1-\gamma)N(s,a)},\\
\mathsf{Var}_{\widehat{P}_{s,a}}\big(\widehat{V}_{s,u}^{\star}\big) & \le2\mathsf{Var}_{P_{s,a}}\big(\widehat{V}_{s,u}^{\star}\big)+\frac{5\log\frac{8N^{2}}{(1-\gamma)\delta}}{3(1-\gamma)^{2}N(s,a)} 
\end{align}
\end{subequations}
hold simultaneously for all $u\in \mathcal{U}_{\text{cover}}$.  
Clearly, the total number of $(s,a)$ pairs with $N(s,a)>0$ cannot exceed $N$. Thus, taking the union bound over all these pairs yield that, 
with probability at least $1-\delta$, 
\begin{subequations}
\label{eq:Hoeffding-infinite-net-Bernstein-all}
\begin{align}
\Big|\big(\widehat{P}_{s,a}-P_{s,a}\big)\widehat{V}_{s,u}^{\star}\Big| & \le\sqrt{\frac{48\log\frac{8N^{3}}{(1-\gamma)\delta}}{N(s,a)}\mathsf{Var}_{\widehat{P}_{s,a}}\big(\widehat{V}_{s,u}^{\star}\big)}+\frac{48\log\frac{8N^{3}}{(1-\gamma)\delta}}{(1-\gamma)N(s,a)}, \label{eq:Hoeffding-infinite-net-Bernstein-all-1}\\
\mathsf{Var}_{\widehat{P}_{s,a}}\big(\widehat{V}_{s,u}^{\star}\big) & \le 2\mathsf{Var}_{P_{s,a}}\big(\widehat{V}_{s,u}^{\star}\big)+\frac{5\log\frac{8N^{3}}{(1-\gamma)\delta}}{3(1-\gamma)^{2}N(s,a)} 
\label{eq:Hoeffding-infinite-net-Bernstein-all-2}
\end{align}
\end{subequations}
hold simultaneously for all $(s,a,u)\in \cS\times \cA\times \mathcal{U}_{\text{cover}}$ obeying $N(s,a)>0$.

\paragraph{Step 4: a covering argument.} 
In this step, we shall work on the high-probability event \eqref{eq:Hoeffding-infinite-net-Bernstein-appendix} that holds simultaneously for all $u\in \mathcal{U}_{\text{cover}}$. 
Given that  $\widehat{V}^{\star}_{\mathsf{pe}}$ satisfies the trivial bound $0 \le \widehat{V}^{\star}_{\mathsf{pe}}(s) \le \frac{1}{1-\gamma}$ for all $s\in \cS$, 
one can find some $u_0 \in \mathcal{U}_{\text{cover}}$ such that $| u_0 - u^{\star} | \leq 1/N$,  
where we recall the choice of $u^{\star}$ in \eqref{eq:u-star-choice-inf-123}.  
From the definition of the MDP $\widehat{\mathcal{M}}^{s, u}$ and the operator  \eqref{eq:empirical-Bellman-infinite-Lemma8}, 
it is readily seen that
\[
	\big\| \Tpess^{s,u_0}(Q) - \Tpess^{s,u^{\star}}(Q) \big\|_{\infty} \leq \big| u_0 - u^{\star} \big| \leq \frac{1}{N}
\]
holds for any $Q\in \mathbb{R}^{SA}$.  Consequently, we can use $\gamma$-contraction of the operator to obtain  
\begin{align*}
\big\|\widehat{Q}_{s,u_{0}}^{\star}-\widehat{Q}_{s,u^{\star}}^{\star}\big\|_{\infty} & =\Big\|\Tpess^{s,u_{0}}\big(\widehat{Q}_{s,u_{0}}^{\star}\big)-\Tpess^{s,u^{\star}}\big(\widehat{Q}_{s,u^{\star}}^{\star}\big)\Big\|_{\infty}\\
 & \leq\Big\|\Tpess^{s,u^{\star}}\big(\widehat{Q}_{s,u_{0}}^{\star}\big)-\Tpess^{s,u^{\star}}\big(\widehat{Q}_{s,u^{\star}}^{\star}\big)\Big\|_{\infty}+\Big\|\Tpess^{s,u_{0}}\big(\widehat{Q}_{s,u_{0}}^{\star}\big)-\Tpess^{s,u^{\star}}\big(\widehat{Q}_{s,u_{0}}^{\star}\big)\Big\|_{\infty}\\
 & \leq\gamma\big\|\widehat{Q}_{s,u_{0}}^{\star}-\widehat{Q}_{s,u^{\star}}^{\star}\big\|_{\infty}+\frac{1}{N},
\end{align*}
which implies that
\[
	\big\|\widehat{Q}_{s,u_{0}}^{\star}-\widehat{Q}_{s,u^{\star}}^{\star}\big\|_{\infty} \leq \frac{1}{(1-\gamma)N} 
\]
%
%
%
and therefore
\[
	\big\|\widehat{V}_{s,u_{0}}^{\star}-\widehat{V}_{s,u^{\star}}^{\star}\big\|_{\infty} 
	\leq \big\|\widehat{Q}_{s,u_{0}}^{\star}-\widehat{Q}_{s,u^{\star}}^{\star}\big\|_{\infty} \leq \frac{1}{(1-\gamma)N}. 
\]
This in turn allows us to demonstrate that
\begin{align*}
	& \mathsf{Var}_{P_{s,a}}\big(\widehat{V}_{s,u_{0}}^{\star}\big)-\mathsf{Var}_{P_{s,a}}\big(\widehat{V}_{s,u^{\star}}^{\star}\big) \nonumber \\
 & \qquad = P_{s,a}\Big(\big(\widehat{V}_{s,u_{0}}^{\star} - P_{s,a}\widehat{V}_{s,u_{0}}^{\star}\big) \circ\big(\widehat{V}_{s,u_{0}}^{\star} - P_{s,a}\widehat{V}_{s,u_{0}}^{\star}\big) - \big(\widehat{V}_{s,u^{\star}}^{\star} - P_{s,a}\widehat{V}_{s,u^{\star}}^{\star}\big) \circ\big(\widehat{V}_{s,u^{\star}}^{\star} - P_{s,a}\widehat{V}_{s,u^{\star}}^{\star}\big)\Big)\nonumber \\
 & \qquad \leq P_{s,a}\Big(\big(\widehat{V}_{s,u_{0}}^{\star} - P_{s,a}\widehat{V}_{s,u^{\star}}^{\star}\big) \circ\big(\widehat{V}_{s,u_{0}}^{\star} - P_{s,a}\widehat{V}_{s,u^{\star}}^{\star}\big) - \big(\widehat{V}_{s,u^{\star}}^{\star} - P_{s,a}\widehat{V}_{s,u^{\star}}^{\star}\big) \circ\big(\widehat{V}_{s,u^{\star}}^{\star} - P_{s,a}\widehat{V}_{s,u^{\star}}^{\star}\big)\Big) \nonumber \\
 & \qquad \leq P_{s,a}\Big(\big(\widehat{V}_{s,u_{0}}^{\star} - P_{s,a}\widehat{V}_{s,u^{\star}}^{\star} + \widehat{V}_{s,u^{\star}}^{\star} - P_{s,a}\widehat{V}_{s,u^{\star}}^{\star}\big) \circ\big(\widehat{V}_{s,u_{0}}^{\star} - \widehat{V}_{s,u^{\star}}^{\star}\big)\Big) \nonumber \\
 & \qquad \leq \frac{2}{1-\gamma}\Big|P_{s,a}\big(\widehat{V}_{s,u_{0}}^{\star}-\widehat{V}_{s,u^{\star}}^{\star}\big)\Big|\leq\frac{2}{1-\gamma}\big\|\widehat{V}_{s,u_{0}}^{\star}-\widehat{V}_{s,u^{\star}}^{\star}\big\|_{\infty}\leq\frac{2}{(1-\gamma)^{2}N} ,
\end{align*}
where the third line comes from the fact that $\mathbb{E}[X] = \arg\min_c \mathbb{E}[(X-c)^2]$, and the last line relies on the property $0 \le \widehat{V}_{s,u_{0}}^{\star}, \widehat{V}_{s,u^{\star}}^{\star} \leq \frac{1}{1-\gamma}$. 
In addition, by swapping $\widehat{V}_{s,u_{0}}^{\star}$ and $\widehat{V}_{s,u^{\star}}^{\star}$, we can derive
\begin{align*}
\mathsf{Var}_{P_{s,a}}\big(\widehat{V}_{s,u^{\star}}^{\star}\big) - \mathsf{Var}_{P_{s,a}}\big(\widehat{V}_{s,u_{0}}^{\star}\big) \leq \frac{2}{(1-\gamma)^{2}N},
\end{align*}
and then
\begin{align}
\Big|\mathsf{Var}_{P_{s,a}}\big(\widehat{V}_{s,u_{0}}^{\star}\big)-\mathsf{Var}_{P_{s,a}}\big(\widehat{V}_{s,u^{\star}}^{\star}\big)\Big| \leq\frac{2}{(1-\gamma)^{2}N} .
	\label{eq:Var-perturbation-V-su-inf}
\end{align}
Clearly, this bound \eqref{eq:Var-perturbation-V-su-inf} continues to be valid if we replace $P_{s,a}$ with $\widehat{P}_{s,a}$. 
%

With the above perturbation bounds in mind, we can invoke the triangle inequality and \eqref{eq:Hoeffding-infinite-net-Bernstein-all-1} to reach
\begin{align}
\Big|\big(\widehat{P}_{s,a}-P_{s,a}\big)\widehat{V}_{\mathsf{pe}}^{\star}\Big| & =\Big|\big(\widehat{P}_{s,a}-P_{s,a}\big)\widehat{V}_{s,u^{\star}}^{\star}\Big|\leq\Big|\big(\widehat{P}_{s,a}-P_{s,a}\big)\widehat{V}_{s,u_{0}}^{\star}\Big|+\Big|\big(\widehat{P}_{s,a}-P_{s,a}\big)\big(\widehat{V}_{s,u^{\star}}^{\star}-\widehat{V}_{s,u_{0}}^{\star}\big)\Big| \notag\\
 & \leq\Big|\big(\widehat{P}_{s,a}-P_{s,a}\big)\widehat{V}_{s,u_{0}}^{\star}\Big|+\frac{2}{N(1-\gamma)} \notag\\
 & \le\sqrt{\frac{48\log\frac{8N^{3}}{(1-\gamma)\delta}}{N(s,a)}\mathsf{Var}_{\widehat{P}_{s,a}}\big(\widehat{V}_{s,u_{0}}^{\star}\big)}+\frac{48\log\frac{8N^{3}}{(1-\gamma)\delta}}{(1-\gamma)N(s,a)}+\frac{2}{N(1-\gamma)} \notag\\
 & \leq\sqrt{\frac{48\log\frac{8N^{3}}{(1-\gamma)\delta}}{N(s,a)}\mathsf{Var}_{\widehat{P}_{s,a}}\big(\widehat{V}_{s,u^{\star}}^{\star}\big)}+\sqrt{\frac{96\log\frac{8N^{3}}{(1-\gamma)\delta}}{(1-\gamma)^{2}N(s,a)}}+\frac{48\log\frac{8N^{3}}{(1-\gamma)\delta}}{(1-\gamma)N(s,a)}+\frac{2}{N(1-\gamma)} \notag\\
 & \leq\sqrt{\frac{48\log\frac{8N^{3}}{(1-\gamma)\delta}}{N(s,a)}\mathsf{Var}_{\widehat{P}_{s,a}}\big(\widehat{V}_{s,u^{\star}}^{\star}\big)}+\frac{60\log\frac{8N^{3}}{(1-\gamma)\delta}}{(1-\gamma)N(s,a)},
	\label{eq:Phat-sa-Vsa-Vpe-bound-proof-123}
\end{align}
where the second line holds since 
\[
\Big|\big(\widehat{P}_{s,a}-P_{s,a}\big)\big(\widehat{V}_{s,u^{\star}}^{\star}-\widehat{V}_{s,u_{0}}^{\star}\big)\Big|\leq\big(\big\|\widehat{P}_{s,a}\big\|_{1}+\big\| P_{s,a}\big\|_{1}\big)\big\|\widehat{V}_{s,u^{\star}}^{\star}-\widehat{V}_{s,u_{0}}^{\star}\big\|_{\infty}\leq\frac{2}{N(1-\gamma)},
\]
the penultimate line is valid due to \eqref{eq:Var-perturbation-V-su-inf}, and the last line holds true under the conditions that $T\geq N(s,a)$ and that $T$ is sufficiently large. Moreover, apply \eqref{eq:Hoeffding-infinite-net-Bernstein-all-2} and the triangle inequality to arrive at
\begin{align}
\mathsf{Var}_{\widehat{P}_{s,a}}\big(\widehat{V}_{\mathsf{pe}}^{\star}\big) & =\mathsf{Var}_{\widehat{P}_{s,a}}\big(\widehat{V}_{s,u^{\star}}^{\star}\big)\leq\mathsf{Var}_{\widehat{P}_{s,a}}\big(\widehat{V}_{s,u_{0}}^{\star}\big)+\Big|\mathsf{Var}_{\widehat{P}_{s,a}}\big(\widehat{V}_{s,u^{\star}}^{\star}\big)-\mathsf{Var}_{\widehat{P}_{s,a}}\big(\widehat{V}_{s,u_{0}}^{\star}\big)\Big| \notag\\
 & \overset{(\mathrm{i})}{\leq}2\mathsf{Var}_{P_{s,a}}\big(\widehat{V}_{s,u_{0}}^{\star}\big)+\frac{5\log\frac{8N^{3}}{(1-\gamma)\delta}}{3(1-\gamma)^{2}N(s,a)}+\frac{2}{(1-\gamma)^{2}N} \notag\\
 & \leq2\mathsf{Var}_{P_{s,a}}\big(\widehat{V}_{s,u^{\star}}^{\star}\big)+2\Big|\mathsf{Var}_{P_{s,a}}\big(\widehat{V}_{s,u^{\star}}^{\star}\big)-\mathsf{Var}_{P_{s,a}}\big(\widehat{V}_{s,u_{0}}^{\star}\big)\Big|+\frac{5\log\frac{8N^{3}}{(1-\gamma)\delta}}{3(1-\gamma)^{2}N(s,a)}+\frac{2}{(1-\gamma)^{2}N} \notag\\
 & \overset{(\mathrm{ii})}{\leq}2\mathsf{Var}_{P_{s,a}}\big(\widehat{V}_{\mathsf{pe}}^{\star}\big)+\frac{6}{(1-\gamma)^{2}N}+\frac{5\log\frac{8N^{3}}{(1-\gamma)\delta}}{3(1-\gamma)^{2}N(s,a)} \notag\\
 & \leq2\mathsf{Var}_{P_{s,a}}\big(\widehat{V}_{\mathsf{pe}}^{\star}\big)+\frac{23\log\frac{8N^{3}}{(1-\gamma)\delta}}{3(1-\gamma)^{2}N(s,a)},
	\label{eq:Var-Vpe-hat-diff-proof-135}
\end{align}
where (i) arise from \eqref{eq:Hoeffding-infinite-net-Bernstein-all-2} and \eqref{eq:Var-perturbation-V-su-inf}, (ii) follows from \eqref{eq:Var-perturbation-V-su-inf}, and the last line holds true since $N\geq N(s,a)$.

\paragraph{Step 5: extending the bounds to $\widetilde{V}$.} 
Consider any $\widetilde{V}$ obeying $\|\widetilde{V} - \widehat{V}_{\mathsf{pe}}^{\star} \|_{\infty}\leq \frac{1}{N}$ and $\|\widetilde{V}\|_{\infty} \leq \frac{1}{1-\gamma}$. 
Invoke \eqref{eq:Phat-sa-Vsa-Vpe-bound-proof-123} and the triangle inequality to arrive at 
\begin{align}
\Big|\big(\widehat{P}_{s,a}-P_{s,a}\big)\widetilde{V}\Big| & \leq\Big|\big(\widehat{P}_{s,a}-P_{s,a}\big)\widehat{V}_{\mathsf{pe}}^{\star}\Big|+\Big|\big(\widehat{P}_{s,a}-P_{s,a}\big)\big(\widehat{V}_{\mathsf{pe}}^{\star}-\widetilde{V}\big)\Big| \notag\\
 & \leq\sqrt{\frac{48\log\frac{8N^{3}}{(1-\gamma)\delta}}{N(s,a)}\mathsf{Var}_{\widehat{P}_{s,a}}\big(\widehat{V}_{s,u^{\star}}^{\star}\big)}+\frac{60\log\frac{2N}{(1-\gamma)\delta}}{(1-\gamma)N(s,a)}+\frac{2}{N}, \notag\\
 & \leq 12\sqrt{\frac{\log\frac{2N}{(1-\gamma)\delta}}{N(s,a)}\mathsf{Var}_{\widehat{P}_{s,a}}\big(\widehat{V}_{s,u^{\star}}^{\star}\big)}+\frac{62\log\frac{2N}{(1-\gamma)\delta}}{(1-\gamma)N(s,a)} \notag\\
 & = 12\sqrt{\frac{\log\frac{2N}{(1-\gamma)\delta}}{N(s,a)}\mathsf{Var}_{\widehat{P}_{s,a}}\big(\widehat{V}_{\mathsf{pe}}^{\star}\big)}+\frac{62\log\frac{2N}{(1-\gamma)\delta}}{(1-\gamma)N(s,a)}, 
	\label{eq:Psa-hat-perturb-13579}
\end{align}
where the penultimate inequality relies on $N\geq N(s,a)$, and the second line holds since 
\[
\Big|\big(\widehat{P}_{s,a}-P_{s,a}\big)\big(\widehat{V}_{\mathsf{pe}}^{\star}-\widetilde{V}\big)\Big|\leq\big(\big\|\widehat{P}_{s,a}\big\|_{1}+\big\| P_{s,a}\big\|_{1}\big)\big\|\widehat{V}_{\mathsf{pe}}^{\star}-\widetilde{V}\big\|_{\infty}\leq\frac{2}{N}.
\]
Given that $\big\|\widetilde{V}-\widehat{V}_{\mathsf{pe}}^{\star} \big\|_{\infty}\leq 1/N$, 
we can repeat the argument for \eqref{eq:Var-perturbation-V-su-inf} allows one to demonstrate that
\[
\Big|\mathsf{Var}_{\widehat{P}_{s,a}}\big(\widehat{V}_{\mathsf{pe}}^{\star}\big)-\mathsf{Var}_{\widehat{P}_{s,a}}\big(\widetilde{V}\big)\Big|\leq\frac{2}{(1-\gamma)^{2}N}
\]
which taken together with \eqref{eq:Psa-hat-perturb-13579} and the basic inequality $\sqrt{x+y}\leq \sqrt{x}+\sqrt{y}$ gives
\begin{align*}
\Big|\big(\widehat{P}_{s,a}-P_{s,a}\big)\widetilde{V}\Big| & \leq12\sqrt{\frac{\log\frac{2N}{(1-\gamma)\delta}}{N(s,a)}\mathsf{Var}_{\widehat{P}_{s,a}}\big(\widetilde{V}\big)}+12\sqrt{\frac{\log\frac{2N}{(1-\gamma)\delta}}{N(s,a)}\cdot\frac{2}{(1-\gamma)^{2}N}}+\frac{62\log\frac{2N}{(1-\gamma)\delta}}{(1-\gamma)N(s,a)}\\
 & \leq12\sqrt{\frac{\log\frac{2N}{(1-\gamma)\delta}}{N(s,a)}\mathsf{Var}_{\widehat{P}_{s,a}}\big(\widetilde{V}\big)}+\frac{74\log\frac{2N}{(1-\gamma)\delta}}{(1-\gamma)N(s,a)}. 
\end{align*}
Additionally, repeating the argument for \eqref{eq:Var-Vpe-hat-diff-proof-135} leads to another desired inequality: 
\begin{align*}
\mathsf{Var}_{\widehat{P}_{s,a}}\big(\widetilde{V}\big) & \leq2\mathsf{Var}_{P_{s,a}}\big(\widetilde{V}\big)+\frac{6}{(1-\gamma)N}+\frac{23\log\frac{8N^{3}}{(1-\gamma)\delta}}{3(1-\gamma)^{2}N(s,a)}\\
 & \leq2\mathsf{Var}_{P_{s,a}}\big(\widetilde{V}\big)+\frac{41\log\frac{2N}{(1-\gamma)\delta}}{(1-\gamma)^{2}N(s,a)} .
\end{align*}

\subsubsection{Proof of Lemma~\ref{lem:Bernstein-infinite-proof}}
\label{sec:lem:Bernstein-infinite-proof}

In this proof, we shalle often use $\mathsf{Var}_{s, a}$ to abbreviate $\mathsf{Var}_{P_{s, a}}$ for notational simplicity.
Before proceeding, let us define the following vector
\begin{align}
	\overline{V} = V - (P_{s, a}V) \one , 
\end{align}
with $\one$ denoting the all-one vector. It is clearly seen that
\begin{align}
P_{s,a}\big(\overline{V}\circ\overline{V}\big) & =P_{s,a}\big(V\circ V\big)-\big(P_{s,a}V\big)^{2}=\mathsf{Var}_{s,a}(V).
\end{align}
In addition, we make note of the following basic facts that will prove useful:
\begin{subequations}
\label{eq:V-Vbar-UB-infinite}
\begin{align}
	\| V \|_{\infty} &\leq \frac{1}{1-\gamma}, \qquad \| \overline{V} \|_{\infty} \leq \frac{1}{1-\gamma} ,
	\qquad  \| \overline{V} \circ \overline{V} \|_{\infty} \leq \| \overline{V} \|_{\infty}^2 \leq H^2 , \\
	\mathsf{Var}_{s, a}\big(\overline{V} \circ \overline{V}\big) &\le P_{s, a}\big(\overline{V} \circ \overline{V} \circ \overline{V} \circ \overline{V}\big) 
	\le \frac{1}{(1-\gamma)^2} P_{s, a}\big(\overline{V} \circ \overline{V}\big) 
	= \frac{1}{(1-\gamma)^2} \mathsf{Var}_{s, a}\big(V\big). 
\end{align}
\end{subequations}

\paragraph{Proof of inequality~\eqref{eq:bonus-Bernstein-infinite-proof}.}
If $0< N(s,a) < 48 \log \frac{N}{\delta}$, then we can immediately see that 
\begin{align}
	\left|\big(\widehat{P}_{s, a} - P_{s, a}\big)V\right| 
	\leq \|V\|_{\infty} \leq \frac{1}{1-\gamma} \leq \frac{48 \log\frac{N}{\delta}}{(1-\gamma) N(s,a)},
\end{align}
and hence the claim~\eqref{eq:bonus-Bernstein-infinite-proof} is valid. As a result, it suffices to focus on the case where
\begin{align}
	N(s,a) \geq  48 \log \frac{N}{\delta}.
	\label{eq:Nhsa-lower-bound-infinite}
\end{align}
%

%
Note that the total number of pairs $(s,a)$ with nonzero $N(s,a)$ cannot exceed $N$. 
Akin to \eqref{eq:deviation-Naux-h-s}, taking the Bernstein inequality together with \eqref{eq:V-Vbar-UB-infinite} and invoking the union bound,  we can demonstrate that with probability at least $1-4\delta$, 
\begin{subequations}
	\label{eq:Bernstein-infinite}
\begin{align}
	\left|\big(\widehat{P}_{s, a} - P_{s, a}\big)V\right| 
	&\le \sqrt{\frac{4\mathsf{Var}_{s, a}(V)\log\frac{N}{\delta}}{N(s, a)}} + \frac{2\|V\|_{\infty}\log\frac{N}{\delta}}{3N(s, a)} \notag\\
	&\le \sqrt{\frac{4\mathsf{Var}_{s, a}(V)\log\frac{N}{\delta}}{N(s, a)}} + \frac{2\log\frac{N}{\delta}}{3(1-\gamma)N(s, a)}  \label{eq:Bernstein-infinite-V}\\
	\left|\big(P_{s, a} - \widehat{P}_{s, a}\big)\big(\overline{V} \circ \overline{V}\big)\right| 
	&\le \sqrt{\frac{4\mathsf{Var}_{s, a}\big(\overline{V} \circ \overline{V}\big)\log\frac{N}{\delta}}{N(s, a)}} + \frac{2\| \overline{V} \circ \overline{V} \|_{\infty} \log\frac{N}{\delta}}{3N(s, a)} \nonumber\\
	&\le \sqrt{\frac{4\mathsf{Var}_{s, a}(V)\log\frac{N}{\delta}}{(1-\gamma)^2N(s, a)}} + \frac{2\log\frac{N}{\delta}}{3(1-\gamma)^2N(s, a)}	
	\label{eq:Bernstein-infinite-V2}
\end{align}
\end{subequations}
hold simultaneously over all $(s,a)$ with  $N(s,a)>0$. 
Note, however, that the Bernstein bounds in \eqref{eq:Bernstein-infinite} involve the variance $\mathsf{Var}_{s,a}(V)$; 
we still need to connect $\mathsf{Var}_{s,a}(V)$ with its empirical estimate $\mathsf{Var}_{\widehat{P}_{s,a}}(V)$.

In the sequel, let us look at two cases separately. 
\begin{itemize}
\item \emph{Case 1: $\mathsf{Var}_{s, a}(V) \le \frac{9\log\frac{N}{\delta}}{(1-\gamma)^2N(s, a)}$.} 
	In this case, our bound \eqref{eq:Bernstein-infinite-V} immediately leads to
\begin{align}
	\left|\big(\widehat{P}_{s, a} - P_{s, a}\big)V\right| \le \frac{7\log\frac{N}{\delta}}{(1-\gamma)N(s, a)}.
\end{align}

\item \emph{Case 2: $\mathsf{Var}_{s, a}(V) > \frac{9\log\frac{N}{\delta}}{(1-\gamma)^2N(s, a)}$.} 
We first single out the following useful identity:
\begin{align}
	& \widehat{P}_{s,a}\big(\overline{V}\circ\overline{V}\big)- \mathsf{Var}_{\widehat{P}_{s,a}}(V) = 
	\widehat{P}_{s,a}\big(\overline{V}\circ\overline{V}\big)-\left[\widehat{P}_{s,a}\big(V\circ V\big)-\big(\widehat{P}_{s,a}V\big)^{2}\right] \notag\\
	& \qquad =\widehat{P}_{s,a}\big(V\circ V\big)-2\big(\widehat{P}_{s,a}V\big)\big(P_{s,a}V\big)+\big(P_{s,a}V\big)^{2}-\left[\widehat{P}_{s,a}\big(V\circ V\big)-\big(\widehat{P}_{s,a}V\big)^{2}\right] \notag\\
 	&\qquad =\big|\big(\widehat{P}_{s,a}-P_{s,a}\big)V\big|^{2}. 
	\label{eq:var-empirical-infinite}
\end{align}
Combining~\eqref{eq:var-empirical-infinite} with \eqref{eq:Bernstein-infinite-V2} then implies that, with probability exceeding $1-4\delta$, 
\begin{align}
 & \mathsf{Var}_{s,a}(V)=P_{s,a}\big(\overline{V}\circ\overline{V}\big)=\big(P_{s,a}-\widehat{P}_{s,a}\big)\big(\overline{V}\circ\overline{V}\big)+\widehat{P}_{s,a}\big(\overline{V}\circ\overline{V}\big)\nonumber\\
	& =\big(P_{s,a}-\widehat{P}_{s,a}\big)\big(\overline{V}\circ\overline{V}\big)+\left\{ \big|\big(\widehat{P}_{s,a}-P_{s,a}\big)V\big|^{2}+\mathsf{Var}_{\widehat{P}_{s,a}}(V)\right\} \label{eq:Var-hsa-Var-hat-connection}\\
 & \le\sqrt{\frac{4\log\frac{N}{\delta}}{(1-\gamma)^2N(s,a)}}\sqrt{\mathsf{Var}_{s,a}(V)}+\big|\big(\widehat{P}_{s,a}-P_{s,a}\big)V\big|^{2}+\mathsf{Var}_{\widehat{P}_{s,a}}(V) +\frac{2\log\frac{N}{\delta}}{3(1-\gamma)^2N(s,a)} \notag\\
 & \leq\frac{2}{3}\mathsf{Var}_{s,a}(V)+\big|\big(\widehat{P}_{s,a}-P_{s,a}\big)V\big|^{2}+\mathsf{Var}_{\widehat{P}_{s,a}}(V) +\frac{2\log\frac{N}{\delta}}{3(1-\gamma)^2N(s,a)} , 
\end{align}
where the second line arises from the identity \eqref{eq:var-empirical-infinite}, 
the penultimate inequality results from \eqref{eq:Bernstein-infinite-V2},
and the last inequality holds true due to the assumption $\mathsf{Var}_{s, a} (V) > \frac{9\log\frac{N}{\delta}}{(1-\gamma)^2N(s, a)}$ in this case. 
Rearranging terms of the above inequality, we are left with
\begin{align*}
\mathsf{Var}_{s,a}(V) & \leq 3\big|\big(\widehat{P}_{s,a}-P_{s,a}\big)V\big|^{2}+3\mathsf{Var}_{\widehat{P}_{s,a}}(V) +\frac{2\log\frac{N}{\delta}}{(1-\gamma)^2N(s,a)} 
\end{align*}
Taking this upper bound on $\mathsf{Var}_{s,a}(V)$ collectively with \eqref{eq:Bernstein-infinite-V} and using a little algebra lead to
\begin{align}
\left|\big(\widehat{P}_{s,a}-P_{s,a}\big)V\right| & \leq\sqrt{\frac{12\log\frac{N}{\delta}}{N(s,a)}}\big|\big(\widehat{P}_{s,a}-P_{s,a}\big)V\big|
	+ \sqrt{\frac{12\mathsf{Var}_{\widehat{P}_{s,a}}(V) \log\frac{N}{\delta}}{N(s,a)}} 
 +\frac{5\log\frac{N}{\delta}}{(1-\gamma)N(s,a)}
	\label{eq:Bernstein-bound-intermediate-135}
\end{align}
with probability at least $1-4\delta$.  
When $N(s,a) \geq 48\log\frac{N}{\delta}$ (cf.~\eqref{eq:Nhsa-lower-bound-infinite}), 
one has $\sqrt{\frac{12\log\frac{N}{\delta}}{N(s,a)}}\leq 1/2$. 
Substituting this into \eqref{eq:Bernstein-bound-intermediate-135} and rearranging terms, we arrive at
\begin{align*}
\left|\big(\widehat{P}_{s,a}-P_{s,a}\big)V\right| & 
 %
\leq \sqrt{\frac{48\mathsf{Var}_{\widehat{P}_{s,a}}(V)\log\frac{N}{\delta}}{N(s,a)}}+\frac{10\log\frac{N}{\delta}}{(1-\gamma)N(s,a)}
\end{align*}

with probability at least $1-4\delta$.

\end{itemize}

Putting the above two cases together establishes the advertised bound \eqref{eq:bonus-Bernstein-infinite-proof}.

\paragraph{Proof of inequality~\eqref{eq:empirical-var-infinite-proof}.}
It follows from \eqref{eq:Var-hsa-Var-hat-connection} and \eqref{eq:Bernstein-infinite-V} that with probability at least $1-4\delta$, 
\begin{align*}
\mathsf{Var}_{s,a} (V) & \ge -\left|\big(P_{s,a}-\widehat{P}_{s,a}\big)\big(\overline{V}\circ\overline{V}\big)\right|+\mathsf{Var}_{\widehat{P}_{s,a}}(V) \nonumber\\
 & \ge-\sqrt{\frac{4\mathsf{Var}_{s,a}\big(V\big)\log\frac{N}{\delta}}{(1-\gamma)^2N(s,a)}}-\frac{2\log\frac{N}{\delta}}{3(1-\gamma)^2N(s,a)}+\mathsf{Var}_{\widehat{P}_{s,a}}(V) ,
\end{align*}
or equivalently,
\begin{align*}
\mathsf{Var}_{\widehat{P}_{s,a}}(V) & \leq\mathsf{Var}_{s,a}(V)+ 2\sqrt{\frac{\mathsf{Var}_{s,a}\big(V\big)\log\frac{N}{\delta}}{(1-\gamma)^2N(s,a)}}+\frac{2\log\frac{N}{\delta}}{3(1-\gamma)^2N(s,a)}.
\end{align*}
Invoke the elementary inequality $2xy\leq x^2+y^2$ to establish the claimed bound: 
\begin{align*}
\mathsf{Var}_{\widehat{P}_{s,a}}(V) & \leq\mathsf{Var}_{s,a}(V)+\left(\mathsf{Var}_{s,a}\big(V\big)+\frac{\log\frac{N}{\delta}}{(1-\gamma)^2N(s,a)}\right)+\frac{2\log\frac{N}{\delta}}{3(1-\gamma)^2N(s,a)}\\
 & =2\mathsf{Var}_{s,a}(V)+\frac{5\log\frac{N}{\delta}}{3(1-\gamma)^2N(s,a)} .
\end{align*}

\section{Proof of auxiliary lemmas: episodic finite-horizon MDPs}

\subsection{Proof of Lemma~\ref{lemma:Ntrim-LB}}
\label{sec:proof-lemma:Ntrim-LB}
(a) Let us begin by proving the claim \eqref{eq:samples-finite-UB}.  Recall from our construction that $\Daux$ is composed of the second half of the sample trajectories, and hence for each $s\in \cS$ and $1\leq h\leq H$, 
$$\Naux_h(s) = \sum_{k=K/2+1}^{K} \mathds{1}\big\{ s_h^k = s \big\} $$ 
can be viewed as the sum of $K/2$ independent Bernoulli random variables, each with mean $\myrho_h(s)$.  
According to the union bound and the Bernstein inequality, we obtain
\begin{align*}
	\mathbb{P}\left\{ \exists (s,h)\in\mathcal{S}\times [H]:\left|\Naux_{h}(s)-\frac{K}{2}\myrho_{h}(s)\right|\geq\tau\right\}  & \leq\sum_{s\in\mathcal{S}, h\in [H]}\mathbb{P}\left\{ \left|\Naux_{h}(s)-\frac{K}{2}\myrho_{h}(s)\right|\geq\tau\right\} \\
& \leq2SH\exp\left(-\frac{\tau^{2}/2}{ v_{s,h} + \tau/3}\right) 
\end{align*}
for any $\tau\geq 0$, where 
$$
	v_{s,h} \coloneqq \frac{K}{2} \mathsf{Var}\big( \mathds{1} \{ s_h^t = s \} \big) 
	= \frac{ K\myrho_h(s) \big(1-\myrho_h(s)\big) }{2} \leq \frac{ K\myrho_h(s) }{2}.
$$
A little algebra then yields that with probability at least $1-2\delta$, one has
\begin{align}
	\left|\Naux_{h}(s)-\frac{K}{2}\myrho_{h}(s)\right| 
	& \leq\sqrt{4v_{s,h}\log\frac{HS}{\delta}}+\frac{2}{3}\log\frac{HS}{\delta} 
  	\leq \sqrt{2K\myrho_{h}(s)\log\frac{HS}{\delta}}+\log\frac{HS}{\delta}
	\label{eq:deviation-Naux-h-s-finite}
\end{align}
simultaneously for all $s\in \cS$ and all $1\leq h\leq H$. 
The same argument also reveals that with probability exceeding $1-2\delta$, 
\begin{align}
	\left|\Nmain_{h}(s)-\frac{K}{2}\myrho_{h}(s)\right|  
  	\leq \sqrt{2K\myrho_{h}(s)\log\frac{HS}{\delta}}+\log\frac{HS}{\delta}
	\label{eq:deviation-Nmain-h-s}
\end{align}
holds simultaneously for all $s\in \cS$ and all $1\leq h\leq H$. 
Combine \eqref{eq:deviation-Naux-h-s-finite} and \eqref{eq:deviation-Nmain-h-s} to show that
\begin{align}
	\left|\Nmain_{h}(s)-\Naux_{h}(s)\right|  
  	\leq 2 \sqrt{2K\myrho_{h}(s)\log\frac{HS}{\delta}}+ 2\log\frac{HS}{\delta}
	\label{eq:deviation-Nmain-Naux-h-s}
\end{align}
for all $s\in \cS$ and all $1\leq h\leq H$.

To establish the claimed result \eqref{eq:samples-finite-UB}, we divide into two cases. 
\begin{itemize}
	\item {\em Case 1: $\Naux_h(s) \leq 100 \log \frac{HS}{\delta}$.} By construction, it is easily seen that
\begin{align}
	\Ntrim_h(s) = \max\left\{\Naux_h(s) - 10 \sqrt{\Naux_h(s)\log\frac{HS}{\delta}},\, 0\right\} = 0 \leq \Nmain_h(s). 
	\label{eq:Ntrim-Nmain-0-trivial}
\end{align}

	\item {\em Case 2: $\Naux_h(s) > 100 \log \frac{HS}{\delta}$.} In this case, invoking \eqref{eq:deviation-Naux-h-s-finite} reveals that
\[
\frac{K}{2}\myrho_{h}(s)+\sqrt{2K\myrho_{h}(s)\log\frac{HS}{\delta}}+\log\frac{HS}{\delta}\geq\Naux_{h}(s)> 100\log\frac{HS}{\delta},
\]
and hence one necessarily has 
\begin{align}
	K\myrho_{h}(s) \geq (9\sqrt{2})^2 \log \frac{HS}{\delta}  \geq 100 \log \frac{HS}{\delta}. \label{eq:K-rhoh-UB-finite}
\end{align}
In turn, this property \eqref{eq:K-rhoh-UB-finite} taken collectively with \eqref{eq:deviation-Naux-h-s} ensures that
\begin{align}
	\Naux_h(s) \geq \frac{K}{2}\myrho_{h}(s)-\sqrt{2K\myrho_{h}(s)\log\frac{HS}{\delta}}-\log\frac{HS}{\delta}\geq\frac{K}{4}\myrho_{h}(s) .
	\label{eq:K-rhoh-UB-finite-2}
\end{align}
Therefore, in the case with $\Naux_h(s) > 100 \log \frac{HS}{\delta}$, 
we can demonstrate that
\begin{align}
	\Ntrim_h(s) &= \max\left\{\Naux_h(s) - 10\sqrt{\Naux_h(s)\log\frac{HS}{\delta}},\, 0\right\}  
	= \Naux_h(s) - 10\sqrt{\Naux_h(s)\log\frac{HS}{\delta}} \notag \\
	& \overset{\mathrm{(i)}}{\leq} \Naux_h(s) - 5\sqrt{K\myrho_{h}(s) \log\frac{HS}{\delta}}
	 \overset{\mathrm{(ii)}}{\leq} \Naux_h(s) - \left\{ 2\sqrt{2K\myrho_{h}(s) \log\frac{HS}{\delta}} + 2\log\frac{HS}{\delta} \right\} \notag\\
	& \overset{\mathrm{(iii)}}{\leq} \Nmain_h(s),
	\label{eq:Ntrim-Nmain-1}
\end{align}
where (i) comes from Condition~\eqref{eq:K-rhoh-UB-finite-2}, (ii) is valid under the condition \eqref{eq:K-rhoh-UB-finite},  
and (iii) holds true with probability at least $1-2\delta$ due to the inequality \eqref{eq:deviation-Nmain-Naux-h-s}. 
\end{itemize}
Putting the above two cases together establishes the claim \eqref{eq:samples-finite-UB}.

\bigskip
\noindent 
(b) We now turn to the second claim \eqref{eq:samples-finite-LB}.  
Towards this, we first claim that the following bound holds simultaneously for all $(s,a,h)\in \cS\times \cA\times [H]$ with probability exceeding $1-2\delta$: 
\begin{align}
	\Ntrim_{h}(s,a)\geq
	\Ntrim_{h}(s)\pi_{h}^{\mathsf{b}}(a\mymid s)-\sqrt{4\Ntrim_{h}(s)\pi_{h}^{\mathsf{b}}(a\mymid s)\log\frac{KH}{\delta}}-\log\frac{KH}{\delta} .
	\label{eq:Ntrim-hsa-LB-finite-123}
\end{align}
Let us take this claim as given for the moment, and return to establish it towards the end of this section.  
We shall discuss the following two cases separately.

%
%

%
\begin{itemize}
	\item If $K \myrho_h(s,a) = K \myrho_h(s) \pi_{h}^{\mathsf{b}}(a\mymid s) > 1600 \log \frac{KH}{\delta}$, 
	 then it follows from \eqref{eq:K-rhoh-UB-finite-2} (with slight modification) that
\begin{align}
	\Naux_h(s) \geq \frac{K}{4}\myrho_{h}(s) \geq 400 \log \frac{KH}{\delta}. 
	\label{eq:Krho-LB-Naux-LB-finite}
\end{align}
This property together with the definition of $\Ntrim_h(s)$ in turn allows us to derive
\begin{align*}
\Ntrim_{h}(s) &  \geq \Naux_{h}(s)-10\sqrt{\Naux_{h}(s)\log\frac{KH}{\delta}}\geq\frac{K}{4}\myrho_{h}(s)-10\sqrt{\frac{K}{4}\myrho_{h}(s)\log\frac{KH}{\delta}}\\
 & \geq\frac{K}{8}\myrho_{h}(s),
\end{align*}
and as a result,
\begin{align*}
	\Ntrim_{h}(s) \pi_h^{\mathsf{b}}(a\mymid s) &\geq \frac{K}{8}\myrho_{h}(s) \pi_h^{\mathsf{b}}(a\mymid s) = \frac{K}{8}\myrho_{h}(s,a) 
	\geq 200 \log \frac{KH}{\delta},
\end{align*}

where the last inequality arises from the assumption of this case.  
Taking this lower bound with \eqref{eq:Ntrim-hsa-LB-finite-123} implies that
\begin{align*}
\Ntrim_{h}(s,a) & \geq\frac{K}{8}\myrho_{h}(s,a)-\sqrt{\frac{K}{2}\myrho_{h}(s,a)\log\frac{KH}{\delta}}-\log\frac{KH}{\delta} \\
 & \geq \frac{K}{8}\myrho_{h}(s,a)- 2\sqrt{K\myrho_{h}(s,a)\log\frac{KH}{\delta}}. 
\end{align*}

	\item If $K \myrho_h(s,a) \leq 1600 \log \frac{KH}{\delta}$, then one can easily verify that
		\[
			\frac{K}{8} \myrho_h(s,a) - 5\sqrt{K \myrho_h(s,a)\log\frac{KH}{\delta}}
			\leq 0 \leq \Ntrim_{h}(s,a). 
		\]

\end{itemize}
%
%
%
Putting these two cases together concludes the proof, provided that the claim \eqref{eq:Ntrim-hsa-LB-finite-123} is valid. 

\paragraph{Proof of inequality \eqref{eq:Ntrim-hsa-LB-finite-123}.} Let us look at two cases separately.
\begin{itemize}
	\item If $\Ntrim_{h}(s)\pi_{h}^{\mathsf{b}}(a\mymid s) \leq 4\log\frac{KH}{\delta}$, then 
		the right-hand side of \eqref{eq:Ntrim-hsa-LB-finite-123} is negative, and hence the claim \eqref{eq:Ntrim-hsa-LB-finite-123} holds trivially. 

	\item We then turn attention to the following set:
		\begin{equation}
			\mathcal{A}_{\mathsf{large}} \coloneqq \bigg\{ (s,a,h)\in \cS\times \cA\times [H] \,\,\Big|\,\, 
			\Ntrim_{h}(s)\pi_{h}^{\mathsf{b}}(a\mymid s) > 4\log\frac{KH}{\delta} \bigg\}.
		\end{equation}
		Recognizing that
		\begin{align*}
			\sum_{(s,a,h)\in \cS\times \cA\times[H]} \Ntrim_{h}(s)\pi_{h}^{\mathsf{b}}(a\mymid s)
			&= \sum_{(s,h)\in \cS \times [H]} \Ntrim_{h}(s) \sum_{a\in \cA} \pi_{h}^{\mathsf{b}}(a\mymid s) 
			= \sum_{(s,h)\in \cS \times [H]} \Ntrim_{h}(s) \notag\\
			&\leq \sum_{(s,h)\in \cS \times [H]} \Naux_{h}(s) = \frac{KH}{2},
		\end{align*}
		we can immediately bound  the cardinality of $\mathcal{A}_{\mathsf{large}}$ as follows: 
		\begin{equation}
			\big| \mathcal{A}_{\mathsf{large}} \big| < \frac{\sum_{(s,a,h)} \Ntrim_{h}(s)\pi_{h}^{\mathsf{b}}(a\mymid s)} { 4\log\frac{KH}{\delta}}
			\leq KH/2.  
			\label{eq:size-Alarge}
		\end{equation}
		Additionally,  it follows from our construction that: conditional on $\Ntrim_h(s)$, $\Nmain_h(s)$ and the high-probability event \eqref{eq:samples-finite-UB}, 
		$\Ntrim_h(s,a)$ can be viewed as the sum of $\min\big\{ \Ntrim_h(s), \Nmain_h(s) \big\}=\Ntrim_h(s)$ 
		independent Bernoulli random variables each with mean $\pi_h(a\mymid s)$. 
		As a result, repeating the Bernstein-type argument in \eqref{eq:deviation-Naux-h-s} on the event \eqref{eq:samples-finite-UB} 
reveals that, with probability at least $1-2\delta/(KH)$,
\begin{align}
	\Ntrim_{h}(s,a)\geq
	\Ntrim_{h}(s)\pi_{h}^{\mathsf{b}}(a\mymid s)-\sqrt{4\Ntrim_{h}(s)\pi_{h}^{\mathsf{b}}(a\mymid s)\log\frac{KH}{\delta}}-\log\frac{KH}{\delta}
	\label{eq:Ntrim-hsa-LB-finite-12345}
\end{align}
		for any fixed triple $(s,a,h)$. Taking the union bound over all $(s,a,h)\in \mathcal{A}_{\mathsf{large}}$ and using the bound \eqref{eq:size-Alarge} imply that
		with probability exceeding $1-\delta$, 
		\eqref{eq:Ntrim-hsa-LB-finite-12345} holds simultaneously for all $(s,a,h)\in \mathcal{A}_{\mathsf{large}}$. 
\end{itemize}
Combining the above two cases allows one to conclude that with probability at least $1- \delta$, 
the advertised property \eqref{eq:Ntrim-hsa-LB-finite-123}
holds simultaneously for all $(s,a,h)\in \cS\times \cA\times [H]$.


\subsection{Proof of the instance-dependent statistical bound~\eqref{eq:instance-optimal}}
\label{sec:proof-instance-optimal}

To establish relation~\eqref{eq:instance-optimal}, we make use of relation (171) as follows: for any $1 \le h \le H$,
\begin{align}
	& \big\langle d_{h}^{\star},V_{h}^{\star}-V_{h}^{\widehat{\pi}}\big\rangle \le 2\sum_{j=h}^{H}\big\langle d_{j}^{\star},b_{j}^{\star}\big\rangle \notag\\
	& \le 2\sum_{j=h}^{H}\sum_s d_j^{\star}(s)\sqrt{\frac{32\cb \log\frac{NH}{\delta}}{K \myrho_j \big(s, \pi_j^{\star}(s) \big)}\mathsf{Var}_{P_{j, s, \pi^{\star}_j(s)}}(\widehat{V}_{j+1})} +  \frac{192\cb H^2 S \Cstar\log\frac{NH}{\delta}}{K } \notag\\
	& \le 12\sum_{j=h}^{H}\sum_s d_j^{\star}(s)\sqrt{\frac{\cb \log\frac{NH}{\delta}}{K \myrho_j \big(s, \pi_j^{\star}(s) \big)}\mathsf{Var}_{P_{j, s, \pi^{\star}_j(s)}}(V_{j+1}^{\star})} +  20\Big(\frac{\cb H^3S C^{\star} \log\frac{NH}{\delta}}{K}\Big)^{3/4}, 
	\label{eq:instance-opt-135}
\end{align}
provided that $K \ge 100\cb HS C^{\star} \log\frac{NH}{\delta}$. 
To see why the last inequality in \eqref{eq:instance-opt-135} holds, it suffices to observe that 
\begin{align}
 & \sum_{s}d_{j}^{\star}(s)\sqrt{\frac{\mathsf{Var}_{P_{j,s,\pi_{j}^{\star}(s)}}(\widehat{V}_{j+1})}{\myrho_{j}\big(s,\pi_{j}^{\star}(s)\big)}}\notag\\
 & \overset{\mathrm{(i)}}{\le}\sum_{s}d_{j}^{\star}(s)\sqrt{\frac{\mathsf{Var}_{P_{j,s,\pi_{j}^{\star}(s)}}(V_{j+1}^{\star})}{\myrho_{j}\big(s,\pi_{j}^{\star}(s)\big)}}+\sum_{s}d_{j}^{\star}(s)\sqrt{\frac{\mathsf{Var}_{P_{j,s,\pi_{j}^{\star}(s)}}(\widehat{V}_{j+1}-V_{j+1}^{\star})}{\myrho_{j}\big(s,\pi_{j}^{\star}(s)\big)}}\notag\\
 & \overset{\mathrm{(ii)}}{\le}\sum_{s}d_{j}^{\star}(s)\sqrt{\frac{\mathsf{Var}_{P_{j,s,\pi_{j}^{\star}(s)}}(V_{j+1}^{\star})}{\myrho_{j}\big(s,\pi_{j}^{\star}(s)\big)}}+\sum_{s}d_{j}^{\star}(s)\sqrt{\frac{H\cdot\big\langle P_{j,s,\pi_{j}^{\star}(s)},\widehat{V}_{j+1}-V_{j+1}^{\star}\big\rangle}{\myrho_{j}\big(s,\pi_{j}^{\star}(s)\big)}}\notag\\
 & \overset{\mathrm{(iii)}}{\le}\sum_{s}d_{j}^{\star}(s)\sqrt{\frac{\mathsf{Var}_{P_{j,s,\pi_{j}^{\star}(s)}}(V_{j+1}^{\star})}{\myrho_{j}\big(s,\pi_{j}^{\star}(s)\big)}}+\sqrt{S}\sqrt{H\sum_{s}\frac{\big(d_{j}^{\star}(s)\big)^{2}}{\myrho_{j}\big(s,\pi_{j}^{\star}(s)\big)}\big\langle P_{j,s,\pi_{j}^{\star}(s)},\widehat{V}_{j+1}-V_{j+1}^{\star}\big\rangle}\notag\\
 & \overset{\mathrm{(iv)}}{\le}\sum_{s}d_{j}^{\star}(s)\sqrt{\frac{\mathsf{Var}_{P_{j,s,\pi_{j}^{\star}(s)}}(V_{j+1}^{\star})}{\myrho_{j}\big(s,\pi_{j}^{\star}(s)\big)}}+\sqrt{HSC^{\star}\sum_{s}d_{j}^{\star}(s)\big\langle P_{j,s,\pi_{j}^{\star}(s)},\widehat{V}_{j+1}-V_{j+1}^{\star}\big\rangle}\notag\\
 & =\sum_{s}d_{j}^{\star}(s)\sqrt{\frac{\mathsf{Var}_{P_{j,s,\pi_{j}^{\star}(s)}}(V_{j+1}^{\star})}{\myrho_{j}\big(s,\pi_{j}^{\star}(s)\big)}}+\sqrt{HSC^{\star}\sum_{s}d_{j+1}^{\star}(s)\big[V_{j+1}^{\star}(s)-\widehat{V}_{j+1}(s)\big]}\notag\\
 & \overset{\mathrm{(v)}}{\le}\sum_{s}d_{j}^{\star}(s)\sqrt{\frac{\mathsf{Var}_{P_{j,s,\pi_{j}^{\star}(s)}}(V_{j+1}^{\star})}{\myrho_{j}\big(s,\pi_{j}^{\star}(s)\big)}}+\sqrt{HSC^{\star}\cdot80\sqrt{\frac{2\cb H^{3}SC^{\star}\log\frac{NH}{\delta}}{K}}}, 
\end{align}
where (i) holds due to the elementary inequality $\sqrt{\mathsf{Var}(X + Y)} \le \sqrt{\mathsf{Var}(X)} + \sqrt{\mathsf{Var}(Y)}$; 
(ii) follows since
\[
\mathsf{Var}_{P_{j,s,\pi_{j}^{\star}(s)}}(\widehat{V}_{j+1}-V_{j+1}^{\star})\leq\big\langle P_{j,s,\pi_{j}^{\star}(s)},(\widehat{V}_{j+1}-V_{j+1}^{\star})^{2}\big\rangle\leq H\big\langle P_{j,s,\pi_{j}^{\star}(s)},\widehat{V}_{j+1}-V_{j+1}^{\star}\big\rangle,
\]
which comes from the fact that $V_{j+1}^\star \geq \widehat{V}_{j+1}\geq0$ and $\|V_{j+1}^\star\|_\infty \leq H$; (iii) invokes the Cauchy-Schwarz inequality; 
(iv) makes use of the definition of $\Cstar$; and (v) is obtained by applying \eqref{eq:weighted-sum-V-error-aux-Bern} of Theorem~\ref{thm:Bernstein-finite-aux}.

\section{Proof of minimax lower bounds}

\subsection{Preliminary facts}
For any two distributions $P$ and $Q$, 
we denote by $\mathsf{KL}(P \parallel Q)$ the Kullback-Leibler (KL) divergence of $P$ and $Q$. 
Letting $\mathsf{Ber}(p)$ be the Bernoulli distribution with mean $p$,  
we also introduce 
\begin{align}
	\mathsf{KL}(p \parallel q) \coloneqq
	p\log\frac{p}{q}+(1-p)\log\frac{1-p}{1-q}  
	\qquad \text{and} \qquad 
	\chi^2 (p \parallel q) \coloneqq \frac{(p-q)^2}{q} +  \frac{(p-q)^2}{1-q},
	\label{eq:defn-KL-bernoulli}
\end{align}
which represent respectively the KL divergence and the chi-square divergence of $\mathsf{Ber}(p)$ from $\mathsf{Ber}(q)$ \citep{tsybakov2009introduction}. 
We make note of the following useful properties about the KL divergence. 
\begin{lemma}
    \label{lem:KL-key-result}
    For any $p, q \in \left[\frac{1}{2},1\right)$ and $p >q$, it holds that 
    \begin{align}
	    \mathsf{KL}(p \parallel q)  \leq \mathsf{KL}(q \parallel p) 
	    \leq \chi^2 (q \parallel p) =  \frac{(p-q)^2}{p(1-p)}. \label{eq:KL-dis-key-result} 
    \end{align}
\end{lemma}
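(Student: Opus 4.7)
The plan is to prove the two inequalities in \eqref{eq:KL-dis-key-result} separately; only the first one requires the assumption $p, q \geq 1/2$.

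For the second inequality $\mathsf{KL}(q \parallel p) \leq \chi^2(q \parallel p)$, I would apply the elementary bound $\log y \leq y - 1$ (valid for all $y > 0$) term-by-term in the definition \eqref{eq:defn-KL-bernoulli}:
\[
\mathsf{KL}(q \parallel p) = q \log\frac{q}{p} + (1-q)\log\frac{1-q}{1-p} \leq q\!\left(\frac{q}{p}-1\right) + (1-q)\!\left(\frac{1-q}{1-p}-1\right) = \frac{q^2}{p} + \frac{(1-q)^2}{1-p} - 1.
\]
Putting the right-hand side over the common denominator $p(1-p)$ and expanding collapses the numerator to $(p-q)^2$, yielding $(p-q)^2/[p(1-p)]$, which matches $\chi^2(q \parallel p)$ upon combining the two terms in \eqref{eq:defn-KL-bernoulli}. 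This step is routine algebra and does not use $p, q \geq 1/2$.

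For the first inequality $\mathsf{KL}(p \parallel q) \leq \mathsf{KL}(q \parallel p)$, I would fix $q \in [1/2,1)$ and view the difference as a function of $p$:
\[
h(p) \defn \mathsf{KL}(q \parallel p) - \mathsf{KL}(p \parallel q), \qquad p \in [q, 1),
\]
so that $h(q) = 0$. Direct differentiation gives
\[
h'(p) = \frac{p-q}{p(1-p)} - \log\frac{p(1-q)}{q(1-p)}, \qquad h'(q) = 0.
\]
It then suffices to show $h'(p) \geq 0$ on $[q,1)$, which I would obtain by computing one more derivative. Using the quotient rule, one finds
\[
\frac{d}{dp}\frac{p-q}{p(1-p)} = \frac{(p-q)^2 + q(1-q)}{[p(1-p)]^2}, \qquad \frac{d}{dp}\log\frac{p(1-q)}{q(1-p)} = \frac{1}{p(1-p)},
\]
so that after combining fractions,
\[
h''(p) = \frac{(p-q)^2 + q(1-q) - p(1-p)}{[p(1-p)]^2} = \frac{(p-q)(2p-1)}{[p(1-p)]^2} \,\geq\, 0,
\]
where the key algebraic simplification expands $(p-q)^2 + q - q^2 - p + p^2 = 2p(p-q) - (p-q) = (p-q)(2p-1)$, and the final sign follows precisely because $p \geq q \geq 1/2$ forces both $p - q \geq 0$ and $2p - 1 \geq 0$. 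Integrating twice from $q$ to $p$, using $h(q) = h'(q) = 0$, then yields $h(p) \geq 0$, i.e., $\mathsf{KL}(q \parallel p) \geq \mathsf{KL}(p \parallel q)$.

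The main obstacle is the second-derivative simplification: one must recognize that the numerator of $h''(p)$ factors cleanly as $(p-q)(2p-1)$, so that the hypothesis $p, q \geq 1/2$ enters in exactly the right place. Once this factorization is in hand, the rest is a standard monotonicity argument from vanishing boundary data, and no further use of the condition $p, q \geq 1/2$ is needed.
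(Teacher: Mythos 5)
Your proposal is correct, and it takes a genuinely different route from the paper on both inequalities. For the chi-square bound $\mathsf{KL}(q \parallel p) \leq \chi^2(q\parallel p)$, the paper simply cites \citet[Lemma~2.7]{tsybakov2009introduction}, whereas you reprove it from scratch via $\log y \leq y-1$, which is a self-contained and entirely standard derivation (and indeed needs neither $p>q$ nor $p,q\ge 1/2$). For the KL-asymmetry inequality $\mathsf{KL}(p\parallel q)\le\mathsf{KL}(q\parallel p)$, the paper reparametrizes $(p,q)\mapsto(a,b)=\big(\frac{p+q}{2},\frac{p-q}{2}\big)$, writes the difference as $g(a,b)$, shows $\partial g/\partial b\le 0$ using that $f(x)=\frac{2x}{x+b}+\frac{2x}{x-b}$ is decreasing and $a\ge 1-a$, and concludes $g(a,b)\le g(a,0)=0$. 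You instead fix $q$, set $h(p)=\mathsf{KL}(q\parallel p)-\mathsf{KL}(p\parallel q)$, observe $h(q)=h'(q)=0$, and establish the clean factorization $h''(p)=\frac{(p-q)(2p-1)}{[p(1-p)]^2}\ge 0$ on $[q,1)$. Both are short calculus arguments; the paper's midpoint/half-difference change of variables makes the symmetry at $p=q$ manifest and gets away with a single derivative, while yours uses the naive parametrization at the cost of a second derivative but reveals exactly how the hypothesis enters (through the factor $2p-1$, which must stay nonnegative on the whole interval of integration, hence $q\ge\tfrac12$). One small imprecision you should smooth over when writing this up: when you say the sign of $h''(p)$ "follows because $p\ge q\ge\tfrac12$," make explicit that $p$ there denotes the integration variable ranging over $[q,\cdot)$, so $q\ge\tfrac12$ is what guarantees $2p-1\ge 0$ on the entire range, not just at the endpoint.
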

\begin{proof}
The second inequality in \eqref{eq:KL-dis-key-result} is a well-known relation between KL divergence and chi-square divergence; see \citet[Lemma~2.7]{tsybakov2009introduction}.
As a result, it suffices to justify the first inequality. 
Towards this end, let us introduce $a=\frac{p+q}{2}\in\big[\frac{1}{2},1\big]$ and
$b=\frac{p-q}{2}\in\big[0,\frac{1}{4}\big]$, which allow us to re-parameterize $(p,q)$ as 
$p=a+b$ and $q=a-b$. The definition \eqref{eq:defn-KL-bernoulli} together with a little algebra gives
\begin{align*}
\mathsf{KL}(p\parallel q)-\mathsf{KL}(q\parallel p) & =(p+q)\log\frac{p}{q}+(2-p-q)\log\frac{1-p}{1-q}\\
 & =2a\log\left(\frac{a+b}{a-b}\right)+2(1-a)\log\frac{1-a-b}{1-a+b}\eqqcolon g(a,b).
\end{align*}
Taking the derivative w.r.t.~$b$ yields
\[
\frac{\partial g(a,b)}{\partial b}=2a\left\{ \frac{1}{a+b}+\frac{1}{a-b}\right\} -2\left(1-a\right)\left\{ \frac{1}{1-a+b}+\frac{1}{1-a-b}\right\} = f(a)- f(1-a) \leq 0 ,
\]
with $f(x)\coloneqq\frac{2x}{x+b}+\frac{2x}{x-b}$ (for $x>b$). 
Here, the last inequality follows since $f(\cdot)$ is a decreasing function and that $a\geq1-a$. 
	This implies that $g(a,b)$ is  non-increasing  in $b\geq 0$ for any given $a$, which in turn leads to 
\[
\mathsf{KL}(p\parallel q)-\mathsf{KL}(q\parallel p)=g(a,b)\leq g(a,0)=0 
\] 
as claimed. 
\end{proof}

\subsection{Proof of Theorem~\ref{thm:infinite-lwoer-bound}}\label{proof:thm-infinite-lb}

We now construct some hard problem instances and use them to establish the minimax lower bounds claimed in Theorem~\ref{thm:infinite-lwoer-bound}.  
It is assumed throughout this subsection that
\begin{equation}
	\frac{2}{3}\leq \gamma < 1 \qquad \text{and} \qquad \frac{14(1-\gamma)\varepsilon}{\gamma} \leq \frac{1}{2}.
	\label{eq:infinite-epsilon-assumption}
\end{equation}

\subsubsection{Construction of hard problem instances}

\paragraph{Construction of the hard MDPs.}
Let us introduce two MDPs $\left\{ \mathcal{M}_{\theta} = (\mathcal{S}, \mathcal{A}, P_{\theta}, r, \gamma) \mid \theta \in \{0,1\} \right\}$ 
parameterized by $\theta$, which involve $S$ states and 2 actions as follows: 
\begin{align*}
	\cS = \{0, 1, \ldots, S-1\} \qquad \text{and} \qquad  \mathcal{A} = \{0, 1\}.
\end{align*}
We single out a crucial state distribution (supported on the state subset $\{0, 1\}$) as follows:
\begin{align}  \label{infinite-mu-assumption}
	\mu(s) = \frac{1}{CS}\mathds{1}\{s = 0\} + \Big(1 - \frac{1}{CS}\Big)\mathds{1}\{s = 1\}
\end{align}
for some quantity $C>0$ obeying 
\begin{equation}
	\label{infinite-CS-assumption}
	\frac{1}{CS} \leq \frac{1}{4 \gamma}.
\end{equation}
We shall make clear the relation between $C$ and the concentrability coefficient $\Cstar$ shortly (see \eqref{eq:expression-Cstar-LB-inf}).  
Armed with this distribution, we are ready to define the transition kernel $P_{\theta}$ of the MDP $\mathcal{M}_\theta$ as follows:
\begin{align}
P_\theta(s^{\prime} \mymid s, a) = \left\{ \begin{array}{lll}
	p\mathds{1}\{s^{\prime} = 0\} + (1-p)\mu(s^{\prime}) & \text{for} & (s, a) = (0, \theta), \\
	 q\mathds{1}\{s^{\prime} = 0\} + (1-q)\mu(s^{\prime}) & \text{for} & (s, a) = (0, 1-\theta), \\
	 \mathds{1}\{s^{\prime} = 1\} & \text{for}   & (s, a) = (1, 0), \\ 
	 (2\gamma-1)\mathds{1}\{s^{\prime} = 1\} + 2(1-\gamma)\mu(s^{\prime}) & \text{for}   & (s, a) = (1, 1), \\ 
	 \gamma\mathds{1}(s^{\prime} = s\} + (1-\gamma)\mu(s^{\prime}) & \text{for}   & s > 1, \
                \end{array}\right.
	\label{eq:defn-Ptheta-defn-inf}
\end{align}
where the parameters $p$ and $q$ are chosen to be 
\begin{align}
	\label{eq:infinite-p-q}
	p = \gamma + \frac{14(1-\gamma)^2\varepsilon}{\gamma},\qquad q = \gamma - \frac{14(1-\gamma)^2\varepsilon}{\gamma}.
\end{align}
In view of the assumptions~\eqref{eq:infinite-epsilon-assumption}, one has
\begin{equation}
	p > q \geq \gamma - \frac{1-\gamma}{2} \geq \frac{1}{2}.
	\label{eq:p-q-relation-LB-inf}
\end{equation}
As can be clearly seen from the construction, if the MDP is initialized to either state 0 or state 1, then it will never leave the state subset $\{0,1\}$.  
In addition, the reward function for any MDP $\mathcal{M}_\theta$ is chosen to be
\begin{align}
r(s, a) = \left\{ \begin{array}{lll}
         1 & \text{for} & s = 0, \\
         \frac{1}{2} & \text{for}   & (s, a) = (1, 0), \\ 
         0 & \text{for}   & (s, a) = (1, 1), \\ 
         0 & \text{for}   & s > 1 ,\
                \end{array}\right.
		\label{eq:defn-r-theta-defn-inf}
\end{align}
where the reward gained in state 0 is clearly higher than that in other states.

\paragraph{Value functions and optimal policies.}
Next, let us take a moment to compute the value functions of the constructed MDPs and identify the optimal policies. 
For notational clarity, for the MDP $\mathcal{M}_{\theta}$ with $\theta \in \{0,1\}$, 
we denote by $\pi_{\theta}^{\star}$ the optimal policy,  
and let $V_{\theta}^{\pi}$ (resp.~$V_{\theta}^{\star}$) represent the value function of policy $\pi$ (resp.~$\pi_{\theta}^{\star}$). 
The lemma below collects several useful properties about the value functions and the optimal policies; the proof is deferred to Appendix~\ref{sec:proof-lemma:value-policy-LB-inf}. 
\begin{lemma}
\label{lemma:value-policy-LB-inf}
Consider any $\theta\in \{0,1\}$ and any policy $\pi$. One has
\begin{align}\label{eq:infinite-value-0-expression}
    V_\theta^\pi(0)= \frac{1 + \gamma(1-x_{\pi,\theta})\mu(1)V_\theta^{\pi}(1)}{1 - \gamma\big(\mu(1)x_{\pi,\theta} +\mu(0)\big)} 
	= V_{\theta}^{\pi}(1)+\frac{1-(1-\gamma)V_{\theta}^{\pi}(1)}{1-\gamma\mu(0)-\gamma\mu(1)x_{\pi,\theta}},
\end{align}
where we define
\begin{align}
	x_{\pi,\theta} \coloneqq p\pi(\theta\mymid 0) + q\pi(1-\theta\mymid 0) .
	\label{eq:infinite-x-h}
\end{align}
In addition, the optimal policy $\pi_{\theta}^{\star}$ and the optimal value function obey
\begin{align}
	\pi_\theta^\star(\theta \mymid 0) = 1, \qquad  
	\pi^{\star}_{\theta}(0\mymid 1) = 1, \qquad \text{and} \qquad V^{\star}_{\theta}(1)= \frac{1}{2(1-\gamma)}.
	\label{eq:optimal-pi-0-inf-135}
\end{align}
\end{lemma}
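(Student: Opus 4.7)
}

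The key structural fact is that once the trajectory lies in $\{0,1\}$ it never leaves (all transitions from $(0,a)$ and $(1,a)$ put mass only on $\{0,1\}$, since $\mu$ is supported on $\{0,1\}$). Thus all computations reduce to a two-state analysis. My plan is to first derive the closed-form for $V_\theta^\pi(0)$ in part~(1), and then use it together with Bellman optimality to nail down $\pi_\theta^\star$ and $V_\theta^\star(1)$ in part~(2).

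For part~(1), I would start from the Bellman expectation equation at state $0$. Since $r(0,a)=1$ for both $a\in\{0,1\}$, and since $\sum_a \pi(a\mymid 0) P_\theta(\cdot\mymid 0,a) = x_{\pi,\theta}\mathds{1}\{\cdot=0\} + (1-x_{\pi,\theta})\mu$ with $x_{\pi,\theta}$ as defined in \eqref{eq:infinite-x-h}, I would obtain
\begin{align*}
V_\theta^\pi(0) = 1 + \gamma\Big\{ x_{\pi,\theta} V_\theta^\pi(0) + (1-x_{\pi,\theta})\big[\mu(0) V_\theta^\pi(0) + \mu(1) V_\theta^\pi(1)\big]\Big\}.
\end{align*}
Collecting the $V_\theta^\pi(0)$ terms and using $1-\mu(0)=\mu(1)$ gives the first equality in \eqref{eq:infinite-value-0-expression} directly. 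The second equality is algebraic: subtract $V_\theta^\pi(1)$ from both sides; the numerator simplifies because the coefficient of $V_\theta^\pi(1)$ produced by $[D-1+\gamma]V_\theta^\pi(1)$ (with $D \coloneqq 1-\gamma\mu(0)-\gamma\mu(1)x_{\pi,\theta}$) equals $\gamma\mu(1)(1-x_{\pi,\theta})V_\theta^\pi(1)$, matching the cross-term in the first expression.

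For part~(2), I would propose the candidate $\widetilde V(1)=\tfrac{1}{2(1-\gamma)}$ (obtained by always playing action $0$ at state $1$, which is absorbing there with reward $\tfrac12$), together with $\widetilde V(0)$ defined by the formula from part~(1) with $x_{\pi,\theta}=p$, and verify that $\widetilde V$ satisfies the Bellman optimality equation. At state $0$, the Q-value difference is
\begin{align*}
Q^\star(0,\theta) - Q^\star(0,1-\theta) = \gamma(p-q)\mu(1)\big[\widetilde V(0) - \widetilde V(1)\big],
\end{align*}
which is strictly positive by the second form in \eqref{eq:infinite-value-0-expression}, since $1 - (1-\gamma)\widetilde V(1) = \tfrac12 > 0$. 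This forces $\pi_\theta^\star(\theta\mymid 0)=1$. At state $1$, a short computation using $(2\gamma-1)\widetilde V(1) = \tfrac{2\gamma-1}{2(1-\gamma)}$ gives
\begin{align*}
Q^\star(1,0) - Q^\star(1,1) = \tfrac{1}{2} - 2\gamma(1-\gamma)\mu(0)\big[\widetilde V(0) - \widetilde V(1)\big] \ge \tfrac{1}{2} - \gamma\mu(0) > 0,
\end{align*}
where the inequality uses the crude bound $\widetilde V(0) - \widetilde V(1) \le \tfrac{1}{2(1-\gamma)}$ followed by the standing assumption $\mu(0)=\tfrac{1}{CS}\le \tfrac{1}{4\gamma}$ from \eqref{infinite-CS-assumption}. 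Hence $\pi_\theta^\star(0\mymid 1)=1$, and uniqueness of the optimal value function then confirms $V_\theta^\star(1) = \widetilde V(1) = \tfrac{1}{2(1-\gamma)}$.

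The main (and only really delicate) obstacle is the Q-value comparison at state $1$: action $1$ is tempting because it offers a chance to transition to the high-value state $0$, so one must use the quantitative assumption $\gamma\mu(0)\le \tfrac14$ to guarantee that the $\tfrac12$ immediate reward from action $0$ dominates this option. Everything else reduces to elementary algebra and the closure of $\{0,1\}$ under the dynamics.
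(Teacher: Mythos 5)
Your proposal is correct, and it reaches the lemma by a genuinely different route from the paper. The paper proves the state-$1$ claims by bounding $V_\theta^\pi(1)$ uniformly over all policies: starting from the Bellman expectation equation at state $1$, it upper-bounds $V_\theta^\pi(0)$ by the trivial $\frac{1}{1-\gamma}$, invokes $\frac{1}{CS}\le\frac{1}{4\gamma}$ to absorb the cross-term, and concludes $V_\theta^\pi(1)\le\frac{1}{2(1-\gamma)}$ for every $\pi$ with equality exactly when $\pi(0\mymid 1)=1$; the claim $\pi_\theta^\star(\theta\mymid 0)=1$ is then handled by observing that the explicit expression \eqref{eq:infinite-value-0-expression} for $V_\theta^\star(0)$ is increasing in $x_{\pi,\theta}$. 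You instead use a guess-and-verify argument: you propose the candidate $\widetilde V$ as the value function of the deterministic policy that plays $\theta$ at state $0$ and action $0$ at state $1$, and then confirm Bellman optimality through the two one-step Q-value comparisons $Q^{\widetilde V}(0,\theta)>Q^{\widetilde V}(0,1-\theta)$ and $Q^{\widetilde V}(1,0)>Q^{\widetilde V}(1,1)$. Both approaches ultimately hinge on the same quantitative fact, $\gamma\mu(0)\le\frac14$, to rule out action $1$ at state $1$; your version makes that temptation explicit as a Q-value gap of at least $\frac12-\gamma\mu(0)$, which is arguably more transparent about \emph{why} the condition \eqref{infinite-CS-assumption} is needed, whereas the paper's version produces the uniform upper bound $V_\theta^\pi(1)\le\frac{1}{2(1-\gamma)}$ as a free by-product, which is convenient for later computations. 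The algebraic steps you outline (including the simplifications of the state-$1$ Q-gap using $1-(2\gamma-1)=2(1-\gamma)$ and $1-\mu(1)=\mu(0)$, and the bound $\widetilde V(0)-\widetilde V(1)\le\frac{1}{2(1-\gamma)}$ from the denominator being at least $1-\gamma$) all check out; just be careful to write $Q^{\widetilde V}$ rather than $Q^\star$ until the Bellman optimality verification is complete, since until then you have not yet established $\widetilde V=V^\star$.
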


\paragraph{Construction of the batch dataset.}
Given any constructed MDP $\mathcal{M}_\theta$, we generate a dataset containing $N$ i.i.d.~samples $\{(s_i,a_i,s_i')\}_{1\leq i\leq N}$ according to \eqref{eq:sampling-offline-inf-iid},  
where the initial state distribution $\rhob$ and behavior policy $\pib$ are chosen to be: 
\begin{align*}
    \rhob (s) = \mu(s)\qquad \text{and } \qquad \pi^{\mathsf{b}}(a \mymid s) = 1/2, \qquad \forall (s,a)\in \cS\times \cA,
\end{align*}
with $\mu$ denoting the distribution defined in \eqref{infinite-mu-assumption}. 
Interestingly, the occupancy state distribution of this dataset coincides with $\mu$, in the sense that
%
%
\begin{align}
	\myrho(s) =  \mu(s) \qquad \text{and} \qquad
	\myrho(s,a) =  \mu(s)/2, \qquad \forall (s,a)\in\cS\times \cA. 
	\label{eq:db-sa-inf-LB}
\end{align}

Moreover, letting us choose the test distribution $\rho$ in a way that 
\begin{align}
	\rho(s) = 
	\begin{cases} 1, \quad &\text{if }s=0 \\
		0, &\text{if }s>0. 
	\end{cases}
	\label{eq:rho-defn-inf-LB}
\end{align}
we can also characterize the single-policy clipped concentrability coefficient $\Cstar$ of the dataset w.r.t. the constructed MDP $\mathcal{M}_{\theta}$ as follows
\begin{align}
	\Cstar = 2C. 
	\label{eq:expression-Cstar-LB-inf}
\end{align}
%
%
The proof of the claims~\eqref{eq:db-sa-inf-LB} and \eqref{eq:expression-Cstar-LB-inf} can be found in Appendix~\ref{sec:proof:eq:db-Cstar-inf-LB}.

\subsubsection{Establishing the minimax lower bound}

Equipped with the above construction, we are ready to develop our lower bounds. 
We remind the reader of the test distribution $\rho$ chosen in \eqref{eq:rho-defn-inf-LB}, 
and hence we need to control $\langle \rho, V_\theta^{\star} - V_\theta^{\widehat{\pi}}\rangle = V_\theta^{\star}(0) - V_\theta^{\widehat{\pi}}(0)$ with $\widehat{\pi}$ representing a policy estimate (computed based on the batch dataset).

\paragraph{Step 1: converting $\widehat{\pi}$ into an estimate $\widehat{\theta}$ of $\theta$. }
Consider first an arbitrary policy $\pi$. 
By combining the definition \eqref{eq:infinite-x-h} with the properties~\eqref{eq:optimal-pi-0-inf-135}, we see that $x_{\pi_{\theta}^{\star},\theta}=p$,
which together with \eqref{eq:infinite-value-0-expression} gives
\begin{align}
    \langle \rho, V_\theta^{\star} - V_\theta^{\pi}\rangle 
    &= V_\theta^{\star}(0) - V_\theta^{\pi}(0) 
	= \frac{1 + \gamma(1-p)\mu(1)V_\theta^{\star}(1)}{1 - \gamma\big(\mu(1)p +\mu(0)\big)} - \frac{1 + \gamma(1-x_{\pi,\theta})\mu(1)V_\theta^{\pi}(1)}{1 - \gamma\big(\mu(1)x_{\pi,\theta} +\mu(0)\big)} \nonumber  \\
	& \geq \frac{1 + \gamma(1-p)\mu(1)V_\theta^{\star}(1)}{1 - \gamma\big(\mu(1)p +\mu(0)\big)} - \frac{1 + \gamma(1-x_{\pi,\theta})\mu(1)V_\theta^{\star}(1)}{1 - \gamma\big(\mu(1)x_{\pi,\theta} +\mu(0)\big)} \nonumber \\
	&	 \geq \frac{21 \varepsilon  }{8} \big( 1 - \pi(\theta \mymid 0) \big) .
	\label{eq:infinite-delta-key-auxiliary}
\end{align}
Here, the second line holds since $V_{\theta}^{\pi}\leq V_{\theta}^{\star}$, 
and the last inequality will be established in Appendix~\ref{sec:proof:eq:db-Cstar-inf-LB}. 

Denoting by $\mathbb{P}_{\theta}$ the probability distribution when the MDP is $\mathcal{M}_{\theta}$, suppose for the moment that the policy estimate $\widehat{\pi}$ achieves
\[
	\mathbb{P}_{\theta} \big\{  \big\langle \rho, V_\theta^{\star} - V_\theta^{\widehat{\pi}} \big\rangle \leq \varepsilon  \big\} \geq \frac{7}{8},
\]
then in view of \eqref{eq:infinite-delta-key-auxiliary}, one necessarily has $\widehat{\pi}(\theta \mymid 0) \geq \frac{13}{21}$ with probability at least $7/8$. 
If this were true, then we could then construct the following estimate $\widehat{\theta}$ for $\theta$:
\begin{align}
	\widehat{\theta}=\arg\max_{a} \, \widehat{\pi}(a\mymid 0),
	\label{eq:defn-theta-hat-inf-LB}
\end{align}
which would necessarily satisfy
\begin{align}
	\mathbb{P}_{\theta}\big( \widehat{\theta} = \theta \big) 
	\geq \mathbb{P}_{\theta}\big\{ \widehat{\pi}(\theta \mymid 0) > 1/2 \big\} 
	\geq \mathbb{P}_{\theta}\bigg\{ \widehat{\pi}(\theta \mymid 0) \geq \frac{13}{21} \bigg\} \geq \frac{7}{8}. 
	\label{eq:P-theta-accuracy-inf}
\end{align}
In what follows, 
we would like to show that \eqref{eq:P-theta-accuracy-inf} cannot happen --- i.e., one cannot possibly find such a good estimator for $\theta$ --- without a sufficient number of samples.

\paragraph{Step 2: probability of error in testing two hypotheses.}  
The next step lies in studying the feasibility of differentiating two hypotheses $\theta = 0$ and $\theta=1$. 
Define the minimax probability of error as follows
\begin{equation}
	p_{\mathrm{e}}\coloneqq\inf_{\psi}\max\big\{ \mathbb{P}_{0}(\psi\neq0),\,\mathbb{P}_{1}(\psi\neq1)\big\} , \label{eq:error-prob-two-hypotheses-inf-LB}
\end{equation}
where the infimum is taken over all possible tests $\psi$ (based on the batch dataset in hand).   
Letting $\mub_{\theta}$ denote the distribution of a sample $(s_i,a_i,s_i')$ under the MDP $\mathcal{M}_{\theta}$
and recalling that the samples are independently generated,
one can demonstrate that
\begin{align}
p_{\mathrm{e}} & \geq\frac{1}{4}\exp\Big(-N\mathsf{KL} \big(\mub_{0}\parallel\mub_{1} \big)\Big)\nonumber\\
	& =\frac{1}{4}\exp\bigg\{-\frac{1}{2}N\mu(0)\Big(\mathsf{KL}\big(P_{0}(\cdot\mymid0,0)\parallel P_{1}(\cdot\mymid0,0)\big)+\mathsf{KL}\big(P_{0}(\cdot\mymid0,1)\parallel P_{1}(\cdot\mymid0,1)\big)\Big)\bigg\}. 
	\label{eq:infinite-remainder-KL}
\end{align}
Here, the first inequality results from \citet[Theorem~2.2]{tsybakov2009introduction} and  the additivity property of the KL divergence (cf.~\citet[Page~85]{tsybakov2009introduction}), 
and the second line holds true since 
\begin{align*}
\mathsf{KL}(\mub_{0}\parallel\mub_{1}) & =\sum_{s,a,s'}\mu(s)\pib(a\mymid s)P_{0}(s'\mymid s,a)\log\frac{\mu(s)\pib(a\mymid s)P_{0}(s'\mymid s,a)}{\mu(s)\pib(a\mymid s)P_{1}(s'\mymid s,a)}\\
 & =\frac{1}{2}\mu(0)\sum_{a}\sum_{s'}P_{0}(s'\mymid0,a)\log\frac{P_{0}(s'\mymid0,a)}{P_{1}(s'\mymid0,a)} \\
 & =\frac{1}{2}\mu(0)\sum_{a}\mathsf{KL}\big(P_{0}(\cdot\mymid0,a)\parallel P_{1}(\cdot\mymid0,a)\big) ,
\end{align*}
where the second line is valid since $P_0(\cdot\mymid s,a)$ and $P_1(\cdot\mymid s,a)$ differ only when $s=0$.

Next, we turn attention to the KL divergence of interest. Recall that
\begin{align*}
	P_0(0 \mymid 0, 0) = \left(1 -\frac{1}{CS}\right)p + \frac{1}{CS},\qquad
	P_1(0 \mymid 0, 0) = \left(1 -\frac{1}{CS}\right)q + \frac{1}{CS} .
\end{align*}
Given that $p\geq q \geq 1/2$ (see \eqref{eq:p-q-relation-LB-inf}), 
we can apply Lemma~\ref{lem:KL-key-result} to arrive at
\begin{align*}
\mathsf{KL}\big(P_{0}(\cdot\mymid0,0)\parallel P_{1}(\cdot\mymid0,0)\big) & =\mathsf{KL}\left(\left(1-\frac{1}{CS}\right)p+\frac{1}{CS}\parallel\left(1-\frac{1}{CS}\right)q+\frac{1}{CS}\right)\nonumber\\
 & \overset{\mathrm{(i)}}{\leq}\frac{\left(1-\frac{1}{CS}\right)^{2}(p-q)^{2}}{\left(\left(1-\frac{1}{CS}\right)p+\frac{1}{CS}\right)\left(1-p-(1-p)\frac{1}{CS}\right)}\nonumber\\
 & \leq\frac{\left(1-\frac{1}{CS}\right)^{2}(p-q)^{2}}{p\left((1-p)\big(1-\frac{1}{CS}\big)\right)}\nonumber\\
 & \overset{\mathrm{(ii)}}{=}\frac{784(1-\gamma)^{4}\varepsilon^{2}}{\gamma^{2}\left(\gamma+\frac{14(1-\gamma)^{2}\varepsilon}{\gamma}\right)\left(1-\gamma-\frac{14(1-\gamma)^{2}\varepsilon}{\gamma}\right)}\nonumber\\
 & \overset{(\mathrm{iii})}{\leq}\frac{1568(1-\gamma)^{4}\varepsilon^{2}}{\gamma^{3}(1-\gamma)}\overset{(\mathrm{iv})}{\leq}12544(1-\gamma)^{3}\varepsilon^{2},
\end{align*}
where (i) arises from Lemma~\ref{lem:KL-key-result}, 
(ii) follows from the definitions of $p$ and $q$ \eqref{eq:infinite-p-q}, 
(iii) holds true as long as $\frac{14(1-\gamma)^2\varepsilon}{\gamma} \leq \frac{1-\gamma}{2}$, and (iv) results from the assumption $\gamma\in [\frac{1}{2},1)$. 
Evidently, the same upper bound holds for $\mathsf{KL}\big(P_{0}(\cdot\mymid 0,1)\parallel P_{1}(\cdot\mymid 0,1)\big)$ as well. 
Substitution back into \eqref{eq:infinite-remainder-KL} reveals that: 
if the sample size does not exceed
\begin{align}
    N \leq \frac{ CS\log 2}{12544(1-\gamma)^3\varepsilon^2} = \frac{ \Cstar S\log 2}{25088(1-\gamma)^3\varepsilon^2},
	\label{eq:sample-size-condition-proof-LB-inf}
\end{align}
then one necessarily has
\begin{align}
	p_{\mathrm{e}}
     & \geq \frac{1}{4}\exp\Big( -  12544 N \mu(0) (1-\gamma)^3\varepsilon^2 \Big)
      = \frac{1}{4}\exp\Big(-    \frac{12544N (1-\gamma)^3\varepsilon^2}{CS} \Big)
     \geq \frac{1}{8}. 
	\label{eq:pe-LB-13579-inf}
\end{align}

\paragraph{Step 3: putting all this together.} 
To finish up, suppose that there exists an estimator $\widehat{\pi}$ such that
\[
	\mathbb{P}_0 \big\{  \big\langle \rho, V_0^{\star} - V_0^{\widehat{\pi}} \big\rangle > \varepsilon  \big\} < \frac{1}{8}
	\qquad \text{and} \qquad
	\mathbb{P}_1 \big\{  \big\langle \rho, V_0^{\star} - V_0^{\widehat{\pi}} \big\rangle > \varepsilon  \big\} < \frac{1}{8}.
\]
Then in view of our arguments in Step 1, the estimator $\widehat{\theta}$ defined in \eqref{eq:defn-theta-hat-inf-LB} must satisfy
\[
	\mathbb{P}_0\big(\widehat{\theta} \neq \theta\big) < \frac{1}{8} 
	\qquad \text{and} \qquad
	\mathbb{P}_1\big(\widehat{\theta} \neq \theta\big) < \frac{1}{8}. 
\]
This, however, cannot possibly happen under the sample size condition \eqref{eq:sample-size-condition-proof-LB-inf}; otherwise it contradicts the lower bound \eqref{eq:pe-LB-13579-inf}.

%
%
%
%

\subsubsection{Proof of Lemma~\ref{lemma:value-policy-LB-inf}} 
\label{sec:proof-lemma:value-policy-LB-inf}

To begin with, for any policy $\pi$, the value function of state $0$ obeys
\begin{align}
	V_\theta^\pi(0) &= \mathop{\mathbb{E}}\limits_{a\sim\pi(\cdot\mymid0)}\bigg[r(0,a)+\gamma \sum_{s'} P_{\theta}(s' \mymid 0,a)V_{\theta}^{\pi}(s')\bigg] \nonumber \\
    & = 1 + \gamma \pi(\theta \mymid 0) \Big[ \big(p + (1-p)\mu(0)\big) V_\theta^\pi(0) + (1-p) \mu(1) V_\theta^\pi(1)\Big] \nonumber \\
    & \qquad + \gamma \pi(1-\theta \mymid 0) \Big[\big(q + (1-q) \mu(0)\big) V_\theta^\pi(0) + (1-q)\mu(1) V_\theta^\pi(1)\Big] \nonumber \\
   & = 1 + \gamma\Big[p \pi(\theta \mymid 0) + q \pi(1-\theta \mymid0) + \mu(0) - p \pi(\theta \mymid 0) \mu(0) - q \pi(1-\theta \mymid0)\mu(0) \Big]V_\theta^{\pi}(0) \nonumber\\
&\qquad + \gamma \mu(1) \Big[ 1 - p\pi(\theta \mymid 0)  -q \pi(1-\theta\mymid 0)\Big]V_\theta^{\pi}(1) \nonumber\\
& \overset{\mathrm{(i)}}{=} 1 + \gamma \Big[ x_{\pi,\theta} + (1- x_{\pi,\theta})\mu(0) V_\theta^{\pi}(0) +  (1 - x_{\pi,\theta}) \mu(1) V_\theta^{\pi}(1)\Big]  \nonumber\\
& \overset{\mathrm{(ii)}}{=}  1 + \gamma \Big[ \big(\mu(1)x_{\pi,\theta} +\mu(0)\big) V_\theta^{\pi}(0) + (1-x_{\pi,\theta})\mu(1)V_\theta^{\pi}(1) \Big], \label{eq:infinite-Value-0}
\end{align}
where in (i) we have defined the following quantity 
\begin{align}
	x_{\pi,\theta} = p\pi(\theta\mymid 0) + q\pi(1-\theta\mymid 0) = q+ (p-q) \pi(\theta\mymid 0),\label{eq:infinite-x-h-proof}
\end{align}
and (ii) relies on the fact that $\mu(0)+\mu(1)=1$. 
Rearranging terms in \eqref{eq:infinite-Value-0}, we are left with
\begin{align}\label{eq:infinite-value-0-expression-proof}
    V_\theta^\pi(0)= \frac{1 + \gamma(1-x_{\pi,\theta})\mu(1)V_\theta^{\pi}(1)}{1 - \gamma\big(\mu(1)x_{\pi,\theta} +\mu(0)\big)} 
	= V_{\theta}^{\pi}(1)+\frac{1-(1-\gamma)V_{\theta}^{\pi}(1)}{1-\gamma\mu(0)-\gamma\mu(1)x_{\pi,\theta}}.
\end{align}

Additionally,  the value function of state $1$ can be calculated as
\begin{align}
        V_\theta^{\pi}(1) &= \pi(0 \mymid 1) \left( \frac{1}{2} + \gamma V_\theta^{\pi}(1)\right) 
	+ \pi(1 \mymid 1) \gamma \Big[ \big((2\gamma - 1) + 2(1-\gamma) \mu(1)\big) V_\theta^{\pi}(1) + 2(1-\gamma) \mu(0) V_\theta^{\pi}(0) \Big] \notag\\
	&= \pi(0 \mymid 1) \left( \frac{1}{2} + \gamma V_\theta^{\pi}(1)\right) + \pi(1 \mymid 1) \gamma \left[ \left(1- \frac{2(1-\gamma)}{CS}\right)V_\theta^{\pi}(1) + \frac{2(1-\gamma)}{CS}V_\theta^{\pi}(0)\right] \label{eq:Vpi-expansion-1-infinite-13}\\
        & \overset{\mathrm{(i)}}{\leq} \pi(0 \mymid 1) \left( \frac{1}{2} + \gamma V_\theta^{\pi}(1)\right) + \pi(1 \mymid 1) \gamma \left[ \left(1- \frac{2(1-\gamma)}{CS}\right)V_\theta^{\pi}(1) + \frac{2(1-\gamma)}{CS}\frac{1}{1-\gamma}\right] \notag\\
        & \overset{\mathrm{(ii)}}{\leq} \pi(0 \mymid 1) \left( \frac{1}{2} + \gamma V_\theta^{\pi}(1)\right) + \pi(1 \mymid 1) \left[ \frac{1}{2} + \gamma \left(1- \frac{2(1-\gamma)}{CS}\right)V_\theta^{\pi}(1) \right] \notag\\
        & = \frac{1}{2} + \gamma V_\theta^{\pi}(1) - \frac{2\gamma(1-\gamma)}{CS} V_\theta^{\pi}(1) \pi(1 \mymid 1) ,
	\label{eq:Vpi-expansion-1-infinite-246}
\end{align}
where (i) arises from the elementary property $ 0\leq V_\theta^{\pi}(s) \leq \frac{1}{1-\gamma}$ for any $\pi$ and $s\in\cS $, 
and (ii) comes from the assumption~\eqref{infinite-CS-assumption}.
The above observation reveals several facts:
\begin{itemize}
	\item If we take $\pi(0\mymid 1)=1$, then \eqref{eq:Vpi-expansion-1-infinite-13} tells us that
		\begin{equation}
			V_{\theta}^{\pi}(1)=\frac{1}{2}+\gamma V_{\theta}^{\pi}(1)\qquad\Longrightarrow\qquad V_{\theta}^{\pi}(1)=\frac{1}{2(1-\gamma)}.
			\label{eq:Vtheta-pi-calculation-simple}
		\end{equation}
	\item It also follows from \eqref{eq:Vpi-expansion-1-infinite-246} that for any policy $\pi$, one has
		\begin{equation}
			V_{\theta}^{\pi}(1)\leq \frac{1}{2}+\gamma V_{\theta}^{\pi}(1)\qquad\Longrightarrow\qquad V_{\theta}^{\pi}(1)\leq \frac{1}{2(1-\gamma)}.
			\label{eq:Vtheta-pi-calculation-simple}
		\end{equation}
\end{itemize}
These two facts taken collectively imply that the optimal policy and the optimal value function obey
		\begin{equation}
			\pi^{\star}_{\theta}(0\mymid 1) = 1 \qquad \text{and} \qquad V^{\star}_{\theta}(1)= \frac{1}{2(1-\gamma)}.
		\end{equation}

%

Next, we have learned from \eqref{eq:infinite-value-0-expression-proof} that
\[
	V_\theta^{\star}(0)= V_{\theta}^{\star}(1)+\frac{1-(1-\gamma)V_{\theta}^{\star}(1)}{1-\gamma\mu(0)-\gamma\mu(1)x_{\pi^{\star}_{\theta},\theta}}.
\]
Note that $1-(1-\gamma)V_{\theta}^{\star}(1)\geq 1- (1-\gamma)\frac{1}{1-\gamma}=0$. 
Since the function
\[
	g(x)= V_{\theta}^{\star}(1)+\frac{1-(1-\gamma)V_{\theta}^{\star}(1)}{1-\gamma\mu(0)-\gamma\mu(1)x}
\]
is increasing in $x$ and that $x_{\pi,\theta}$ (cf.~\eqref{eq:infinite-x-h-proof}) is increasing in $\pi(\theta\mymid 0)$ (given that $p\geq q$), 
one can easily see that the optimal policy obeys
\begin{align}
	\pi_\theta^\star(\theta \mymid 0) = 1 .
\end{align}

\subsubsection{Proof of auxiliary properties}
\label{sec:proof:eq:db-Cstar-inf-LB}

\paragraph{Proof of claim~\eqref{eq:db-sa-inf-LB}.}
We begin by proving the property~\eqref{eq:db-sa-inf-LB}. 
Towards this, let us abuse the notation by considering a MDP trajectory denoted by $\{(s_t,a_t)\}_{t\geq 0}$,
and suppose that it starts from $s_{0}\sim \rhob = \mu$. 
It can be straightforwardly calculated that
\begin{align*}
	\mathbb{P}\left\{ s_{1}=0\right\}  & = \sum\nolimits_{s}\mu(s)\Big\{ \pib(0\mymid s)\mathbb{P}\left\{ s_{1}=0\mid s_{0}=s,a_{0}=0\right\} +\pib(1\mymid s)\mathbb{P}\left\{ s_{1}=0\mid s_{0}=s,a_{0}=1\right\} \Big\} \\
 & =\mu(0)\left\{ \frac{1}{2} P_{\theta}(0\mymid0,0)+\frac{1}{2} P_{\theta}(0\mymid0,1) \right\} +\mu(1)\left\{ \frac{1}{2} P_{\theta}(0\mymid1,0)+\frac{1}{2}P_{\theta}(0\mymid1,1)\right\} \\
 & =\mu(0)\left\{ \gamma+(1-\gamma)\mu(0)\right\} +\mu(1)\left\{ (1-\gamma)\mu(0)\right\} =\mu(0),
\end{align*}
where the last identity holds since $\mu(0)+\mu(1)=1$. Similarly, one can derive $\mathbb{P}\left\{ s_{1}=1\right\} = \mu(1)$, thus indicating that $s_1\sim \mu$.  
Repeating this analysis reveals that $s_t \sim \mu$ for any $t\geq 0$. 
Consequently, one has 
\[
	\myrho(s) = (1-\gamma) \mathbb{E}\left[\sum_{t=0}^\infty \gamma^t \mathbb{P}\big(s_t=s\mid s_{0}\sim \rhob;\pib\big)\right]
 = \mu(s), \qquad \forall s\in\cS.
\]
Additionally, it it observed that 
\begin{align}
	\myrho(s,a)=\myrho(s) \pib(a\mymid s)= \mu(s)/2.
	\label{eq:myrho-sa-mus-2-inf-LB}
\end{align}

\paragraph{Proof of claim \eqref{eq:expression-Cstar-LB-inf}.}
Consider the MDP $\mathcal{M}_\theta$, whose optimal policy $\pi_\theta^{\star}$ satisfies  $\pi_\theta^{\star}(\theta \mymid 0) = 1$ (see Lemma~\ref{lemma:value-policy-LB-inf}). 
Let us generate a MDP trajectory denoted by $\{(s_t,a_t)\}_{t\geq 0}$ with $a_t\sim \pi_{\theta}^{\star}(\cdot \mymid s_t)$, 
where we have again abused notation as long as it is clear from the context.  
In this case, we can deduce that
\begin{align*}
d^{\star}(0, \theta) 
	& =  (1-\gamma) \mathbb{E}\left[\sum_{t=0}^\infty \gamma^t \mathbb{P}\big(s_t=0\mid s_{0}\sim \rho;\pi^{\star}_{\theta}\big) \pi^{\star}_{\theta}(\theta \mymid 0)\right]
	= (1-\gamma) \mathbb{E}\left[\sum_{t=0}^\infty \gamma^t \mathbb{P}\big(s_t=0\mid s_{0}\sim \rho;\pi^{\star}_{\theta} \big) \right]\\
	& \overset{\mathrm{(i)}}{\geq} (1-\gamma)\sum_{t=0}^\infty \gamma^t \rho(0) \big[ \mathbb{P}_{\theta}(0 \mymid 0, \theta) \big]^t 
	\overset{\mathrm{(ii)}}{\geq} (1-\gamma)\sum_{t=0}^\infty \rho(0)\gamma^{2t} = \frac{1-\gamma}{1-\gamma^2} = \frac{1}{1+\gamma} \geq \frac{1}{2}, 
\end{align*}
where in (i) we compute, for each $t$, the probability of a special trajectory with $s_1=\cdots=s_t=0$ and $a_0=\cdots=a_{t-1}=\theta$, 
and (ii) holds true since $P_{\theta}(0\mymid 0,\theta)\geq p\geq \gamma$. 
Taking this together with \eqref{eq:myrho-sa-mus-2-inf-LB} yields
\begin{align}
	\frac{\min\big\{ d^{\star}(0, \theta), \frac{1}{S}\big\} }{\myrho(0,\theta)} &= \frac{2}{S\mu(0)} = 2C, \notag\\
	\frac{\min\big\{ d^{\star}(0, 1-\theta), \frac{1}{S}\big\} }{\myrho(0, 1-\theta)} &= 
	\frac{\min\big\{ d^{\star}(0, 1-\theta), \frac{1}{S}\big\} }{\myrho(0, \theta)} \leq
		\frac{\min\big\{ d^{\star}(0, \theta), \frac{1}{S}\big\} }{\myrho(0,\theta)} =  2C. 
\end{align}
In addition, it is easily seen that $d^{\star}(s,a)=0$ for any $s>1$, and that
\[
\frac{\min\big\{ d^{\star}(1,a),\frac{1}{S}\big\}}{\myrho(1,a)}\leq\frac{1/S}{\mu(1)/2}=\frac{2}{S(1-1/CS)}
	\leq\frac{4}{S}\leq2C,
\]
where the first inequality comes from \eqref{eq:myrho-sa-mus-2-inf-LB}, 
the first identity uses the definition \eqref{infinite-mu-assumption}, and the last two inequalities result from an immediate consequence of \eqref{infinite-CS-assumption} and $\gamma \geq 1/2$, i.e.,
\begin{align}
	\frac{1}{CS}\leq \frac{1}{4\gamma } \leq \frac {1}{2}. 
\end{align}
As a result, putting the above relations together leads to
\begin{align}
    \Cstar  = \max_{(s, a) \in \cS \times \cA}\frac{\min\big\{d^{\star}(s, a), \frac{1}{S}\big\}}{\myrho(s, a)} = \frac{\min\big\{d^{\star}(0, \theta), \frac{1}{S}\big\}}{\myrho(0, \theta)} =  2C .
\end{align}
%

\paragraph{Proof of inequality \eqref{eq:infinite-delta-key-auxiliary}.}
%
Observing the basic identity (using $\mu(0)+\mu(1)=1$)
\[
	\frac{1+\gamma(1-x)\mu(1)V_{\theta}^{\star}(1)}{1-\gamma\big(\mu(1)x+\mu(0)\big)}=V_{\theta}^{\star}(1)+\frac{1-(1-\gamma)V_{\theta}^{\star}(1)}{1-\gamma\mu(0)-\gamma\mu(1)x},
\]
we can obtain
\begin{align}
	&\frac{1 + \gamma(1-p)\mu(1)V_\theta^{\star}(1)}{1 - \gamma\big(\mu(1)p +\mu(0)\big)} - \frac{1 + \gamma(1-x_{\pi,\theta})\mu(1)V_\theta^{\star}(1)}{1 - \gamma\big(\mu(1)x_{\pi,\theta} +\mu(0)\big)} 
	=	\frac{1-(1-\gamma)V_{\theta}^{\star}(1)}{1-\gamma\mu(0)-\gamma\mu(1)p}-\frac{1-(1-\gamma)V_{\theta}^{\star}(1)}{1-\gamma\mu(0)-\gamma\mu(1)x_{\pi,\theta}} \nonumber\\
	&=  \big( 1-(1-\gamma)V_{\theta}^{\star}(1) \big) \frac{\gamma \mu(1) (p-x_{\pi,\theta})}{ ~\underbrace{\left[1 - \gamma\big(\mu(1)p +\mu(0)\big)\right]\left[1 - \gamma\big(\mu(1)x_{\pi,\theta} +\mu(0)\big)\right] }_{\eqqcolon\, \alpha}~} \nonumber \\
    & =  \frac{\gamma \mu(1) (p-x_{\pi,\theta})}{2\alpha} , \label{eq:infinite-A-result}
\end{align}
where the last relation arises from the fact \eqref{eq:optimal-pi-0-inf-135}.

The remainder of the proof boils down to controlling $\alpha$. 
Making use of the definition of $p$ (cf.~\eqref{eq:infinite-p-q}), 
$\mu(s)$ (cf.~\eqref{infinite-mu-assumption})  and $x_\pi$ (cf.~\eqref{eq:infinite-x-h}), 
we can demonstrate that
\begin{align}
    \alpha &= \left[1- \gamma \left(\left(1-\frac{1}{CS}\right)p + \frac{1}{CS}\right)\right]\left[1- \gamma \left(\left(1-\frac{1}{CS}\right)x_{\pi,\theta} + \frac{1}{CS}\right)\right]  \leq (1-\gamma p)(1-\gamma x_{\pi,\theta})\notag \\
    & \overset{\mathrm{(i)}}{\leq} (1-\gamma p)(1-\gamma q)
     \overset{\mathrm{(ii)}}{\leq}  \Big( 1-\gamma \frac{p+q}{2} \Big)^2 \notag\\
	&  = (1-\gamma^2)^2 = (1-\gamma)^2(1+\gamma)^2 \leq 4(1-\gamma)^2, 
\end{align}
where (i) holds true owing to the trivial fact that $ x_{\pi,\theta} \geq q$ for any policy $\pi$ (as long as $p\geq q$), 
and (ii) is a consequence of the AM-GM inequality. 
Substituting it into \eqref{eq:infinite-A-result} and using the definition \eqref{eq:infinite-x-h} give
\begin{align*}
    &\frac{1 + \gamma(1-p)\mu(1)V_\theta^{\star}(1)}{1 - \gamma\big(\mu(1)p +\mu(0)\big)} - \frac{1 + \gamma(1-x_{\pi,\theta})\mu(1)V_\theta^{\star}(1)}{1 - \gamma\big(\mu(1)x_{\pi,\theta}+\mu(0)\big)} = \frac{\gamma \mu(1) (p-x_{\pi,\theta})}{2\alpha} \geq \frac{\gamma \mu(1) (p-x_{\pi,\theta})}{8(1-\gamma)^2}\\
& \qquad = \frac{\gamma \mu(1)}{8(1-\gamma)^2} (p-q) \pi(1-\theta \mymid 0)\\
& \qquad \geq \frac{3 \gamma }{32(1-\gamma)^2} \frac{28 (1-\gamma)^2\varepsilon}{\gamma} \pi(1-\theta \mymid 0) 
	= \frac{21 \varepsilon  }{8}  \big( 1 - \pi(\theta \mymid 0) \big).
\end{align*}
%

 %

\subsection{Proof of Theorem~\ref{thm:finite-lower-bound}}
\label{proof:thm-finite-lb}

To establish Theorem~\ref{thm:finite-lower-bound}, 
we shall first generate a collection of hard problem instances (including MDPs and the associated batch datasets), 
and then conduct sample complexity analyses over these hard instances.

\subsubsection{Construction of hard problem instances}

\paragraph{Construction of the hard MDPs.}
To begin with, for any integer $H\geq 32$, let us consider a set $\Theta\subseteq \{0,1\}^{H}$ of $H$-dimensional vectors, which we shall construct shortly.  
We then generate a collection of MDPs 
\begin{equation}
	\mathsf{MDP}(\Theta) = \left\{\mathcal{M}_{\theta} = 
	\big(\mathcal{S}, \mathcal{A}, P^{\theta} = \{P^{\theta_h}_h\}_{h=1}^H, \{r_h\}_{h=1}^H, H \big) 
	\mid \theta = [\theta_h]_{1\leq h\leq H} \in \Theta
	\right\},
\end{equation}
where
\begin{align*}
	\cS = \{0, 1, \ldots, S-1\}, 
	\qquad \text{and} \qquad  \mathcal{A} = \{0, 1\}.
\end{align*}
To define the transition kernel of these MDPs, we find it convenient to introduce the following state distribution supported on the state subset $\{0,1\}$:
\begin{align}\label{finite-mu-assumption}
	\mu(s) = \frac{1}{CS}\mathds{1}\{s = 0\} + \Big(1 - \frac{1}{CS}\Big)\mathds{1}\{s = 1\},
\end{align}
where $\mathds{1}(\cdot)$ is the indicator function, and $C>0$ is some constant that will determine the concentrability coefficient $\Cstar$ (as we shall detail momentarily). It is assumed that 
\begin{align}\label{finite-C-assumption}
    \frac{1}{CS} \leq \frac{1}{4} .
\end{align}
With this distribution in mind, we can specify the transition kernel $P^{\theta} = \{P^{\theta_h}_h\}_{h=1}^H$ of the MDP $\mathcal{M}_\theta$ as follows:
\begin{align}
P^{\theta_h}_h(s^{\prime} \mymid s, a) = \left\{ \begin{array}{lll}
	p\mathds{1}\{s^{\prime} = 0\} + (1-p)\mu(s^{\prime}) & \text{if} & (s, a) = (0, \theta_h) \\
	 q\mathds{1}\{s^{\prime} = 0\} + (1-q)\mu(s^{\prime}) & \text{if} & (s, a) = (0, 1-\theta_h) \\
	 \mathds{1}\{s^{\prime} = 1\} & \text{if}   & (s, a) = (1, 0) \\ 
	 \big(1 - \frac{2 c_1}{H}\big)\mathds{1}\{s^{\prime} = 1\} + \frac{2 c_1}{H}\mu(s^{\prime}) & \text{if}   & (s, a) = (1, 1) \\ 
	 \big(1 - \frac{1}{H}\big)\mathds{1}\{s^{\prime} = s\} + \frac{1}{H}\mu(s^{\prime}) & \text{if}   & s > 1 \
                \end{array}\right.
		\label{eq:Ph-construction-lower-bound-finite}
\end{align}
for any $(s,a,s',h)\in \cS\times \cA\times \cS\times [H]$, 
where $p$ and $q$ are set to be 
\begin{equation}
	p = 1 - \frac{c_1}{H} + \frac{c_2\varepsilon}{H^2}
	\qquad \text{and} \qquad
	q = 1 - \frac{c_1}{H} - \frac{c_2 \varepsilon}{H^2} \label{eq:finite-p-q-def}
\end{equation}
for $c_1 = 1/4$ and $c_2 = 4096$ such that
\begin{equation}
     \frac{c_2\varepsilon}{H^2} \leq \frac{c_1}{2H} \leq \frac{1}{8}. 
\end{equation}
It is readily seen from the above assumption that
\begin{align}
    p> q \geq \frac{1}{2} .
	\label{eq:p-q-order-LB-finite}
\end{align}
In view of the transition kernel \eqref{eq:Ph-construction-lower-bound-finite}, 
the MDP will never leave the state subset $\{0,1\}$ if its initial state belongs to $\{0,1\}$.
The reward function of all these MDPs is chosen to be
\begin{align}
r_h(s, a) = \left\{ \begin{array}{lll}
         1 & \text{if} & s = 0 \\
         \frac{1}{2} & \text{if}   & (s, a) = (1, 0) \\ 
         0 & \text{if}   & (s, a) = (1, 1) \\ 
         0 & \text{if}   & s > 1 \
                \end{array}\right.
		\label{eq:rh-construction-lower-bound-finite}
\end{align}
for any $(s,a,h)\in \cS\times \cA\times [H]$.

Finally,  let us choose the set $\Theta \subseteq  \{0, 1\}^{H}$. 
By virtue of  the Gilbert-Varshamov lemma \citep{gilbert1952comparison}, 
one can construct $\Theta \subseteq  \{0, 1\}^{H}$ in a way  that 
\begin{equation}
	|\Theta| \ge e^{H/8}
	\qquad \text{and} \qquad 
	\|\theta - \widetilde{\theta}\|_1 \ge \frac{H}{8}
	\quad 
	\text{for any }\theta,\widetilde{\theta}\in \Theta \text{ obeying }\theta \ne \widetilde{\theta}. 
	\label{eq:property-Theta} 
\end{equation}
In other words, the set $\Theta$ we construct contains an exponentially large number of vectors that are sufficiently separated. 
This property plays an important role in the ensuing analysis.


\paragraph{Value functions and optimal policies.} Next, we look at the value functions of the constructed MDPs and identify the optimal policies. 
For the sake of notational clarity, for the MDP $\mathcal{M}_\theta$, 
we denote by $\pi^{\star, \theta} = \{\pi^{\star, \theta}_h\}_{h=1}^H$ the optimal policy,  
and let $V_h^{\pi, \theta}$ (resp.~$V_h^{\star, \theta}$) indicate the value function of  policy $\pi$ (resp.~$\pi^{\star, \theta}$) at time step $h$. The following lemma collects a couple of useful properties concerning the value functions and optimal policies; the proof can be found in Appendix~\ref{proof:lem:finite-lb-value}.

\begin{lemma}\label{lem:finite-lb-value}
Consider any $\theta\in\Theta$ and any policy $\pi$. Then it holds that
\begin{align}
   	V_h^{\pi, \theta}(0) =  1 + \big(\mu(1)x_h^{\pi,\theta}+\mu(0)\big) V_{h+1}^{\pi,\theta}(0) + (1-x_h^{\pi,\theta})\mu(1)V_{h+1}^{\pi,\theta}(1)
	\label{eq:finite-lemma-value-0-pi}
\end{align}
for any $h\in [H]$, where 
\begin{align}
x_h^{\pi,\theta} = p\pi_h(\theta_h\mymid 0) + q\pi_h(1-\theta_h\mymid 0).\label{eq:finite-x-h}
\end{align}
In addition,  for any $h\in[H]$,  the optimal policies and the optimal value functions obey
\begin{subequations}
	\label{eq:finite-lb-value-lemma}
\begin{align}
	\pi_h^{\star,\theta}(\theta_h \mymid 0) &= 1,  &V_h^{\star, \theta}(0) \ge \tfrac{2}{3}(H+1-h), \\
	\pi_h^{\star,\theta}(0 \mymid 1) &= 1,      &V_h^{\star,\theta}(1)  = \tfrac{1}{2}(H+1-h),
\end{align}
\end{subequations}
provided that $0<c_1\leq 1/2$. 
\end{lemma}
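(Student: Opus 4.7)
My plan is to prove the identity \eqref{eq:finite-lemma-value-0-pi} by a direct Bellman computation, and then verify the optimal-policy statements and the value-function bounds \eqref{eq:finite-lb-value-lemma} by a \emph{coupled backward induction} on $h$, exploiting the fact that from any state in $\{0,1\}$ the chain never leaves $\{0,1\}$. For the identity, I would just expand $V_h^{\pi,\theta}(0) = \mathbb{E}_{a\sim \pi_h(\cdot\mymid 0)}\big[r_h(0,a)+P_h^{\theta_h}(\cdot\mymid 0,a) V_{h+1}^{\pi,\theta}\big]$ using \eqref{eq:Ph-construction-lower-bound-finite} and \eqref{eq:rh-construction-lower-bound-finite}, observe that $r_h(0,a)=1$ for both actions, and use the simplification $x+(1-x)\mu(0)=\mu(0)+\mu(1)x$ with $x=x_h^{\pi,\theta}$ defined in \eqref{eq:finite-x-h}. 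This immediately yields the claimed form.

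For the optimal-policy and value statements, I would set up a backward induction with hypothesis
\[
V_{h+1}^{\star,\theta}(1)=\tfrac{1}{2}(H-h),
\qquad \tfrac{2}{3}(H-h)\le V_{h+1}^{\star,\theta}(0)\le H-h,
\qquad \pi^{\star,\theta}\text{ is as claimed from step }h+1\text{ onward},
\]
with base case $h=H$ (so that $V_{H+1}^{\star,\theta}\equiv 0$) being trivial. For the inductive step, compute $Q_h^{\star,\theta}(1,0)=\tfrac{1}{2}+V_{h+1}^{\star,\theta}(1)$ and, using the transition \eqref{eq:Ph-construction-lower-bound-finite}, $Q_h^{\star,\theta}(1,1)=V_{h+1}^{\star,\theta}(1)+\tfrac{2c_1\mu(0)}{H}\big[V_{h+1}^{\star,\theta}(0)-V_{h+1}^{\star,\theta}(1)\big]$, so that
\[
Q_h^{\star,\theta}(1,0)-Q_h^{\star,\theta}(1,1)\;\ge\; \tfrac{1}{2}-\tfrac{2c_1}{CS}(H-h)\cdot\tfrac{1}{H}\;\ge\;\tfrac{1}{2}-\tfrac{2c_1}{CS}\;>\;0
\]
under $c_1\le 1/2$ and $1/(CS)\le 1/4$ (cf.~\eqref{finite-C-assumption}); this forces $\pi_h^{\star,\theta}(0\mymid 1)=1$, and therefore $V_h^{\star,\theta}(1)=\tfrac{1}{2}+V_{h+1}^{\star,\theta}(1)=\tfrac{1}{2}(H+1-h)$. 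Similarly, a direct subtraction gives $Q_h^{\star,\theta}(0,\theta_h)-Q_h^{\star,\theta}(0,1-\theta_h)=(p-q)\mu(1)\big[V_{h+1}^{\star,\theta}(0)-V_{h+1}^{\star,\theta}(1)\big]\ge 0$, since $p>q$ by \eqref{eq:p-q-order-LB-finite} and the inductive gap $V_{h+1}^{\star,\theta}(0)\ge V_{h+1}^{\star,\theta}(1)$ is nonnegative; this yields $\pi_h^{\star,\theta}(\theta_h\mymid 0)=1$.

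With the optimal actions pinned down we have $x_h^{\pi^{\star,\theta},\theta}=p$, so \eqref{eq:finite-lemma-value-0-pi} specializes to $V_h^{\star,\theta}(0)=1+[\mu(0)+\mu(1)p]V_{h+1}^{\star,\theta}(0)+(1-p)\mu(1)V_{h+1}^{\star,\theta}(1)$. Substituting the inductive bounds $V_{h+1}^{\star,\theta}(0)\ge \tfrac{2}{3}(H-h)$ and $V_{h+1}^{\star,\theta}(1)=\tfrac{1}{2}(H-h)$ and using the identity $\mu(0)+\mu(1)p-1=-\mu(1)(1-p)$, the desired lower bound $V_h^{\star,\theta}(0)\ge \tfrac{2}{3}(H+1-h)$ reduces to the scalar inequality $\mu(1)(1-p)(H-h)\le 2$, which holds because $1-p\le c_1/H$ and $c_1\le 1/2$. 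The upper bound $V_h^{\star,\theta}(0)\le H+1-h$ is immediate from the reward normalization $r_h\in[0,1]$, closing the induction.

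The only real subtlety lies in the state-$1$ comparison, where a naive bound on the value gap $V_{h+1}^{\star,\theta}(0)-V_{h+1}^{\star,\theta}(1)$ by $H$ could scale with the horizon and overwhelm the $\tfrac{1}{2}$ immediate reward; the key is that the absorbing mass $2c_1/H$ on state $1$ under action $1$ precisely cancels this $H$, leaving a clean constant gap controlled by $c_1/(CS)$, which is why the assumptions $c_1\le 1/2$ and $1/(CS)\le 1/4$ are the binding constraints. Everything else is routine algebra and bookkeeping of the coupled induction.
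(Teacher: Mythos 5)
Your proof is correct and reaches all the claimed conclusions. The identity~\eqref{eq:finite-lemma-value-0-pi} computation, the state-$1$ Q-value comparison $Q_h^{\star,\theta}(1,0)-Q_h^{\star,\theta}(1,1)\ge \tfrac12-\tfrac{2c_1}{CS}\ge \tfrac14>0$, and the state-$0$ Q-value comparison $(p-q)\mu(1)\big[V_{h+1}^{\star,\theta}(0)-V_{h+1}^{\star,\theta}(1)\big]\ge0$ all check out, as does the scalar reduction $\mu(1)(1-p)(H-h)\le c_1\le\tfrac12\le 2$ that closes the lower bound on $V_h^{\star,\theta}(0)$.

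The overall structure --- backward induction, Bellman recursions, and a monotonicity/Q-value-gap argument at each of states $0$ and $1$ --- mirrors the paper's argument, but you take a genuinely different and somewhat cleaner route to the lower bound $V_h^{\star,\theta}(0)\ge\tfrac23(H+1-h)$. The paper lower-bounds $V_h^{\pi,\theta}(0)$ for the fixed policy $\pi_h(\theta_h\mymid0)=1$ by the geometric series $\sum_{j=0}^{H-h}p^j$ and then appeals to the elementary estimate $\exp(-x)\le 1-\tfrac23 x$ (for $0\le x\le\tfrac12$) to convert $\frac{1-(1-c_1/H)^{H+1-h}}{c_1/H}$ into $\tfrac23(H+1-h)$; you instead carry the two-sided inductive bounds on $V_{h+1}^{\star,\theta}(0)$ and $V_{h+1}^{\star,\theta}(1)$ directly through the Bellman equation at step $h$ and reduce to a one-line scalar inequality. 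Your state-$1$ step is also phrased as a one-step Q-value gap rather than the paper's iterated policy-dependent inequality $V_h^{\pi,\theta}(1)\le\tfrac12+V_{h+1}^{\pi,\theta}(1)$, but these two formulations are equivalent. Both proofs share the same minor boundary subtlety at $h=H$ (where the state-$0$ Q-gap is exactly $0$, so the choice $\pi_H^{\star,\theta}(\theta_H\mymid0)=1$ is a definitional tie-break), which is handled implicitly by the ``$\ge0$'' rather than ``$>0$''; this is consistent with how the paper treats it.
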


\paragraph{Construction of the batch dataset.}
A batch dataset is then generated, which consists of $K$ {\em independent} sample trajectories each of length $H$.
The initial state distribution $\rhob$ and the behavior policy $\pib = \{\pib_h\}_{h=1}^H$ (according to \eqref{eq:finite-batch-size-def}) are chosen as follows: 
\begin{align*}
	\rhob (s) = \mu(s)\quad \text{and } \quad\pi_h^{\mathsf{b}}(a \mymid s) = \frac{1}{2}, \qquad \forall (s,a,h)\in \cS\times \cA\times [H], 
\end{align*}
where $\mu$ has been defined in \eqref{finite-mu-assumption}. 
As it turns out, for any MDP $\mathcal{M}_\theta$, the occupancy distributions of the above batch dataset admit the following simple characterization: 
\begin{align}
	\myrho_{h}(s) &= \mu(s), \qquad \myrho_h(s, a) = \frac{1}{2} \mu(s), \qquad \forall (s,a,h)\in\cS\times \cA\times [H]. 
	\label{eq:finite-lb-behavior-distribution}
\end{align}
Additionally, we shall choose the initial state distribution $\rho$ as follows
\begin{align}
    \rho(s) = 
    \begin{cases} 1, \quad &\text{if }s=0, \\
        0, &\text{if }s>0. 
    \end{cases}
    \label{eq:rho-defn-finite-LB}
\end{align}
With this choice of $\rho$, the single-policy clipped concentrability coefficient $\Cstar$ and the quantity $C$ are intimately connected as follows:
\begin{align}
    \Cstar = 2C. \label{eq:expression-Cstar-LB-finite}
\end{align}
The proof of the claims \eqref{eq:finite-lb-behavior-distribution} and \eqref{eq:expression-Cstar-LB-finite} can be found in Appendix~\ref{proof:eq:finite-lb-behavior-distribution}.


\subsubsection{Establishing the minimax lower bound}

We are now positioned to establish our sample complexity lower bounds. 
Recalling our choice of $\rho$ in \eqref{eq:rho-defn-finite-LB}, 
our proof seeks to control the quantity $$\big\langle \rho, V_1^{\star, \theta} - V_1^{\widehat{\pi}, \theta} \big\rangle = V_1^{\star, \theta}(0) - V_1^{\widehat{\pi}, \theta}(0),$$
where $\widehat{\pi}$ is any policy estimator computed based on the batch dataset.

\paragraph{Step 1: converting  $\widehat{\pi}$ into an estimate $\widehat{\theta}$ of $\theta$.} 
Towards this, we first make the following claim: for an arbitrary policy $\pi$ obeying
\begin{align}
	\sum_{h=1}^H \big\|\pi_h(\cdot\mymid 0) - \pi_h^{\star, \theta}(\cdot\mymid 0) \big\|_1 \ge \frac{H}{8},
\end{align}
one has
\begin{align}
    \big\langle \rho, V_1^{\star, \theta} - V_1^{\pi, \theta} \big\rangle > \varepsilon. \label{eq:finite-Value-0-recursive}
\end{align}
We shall postpone the proof of this claim to Appendix~\ref{proof:eq:finite-lb-behavior-distribution}.
Suppose for the moment that there exists a policy estimate $\widehat{\pi}$ that achieves
\begin{align}
    \mathbb{P} \left\{\big\langle \rho, V_1^{\star, \theta} - V_1^{\widehat{\pi}, \theta} \big\rangle \leq \varepsilon\right\} \geq \frac{3}{4},
	\label{eq:assumption-theta-small-LB-finite}
\end{align}
then in view of \eqref{eq:finite-Value-0-recursive}, 
we necessarily have 
\begin{equation}
	\mathbb{P}\left\{ \sum_{h=1}^H \big\| \widehat{\pi}_h(\cdot\mymid 0) - \pi_h^{\star, \theta}(\cdot\mymid 0) \big\|_1 < H/8 \right\} \geq \frac{3}{4}.
\end{equation}

With the above observation in mind, we are motivated to construct the following estimate $\widehat{\theta}$ for $\theta\in \Theta$: 
\begin{align}
	\widehat{\theta} = \arg \min_{\overline{\theta} \in\Theta} ~\sum_{h=1}^H \big\|\widehat{\pi}_h(\cdot\mymid 0) - \pi^{\star, \overline{\theta}}_h(\cdot\mymid 0)\big\|_1. \label{eq:finite-theta-estimator}
\end{align}
If $\sum_h \big\|\widehat{\pi}(\cdot\mymid 0) - \pi^{\star, \theta}(\cdot\mymid 0) \big\|_1 < H/8$ holds for some $\theta \in\Theta$, 
then for any $\widetilde{\theta} \in \Theta$ with $\widetilde{\theta}  \neq \theta$ one has
\begin{align}
	\sum_{h=1}^H \big \|\widehat{\pi}_h(\cdot\mymid 0) - \pi_h^{\star, \widetilde{\theta}}(\cdot\mymid 0) \big\|_1 
	&\geq   \sum_{h=1}^H  \big\|\pi_h^{\star, \theta}(\cdot\mymid 0) - \pi_h^{\star, \widetilde{\theta}}(\cdot\mymid 0)\big\|_1 - \sum_{h=1}^H \big\| \widehat{\pi}_h(\cdot\mymid 0) 
	-   \pi^{\star, \theta}_h(\cdot\mymid 0)\big\|_1 \notag\\
    & = 2\big\|\theta - \widetilde{\theta}\big\|_1 - \sum_{h=1}^H  \big\|\widehat{\pi}_h(\cdot\mymid 0) - \pi^{\star, \theta}_h(\cdot\mymid 0)\big\|_1  \notag\\
    & > \frac{H}{4} - \frac{H}{8} = \frac{H}{8} , \label{eq:finite-tilde-theta}
\end{align}
where the first inequality holds by the triangle inequality, 
the second line arises from the fact $\pi_h^{\star,\theta}(\theta_h \mymid 0)=1$ for all $1\leq h \leq H$ (see \eqref{eq:finite-lb-value-lemma}), and the last line comes from the properties \eqref{eq:property-Theta} about $\Theta$.
Putting \eqref{eq:finite-theta-estimator} and \eqref{eq:finite-tilde-theta} together implies that 
$\widehat{\theta} = \theta$ if 
$$
	\sum_{h=1}^H \big\| \widehat{\pi} _h (\cdot\mymid 0) - \pi^{\star, \theta}_h (\cdot\mymid 0) \big\|_1 
	<  \frac{H}{8} < \sum_{h=1}^H \big \|\widehat{\pi}_h(\cdot\mymid 0) - \pi_h^{\star, \widetilde{\theta}}(\cdot\mymid 0) \big\|_1
$$ 
is valid for all $\widetilde{\theta} \in \Theta$ with $\widetilde{\theta} \neq \theta$.
As a consequence,
\begin{align}
    \mathbb{P}\big(\widehat{\theta} = \theta\big) 
	\geq \mathbb{P}\left( \sum_{h=1}^H \big\|\widehat{\pi}_h(\cdot\mymid 0) - \pi_h^{\star, \theta}(\cdot\mymid 0)\big\|_1 < \frac{H}{8}\right) \geq  \frac{3}{4}. 
	\label{eq:P-theta-accuracy-finite}
\end{align}

In the sequel, we aim to demonstrate that \eqref{eq:P-theta-accuracy-finite} cannot possibly happen without enough samples, 
which would in turn contradict \eqref{eq:assumption-theta-small-LB-finite}.

\paragraph{Step 2: probability of error in testing multiple hypotheses.} 
Next, we turn attention to a $|\Theta|$-ary hypothesis testing problem. 
For any $\theta\in \Theta$, denote by $\mathbb{P}_\theta$ the probability distribution when the MDP is $\mathcal{M}_\theta$.
We will then study the minimax probability of error defined as follows:
\begin{equation}
    p_{\mathrm{e}}\coloneqq\inf_{\psi}\max_{\theta\in\Theta} \mathbb{P}_{\theta}(\psi\neq\theta) , \label{eq:error-prob-two-hypotheses-finite-LB}
\end{equation}
where the infimum is taken over all possible tests $\psi$ (constructed based on the batch dataset available).


Let $\mu^{\mathsf{b},\theta}$ (resp.~$\mu^{\mathsf{b},\theta_h}_h(s_h)$) represent the distribution of a sample trajectory $\{s_1, a_1,s_2, a_2,\cdots, s_H, a_H\}$ (resp.~a sample $(a_h,s_{h+1})$ conditional on $s_h$) for the MDP $\mathcal{M}_\theta$. Recalling that the $K$ trajectories in the batch dataset are independently generated, one obtains
\begin{align}
p_{\mathrm{e}} & \overset{\mathrm{(i)}}{\geq}1-\frac{K\max_{\theta,\widetilde{\theta}\in\Theta,\,\theta\neq\widetilde{\theta}}\mathsf{KL}\big(\mu^{\mathsf{b},\theta}\parallel\mu^{\mathsf{b},\widetilde{\theta}}\big)+\log2}{\log|\Theta|}\nonumber \\
 & \overset{(\mathrm{ii})}{\geq}1-\frac{8K}{H}\max_{\theta,\widetilde{\theta}\in\Theta,\,\theta\neq\widetilde{\theta}}\mathsf{KL}\big(\mu^{\mathsf{b},\theta}\parallel\mu^{\mathsf{b},\widetilde{\theta}}\big)-\frac{8\log2}{H}\nonumber \\
 & \overset{(\mathrm{iii})}{\geq}\frac{1}{2}-\frac{8K}{H}\max_{\theta,\widetilde{\theta}\in\Theta,\,\theta\neq\widetilde{\theta}}\mathsf{KL}\big(\mu^{\mathsf{b},\theta}\parallel\mu^{\mathsf{b},\widetilde{\theta}}\big),\label{eq:Fano-LB-finite}
\end{align}
where (i) arises from Fano's inequality (cf.~\citep[Corollary 2.6]{tsybakov2009introduction}) and  the additivity property of the KL divergence (cf.~\citet[Page~85]{tsybakov2009introduction}), 
(ii) holds since $|\Theta| \ge e^{H/8}$ (according to our construction \eqref{eq:property-Theta}), and (iii) is valid when $H\geq 16 \log 2$. 
Recalling that the occupancy state distribution $\myrho_h$ is the same for any MDP $\mathcal{M}_{\theta}$ with $\theta\in \Theta$ (see \eqref{eq:finite-lb-behavior-distribution}), one can invoke the chain rule of the KL divergence \citep[Lemma 5.2.8]{duchi2018introductory} and the Markovian nature of the sample trajectories to obtain
\begin{align*}
\mathsf{KL}\big(\mu^{\mathsf{b},\theta}\parallel\mu^{\mathsf{b},\widetilde{\theta}}\big) & =\sum_{h=1}^{H}\mathop{\mathbb{E}}\limits _{s_{h}\sim\myrho_{h}}\left[\mathsf{KL}\big(\mu_{h}^{\mathsf{b},\theta_h}(s_{h})\parallel\mu_{h}^{\mathsf{b},\widetilde{\theta}_h}(s_{h})\big)\right]
	=\frac{1}{2}\mu(0)\sum_{h=1}^H\sum_{a\in\{0,1\}}\mathsf{KL}\left(P_{h}^{\theta_h}(\cdot\mymid0,a)\parallel P_{h}^{\widetilde{\theta}_h}(\cdot\mymid0,a)\right), 
\end{align*}
where the last identity holds true since (by construction and \eqref{eq:finite-lb-behavior-distribution})
\begin{align*}
\mathop{\mathbb{E}}\limits _{s_{h}\sim\myrho_{h}}\left[\mathsf{KL}\big(\mu_{h}^{\mathsf{b},\theta_h}(s_{h})\parallel\mu_{h}^{\mathsf{b},\widetilde{\theta}_h}(s_{h})\big)\right] & =\sum_{s}\myrho_{h}(s)\left\{ \sum_{a,s'}\pib_{h}(a\mymid s)P_{h}^{\theta_h}(s'\mymid s,a)\log\frac{\pib_{h}(a\mymid s)P_{h}^{\theta_h}(s'\mymid s,a)}{\pib_{h}(a\mymid s)P_{h}^{\widetilde{\theta}_h}(s'\mymid s,a)}\right\} \\
 & =\frac{1}{2}\mu(0)\sum_{a}\sum_{s'}P_{h}^{\theta_h}(s'\mymid0,a)\log\frac{P_{h}^{\theta_h}(s'\mymid0,a)}{P_{h}^{\widetilde{\theta}_h}(s'\mymid0,a)}\\
 & =\frac{1}{2}\mu(0)\sum_{a}\mathsf{KL}\big(P_{h}^{\theta_h}(\cdot\mymid0,a)\parallel P_{h}^{\widetilde{\theta}_h}(\cdot\mymid0,a)\big).
\end{align*}
Substitution into \eqref{eq:Fano-LB-finite} yields
\begin{align}
	p_{\mathrm{e}}  
	\geq \frac{1}{2}- \frac{4K \mu(0)}{H} \max_{\theta,\widetilde{\theta}\in \Theta, \, \theta\neq \widetilde{\theta}} \, \sum_{h=1}^H     \left[ \mathsf{KL}\big( P_{h}^{\theta_h}(\cdot \mymid 0, 0) \parallel  P_{h}^{\widetilde{\theta}_h}(\cdot \mymid 0, 0)  \big) + \mathsf{KL}\big( P_{h}^{\theta_h}(\cdot \mymid 0, 1) \parallel  P_{h}^{\widetilde{\theta}_h}(\cdot \mymid 0, 1)  \big)  \right] .
	\label{eq:finite-remainder-KL}
\end{align}
%

%

It then boils down to bounding the KL divergence terms in \eqref{eq:finite-remainder-KL}. 
If $\theta_h = \widetilde{\theta}_h$, then it is self-evident that
\begin{align}
    \mathsf{KL}\big( P_{h}^{\theta_h}(\cdot \mymid 0, 0) \parallel  P_{h}^{\widetilde{\theta}_h}(\cdot \mymid 0, 0)  \big) + \mathsf{KL}\big( P_{h}^{\theta_h}(\cdot \mymid 0, 1) \parallel  P_{h}^{\widetilde{\theta}_h}(\cdot \mymid 0, 1)  \big)  = 0. \label{eq:finite-KL-bounded-0}
\end{align}
Consider now the case that $\theta_h \neq \widetilde{\theta}_h$,
and suppose without loss of generality that $\theta_h = 0$ and $\widetilde{\theta}_h = 1$.
It is seen that
\begin{align*}
	P_h^{\theta_h}(0 \mymid 0, 0)  &= P_h^{\theta_h}(\theta_h \mymid 0, 0) = \left(1 -\frac{1}{CS}\right)p + \frac{1}{CS},\\
	P_h^{\widetilde{\theta}_h}(0 \mymid 0, 0) &= P_h^{\widetilde{\theta}_h}\big(1-\widetilde{\theta}_h \mymid 0, 0\big) = \left(1 -\frac{1}{CS}\right)q + \frac{1}{CS}.
\end{align*}
Given that $p\geq q \geq 1/2$ (see \eqref{eq:p-q-order-LB-finite}), 
we can apply Lemma~\ref{lem:KL-key-result} to arrive at
\begin{align}
\mathsf{KL}\left(P_h^{\theta_h}(0 \mymid 0, 0)\parallel P_h^{\widetilde{\theta}_h}(0 \mymid 0, 0)\right) & =\mathsf{KL}\left(\left(1-\frac{1}{CS}\right)p+\frac{1}{CS}\parallel\left(1-\frac{1}{CS}\right)q+\frac{1}{CS}\right)\nonumber\\
 & \leq \frac{\left(1-\frac{1}{CS}\right)^2(p-q)^2}{\left(\left(1 -\frac{1}{CS}\right)p + \frac{1}{CS}\right)\left(1-p - (1-p)\frac{1}{CS} \right)} \nonumber\\
	& \overset{\mathrm{(i)}}{\leq}  \frac{\left(1-\frac{1}{CS}\right)^2(p-q)^2}{\left(\left(1 -\frac{1}{CS}\right)p \right)\big((1-p) (1- \frac{1}{CS}) \big)} 
	= \frac{4(c_2)^2 \varepsilon^2}{H^4  p(1-p)} \nonumber\\
    & \overset{\mathrm{(ii)}}{=}\frac{4 (c_2)^2 \varepsilon^2}{H^4 \left(1-\frac{c_1}{H} + \frac{c_2\varepsilon}{H^2}\right)\left(\frac{c_1}{H} - \frac{c_2\varepsilon}{H^2}\right)} \nonumber\\
    & \leq \frac{4 (c_2)^2 \varepsilon^2}{H^4 \frac{1}{2}\frac{c_1}{2H}} = \frac{16 (c_2)^2 \varepsilon^2}{c_1 H^3 },
	\label{eq:finite-KL-bounded}
\end{align}
where (i) and (ii) make use of the definition \eqref{eq:finite-p-q-def} of $(p, q)$, 
and the last line follows as long as $\frac{c_2\varepsilon}{H^2} \leq \frac{c_1}{2H} \leq \frac{1}{4}$. 
Similarly, it can be easily verified that $\mathsf{KL}\big(P_h^{\theta_h}(0 \mymid 0, 1)\parallel P_h^{\widetilde{\theta}_h}(0 \mymid 0, 1)\big)$ 
can be upper bounded in the same way. 
Substituting \eqref{eq:finite-KL-bounded} and \eqref{eq:finite-KL-bounded-0} back into \eqref{eq:finite-remainder-KL} 
indicates that: if the sample size obeys
\begin{align}
    N = KH \leq \frac{c_1 CS H^4}{512 (c_2)^2 \varepsilon^2} = \frac{c_1 \Cstar S H^4}{1024 (c_2)^2 \varepsilon^2}, \label{eq:finite-LB-sample-condition}
\end{align}
then one necessarily has
\begin{align}
     p_{\mathrm{e}}  &\geq \frac{1}{2}- \frac{4K \mu(0)}{H} \max_{\theta,\widetilde{\theta}\in \Theta, \, \theta\neq \widetilde{\theta}} \sum_{h=1}^H     \left[ \mathsf{KL}\big( P_{h}^{\theta_h}(\cdot \mymid 0, 0) \parallel  P_{h}^{\widetilde{\theta}_h}(\cdot \mymid 0, 0)  \big) + \mathsf{KL}\big( P_{h}^{\theta_h}(\cdot \mymid 0, 1) \parallel  P_{h}^{\widetilde{\theta}_h}(\cdot \mymid 0, 1)  \big)  \right]  \notag \\
     & \geq \frac{1}{2} - \frac{4K \mu(0)}{H}\sum_{h=1}^H  \frac{32 (c_2)^2 \varepsilon^2}{c_1 H^3 }  \geq \frac{1}{4}. \label{eq:finite-LB-final-pe}
\end{align}

\paragraph{Step 3: combining the above results.}
Suppose that there exists an estimator $\widehat{\pi}$ satisfying
\begin{align}
    \max_{\theta\in\Theta} \mathbb{P}_\theta \left\{\big\langle \rho, V_1^{\star, \theta} - V_1^{\widehat{\pi}, \theta} \big\rangle \geq \varepsilon \right\} < \frac{1}{4},
\end{align}
where $\mathbb{P}_{\theta}$ denotes the probability when the MDP is $\mathcal{M}_{\theta}$. 
Then in view of the analysis in Step 1, we must have 
\[
	\mathbb{P}_{\theta} \left( \sum_{h=1}^H\big\| \widehat{\pi}(\cdot\mymid 0) - \pi^{\star, \theta}(\cdot\mymid 0) \big\|_1  < \frac{H}{8} \right)
	\geq \frac{3}{4}, \qquad \text{for all }\theta \in \Theta, 
\]
and as a consequence of \eqref{eq:P-theta-accuracy-finite}, the estimator $\widehat{\theta}$ defined in \eqref{eq:finite-theta-estimator} must satisfy
\begin{align}
    \mathbb{P}_\theta \big(\widehat{\theta} \neq \theta \big) <\frac{1}{4}, \qquad  \text{for all }\theta \in \Theta.
\end{align}
Nevertheless, this cannot possibly happen under the sample size condition \eqref{eq:finite-LB-sample-condition}; otherwise it is contradictory to the result in \eqref{eq:finite-LB-final-pe}. This concludes the proof by inserting $c_1 = 1/4$ and $c_2 = 4096$.

\subsubsection{Proof of Lemma~\ref{lem:finite-lb-value}}\label{proof:lem:finite-lb-value} 
To start with, for any policy $\pi$, it is observed that the value function of state $s=0$ at step $h$ is
\begin{align}
	V_h^{\pi, \theta}(0) &= \mathop{\mathbb{E}}\limits_{a\sim\pi_h(\cdot \mymid 0)} \left[1 +  \sum_{s'} P^{\theta_h}_h(s' \mymid 0, a)V_{h+1}^{\pi,\theta}(s')\right] \notag\\
& = 1 + \pi_h(\theta_h \mymid 0) \left[ \big(p + (1-p)\mu(0)\big) V_{h+1}^{\pi,\theta}(0) + (1-p) \mu(1) V_{h+1}^{\pi,\theta}(1)\right] \nonumber \\
&\qquad +  \pi(1-\theta_h \mymid0) \left[ \big(q + (1-q)\mu(0)\big)V_{h+1}^{\pi,\theta}(0) + (1-q)\mu(1) V_{h+1}^{\pi,\theta}(1)\right] \nonumber\\
& = 1 + \Big[p \pi_h(\theta_h \mymid 0) + q \pi(1-\theta_h \mymid0) + \mu(0) - p \pi_h(\theta_h \mymid 0) \mu(0) - q \pi(1-\theta_h \mymid0)\mu(0) \Big]
	V_{h+1}^{\pi,\theta}(0) \nonumber\\
&\qquad +  \mu(1) \Big[ 1 - p\pi_h(\theta_h \mymid 0)  -q \pi(1-\theta_h \mymid 0)\Big] V_{h+1}^{\pi,\theta}(1) \nonumber\\
& \overset{\mathrm{(i)}}{=} 1 +  \left[ x_h^{\pi,\theta} + (1- x_h^{\pi,\theta})\mu(0) V^{\pi,\theta}_{h+1}(0) +  (1 - x_h^{\pi,\theta}) \mu(1) V_{h+1}^{\pi,\theta}(1)\right]  \nonumber\\
& \overset{\mathrm{(ii)}}{=}  1 + \big(\mu(1)x_h^{\pi}+\mu(0)\big) V_{h+1}^{\pi,\theta}(0) + (1-x_h^{\pi})\mu(1)V_{h+1}^{\pi,\theta}(1), \label{eq:finite-Value-0}
\end{align}
where (i) is valid due to the choice
\begin{align}
x_h^{\pi,\theta} = p\pi_h(\theta_h\mymid 0) + q\pi_h(1-\theta_h\mymid 0),
\end{align}
and (ii) holds since $\mu(0) + \mu(1) = 1$.

Additionally,  the value function of state $1$ at any step $h$ obeys 
\begin{align}
        V_h^{\pi,\theta}(1) &= \pi_h(0 \mymid 1) \left( \frac{1}{2} +  V_{h+1}^{\pi,\theta}(1)\right) + \pi_h(1 \mymid 1) \left[ \left(1- \frac{2c_1}{HCS}\right)V_{h+1}^{\pi,\theta}(1) + \frac{2c_1}{HCS}V_{h+1}^{\pi,\theta}(0)\right] \label{eq:finite-value-1-equal}\\
        & \overset{\mathrm{(i)}}{\leq} \pi_h(0 \mymid 1) \left( \frac{1}{2} +  V_{h+1}^{\pi,\theta}(1)\right) + \pi_h(1 \mymid 1) \left[ \left(1- \frac{2c_1}{HCS}\right)V_{h+1}^{\pi,\theta}(1) + \frac{2c_1}{HCS}\left(H -h\right)\right] \notag\\
        & \overset{\mathrm{(ii)}}{\leq} \pi_h(0 \mymid 1) \left( \frac{1}{2} +  V_{h+1}^{\pi,\theta}(1)\right) + \pi_h(1 \mymid 1) \left[\frac{1}{2} + \left(1- \frac{2c_1}{HCS}\right)V_{h+1}^{\pi,\theta}(1) \right] \notag\\
        & = \frac{1}{2} + V_{h+1}^{\pi,\theta}(1) - \frac{2c_1}{HCS}\pi_h(1 \mymid 1) V_{h+1}^{\pi,\theta}(1), \label{eq:finite-value-1-leq}
\end{align}
where (i) arises from the basic fact $ 0\leq V_{h}^{\pi,\theta}(s) \leq H-h+1$ for any policy $\pi$ and all $(s,h)\in\cS \times [H]$, and (ii) holds since $\frac{2c_1}{HCS}\left(H -h\right) \leq \frac{1}{2}$ for $c_1$ small enough.
The above results lead to several immediate facts.  
\begin{itemize}
    \item If we choose $\pi$ such that $\pi_h(0 \mymid 1) = 1$ for all $h\in[H]$, then \eqref{eq:finite-value-1-equal} tells us that
    \begin{align}
        V_h^{\pi,\theta}(1) = \frac{1}{2} + V_{h+1}^{\pi,\theta}(1) .
    \end{align}
	A recursive application of this relation reveals that
    \begin{align}
        V_h^{\pi,\theta}(1) = \frac{1}{2} + V_{h+1}^{\pi,\theta}(1) = \cdots = \sum_{j=h}^H \frac{1}{2} = \frac{1}{2}(H+1-h). \label{eq:finite-value-1-optimal}
    \end{align}

    \item For any policy $\pi$, applying \eqref{eq:finite-value-1-leq} recursively tells us that
    \begin{align}
        V_h^{\pi,\theta}(1) \leq \frac{1}{2} + V_{h+1}^{\pi,\theta}(1) \leq \cdots \leq \sum_{j=h}^H \frac{1}{2} = \frac{1}{2}(H+1-h). \label{eq:finite-value-1-suboptimal}
    \end{align}

\end{itemize}

 The above two facts taken collectively imply that the optimal policy and optimal value function obey
    \begin{align}
	    \pi_h^{\star,\theta}(0 \mymid 1) = 1,\qquad V_h^{\star,\theta}(1)  =  \frac{1}{2}(H+1-h), \qquad \forall h\in [H]. \label{eq:finite-value-1-lemma-summary}
    \end{align}
We then return to state $0$.  By taking $\pi$ such that $\pi_h(\theta_h\mymid 0)=1$ (and hence $x_h^{\pi,\theta} = p$) for all $h\in[H]$, one can invoke \eqref{eq:finite-Value-0} to derive
\begin{align}
	V_h^{\pi, \theta}(0)&= 1 + \big(\mu(1)p+\mu(0)\big) V_{h+1}^{\pi, \theta}(0) + (1-p)\mu(1)V_{h+1}^{\pi, \theta}(1) \nonumber\\
&\ge 1 + pV_{h+1}^{\pi, \theta}(0)  \ge \sum_{j=0}^{H-h} p^j \ge \sum_{j = 0}^{H-h} \Big(1 - \frac{c_1}{H} \Big)^j = \frac{1-\big(1-\frac{c_{1}}{H}\big)^{H-h+1}}{c_{1}/H}\notag\\
	&\ge \frac{2}{3}(H+1-h) . \label{eq:finite-value-0-lower-bounded-123}
\end{align}
To see that why the last inequality holds, it suffices to observe that
\[
	\Big(1-\frac{c_{1}}{H}\Big)^{H-h+1}\leq\exp\left(-\frac{c_{1}}{H}(H-h+1)\right)\leq1-\frac{2c_{1}(H-h+1)}{3H},
\]
as long as $c_1\leq 0.5$, which follows due to the elementary inequalities $1-x\leq \exp(-x)$ for any $x\geq 0$ and $\exp(-x)\leq 1-2x/3$ for any $0\leq x\leq 1/2$. 
Combine \eqref{eq:finite-value-0-lower-bounded-123} with \eqref{eq:finite-value-1-lemma-summary} to reach
\begin{equation}
	V_h^{\star, \theta}(0) \geq V_h^{\pi, \theta}(0) \geq \frac{2}{3}(H+1-h) > V_h^{\star, \theta}(1) . 
\label{eq:finite-value-0-lower-bounded}
\end{equation}
Moreover, it follows from \eqref{eq:finite-Value-0} that
\begin{align}
    V_h^{\star, \theta}(0) &= 1 + \big(\mu(1)x_h^{\pi^{\star,\theta},\theta}+\mu(0)\big) V_{h+1}^{\star, \theta}(0) + (1-x_h^{\pi^{\star,\theta},\theta})\mu(1)V_{h+1}^{\star, \theta}(1) \nonumber\\
    & = 1 + \mu(0)V_{h+1}^{\star, \theta}(0) + \mu(1)V_{h+1}^{\star, \theta}(1) + \mu(1)\big( V_{h+1}^{\star, \theta}(0) - V_{h+1}^{\star, \theta}(1) \big) x_h^{\pi^{\star,\theta},\theta}.
\end{align}
Observing that the function
\begin{align}
    \mu(1)\big( V_{h+1}^{\star, \theta}(0) - V_{h+1}^{\star, \theta}(1) \big) x
\end{align}
is increasing in $x$ (as a result of \eqref{eq:finite-value-0-lower-bounded}) and that $x_h^{\pi,\theta}$ is increasing in $\pi_h(\theta_h \mymid 0)$ (since $p\geq q$), 
we can readily conclude that the optimal policy in state $0$ obeys
\begin{equation}
	\pi_h^{\star,\theta}(\theta_h \mymid 0) = 1,\qquad \text{for all } h\in[H]. \label{eq:finite-lb-optimal-policy}
\end{equation}


\subsubsection{Proof of auxiliary properties}\label{proof:eq:finite-lb-behavior-distribution}

Throughout this section, we shall suppress the dependency on $\theta$ in the notation $d_h^{\star}$ whenever it is clear from the context.  

\paragraph{Proof of claim \eqref{eq:finite-lb-behavior-distribution}.}
For any MDP $\mathcal{M}_\theta$, from the definition of $\myrho_h(s, a)$ in \eqref{eq:dhb-finite} and the Markov property, it is clearly seen that
\begin{align}
\myrho_{h+1}(s) &= d_{h+1}^{\pib}(s; \rhob) = \mathbb{P}(s_{h+1} = s \mymid s_h \sim \myrho_{h}; \pib), \qquad \forall (s,h)\in \cS\times [H].
\end{align}
Recalling that $\myrho_{1}(s)= \rhob(s)= \mu(s)$ for all $s\in\cS$, one can then show that
\begin{align}
    \myrho_{2}(0) &= \mathbb{P}\{s_{2} = 0 \mymid s_1 \sim \myrho_1; \pib\} \notag \\
    & = \mu(0)\left[\pib_1(\theta_1 \mymid 0) P_1^{\theta_1}(0\mymid 0,\theta_1) + \pib_1(1-\theta_1 \mymid 0) P_1^{\theta_1}(0\mymid 0,1-\theta_1) \right]  \notag \\
	&\qquad + \mu(1) \left[ \pib_1(0 \mymid 1) P_1^{\theta_1}(0\mymid 1,0) + \pib_1(1 \mymid 1) P_1^{\theta_1}(0\mymid 1, 1) \right] \notag\\
    & = \frac{\mu(0)}{2}\left[P_1^{\theta_1}(0\mymid 0,\theta_1) + P_1^{\theta_1}(0\mymid 0,1-\theta_1) \right] + \frac{\mu(1)}{2}\left[P_1^{\theta_1}(0\mymid 1,0) + P_1^{\theta_1}(0\mymid 1, 1) \right] \notag\\
    & =  \frac{\mu(0)}{2}\big[(p+q) + (2-p-q) \mu(0)\big] + \frac{\mu(1) }{2}\mu(0)\frac{2c_1}{H} \notag\\
    & = \frac{\mu(0)}{2}\left[2-\frac{2c_1}{H} + \frac{2c_1}{H} \mu(0)\right] + \frac{\mu(1) }{2}\mu(0)\frac{2c_1}{H} 
     = \mu(0) , \notag
\end{align}
where the last inequality holds since $\mu(1) + \mu(0) = 1$. Similarly, it can be verified that $\myrho_{1}(1) = \mu(1)$, thereby implying that $\myrho_{2} = \mu$. 
Repeating this argument recursively for steps $h=2,\ldots,H$ confirms that
\begin{align}
	\myrho_{h}(s) &= \mu(s), \qquad \forall (s,h)\in\cS\times [H]. 
\end{align}
This further allows one to demonstrate that
\begin{align}
\myrho_h(s, a) &= \myrho_h(s)\pi_h^{\mathsf{b}}(a \mymid s) =  \mu(s)\pi_h^{\mathsf{b}}(a \mymid s) = \mu(s)/2, \qquad \forall (s,a,h)\in\cS\times \cA\times [H].
\end{align}

\paragraph{Proof of claim \eqref{eq:expression-Cstar-LB-finite}.}
Consider any MDP $\mathcal{M}_\theta$, for which we have shown in Lemma~\ref{lem:finite-lb-value} that $\pi_h^{\star,\theta}(\theta_h \mymid 0) = 1 $ for all $h\in[H]$.
It is observed that
\begin{align}
d_h^{\star}(0,\theta_h)  &= d_h^{\star}(0)\pi_h^{\star,\theta}(\theta_h \mymid 0) 
= d_h^{\star}(0)= \mathbb{P} \big\{ s_{h} = 0 \mymid s_{h-1} \sim d_{h-1}^{\star}; \pi^{\star,\theta} \big\} \notag\\
	& \geq d_{h-1}^{\star}(0) \pi_{h-1}^{\star,\theta}(\theta_{h-1} \mymid 0) P_{h-1}^{\theta_{h-1}}(0 \mymid 0, \theta_{h-1}) 
	=  d_{h-1}^{\star}(0)  P_{h-1}^{\theta_{h-1}}(0 \mymid 0, \theta_{h-1})  \notag\\
	& \geq \cdots \geq d_{1}^{\star}(0) \prod_{j = 0}^{h-1}P_{j}^{\theta_j}(0 \mymid 0, \theta_j) = \rho(0)\prod_{j = 0}^{h-1}P_{j}^{\theta_j}(0 \mymid 0, \theta_j) \notag \\
	&  \geq \rho(0)\prod_{j = 0}^{h-1}p  \ge \Big(1-\frac{c_1}{H}\Big)^{H} > \frac{1}{2}, \label{eq:finite-optimal-d-large}
\end{align}
where the last line makes use of the properties $p\geq 1- c_1/H$, $\rho(0)=1$, and 
\[
\left(1-\frac{c_{1}}{H}\right)^{H}\geq\Big(1-\frac{1}{2H}\Big)^{H} > \frac{1}{2}
\]
provided that $0<c_1 < 1/2$. 
Combining this with \eqref{eq:finite-lb-behavior-distribution}, we arrive at
\begin{align}
    \max_{h\in[H]} \frac{\min\big\{d_h^{\star}(0, \theta_h), \frac{1}{S}\big\}}{\myrho_h(0, \theta_h)} &= \frac{2}{S\mu(0)} = 2C, \notag\\
	\max_{h\in[H]} \frac{\min\big\{d_h^{\star}(0, 1-\theta_h), \frac{1}{S}\big\}}{\myrho_h(0, 1- \theta_h)} 
	&=  \max_{h\in[H]} \frac{\min\big\{d_h^{\star}(0) \pi_h^{\star}(1-\theta_h \mymid 0), \frac{1}{S}\big\}}{\myrho_h(0, 1- \theta_h)} = 0, \notag\\
    \max_{a\in\{0,1\}, h\in[H]} \frac{\min\big\{d_h^{\star}(1, a), \frac{1}{S}\big\}}{\myrho_h(1, a)} &\overset{\mathrm{(i)}}{\leq} \frac{1/S}{\mu(1)/2} 
	\overset{\mathrm{(ii)}}{=} \frac{2}{S\left(1-\frac{1}{SC}\right)}\leq \frac{4}{S} \leq 2C, \notag
\end{align}
where (i) arises from \eqref{eq:finite-lb-behavior-distribution}, (ii) relies on the definition in \eqref{finite-mu-assumption}, and the final two inequalities come from the assumption in \eqref{finite-C-assumption}. 
Taking this together with the straightforward condition $d_h^{\star}(s)=0$ ($s>1$) yields
\begin{align}
    \Cstar  = \max_{h\in[H]} \frac{\min\big\{d_h^{\star}(0, \theta_h), \frac{1}{S}\big\}}{\myrho_h(0, \theta_h)} = 2C.
\end{align}

\paragraph{Proof of inequality \eqref{eq:finite-Value-0-recursive}.}
%
By virtue of \eqref{eq:finite-x-h} and \eqref{eq:finite-lb-value-lemma}, we see that $x_h^{\pi^{\star,\theta},\theta} = p$ for all $h\in[H]$, which combined with \eqref{eq:finite-lemma-value-0-pi} gives
\begin{align}
\big\langle \rho, V_h^{\star, \theta} - V_h^{\pi, \theta} \big\rangle &= V_h^{\star, \theta}(0) - V_h^{\pi, \theta}(0) \notag\\
&= \big(\mu(1)p+\mu(0)\big) V_{h+1}^{\star,\theta}(0) + (1-p)\mu(1)V_{h+1}^{\star,\theta}(1) \notag \\
&\qquad  - \big(\mu(1)x_h^{\pi,\theta}+\mu(0)\big) V_{h+1}^{\pi,\theta}(0) - (1-x_h^{\pi,\theta})\mu(1)V_{h+1}^{\pi,\theta}(1) \notag\\
&   \overset{\mathrm{(i)}}{\geq} \big(\mu(1)x_h^{\pi,\theta}+\mu(0)\big) \left(V_{h+1}^{\star, \theta}(0) - V_{h+1}^{\pi,\theta}(0)\right) + \mu(1) \big(p-x_h^{\pi,\theta}\big)V_{h+1}^{\star,\theta}(0) \notag \\
&\qquad + (1-p)\mu(1)V_{h+1}^{\star,\theta}(1)  - \big(1-x_h^{\pi,\theta}\big)\mu(1)V_{h+1}^{\star,\theta}(1) \notag \\
&= \big(\mu(1)x_h^{\pi,\theta}+\mu(0)\big) \left(V_{h+1}^{\star, \theta}(0) - V_{h+1}^{\pi,\theta}(0)\right)  + \big(p-x_h^{\pi,\theta}\big) \mu(1)\left(V_{h+1}^{\star,\theta}(0) - V_{h+1}^{\star,\theta}(1)\right)  \notag \\
& \overset{\mathrm{(ii)}}{\geq} q\left(V_{h+1}^{\star, \theta}(0) - V_{h+1}^{\pi,\theta}(0)\right) +  \big(p-x_h^{\pi,\theta}\big) \mu(1)\left(V_{h+1}^{\star,\theta}(0) - V_{h+1}^{\star,\theta}(1)\right)  \notag \\
& \overset{\mathrm{(iii)}}{\geq} q\left(V_{h+1}^{\star, \theta}(0) - V_{h+1}^{\pi,\theta}(0)\right)  +  \frac{3}{8}(p-q) \big\|\pi_h^{\star, \theta}(0) - \pi_h(0)\big\|_1 \left(V_{h+1}^{\star,\theta}(0) - V_{h+1}^{\star,\theta}(1)\right)  \notag \\
&\overset{\mathrm{(iv)}}{\geq} q\left(V_{h+1}^{\star, \theta}(0) - V_{h+1}^{\pi,\theta}(0)\right)  + \frac{c_2\varepsilon}{8H^2}(H+1-h) \big\|\pi_h^{\star, \theta}(\cdot\mymid 0) - \pi_h(\cdot\mymid 0)\big\|_1 , \label{eq:finite-lb-recursion}
\end{align}
where (i) holds since $ V_{h+1}^{\pi,\theta}(1) \leq V_{h+1}^{\star,\theta}(1)$, (ii) follows from the fact that $x_h^\pi \geq q$ for any $\pi$ and $h\in[H]$, 
and (iv) arises from the facts \eqref{eq:finite-lb-value-lemma} and the choice \eqref{eq:finite-p-q-def} of $(p,q)$. 
To see why (iii) is valid, 
it suffices to note that  $\mu(1) = 1-\frac{1}{CS} \geq \frac{3}{4}$ (as a consequence of \eqref{finite-mu-assumption} and \eqref{finite-C-assumption}) and
\begin{equation*}
p-x_{h}^{\pi,\theta}=(p-q)\big(1-\pi_{h}(\theta_{h}\mymid0)\big)=\frac{1}{2}(p-q)\big(1-\pi_{h}(\theta_{h}\mymid0)+\pi_{h}(1-\theta_{h}\mymid0)\big)=\frac{1}{2}(p-q)\big\|\pi_{h}^{\star,\theta}(\cdot\mymid 0)-\pi_{h}(\cdot\mymid 0)\big\|_{1}. 	
\end{equation*}

To continue, under the condition 
\begin{align}
	\sum_{h=1}^H \big\|\pi_h(\cdot\mymid 0) - \pi_h^{\star, \theta}(\cdot\mymid 0)\big\|_1 \ge \frac{H}{8}, \label{eq:finite-lower-delta}
\end{align}
applying the relation in \eqref{eq:finite-lb-recursion} recursively yields
\begin{align}
    V_1^{\star, \theta}(0) - V_1^{\pi, \theta}(0) &\ge \sum_{h = 1}^H q^{h-1} \frac{c_2\varepsilon}{8H^2}(H+1-h)\big\|\pi_h^{\star, \theta}(\cdot\mymid 0) - \pi_h(\cdot\mymid 0)\big\|_1  \nonumber\\
    & = \sum_{h = 1}^H \bigg(1 - \frac{c_1}{H} - \frac{c_2\varepsilon}{H^2}\bigg)^{h-1} \frac{c_2\varepsilon}{8H^2}(H+1-h) \big\|\pi_h^{\star, \theta}(\cdot\mymid 0) - \pi_h(\cdot\mymid 0)\big\|_1  \notag \\
    & \overset{\mathrm{(i)}}{>} \frac{c_2\varepsilon}{16H^2} \sum_{h = 1}^H (H+1-h) \big\|\pi_h^{\star, \theta}(\cdot\mymid 0) - \pi_h(\cdot\mymid 0)\big\|_1 \notag \\
	& = \frac{c_2\varepsilon}{16H^2} \sum_{h = 1}^H h \big\|\pi_{H+1-h}^{\star, \theta}(\cdot\mymid 0) - \pi_{H+1-h}(\cdot\mymid 0)\big\|_1 \notag \\ 
    & \overset{\mathrm{(ii)}}{\geq} \frac{c_2\varepsilon}{16 H^2} \sum_{h=1}^{\lfloor H/16\rfloor}2h
	= \frac{c_{2}\varepsilon}{8H^{2}}\Big\lfloor\frac{H}{16}\Big\rfloor\Big(\Big\lfloor\frac{H}{16}\Big\rfloor+1\Big). 
	\label{eq:V-diff-intermediate-13579-inf-LB}
\end{align}
Here, (i) follows since 
\[
	\Big(1 - \frac{c_1}{H} - \frac{c_2\varepsilon}{H^2}\Big)^{h-1} \geq \Big(1 - \frac{2c_1}{H}\Big)^{H} > \frac{1}{2}, 
	\qquad \text{for all }h\in [H]
\]
holds as long as $0<c_1\leq 1/4$ and $c_2\varepsilon/H \leq c_1$.  
To see why (ii) is valid, we note that  
for 
any $0\leq x_{1},\cdots, x_{H}\leq x_{\max}$
obeying $\sum_{i=1}^{H}x_{i}\geq x_{\text{sum}}$,
the following elementary inequality holds: 
\[
	\sum_{i=1}^{H}x_{i}a_{i}\geq\sum_{i=1}^{\lfloor x_{\text{sum}}/ x_{\max} \rfloor}x_{\max}a_{i};
\]
this together with $\big\|\pi_h^{\star, \theta}(\cdot\mymid 0) - \pi_h(\cdot\mymid 0)\big\|_1\leq 2$ and \eqref{eq:finite-lower-delta} 
reveals that (by taking $a_h = h$ and $x_h = \big\|\pi_{H+1-h}^{\star, \theta}(\cdot\mymid 0) - \pi_{H+1-h}(\cdot\mymid 0)\big\|_1$) 
\[
\sum_{h=1}^{H}h\big\|\pi_{H+1-h}^{\star,\theta}(\cdot\mymid0)-\pi_{H+1-h}(\cdot\mymid0)\big\|_{1}\geq\sum_{h=1}^{\lfloor H/16\rfloor}2h, 
\]
thus validating  inequality (ii). 
As a result, we can continue the derivation to obtain
\begin{equation}
	\eqref{eq:V-diff-intermediate-13579-inf-LB} 
\geq\frac{c_{2}\varepsilon}{8H^{2}}\frac{\frac{H}{16}\big(\frac{H}{16}+1\big)}{2}>\varepsilon,
\end{equation}
provided that $c_2\geq 4096$.

\bibliography{bibfileRL}

\begin{thebibliography}{}

\bibitem[Agarwal et~al., 2021]{agarwal2019reinforcement}
Agarwal, A., Jiang, N., Kakade, S.~M., and Sun, W. (2021).
\newblock Reinforcement learning: Theory and algorithms.
\newblock {\em Technical report}.

\bibitem[Agarwal et~al., 2020]{agarwal2019optimality}
Agarwal, A., Kakade, S., and Yang, L.~F. (2020).
\newblock Model-based reinforcement learning with a generative model is minimax
  optimal.
\newblock {\em Conference on Learning Theory}, pages 67--83.

\bibitem[Agarwal et~al., 2001]{agarwal2001fixed}
Agarwal, R.~P., Meehan, M., and O'regan, D. (2001).
\newblock {\em Fixed point theory and applications}, volume 141.
\newblock Cambridge university press.

\bibitem[Auer and Ortner, 2006]{auer2006logarithmic}
Auer, P. and Ortner, R. (2006).
\newblock Logarithmic online regret bounds for undiscounted reinforcement
  learning.
\newblock {\em Advances in neural information processing systems}, 19.

\bibitem[Azar et~al., 2013]{azar2013minimax}
Azar, M.~G., Munos, R., and Kappen, H.~J. (2013).
\newblock Minimax {PAC} bounds on the sample complexity of reinforcement
  learning with a generative model.
\newblock {\em Machine learning}, 91(3):325--349.

\bibitem[Azar et~al., 2017]{azar2017minimax}
Azar, M.~G., Osband, I., and Munos, R. (2017).
\newblock Minimax regret bounds for reinforcement learning.
\newblock In {\em International Conference on Machine Learning}, pages
  263--272. PMLR.

\bibitem[Bai et~al., 2019]{bai2019provably}
Bai, Y., Xie, T., Jiang, N., and Wang, Y.-X. (2019).
\newblock Provably efficient {Q}-learning with low switching cost.
\newblock In {\em Advances in Neural Information Processing Systems}, pages
  8002--8011.

\bibitem[Beck and Srikant, 2012]{beck2012error}
Beck, C.~L. and Srikant, R. (2012).
\newblock Error bounds for constant step-size {Q}-learning.
\newblock {\em Systems \& control letters}, 61(12):1203--1208.

\bibitem[Bertsekas, 2017]{bertsekas2017dynamic}
Bertsekas, D.~P. (2017).
\newblock {\em Dynamic programming and optimal control (4th edition)}.
\newblock Athena Scientific.

\bibitem[Bourel et~al., 2020]{bourel2020tightening}
Bourel, H., Maillard, O., and Talebi, M.~S. (2020).
\newblock Tightening exploration in upper confidence reinforcement learning.
\newblock In {\em International Conference on Machine Learning}, pages
  1056--1066. PMLR.

\bibitem[Buckman et~al., 2020]{buckman2020importance}
Buckman, J., Gelada, C., and Bellemare, M.~G. (2020).
\newblock The importance of pessimism in fixed-dataset policy optimization.
\newblock In {\em International Conference on Learning Representations}.

\bibitem[Chen and Jiang, 2019]{chen2019information}
Chen, J. and Jiang, N. (2019).
\newblock Information-theoretic considerations in batch reinforcement learning.
\newblock In {\em International Conference on Machine Learning}, pages
  1042--1051. PMLR.

\bibitem[Chen et~al., 2021a]{chen2021pessimism}
Chen, M., Li, Y., Wang, E., Yang, Z., Wang, Z., and Zhao, T. (2021a).
\newblock Pessimism meets invariance: Provably efficient offline mean-field
  multi-agent {RL}.
\newblock {\em Advances in Neural Information Processing Systems}, 34.

\bibitem[Chen et~al., 2020]{chen2020finiteNIPS}
Chen, Z., Maguluri, S.~T., Shakkottai, S., and Shanmugam, K. (2020).
\newblock Finite-sample analysis of contractive stochastic approximation using
  smooth convex envelopes.
\newblock {\em Advances in Neural Information Processing Systems},
  33:8223--8234.

\bibitem[Chen et~al., 2021b]{chen2021lyapunov}
Chen, Z., Maguluri, S.~T., Shakkottai, S., and Shanmugam, K. (2021b).
\newblock A {L}yapunov theory for finite-sample guarantees of asynchronous
  {Q}-learning and {TD}-learning variants.
\newblock {\em arXiv preprint arXiv:2102.01567}.

\bibitem[Cui and Du, 2022]{cui2022offline}
Cui, Q. and Du, S.~S. (2022).
\newblock When is offline two-player zero-sum {M}arkov game solvable?
\newblock {\em arXiv preprint arXiv:2201.03522}.

\bibitem[Cui and Yang, 2021]{cui2021minimax}
Cui, Q. and Yang, L.~F. (2021).
\newblock Minimax sample complexity for turn-based stochastic game.
\newblock In {\em Uncertainty in Artificial Intelligence}, pages 1496--1504.
  PMLR.

\bibitem[Diehl et~al., 2021]{diehl2021umbrella}
Diehl, C., Sievernich, T., Kr{\"u}ger, M., Hoffmann, F., and Bertran, T.
  (2021).
\newblock Umbrella: Uncertainty-aware model-based offline reinforcement
  learning leveraging planning.
\newblock {\em arXiv preprint arXiv:2111.11097}.

\bibitem[Donoho et~al., 2000]{donoho2000high}
Donoho, D.~L. et~al. (2000).
\newblock High-dimensional data analysis: The curses and blessings of
  dimensionality.
\newblock {\em AMS math challenges lecture}, 1(2000):32.

\bibitem[Du et~al., 2017]{du2017stochastic}
Du, S.~S., Chen, J., Li, L., Xiao, L., and Zhou, D. (2017).
\newblock Stochastic variance reduction methods for policy evaluation.
\newblock In {\em International Conference on Machine Learning}, pages
  1049--1058. PMLR.

\bibitem[Duan et~al., 2020]{duan2020minimax}
Duan, Y., Jia, Z., and Wang, M. (2020).
\newblock Minimax-optimal off-policy evaluation with linear function
  approximation.
\newblock In {\em International Conference on Machine Learning}, pages
  2701--2709. PMLR.

\bibitem[Duan et~al., 2021]{duan2021optimal}
Duan, Y., Wang, M., and Wainwright, M.~J. (2021).
\newblock Optimal policy evaluation using kernel-based temporal difference
  methods.
\newblock {\em arXiv preprint arXiv:2109.12002}.

\bibitem[Duchi, 2018]{duchi2018introductory}
Duchi, J.~C. (2018).
\newblock Introductory lectures on stochastic optimization.
\newblock {\em The mathematics of data}, 25:99--186.

\bibitem[Ebert et~al., 2018]{ebert2018visual}
Ebert, F., Finn, C., Dasari, S., Xie, A., Lee, A., and Levine, S. (2018).
\newblock Visual foresight: Model-based deep reinforcement learning for
  vision-based robotic control.
\newblock {\em arXiv preprint arXiv:1812.00568}.

\bibitem[Even-Dar and Mansour, 2003]{even2003learning}
Even-Dar, E. and Mansour, Y. (2003).
\newblock Learning rates for {Q}-learning.
\newblock {\em Journal of machine learning Research}, 5(Dec):1--25.

\bibitem[Fan et~al., 2019]{fan2019theoretical}
Fan, J., Wang, Z., Xie, Y., and Yang, Z. (2019).
\newblock A theoretical analysis of deep {Q}-learning.
\newblock {\em arXiv e-prints}, pages arXiv--1901.

\bibitem[Farahmand et~al., 2010]{farahmand2010error}
Farahmand, A.-m., Szepesv{\'a}ri, C., and Munos, R. (2010).
\newblock Error propagation for approximate policy and value iteration.
\newblock {\em Advances in Neural Information Processing Systems}, 23.

\bibitem[Filippi et~al., 2010]{filippi2010optimism}
Filippi, S., Capp{\'e}, O., and Garivier, A. (2010).
\newblock Optimism in reinforcement learning and kullback-leibler divergence.
\newblock In {\em 2010 48th Annual Allerton Conference on Communication,
  Control, and Computing (Allerton)}, pages 115--122. IEEE.

\bibitem[Fruit et~al., 2020]{fruit2020improved}
Fruit, R., Pirotta, M., and Lazaric, A. (2020).
\newblock Improved analysis of {UCRL}2 with empirical {B}ernstein inequality.
\newblock {\em arXiv preprint arXiv:2007.05456}.

\bibitem[Gilbert, 1952]{gilbert1952comparison}
Gilbert, E.~N. (1952).
\newblock A comparison of signalling alphabets.
\newblock {\em The Bell system technical journal}, 31(3):504--522.

\bibitem[He et~al., 2021]{he2021nearly}
He, J., Zhou, D., and Gu, Q. (2021).
\newblock Nearly minimax optimal reinforcement learning for discounted {MDP}s.
\newblock {\em Advances in Neural Information Processing Systems},
  34:22288--22300.

\bibitem[Jaksch et~al., 2010]{jaksch2010near}
Jaksch, T., Ortner, R., and Auer, P. (2010).
\newblock Near-optimal regret bounds for reinforcement learning.
\newblock {\em Journal of Machine Learning Research}, 11:1563--1600.

\bibitem[Jiang and Huang, 2020]{jiang2020minimax}
Jiang, N. and Huang, J. (2020).
\newblock Minimax value interval for off-policy evaluation and policy
  optimization.
\newblock {\em Advances in Neural Information Processing Systems},
  33:2747--2758.

\bibitem[Jiang and Li, 2016]{jiang2016doubly}
Jiang, N. and Li, L. (2016).
\newblock Doubly robust off-policy value evaluation for reinforcement learning.
\newblock In {\em International Conference on Machine Learning}, pages
  652--661. PMLR.

\bibitem[Jin et~al., 2018]{jin2018q}
Jin, C., Allen-Zhu, Z., Bubeck, S., and Jordan, M.~I. (2018).
\newblock Is {Q}-learning provably efficient?
\newblock In {\em Advances in Neural Information Processing Systems}, pages
  4863--4873.

\bibitem[Jin et~al., 2020]{jin2020provably}
Jin, C., Yang, Z., Wang, Z., and Jordan, M.~I. (2020).
\newblock Provably efficient reinforcement learning with linear function
  approximation.
\newblock In {\em Conference on Learning Theory}, pages 2137--2143. PMLR.

\bibitem[Jin et~al., 2021]{jin2021pessimism}
Jin, Y., Yang, Z., and Wang, Z. (2021).
\newblock Is pessimism provably efficient for offline {RL}?
\newblock In {\em International Conference on Machine Learning}, pages
  5084--5096.

\bibitem[Johnson and Zhang, 2013]{johnson2013accelerating}
Johnson, R. and Zhang, T. (2013).
\newblock Accelerating stochastic gradient descent using predictive variance
  reduction.
\newblock In {\em Advances in neural information processing systems}, pages
  315--323.

\bibitem[Kallus and Uehara, 2020]{kallus2020double}
Kallus, N. and Uehara, M. (2020).
\newblock Double reinforcement learning for efficient off-policy evaluation in
  markov decision processes.
\newblock {\em Journal of Machine Learning Research}, 21(167):1--63.

\bibitem[Kidambi et~al., 2020]{kidambi2020morel}
Kidambi, R., Rajeswaran, A., Netrapalli, P., and Joachims, T. (2020).
\newblock {MOReL}: Model-based offline reinforcement learning.
\newblock {\em Advances in neural information processing systems},
  33:21810--21823.

\bibitem[Kumar et~al., 2020]{kumar2020conservative}
Kumar, A., Zhou, A., Tucker, G., and Levine, S. (2020).
\newblock Conservative {Q}-learning for offline reinforcement learning.
\newblock In Larochelle, H., Ranzato, M., Hadsell, R., Balcan, M.~F., and Lin,
  H., editors, {\em Advances in Neural Information Processing Systems},
  volume~33, pages 1179--1191. Curran Associates, Inc.

\bibitem[Lai and Robbins, 1985]{lai1985asymptotically}
Lai, T.~L. and Robbins, H. (1985).
\newblock Asymptotically efficient adaptive allocation rules.
\newblock {\em Advances in applied mathematics}, 6(1):4--22.

\bibitem[Lange et~al., 2012]{lange2012batch}
Lange, S., Gabel, T., and Riedmiller, M. (2012).
\newblock Batch reinforcement learning.
\newblock In {\em Reinforcement learning}, pages 45--73. Springer.

\bibitem[Lattimore and Szepesv{\'a}ri, 2020]{lattimore2020bandit}
Lattimore, T. and Szepesv{\'a}ri, C. (2020).
\newblock {\em Bandit algorithms}.
\newblock Cambridge University Press.

\bibitem[Levine et~al., 2020]{levine2020offline}
Levine, S., Kumar, A., Tucker, G., and Fu, J. (2020).
\newblock Offline reinforcement learning: Tutorial, review, and perspectives on
  open problems.
\newblock {\em arXiv preprint arXiv:2005.01643}.

\bibitem[Li et~al., 2024a]{li2021tightening}
Li, G., Cai, C., Chen, Y., Wei, Y., and Chi, Y. (2024a).
\newblock Is {Q}-learning minimax optimal? a tight sample complexity analysis.
\newblock {\em Operations Research}, 72(1):203--221.

\bibitem[Li et~al., 2021]{li2021breaking}
Li, G., Shi, L., Chen, Y., Gu, Y., and Chi, Y. (2021).
\newblock Breaking the sample complexity barrier to regret-optimal model-free
  reinforcement learning.
\newblock {\em Advances in Neural Information Processing Systems}, 34.

\bibitem[Li et~al., 2024b]{li2020breaking}
Li, G., Wei, Y., Chi, Y., and Chen, Y. (2024b).
\newblock Breaking the sample size barrier in model-based reinforcement
  learning with a generative model.
\newblock {\em Operations Research}, 72(1):222--236.

\bibitem[Li et~al., 2020]{li2023breaking}
Li, G., Wei, Y., Chi, Y., Gu, Y., and Chen, Y. (2020).
\newblock Breaking the sample size barrier in model-based reinforcement
  learning with a generative model.
\newblock In {\em Advances in Neural Information Processing Systems},
  volume~33.

\bibitem[Li et~al., 2022]{li2021sample}
Li, G., Wei, Y., Chi, Y., Gu, Y., and Chen, Y. (2022).
\newblock Sample complexity of asynchronous {Q}-learning: Sharper analysis and
  variance reduction.
\newblock {\em IEEE Transactions on Information Theory}, 68(1):448--473.

\bibitem[Li et~al., 2023]{li2023minimax}
Li, G., Yan, Y., Chen, Y., and Fan, J. (2023).
\newblock Minimax-optimal reward-agnostic exploration in reinforcement
  learning.
\newblock {\em arXiv preprint arXiv:2304.07278}.

\bibitem[Li et~al., 2024c]{li2024reward}
Li, G., Zhan, W., Lee, J.~D., Chi, Y., and Chen, Y. (2024c).
\newblock Reward-agnostic fine-tuning: Provable statistical benefits of hybrid
  reinforcement learning.
\newblock {\em Advances in Neural Information Processing Systems}, 36.

\bibitem[Li et~al., 2014]{li2014minimax}
Li, L., Munos, R., and Szepesv{\'a}ri, C. (2014).
\newblock On minimax optimal offline policy evaluation.
\newblock {\em arXiv preprint arXiv:1409.3653}.

\bibitem[Liu et~al., 2020]{liu2020provably}
Liu, Y., Swaminathan, A., Agarwal, A., and Brunskill, E. (2020).
\newblock Provably good batch reinforcement learning without great exploration.
\newblock {\em arXiv preprint arXiv:2007.08202}.

\bibitem[Munos, 2007]{munos2007performance}
Munos, R. (2007).
\newblock Performance bounds in $l_p$-norm for approximate value iteration.
\newblock {\em SIAM journal on control and optimization}, 46(2):541--561.

\bibitem[Murphy, 2005]{murphy2005generalization}
Murphy, S. (2005).
\newblock A generalization error for {Q}-learning.
\newblock {\em Journal of Machine Learning Research}, 6:1073--1097.

\bibitem[Nguyen-Tang et~al., 2021]{nguyen2021sample}
Nguyen-Tang, T., Gupta, S., and Venkatesh, S. (2021).
\newblock Sample complexity of offline reinforcement learning with deep {ReLU}
  networks.
\newblock {\em arXiv preprint arXiv:2103.06671}.

\bibitem[Panaganti and Kalathil, 2022]{panaganti2022sample}
Panaganti, K. and Kalathil, D. (2022).
\newblock Sample complexity of robust reinforcement learning with a generative
  model.
\newblock In {\em International Conference on Artificial Intelligence and
  Statistics}, pages 9582--9602. PMLR.

\bibitem[Pananjady and Wainwright, 2020]{pananjady2020instance}
Pananjady, A. and Wainwright, M.~J. (2020).
\newblock Instance-dependent $\ell_{\infty}$-bounds for policy evaluation in
  tabular reinforcement learning.
\newblock {\em IEEE Transactions on Information Theory}, 67(1):566--585.

\bibitem[Prudencio et~al., 2022]{prudencio2022survey}
Prudencio, R.~F., Maximo, M.~R., and Colombini, E.~L. (2022).
\newblock A survey on offline reinforcement learning: Taxonomy, review, and
  open problems.
\newblock {\em arXiv preprint arXiv:2203.01387}.

\bibitem[Qian et~al., 2019]{qian2019exploration}
Qian, J., Fruit, R., Pirotta, M., and Lazaric, A. (2019).
\newblock Exploration bonus for regret minimization in discrete and continuous
  average reward {MDP}s.
\newblock {\em Advances in Neural Information Processing Systems}, 32.

\bibitem[Qu and Wierman, 2020]{qu2020finite}
Qu, G. and Wierman, A. (2020).
\newblock Finite-time analysis of asynchronous stochastic approximation and
  {Q}-learning.
\newblock {\em Conference on Learning Theory}, pages 3185--3205.

\bibitem[Rashidinejad et~al., 2022]{rashidinejad2021bridging}
Rashidinejad, P., Zhu, B., Ma, C., Jiao, J., and Russell, S. (2022).
\newblock Bridging offline reinforcement learning and imitation learning: A
  tale of pessimism.
\newblock {\em IEEE Transactions on Information Theory}, 68(12):8156--8196.

\bibitem[Ren et~al., 2021]{ren2021nearly}
Ren, T., Li, J., Dai, B., Du, S.~S., and Sanghavi, S. (2021).
\newblock Nearly horizon-free offline reinforcement learning.
\newblock {\em Advances in neural information processing systems}, 34.

\bibitem[Robbins and Monro, 1951]{robbins1951stochastic}
Robbins, H. and Monro, S. (1951).
\newblock A stochastic approximation method.
\newblock {\em The annals of mathematical statistics}, pages 400--407.

\bibitem[Shi et~al., 2022a]{shi2022statistically}
Shi, C., Luo, S., Zhu, H., and Song, R. (2022a).
\newblock Statistically efficient advantage learning for offline reinforcement
  learning in infinite horizons.
\newblock {\em arXiv preprint arXiv:2202.13163}.

\bibitem[Shi and Chi, 2022]{shi2022distributionally}
Shi, L. and Chi, Y. (2022).
\newblock Distributionally robust model-based offline reinforcement learning
  with near-optimal sample complexity.
\newblock {\em arXiv preprint arXiv:2208.05767}.

\bibitem[Shi et~al., 2022b]{shi2022pessimistic}
Shi, L., Li, G., Wei, Y., Chen, Y., and Chi, Y. (2022b).
\newblock Pessimistic {Q}-learning for offline reinforcement learning: Towards
  optimal sample complexity.
\newblock {\em International Conference on Machine Learning}, pages
  19967--20025.

\bibitem[Sidford et~al., 2018]{sidford2018near}
Sidford, A., Wang, M., Wu, X., Yang, L., and Ye, Y. (2018).
\newblock Near-optimal time and sample complexities for solving {M}arkov
  decision processes with a generative model.
\newblock In {\em Advances in Neural Information Processing Systems}, pages
  5186--5196.

\bibitem[Silver et~al., 2017]{silver2017mastering}
Silver, D., Schrittwieser, J., Simonyan, K., Antonoglou, I., Huang, A., Guez,
  A., Hubert, T., Baker, L., Lai, M., Bolton, A., et~al. (2017).
\newblock Mastering the game of {G}o without human knowledge.
\newblock {\em Nature}, 550(7676):354--359.

\bibitem[Sutton and Barto, 2018]{sutton2018reinforcement}
Sutton, R.~S. and Barto, A.~G. (2018).
\newblock {\em Reinforcement learning: An introduction}.
\newblock MIT press.

\bibitem[Szepesv{\'a}ri, 1998]{szepesvari1998asymptotic}
Szepesv{\'a}ri, C. (1998).
\newblock The asymptotic convergence-rate of {Q}-learning.
\newblock In {\em Advances in Neural Information Processing Systems}, pages
  1064--1070.

\bibitem[Talebi and Maillard, 2018]{talebi2018variance}
Talebi, M.~S. and Maillard, O.-A. (2018).
\newblock Variance-aware regret bounds for undiscounted reinforcement learning
  in mdps.
\newblock In {\em Algorithmic Learning Theory}, pages 770--805. PMLR.

\bibitem[Tang and Wiens, 2021]{tang2021model}
Tang, S. and Wiens, J. (2021).
\newblock Model selection for offline reinforcement learning: Practical
  considerations for healthcare settings.
\newblock In {\em Machine Learning for Healthcare Conference}, pages 2--35.
  PMLR.

\bibitem[Thomas and Brunskill, 2016]{thomas2016data}
Thomas, P. and Brunskill, E. (2016).
\newblock Data-efficient off-policy policy evaluation for reinforcement
  learning.
\newblock In {\em International Conference on Machine Learning}, pages
  2139--2148. PMLR.

\bibitem[Tsybakov, 2009]{tsybakov2009introduction}
Tsybakov, A.~B. (2009).
\newblock {\em Introduction to nonparametric estimation}, volume~11.
\newblock Springer.

\bibitem[Uehara et~al., 2020]{uehara2020minimax}
Uehara, M., Huang, J., and Jiang, N. (2020).
\newblock Minimax weight and {Q}-function learning for off-policy evaluation.
\newblock In {\em International Conference on Machine Learning}, pages
  9659--9668. PMLR.

\bibitem[Uehara and Sun, 2021]{uehara2021pessimistic}
Uehara, M. and Sun, W. (2021).
\newblock Pessimistic model-based offline reinforcement learning under partial
  coverage.
\newblock {\em arXiv preprint arXiv:2107.06226}.

\bibitem[Uehara et~al., 2022]{uehara2022representation}
Uehara, M., Zhang, X., and Sun, W. (2022).
\newblock Representation learning for online and offline {RL} in low-rank
  {MDP}s.
\newblock In {\em International Conference on Learning Representations}.

\bibitem[Vershynin, 2018]{vershynin2018high}
Vershynin, R. (2018).
\newblock {\em High-dimensional probability: An introduction with applications
  in data science}, volume~47.
\newblock Cambridge university press.

\bibitem[Wainwright, 2019a]{wainwright2019high}
Wainwright, M. (2019a).
\newblock {\em High-Dimensional Statistics: A Non-Asymptotic Viewpoint}.
\newblock Cambridge Series in Statistical and Probabilistic Mathematics.
  Cambridge University Press.

\bibitem[Wainwright, 2019b]{wainwright2019stochastic}
Wainwright, M.~J. (2019b).
\newblock Stochastic approximation with cone-contractive operators: Sharp
  $\ell_{\infty}$-bounds for {Q}-learning.
\newblock {\em arXiv preprint arXiv:1905.06265}.

\bibitem[Wainwright, 2019c]{wainwright2019variance}
Wainwright, M.~J. (2019c).
\newblock Variance-reduced {Q}-learning is minimax optimal.
\newblock {\em arXiv preprint arXiv:1906.04697}.

\bibitem[Wang et~al., 2019]{wangq}
Wang, Y., Dong, K., Chen, X., and Wang, L. (2019).
\newblock Q-learning with {UCB} exploration is sample efficient for
  infinite-horizon {MDP}.
\newblock In {\em International Conference on Learning Representations}.

\bibitem[Watkins and Dayan, 1992]{watkins1992q}
Watkins, C.~J. and Dayan, P. (1992).
\newblock {Q}-learning.
\newblock {\em Machine learning}, 8(3-4):279--292.

\bibitem[Xie and Jiang, 2021]{xie2021batch}
Xie, T. and Jiang, N. (2021).
\newblock Batch value-function approximation with only realizability.
\newblock In {\em International Conference on Machine Learning}, pages
  11404--11413. PMLR.

\bibitem[Xie et~al., 2021]{xie2021policy}
Xie, T., Jiang, N., Wang, H., Xiong, C., and Bai, Y. (2021).
\newblock Policy finetuning: Bridging sample-efficient offline and online
  reinforcement learning.
\newblock {\em arXiv preprint arXiv:2106.04895}.

\bibitem[Xiong et~al., 2020]{xiong2020finite}
Xiong, H., Zhao, L., Liang, Y., and Zhang, W. (2020).
\newblock Finite-time analysis for double {Q}-learning.
\newblock {\em Advances in Neural Information Processing Systems}, 33.

\bibitem[Xu et~al., 2021]{xu2021unified}
Xu, T., Yang, Z., Wang, Z., and Liang, Y. (2021).
\newblock A unified off-policy evaluation approach for general value function.
\newblock {\em arXiv preprint arXiv:2107.02711}.

\bibitem[Yan et~al., 2022]{yan2022model}
Yan, Y., Li, G., Chen, Y., and Fan, J. (2022).
\newblock Model-based reinforcement learning is minimax-optimal for offline
  zero-sum {M}arkov games.
\newblock {\em arXiv preprint arXiv:2206.04044}.

\bibitem[Yan et~al., 2023]{yan2022efficacy}
Yan, Y., Li, G., Chen, Y., and Fan, J. (2023).
\newblock The efficacy of pessimism in asynchronous {Q}-learning.
\newblock {\em IEEE Transactions on Information Theory}, 69(11):7185--7219.

\bibitem[Yang et~al., 2020]{yang2020off}
Yang, M., Nachum, O., Dai, B., Li, L., and Schuurmans, D. (2020).
\newblock Off-policy evaluation via the regularized {L}agrangian.
\newblock {\em Advances in Neural Information Processing Systems},
  33:6551--6561.

\bibitem[Yin et~al., 2022]{yin2022near}
Yin, M., Duan, Y., Wang, M., and Wang, Y.-X. (2022).
\newblock Near-optimal offline reinforcement learning with linear
  representation: Leveraging variance information with pessimism.
\newblock In {\em International Conference on Learning Representations}.

\bibitem[Yin and Wang, 2021]{yin2021towards}
Yin, M. and Wang, Y.-X. (2021).
\newblock Towards instance-optimal offline reinforcement learning with
  pessimism.
\newblock {\em Advances in neural information processing systems}, 34.

\bibitem[Yu et~al., 2020]{yu2020mopo}
Yu, T., Thomas, G., Yu, L., Ermon, S., Zou, J.~Y., Levine, S., Finn, C., and
  Ma, T. (2020).
\newblock {MOPO}: Model-based offline policy optimization.
\newblock {\em Advances in Neural Information Processing Systems},
  33:14129--14142.

\bibitem[Zanette et~al., 2021]{zanette2021provable}
Zanette, A., Wainwright, M.~J., and Brunskill, E. (2021).
\newblock Provable benefits of actor-critic methods for offline reinforcement
  learning.
\newblock {\em Advances in neural information processing systems}, 34.

\bibitem[Zhan et~al., 2022]{zhan2022offline}
Zhan, W., Huang, B., Huang, A., Jiang, N., and Lee, J.~D. (2022).
\newblock Offline reinforcement learning with realizability and single-policy
  concentrability.
\newblock {\em arXiv preprint arXiv:2202.04634}.

\bibitem[Zhang et~al., 2023]{zhang2023settling}
Zhang, Z., Chen, Y., Lee, J.~D., and Du, S.~S. (2023).
\newblock Settling the sample complexity of online reinforcement learning.
\newblock {\em arXiv preprint arXiv:2307.13586}.

\bibitem[Zhang et~al., 2021a]{zhang2021reinforcement}
Zhang, Z., Ji, X., and Du, S. (2021a).
\newblock Is reinforcement learning more difficult than bandits? a near-optimal
  algorithm escaping the curse of horizon.
\newblock In {\em Conference on Learning Theory}, pages 4528--4531. PMLR.

\bibitem[Zhang et~al., 2020]{zhang2020almost}
Zhang, Z., Zhou, Y., and Ji, X. (2020).
\newblock Almost optimal model-free reinforcement learning via
  reference-advantage decomposition.
\newblock {\em Advances in Neural Information Processing Systems}, 33.

\bibitem[Zhang et~al., 2021b]{zhang2021model}
Zhang, Z., Zhou, Y., and Ji, X. (2021b).
\newblock Model-free reinforcement learning: from clipped pseudo-regret to
  sample complexity.
\newblock In {\em International Conference on Machine Learning}, pages
  12653--12662. PMLR.

\bibitem[Zhong et~al., 2022]{zhong2022pessimistic}
Zhong, H., Xiong, W., Tan, J., Wang, L., Zhang, T., Wang, Z., and Yang, Z.
  (2022).
\newblock Pessimistic minimax value iteration: Provably efficient equilibrium
  learning from offline datasets.
\newblock {\em arXiv preprint arXiv:2202.07511}.

\bibitem[Zhou et~al., 2021]{zhou2021finite}
Zhou, Z., Zhou, Z., Bai, Q., Qiu, L., Blanchet, J., and Glynn, P. (2021).
\newblock Finite-sample regret bound for distributionally robust offline
  tabular reinforcement learning.
\newblock In {\em International Conference on Artificial Intelligence and
  Statistics}, pages 3331--3339. PMLR.

\end{thebibliography}
\bibliographystyle{apalike} 
\end{document}